\documentclass[twoside,11pt]{article}

\usepackage{blindtext}

% Any additional packages needed should be included after jmlr2e.
% Note that jmlr2e.sty includes epsfig, amssymb, natbib and graphicx,
% and defines many common macros, such as 'proof' and 'example'.
%
% It also sets the bibliographystyle to plainnat; for more information on
% natbib citation styles, see the natbib documentation, a copy of which
% is archived at http://www.jmlr.org/format/natbib.pdf

% Available options for package jmlr2e are:
%
%   - abbrvbib : use abbrvnat for the bibliography style
%   - nohyperref : do not load the hyperref package
%   - preprint : remove JMLR specific information from the template,
%         useful for example for posting to preprint servers.
%
% Example of using the package with custom options:
%
% \usepackage[abbrvbib, preprint]{jmlr2e}

\usepackage{jmlr2e}

% Definitions of handy macros can go here

\usepackage[utf8]{inputenc} % allow utf-8 input
\usepackage[T1]{fontenc}    % use 8-bit T1 fonts
\usepackage{hyperref}       % hyperlinks
\usepackage{url}            % simple URL typesetting
\usepackage{booktabs}       % professional-quality tables
\usepackage{amsfonts}       % blackboard math symbols
\usepackage{nicefrac}       % compact symbols for 1/2, etc.
\usepackage{microtype}      % microtypography
\usepackage{xcolor}         % colors

\usepackage{graphicx}
\usepackage{amsmath}
\usepackage{pifont}
\usepackage{bm}
\usepackage{multirow}
\usepackage{algorithm}
\usepackage{algpseudocode}
\usepackage{amssymb}
\usepackage{amsfonts}
\usepackage{pbox}
\usepackage{cleveref}
\usepackage[font={footnotesize}]{subcaption}
\usepackage[font={footnotesize}]{caption}
\usepackage{rotating}
\usepackage{colortbl}
\usepackage{wrapfig}
\usepackage{accents}

%----- calligraphic fonts -----%

%----- blackboard bold fonts-----%

%----- bold greek fonts -----%

%----- Some standard definitions -----%

  % Indicator

\newcommand{\norm}[1]{\left\| #1\right\|}

\newcommand{\hs}{\hat{s}}

\newtheorem{prop}{Proposition}

\newtheorem{defi}{Definition}
\newtheorem{rmk}{Remark}

\definecolor{darkblue}{RGB}{0,139,139}
\definecolor{darkgreen}{RGB}{34,139,34}
\definecolor{darkred}{RGB}{139,0,0}

% Heading arguments are {volume}{year}{pages}{date submitted}{date published}{paper id}{author-full-names}

\usepackage{lastpage}
\jmlrheading{24}{2024}{1-\pageref{LastPage}}{9/24; Revised x/xx}{x/xx}{21-0000}{Weiye Zhao and Feihan Li}

% Short headings should be running head and authors last names

\ShortHeadings{Absolute State-wise Constrained Policy Optimization}{Zhao$^*$, Li$^*$, Sun, Yang, Chen, Wei and Liu}
\firstpageno{1}

\begin{document}

\title{Absolute State-wise Constrained Policy Optimization: High-Probability State-wise Constraints Satisfaction}

\author{\name Weiye Zhao\thanks{Equal Contribution} \email weiyezha@andrew.cmu.edu %\\
       %\addr Robotics Institute\\
       %Carnegie Mellon University\\
       %Pittsburgh, PA 15213, USA
       \AND
       \name Feihan Li$^*$ \email feihanl@andrew.cmu.edu% \\
      % \addr Robotics Institute\\
      % Carnegie Mellon University\\
      % Pittsburgh, PA 15213, USA
       \AND
       \name Yifan Sun \email yifansu2@andrew.cmu.edu %\\
     %  \addr Robotics Institute\\
      % Carnegie Mellon University\\
       %Pittsburgh, PA 15213, USA
       \AND
       \name Yujie Yang \email yujieyan@andrew.cmu.edu %\\
     %  \addr Robotics Institute\\
      % Carnegie Mellon University\\
       %Pittsburgh, PA 15213, USA
       \AND
       \name Rui Chen \email ruic3@andrew.cmu.edu %\\
     %  \addr Robotics Institute\\
      % Carnegie Mellon University\\
       %Pittsburgh, PA 15213, USA
       \AND
       \name Tianhao Wei \email twei2@andrew.cmu.edu %\\
     %  \addr Robotics Institute\\
      % Carnegie Mellon University\\
       %Pittsburgh, PA 15213, USA
       \AND
       \name Changliu Liu \email cliu6@andrew.cmu.edu \\
       \addr Robotics Institute\\
       Carnegie Mellon University\\
       Pittsburgh, PA 15213, USA
       }

\editor{My editor}

\maketitle

\begin{abstract}
Enforcing state-wise safety constraints is critical for the application of reinforcement learning (RL) in real-world problems, such as autonomous driving and robot manipulation.
However, existing safe RL methods only enforce state-wise constraints in expectation or enforce hard state-wise constraints with strong assumptions. The former does not exclude the probability of safety violations, while the latter is impractical. Our insight is that although it is intractable to guarantee hard state-wise constraints in a model-free setting, we can enforce state-wise safety with high probability while excluding strong assumptions. To accomplish the goal, we propose Absolute State-wise Constrained Policy Optimization (ASCPO), a novel general-purpose policy search algorithm that guarantees high-probability state-wise constraint satisfaction for stochastic systems.
We demonstrate the effectiveness of our approach by training neural network policies for extensive robot locomotion tasks, where the agent must adhere to various state-wise safety constraints. Our results show that ASCPO significantly outperforms existing methods in handling state-wise constraints across challenging continuous control tasks, highlighting its potential for real-world applications.
\end{abstract}

\begin{keywords}
  State-wise Constraint, Constrained Policy Optimization, High-probability Constraints Satisfaction, Trust Region, Safe Reinforcement Learning
\end{keywords}

\addtocontents{toc}{\protect\setcounter{tocdepth}{0}}

\section{Introduction}

In the reinforcement learning (RL) literature, safe RL is a specific branch that considers constraint satisfaction in addition to reward maximization \citep{achiam2017constrained}.
The explicit enforcement of constraint satisfaction is often critical to training RL agents for real-world deployments~\cite{he2023hierarchical, noren2021safe, zhao2020experimental}.
That is because even if RL agents are penalized for violating safety constraints, they may still trade safety for overall higher rewards~\cite{zhao2019stochastic}, which is unacceptable in many practical scenarios.
For example, a self-driving car should never get too close to a pedestrian to save travel time.
Safe RL aims to address such problems by treating the satisfaction of safety constraints separately from the maximization of task objectives~\cite{wei2024meta, zhao2024absolute}.
To achieve that goal, it is important to properly model the safety constraints.
Traditional RL builds on Markov Decision Processes (MDPs), which produce trajectories of agent states and actions as the result of agent-environment interactions.
Under the framework, safety constraints can be defined in various forms.
In general, constraints can be defined in a cumulative sense \citep{ray2019benchmarking, stooke2020responsive, achiam2017constrained, yang2020projection} that the total safety violation from all states should be bounded.
Constraints can also be defined state-wise, such that safety is ensured at each time step either in expectation \citep{pham2018optlayer, amani2022doubly, zhao2024statewise} or 
% with high probability\citep{berkenkamp2017safembrl, fisac2018general, wachi2020safe, shi2023near, wachi2024safe}.
almost surely under assumptions of system dynamics knowledge~\citep{berkenkamp2017safembrl, fisac2018general, wachi2020safe, shi2023near, wachi2024safe, zhao2023safety}.

\begin{figure}[t]
    \centering
    \raisebox{-\height}{\includegraphics[width=0.9\linewidth]{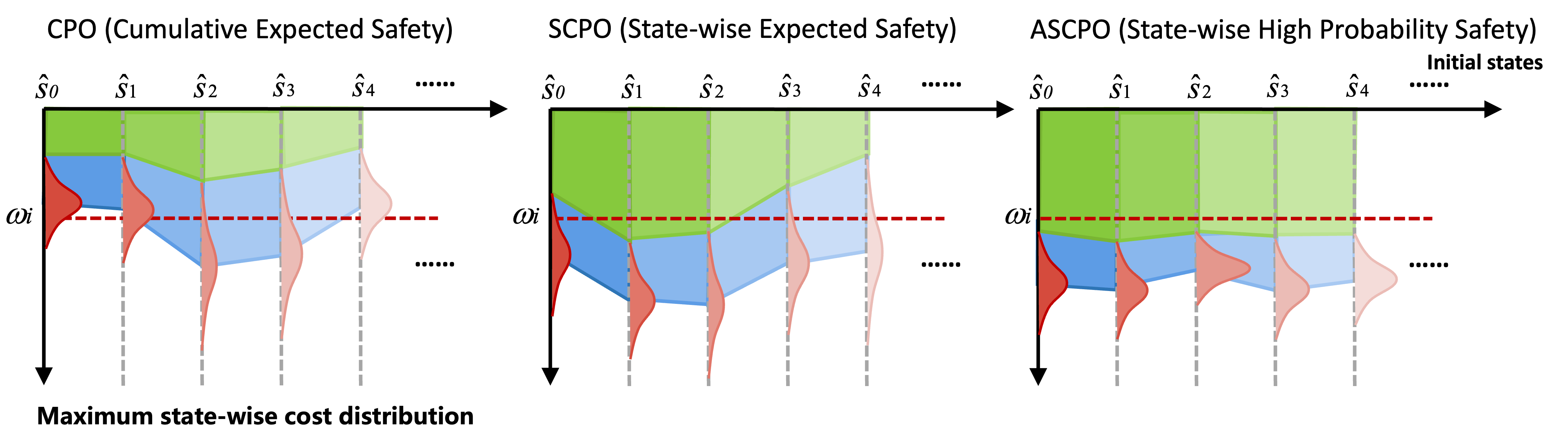}}

    \captionsetup{width=\linewidth}
    \caption{Explanation of ASCPO principles. For simplicity, the distribution is assumed to be Gaussian. Green and blue represent the maximum and expectation of maximum state-wise cost distribution respectively. Y-axis represents the distribution of maximum state-wise cost samples (introduced in \Cref{sec: preliminaries}) across different initial states. ASCPO is designed to constrain the maximum state-wise cost under safety threshold ($w_i$) while SCPO only focuses on constraining the expectation of state-wise cost and CPO~\citep{achiam2017constrained} lacks the capability to ensure state-wise safety.} 
    \label{fig: cpo vs scpo vs ascpo}
%\vspace{-15pt}
\end{figure}

Ideally, safety constraints should be strictly satisfied at every time step~\cite{he2023autocost, zhao2023learn}.
% However, due to inevitable uncertainties in the real world, that goal is often impractical to achieve.
% For example, self-driving vehicles might encounter varying road conditions that are unpredictable beforehand and inevitably land in unsafe states due to limited actuation.
However, strictly satisfying safety constraints in a model-free setting is challenging, especially when the system has dynamic limits.
Hence, in this paper, we consider a slightly relaxed yet still strong safety condition where constraints are guaranteed for \textit{each state} with a configurable \textit{high probability}~\citep{wachi2024survey}.
Notably, such a definition poses significantly stricter safety conditions than cumulative safety and is much more difficult to solve.
The most recent advancement in that direction is state-wise constrained policy optimization (SCPO)~\citep{zhao2024statewise}, which enforces state-wise constraints.
However, SCPO only considers constraint satisfaction in expectation without considering the variance of violations.
As a result, safety constraints might still be violated at a non-trivial rate if the violations are long-tail distributed. 

In this paper, we propose absolute state-wise constrained policy optimization (ASCPO) to guarantee state-wise safety constraints with a high probability.
In addition to the \textit{expected} constraint violation, which SCPO considers, ASCPO also considers the \textit{variance} to constrain the probability upper bound of violations within a user-specified threshold.
% ASCPO builds on a novel Maximum Markov decision process (MMDP) and \ruic{accurate description of theorems}.
The main idea of ASCPO is illustrated in \cref{fig: cpo vs scpo vs ascpo}.
Through experiments, we demonstrate that ASCPO can achieve high rewards in high-dimensional tasks with state-of-the-art low safety violations. Our code is available on Github.\footnote{\url{https://github.com/intelligent-control-lab/Absolute-State-wise-Constrained-Policy-Optimization}}
% Comparing to existing literature, our work achieves the strongest safety guarantee without assumptions on underlying system dynamics.
Our contribution is summarized below:
\begin{itemize}
    \item To the best of the authors' knowledge, the proposed approach is the first policy optimization method to ensure high-probability satisfaction of state-wise safety constraints without assuming knowledge of the underlying dynamics. 
\end{itemize}

\section{Related Work}

In safe reinforcement learning (RL), there are two main types of safety specifications: cumulative safety and state-wise safety. Here we primarily discuss state-wise safe RL methods. Interested readers can refer to survey papers~\citep{liu2021policy,gu2022review,brunke2022safe,wachi2024survey} for more comprehensive discussions.
Specifically, state-wise safety (instantaneous safety) aims to constrain the instantaneous cost at every step. In a stochastic environment, strictly satisfying a state-wise safety constraint at every step is impractical because the instantaneous cost is a random variable under a possibly unbounded distribution.
Therefore, existing methods either constrain the instantaneous cost at each step in expectation or ensure a high probability of satisfying instantaneous constraints at all steps.

\paragraph{Constraint in expectation}
To enforce expected state-wise constraints, OptLayer~\citep{pham2018optlayer} uses a safety layer that modifies potentially unsafe actions generated by a reward-maximizing policy to constraint-satisfying ones. 
~\citeauthor{zhao2024statewise} introduce state-wise constrained policy optimization (SCPO) that guarantees to constrain the maximum violation along a trajectory in expectation based on a novel Maximum MDP framework.
Similar idea of constraining the maximum violation can also be found in Hamilton-Jacobi (HJ) reachability analysis~(\citeauthor{bansal2017hamilton}), which computes a value function to represent the maximum violation in the future.
The HJ reachability value function is widely used to learn safety-oriented policies or build constraints for policy optimization in safe RL~(\citeauthor{fisac2019bridging,yu2022reachability}).
Although these methods take state-wise safety into consideration, their constraint in expectation formulations still fall short of controlling the distributions of instantaneous safety violations.

\paragraph{High-probability Constraint}
 % Existing \textbf{Safe Exploration Works} primarily focus on address high-probability and even surely state-wise constraint satisfaction~\citep{zhao2023state}, including (i) structural solutions, and (ii) end-to-end solutions. Structural solution constructs a hierarchical safe agent with upper layer generating reference actions, and lower layer performing the safe action projection at every time step. To do so, prior knowledge about system dynamics is required, i.e. analytical dynamics~\citep{cheng2019end,shao2021reachability,fisac2018general,ferlez2020shieldnn}, black-box dynamics~\citep{zhao2021model}, learned dynamics~\citep{dalal2018safe,zhang2022evaluating, bharadhwaj2020conservative, chow2019lyapunov,thananjeyan2021recovery} or Lipschitzness of dynamics~\citep{zhao2023probabilistic}. Similarly, End-to-end solutions ensure safe action considering the uncertainty of learned dynamics at every time step, where high-probability state-wise safety gurantee comes from the knowledge of Lipschitzness of dynamics~\citep{berkenkamp2017safembrl} or black-box system dynamics~\citep{wachi2018safe,wachi2020safe,wachi2024safe}. 
 Existing \textbf{safe exploration literature} primarily focuses on addressing high-probability and even surely state-wise constraint satisfaction~\citep{zhao2023state}, including (i) structural solutions and (ii) end-to-end solutions. Structural solutions construct a hierarchical safe agent with an upper layer generating reference actions and a lower layer performing safe action projection at every time step. To achieve this, prior knowledge about system dynamics is required~\cite{wei2022persistently, zhao2020contact, zhao2022provably, chen2024safety, chen2024real, li2023learning}, such as analytical dynamics~\citep{cheng2019end,shao2021reachability,fisac2018general,ferlez2020shieldnn}, black-box dynamics~\citep{zhao2021model, zhao2024implicit}, learned dynamics~\citep{dalal2018safe,zhang2022evaluating, bharadhwaj2020conservative, chow2019lyapunov,thananjeyan2021recovery}, or the Lipschitzness bound of dynamics~\citep{zhao2023probabilistic}. Similarly, end-to-end solutions ensure safe actions by considering the uncertainty of learned dynamics at every time step, where high-probability state-wise safety guarantees stem from the knowledge of the Lipschitzness bound of dynamics~\citep{berkenkamp2017safembrl, wachi2018safe, wachi2024safe} and constraints~\citep{ wachi2020safe} or evaluable black-box system dynamics~\citep{wachi2020safe}.
 % black-box system dynamics~\citep{wachi2018safe,wachi2020safe,}.
% Berkenkamp et al.~\citep{berkenkamp2017safembrl} extend control-theoretic tools on Lyapunov stability to obtain provable safe policies under a statistical dynamic model with Gaussian process assumption. Their algorithm collects data with a high-probability safety guarantee and uses data to learn about the dynamics model, improve the control policy, and expand the safe region simultaneously.
% Fisac et al.~\citep{fisac2018general} combine Hamilton-Jacobi reachability analysis with learning-based control methods and prove theoretical safety guarantees using probabilistic and worst-case analysis. Their algorithm exploits approximate knowledge of the system dynamics to guarantee constraint satisfaction while minimally interfering with the learning process.
% SNO-MDP~\citep{wachi2020safe} explores MDPs with unknown safety constraints by first expanding a safe region and then optimizing cumulative rewards in this region. With specific regularity assumptions, probabilistic guarantees are provided on satisfaction of safety constraints and near-optimality of cumulative rewards.
% A common feature of the above methods is that their high-probability guarantees rely on regularity assumptions or approximate knowledge of system dynamics. 

In contrast, our method directly ensures \textbf{ high-probability state-wise constraint satisfaction without assumptions on knowledge of the underlying dynamics}.
% underlying system dynamics}.

\section{Problem Formulation}
% \subsection{Preliminaries}
\subsection{State-wise Constrained Markov Decision Process}
\label{sec: preliminaries}
This paper studies the persistent satisfaction of cost constraints \textbf{at every step} for episodic tasks, in the framework of \textit{State-wise Constrained Markov Decision Process} (SCMDP)~\citep{zhao2023state} for finite horizon.
An finite horizon MDP finishes within $H \in \mathbb{N}$ steps, and is specified by a tuple $(\mathcal{S}, \mathcal{A}, \gamma, R, P, \mu)$, where $\mathcal{S}$ is the state space, and $\mathcal{A}$ is the control space, $R: \mathcal{S} \times \mathcal{A} \mapsto \mathbb{R}$ is the reward function, $ 0 \leq \gamma < 1$ is the discount factor, $\mu : \mathcal{S} \mapsto \mathbb{R}$
 % [0, 1]$ 
 is the initial state distribution, and $P: \mathcal{S} \times \mathcal{A} \times \mathcal{S} \mapsto \mathbb{R}$
 % [0,1]$ 
 is the transition probability function.
$P(s'|s,a)$ is the probability of transitioning to state $s'$ given that the previous state was $s$ and the agent took action $a$ at state $s$. Building upon MDP, SCMDP introduces a set of cost functions, $C_1, C_2, \cdots, C_m$, where $C_i : \mathcal{S} \times \mathcal{A} \times \mathcal{S} \rightarrow \mathbb{R}$ maps the state action transition tuple into a cost value.
A stationary policy $\pi: \mathcal{S} \mapsto \mathcal{P}(\mathcal{A})$ is a map from states to a probability distribution over actions, with $\pi(a|s)$ denoting the probability of selecting action $a$ in state $s$. We denote the set of all stationary policies by $\Pi$. 

The goal for SCMDP is to learn a policy $\pi$ that maximizes a performance measure
$\mathcal{J}_0(\pi)\doteq \mathbb{E}_{\tau \sim \pi}\left[\sum_{t=0}^H \gamma^t R(s_t, a_t, s_{t+1})\right]$, so that the cost for every state action transition satisfies a constraint. Formally,
\begin{align}
    \label{eq: fundamental problem}
    \underset{\pi}{\textbf{max}}~\mathcal{J}_0(\pi), ~\textbf{s.t.}~ \forall i, \forall(s_t, a_t, s_{t+1}) \sim \tau, C_i(s_t, a_t, s_{t+1}) \leq w_i
    % ~\forall (s_t, a_t, s_{t+1}) \sim \tau,~\forall i,~\mathbb{E}\big[C_i(s_t, a_t, s_{t+1})\big] \leq w_i,
\end{align}
where $\tau = [s_0, a_0, s_1, \cdots]$ and $\tau \sim \pi$, $w_i \in \mathbb{R}$. Here $\tau \sim \pi$ is shorthand for that the distribution over trajectories depends on $\pi: s_0 \sim \mu, a_t \sim \pi(\cdot | s_t), s_{t+1} \sim P(\cdot|s_t, a_t)$.

\paragraph{Restricting Maximum Cost}

\begin{figure}[t]
    \raisebox{-\height}{\includegraphics[width=\linewidth]{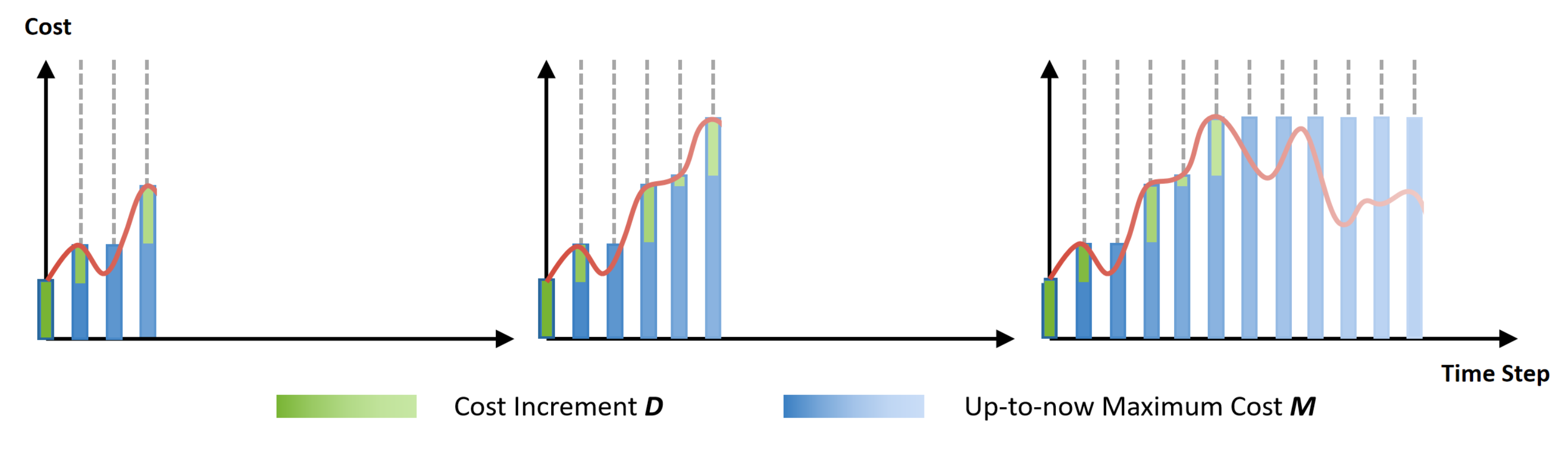}}
    \caption{Intuition of the maximum state-wise cost. The evolution of the maximum state-wise cost across a single episode is shown in the blue bars. The red curves represent the state-wise cost, while the green bars indicate the maximum state-wise cost increment at each step.
     } 
    \label{fig:MMDP}
    %\vspace{-15pt}
\end{figure}
It is noteworthy that for \eqref{eq: fundamental problem}, each state-action transition pair introduces a constraint, leading to a computational complexity that increases nearly cubically as the MDP horizon ($H$) grows~\citep{zhao2024statewise}. Instead of directly constraining the cost of each possible state-action transition, \textbf{it is easier to constrain the maximum state-wise cost along the trajectory}. To efficiently compute the maximum state-wise cost, we follow \textit{ maximum Markov decision process} (MMDP)~\citep{zhao2024statewise} to introduce 
(i) a set of up-to-now maximum state-wise costs $\textbf{M} \doteq [M_1, M_2, \cdots, M_m]$ where $M_i \in \mathcal{M}_i \subset \mathbb{R}$, and (ii) a set of \textit{cost increment} 
functions, $D_1, D_2, \cdots, D_m$, where $D_i:(\mathcal{S}, \mathcal{M}_i) \times \mathcal{A} \times \mathcal{S} \mapsto [0, \mathbb{R}^+]$ maps the augmented state action transition tuple into a nonnegative cost increment.
We define the augmented state ${\hat s} = (s, \textbf{M}) \in (\mathcal{S},\mathcal{M}^m) \doteq \hat{\mathcal{S}}$, where $\hat{\mathcal{S}}$ is the augmented state space with $\mathcal{M}^m = (\mathcal{M}_1, \mathcal{M}_2, \cdots, \mathcal{M}_m)$.
Formally, 
\begin{align}
    D_i\big({\hat s}_t, a_t, {\hat s}_{t+1}\big) = \max\{C_i(s_t, a_t, s_{t+1}) - {M_{it}}, 0\} , i \in 1,\cdots ,m.
\end{align}
Setting $D_i\big({\hat s}_0, a_0, {\hat s}_{1}\big) = C_i(s_0,a_0,s_1)$, we have $M_{it} = \sum_{k=0}^{t-1} D_i\big({\hat s}_k, a_k, {\hat s}_{k+1}\big)$ for $t \geq 1 $. 
% The  \textit{maximum state-wise cost} for a trajectory can be computed via:
Hence, we define the \textit{maximum state-wise cost} performance sample for $\pi$ as: 
\begin{align}
\label{eq: sum of max cost}
    {\mathcal{D}_i}_{\pi}(\hs_0) =   
    \sum_{t=0}^{H} D_i\big({\hat s}_t, a_t, {\hat s}_{t+1}\big),
\end{align}
% Hence, we define the \textit{maximum state-wise cost} performance sample for $\pi$ as: 
% \begin{align}
% \label{eq: sum of max cost}
%     {\mathcal{D}_i}_{\pi}(\hs_0) =   
%     \sum_{t=0}^{H} D_i\big({\hat s}_t, a_t, {\hat s}_{t+1}\big),
% \end{align}
where the state action sequence $\hat\tau = [a_0, {\hat s}_1, \dots] \sim \pi$ starts with an initial state $\hs_0$, which follows initial state distribution $\mu$. The intuition of MMDP is illustrated in \Cref{fig:MMDP}. With \eqref{eq: sum of max cost}, \eqref{eq: fundamental problem} can be rewritten as:
\begin{align}
    \label{eq: fundamental problem scpo}
    \underset{\pi}{\textbf{max}} ~\mathcal{J}(\pi),~\textbf{s.t.}~ \forall i, ~{\mathcal{D}_i}_{\pi}(\hs_0) \leq w_i,
\end{align}
where $\mathcal{J}(\pi) = \mathbb{E}_{\tau \sim \pi}\left[\sum_{t=0}^H \gamma^t R({\hat s}_t, a_t, {\hat s}_{t+1})\right]$ and $R({\hat s},a,{\hat s}')\doteq R(s, a, s')$.

\begin{rmk}
  We would like to highlight the core differences between \eqref{eq: fundamental problem scpo} and Constrained Markov Decision Processes (CMDP). Although \eqref{eq: fundamental problem scpo} may appear similar to CMDP due to the inclusion of cost increments, the nature of the constraints is fundamentally different. Specifically, the constraints in \eqref{eq: fundamental problem scpo} take the form of a non-discounted summation over a finite horizon, whereas CMDP considers a discounted summation over an infinite horizon. Additionally, \eqref{eq: fundamental problem scpo} restricts the constraint satisfaction for each individual performance sample,
  % requires constraints to be satisfied with 100\% certainty (hard restriction), 
  whereas CMDP only requires constraint satisfaction for expected performance. Consequently, conventional techniques or theories used to solve CMDP are not applicable here.
\end{rmk}

With $R(\tau)$ being the discounted return of a trajectory with infinite horizon, we define the on-policy value function as $V_\pi({\hat s}) \doteq \mathbb{E}_{\tau \sim \pi}[R(\tau) | {\hat s}_0 = {\hat s}]$, the on-policy action-value function as $Q_\pi({\hat s},a) \doteq \mathbb{E}_{\tau \sim \pi}[R(\tau) | {\hat s}_0 = {\hat s}, a_0=a]$, and the advantage function as $A_\pi({\hat s},a) \doteq Q_\pi({\hat s}, a) - V_\pi({\hat s})$. 
% Likewise, we also update the $A, Q, V$ into $A^\dagger, Q^\dagger, V^\dagger$ via considering the augmented state $(\mathcal{S}, \bm M)$.
% and $\bm{M}_t = [M_{1t}, \cdots, M_{mt}]$ is the collection of cost tags at time step $t$. 
% We refer to $\mathcal{J}_{D_i}$ as a \textit{expected maximum state-wise cost}, or $D_i$-return for short. 
Lastly, we define on-policy value functions, action-value functions, and advantage functions for the cost increments in analogy to $V_\pi$, $Q_\pi$, and $A_\pi$. Similarly, we can define $V^H_\pi$, $Q^H_\pi$, and $A^H_\pi$ for trajectory with $H$ horizon. With $D_i$ replacing $R$, respectively. 
We denote those by $V^H_{[D_i]\pi}$, $Q^H_{[D_i]\pi}$ and $A^H_{[D_i]\pi}$.

The idea of restricting the maximum cost in a trajectory is widely used in safe RL to ensure state-wise constraint satisfaction~\citep{fisac2019bridging,yu2022reachability}.
In addition to MMDP formulation, another method to achieve this goal is HJ reachability analysis~\citep{bansal2017hamilton}, which computes a value function to represent the maximum cost along the trajectory starting from the current state:
\begin{equation}
    F_{i\pi}(s_0)=\max_{t=0,1,\dots,H} C_i(s_t,a_t,s_{t+1}),
\end{equation}
where $F_{i\pi}$ is the HJ reachability value function of the $i$-th cost under policy $\pi$.
Compared with MMDP, HJ reachability analysis directly computes the maximum cost using the maximum operator, while MMDP transforms the maximum computation into a summation by introducing cost increment functions.
Mathematically, $F_{i\pi}$ in HJ reachability analysis equals $\mathcal{D}_{i\pi}$ in MMDP at the initial state of a trajectory.
Therefore, we can equivalently replace $\mathcal{D}_{i\pi}(\hat{s}_0)$ with $F_{i\pi}(s_0)$ in the constraint of problem \eqref{eq: fundamental problem scpo}.
However, the advantage of using MMDP formulation is that the summation operator in $\mathcal{D}_{i\pi}$ enables us to use existing theoretical tools to derive bounds on the maximum state-wise cost, which is crucial for constraint satisfaction guarantee~\citep{zhao2024statewise}.
% In contrast, deriving similar cost bounds for HJ reachability analysis is difficult because of the maximum operator in $F_{i\pi}$. 1) so far, this max and the associated error bounds can only be evaluated using traditional HJ method; 2) no learning-based HJ method can evaluate the max and the associated error bounds simultaneously.
In contrast, deriving similar cost bounds for Hamilton-Jacobi (HJ) reachability analysis is challenging due to the maximum operator in $F_{i\pi}$. So far, this max and the associated error bounds can only be evaluated using traditional HJ methods, and no learning-based HJ method is capable of simultaneously evaluating the max and the associated error bounds.
To the best of our knowledge, no existing HJ-reachability-based safe RL methods have established such bounds.

\subsection{Upper Probability Bound of Maximum State-wise Cost}
\label{sec: absolute bound}
% Notice that restricting ${\mathcal{D}_i}_{\pi}(\hs_0)$ means restricting over each individual maximum-coste-wise cost performance sample derived from the policy $\pi$.
% The hard restriction of the maximum state-wise cost in \eqref{eq: fundamental problem scpo} is impractical 
Restricting every possible maximum state-wise cost performance sample in \eqref{eq: fundamental problem scpo} is impractical
since ${\mathcal{D}_i}_{\pi}(\hs_0)$ is a continuous random variable under a possibly unbounded distribution. The best we can do is to restrict the upper probability bound~\citep{pishro2014introduction} of the maximum state-wise cost performance with high confidence, defined as:

% \begin{defi}[Upper Probability Bound of Performance]
\begin{defi}[Upper Probability Bound of Constraint Satisfaction]
\label{def: abs bound}
    % There exists $\psi > 0$, $\mathcal{B}_k(\pi)$ is called the lower probability bound of performance from $\pi$ with $p_k^\psi$ confidence, if it satisfies the following condition:
    % \begin{align}
    %     Pr\big(R_\pi(s_0)\geq \mathcal{B}_k(\pi)\big) \geq p_k^\psi,
    % \end{align}
    % where  $p_k^\psi \doteq 1-\frac{1}{k^2 \psi+1} \in (0,1)$ and $k$ is the probability factor ($k \geq 0$ and $k \in \mathbb{R}$), which can be set according to the demand for probability magnitude.

    Given a tuple $(\mathcal{B} \in \mathbb{R}, p \in \mathbb{R}^+)$, $\mathcal{B}$ is defined as the upper probability bound with confidence ${p}$. Mathematically:
    \begin{align}
        Pr\big(\mathcal{D}_{i\pi}(\hs_0)\leq \mathcal{B}\big) \geq p~.
    \end{align}
    % For an unknown distribution of random variable $\mathcal{D}_{i\pi}(\hs_0)$, we first define $\mathcal{E}_{[D_i]}(\pi), \mathcal{V}_{[D_i]}(\pi)$ as the expectation and variance of the distribution.
    % Then, we can leverage 
    % % the Chebyshev's inequality theory~\citep{saw1984chebyshev} 
    % the Selberg's inequality theory~\citep{saw1984chebyshev}
    % to obtain an upper probability bound of $\mathcal{D}_{i\pi}(\hs_0)$ as $ \mathcal{B}_{[D_i]k}(\pi) \doteq \mathcal{E}_{[D_i]}(\pi)+k\mathcal{V}_{[D_i]}(\pi)$, which is guaranteed to satisfy \Cref{def: abs bound} (proved in \Cref{prop: absolute bound definition}) with confidence $p_k^\psi \doteq 1-\frac{1}{k^2 \psi+1} \in (0,1)$. Here $k$ is the probability factor ($k \geq 0$, $k \in \mathbb{R}$) and $\psi = \mathcal{V}_{min} \in \mathbb{R}^+$, where $\mathcal{V}_{min}$ is minima of $\mathcal{V}_{[D_i]}(\pi)$.
\end{defi}

\begin{prop}
\label{prop: absolute bound definition}
For an unknown distribution of random variable $\mathcal{D}_{i\pi}(\hs_0)$, denote \\
$\mathcal{E}_{[D_i]}(\pi), \mathcal{V}_{[D_i]}(\pi)$ as the expectation and variance of the distribution, i.e. \\$\mathcal{E}_{[D_i]}(\pi) = \mathbb{E}_{\hs_0 \sim \mu, \hat\tau \sim \pi}[\mathcal{D}_{i\pi}(\hs_0)]$, $\mathcal{V}_{[D_i]}(\pi) = \mathbb{V}ar_{\hs_0 \sim \mu, \hat\tau \sim \pi}[\mathcal{D}_{i\pi}(\hs_0)]$.
$\mathcal{B}_{[D_i]k}(\pi) \doteq \mathcal{E}_{[D_i]}(\pi)+k\mathcal{V}_{[D_i]}(\pi)$ is guaranteed to be a upper probability bound of $\mathcal{D}_{i\pi}(\hs_0)$ in \Cref{def: abs bound} with confidence $p_k^\psi \doteq 1-\frac{1}{k^2 \psi+1} \in (0,1)$. Here $k$ is the probability factor ($k \geq 0$, $k \in \mathbb{R}$) and $\psi = \mathcal{V}_{min} \in \mathbb{R}^+$, where $\mathcal{V}_{min}$ is the minima of $\mathcal{V}_{[D_i]}(\pi)$.
\end{prop}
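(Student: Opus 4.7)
The plan is to recognize that this is essentially a restatement of the one-sided Chebyshev (Cantelli) inequality, adapted to a form where the deviation is measured in units of the variance rather than the standard deviation, and then to uniformize the resulting bound across policies using the minimum-variance constant $\mathcal{V}_{min}$.

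First I would set $X \defeq \mathcal{D}_{i\pi}(\hs_0)$, which by assumption is a real random variable with finite mean $\mathcal{E}_{[D_i]}(\pi)$ and finite variance $\mathcal{V}_{[D_i]}(\pi)$. I would then invoke Cantelli's inequality in the form
\begin{equation*}
\Pr\!\bigl(X - \mathcal{E}_{[D_i]}(\pi) \geq t\bigr) \;\leq\; \frac{\mathcal{V}_{[D_i]}(\pi)}{\mathcal{V}_{[D_i]}(\pi) + t^2}, \qquad t > 0.
\end{equation*}
Substituting $t = k\,\mathcal{V}_{[D_i]}(\pi)$ (which is strictly positive for $k>0$; the boundary case $k=0$ reduces the claim to $p = 0$ trivially) gives
\begin{equation*}
\Pr\!\bigl(X \geq \mathcal{E}_{[D_i]}(\pi) + k\,\mathcal{V}_{[D_i]}(\pi)\bigr) \;\leq\; \frac{\mathcal{V}_{[D_i]}(\pi)}{\mathcal{V}_{[D_i]}(\pi) + k^{2}\,\mathcal{V}_{[D_i]}(\pi)^{2}} \;=\; \frac{1}{1 + k^{2}\,\mathcal{V}_{[D_i]}(\pi)}.
\end{equation*}
Taking complements yields $\Pr(X \leq \mathcal{B}_{[D_i]k}(\pi)) \geq 1 - \frac{1}{1 + k^{2}\,\mathcal{V}_{[D_i]}(\pi)}$, which is already an upper probability bound in the sense of \Cref{def: abs bound}, but with a policy-dependent confidence.

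To obtain the claimed policy-independent confidence $p_k^\psi = 1 - \frac{1}{k^2 \psi + 1}$, I would next use the defining property of $\psi = \mathcal{V}_{min}$: since $\mathcal{V}_{[D_i]}(\pi) \geq \mathcal{V}_{min} = \psi$ for every admissible policy, we have $1 + k^{2}\,\mathcal{V}_{[D_i]}(\pi) \geq 1 + k^{2}\psi$, so the tail probability is further upper bounded by $\tfrac{1}{1+k^{2}\psi}$, giving
\begin{equation*}
\Pr\!\bigl(\mathcal{D}_{i\pi}(\hs_0) \leq \mathcal{B}_{[D_i]k}(\pi)\bigr) \;\geq\; 1 - \frac{1}{k^{2}\psi + 1} \;=\; p_k^{\psi}.
\end{equation*}
Finally I would verify $p_k^\psi \in (0,1)$: since $\psi \in \mathbb{R}^+$ and $k \geq 0$, the denominator $k^2\psi + 1 \geq 1$, so $p_k^\psi \in [0,1)$; strict positivity holds whenever $k>0$, matching the stated range.

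The main obstacle is really just choosing the correct variant of Chebyshev's inequality: the standard two-sided form with $t = k\mathcal{V}$ would yield $\tfrac{1}{k^{2}\mathcal{V}}$ rather than $\tfrac{1}{1+k^{2}\mathcal{V}}$, and would not recover the additive ``$+1$'' in the denominator of $p_k^\psi$. The one-sided Cantelli bound is what produces exactly the claimed expression, and the rest is a monotonicity argument in $\mathcal{V}$ that trades policy dependence for the uniform constant $\psi$. A secondary subtlety worth checking is that $\mathcal{V}_{min}$ is well defined and strictly positive over the relevant policy class, which the proposition assumes by taking $\psi \in \mathbb{R}^+$.
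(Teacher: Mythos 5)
Your proof is correct and follows essentially the same route as the paper: the paper invokes the same one-sided tail bound $\Pr\bigl(X > \mathcal{E} + k\mathcal{V}\bigr) \leq \tfrac{1}{k^2\mathcal{V}+1}$ (citing it as a Selberg/Chebyshev-type inequality), takes the complement, and then uses $\mathcal{V}_{[D_i]}(\pi) \geq \mathcal{V}_{min} = \psi$ exactly as you do to obtain the policy-independent confidence $p_k^\psi$. Your derivation of that tail bound from Cantelli's inequality via the substitution $t = k\,\mathcal{V}_{[D_i]}(\pi)$ is a correct and somewhat more self-contained way to arrive at the same key inequality.
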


\begin{rmk}
    \Cref{prop: absolute bound definition} is proved in \Cref{sec: proof of prop absolute bound}. \Cref{prop: absolute bound definition} shows that more than $p_k^\psi$ of the samples from the distribution of $\mathcal{D}_{i\pi}(\hs_0)$ will be smaller than the bound $\mathcal{B}_{[D_i]k}(\pi)$. Given a positive constant $\psi$, we can make $p_k^\psi \rightarrow 1$ by setting a large enough $k$, so that $\mathcal{B}_{[D_i]k}(\pi)$ represents the upper probability bound of $\mathcal{D}_{i\pi}(\hs_0)$ with with a confidence level close to 1. 
\end{rmk}
% which means that the absolute performance bound is nearly equal to the lower bound of $p_k$ samples. 

% \subsection{Problem}
\subsection{Policy Optimization Problem}
\label{sec:amdp}
In this paper, we focus on restricting the upper probability bound of maximum state-wise cost performance in SCMDP.
In accordance with \Cref{def: abs bound}, the overarching objective
% within the AMDP framework 
is to identify a policy $\pi$ that effectively maximizes the performance measure and ensures $\mathcal{B}_k(\pi) < w$.  Mathematically,
\begin{align}
\label{eq: apo optimization original}
     \underset{\pi}{~\textbf{max}}~ \mathcal{J}(\pi),~\textbf{s.t.}~ \forall i, ~{\mathcal{E}}_{[D_i]}(\pi)+k{\mathcal{V}}_{[D_i]}(\pi)\leq w_i.
\end{align}
\begin{figure}[t]
    \raisebox{-\height}{\includegraphics[width=\linewidth]{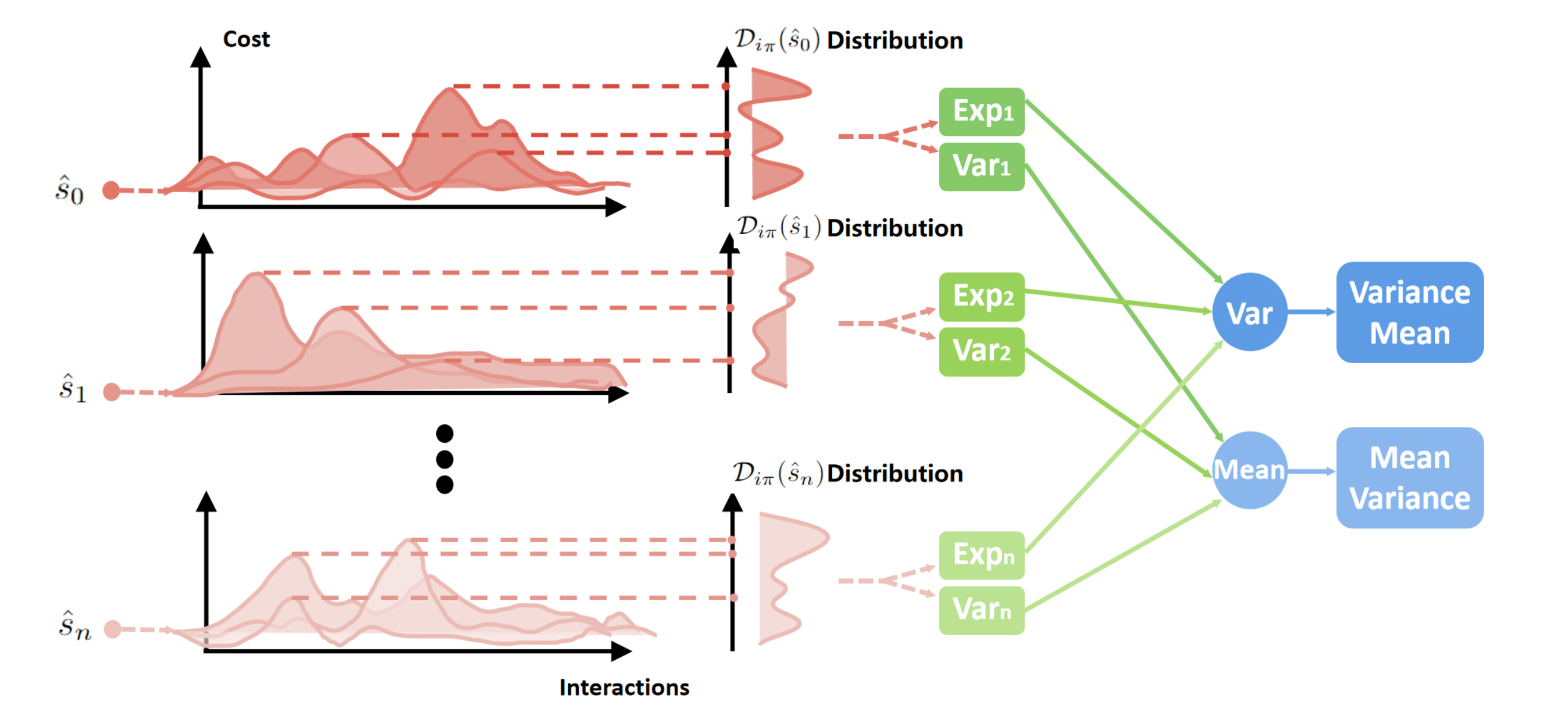}}
    \caption{Explanation of MV and VM. Since maximum state-wise cost from different start states belong to a mixture of one-dimensional distributions, the variance of maximum state-wise cost can be deconstructed into two components: MeanVariance (MV) and VarianceMean (VM).
     } 
    \label{fig:mv and vm}
\end{figure}

\section{Absolute State-wise Constrained Policy Optimization}
% \newpage

% \section{ASCPO Notation Table}
% \label{append: ascpo notation}

\begin{table}[H]
    \centering
    \caption{Notation Table of ASCPO \textbf{I}}
    \begin{tabular}{p{2cm} p{13cm}}
        \toprule
        \textbf{Notation} & \textbf{Description} \\
        \midrule
        $\mathcal{S}$ & State space \\
        $\mathcal{A}$ & Action space \\
        $\mathcal{M}$ & Up-to-now maximum state-wise cost space: $\mathcal{M} \subset \mathbb{R}$\\
        $\hat{\mathcal{S}}$ & Augmented state space\\
        $\gamma$ & Discount factor: $ 0 \leq \gamma < 1$ \\
        $R$ & Reward function: $\mathcal{S} \times \mathcal{A} \mapsto \mathbb{R}$ \\
        $P$ & Transition probability function: $\mathcal{S} \times \mathcal{A} \times \mathcal{S} \mapsto \mathbb{R}$ \\
        $\mu$ & Initial state distribution: $\mathcal{S} \mapsto \mathbb{R}$ \\
        $C$ & Cost function: $\mathcal{S} \times \mathcal{A} \times \mathcal{S} \rightarrow \mathbb{R}$\\
        $D$ & Cost increment function: $(\mathcal{S}, \mathcal{M}) \times \mathcal{A} \times \mathcal{S} \mapsto [0, \mathbb{R}^+]$\\
        $\mathcal{P}(\mathcal{A})$ & Probability distribution over actions \\
        $\pi$ & Stationary policy: $\mathcal{S} \mapsto \mathcal{P}(\mathcal{A})$ \\
        $\Pi$ & Set of all stationary policies \\
        $t$ & Time step along the trajectory \\
        $m$ & Number of the constraints \\
        $i$ & Index of the constraints \\
        $k$ & Index of the policy update iterations \\
        $w$ & Maximum cost for constraints \\
        $s_t$  & State at step $t$:  $s\in\mathcal{S}$  \\
        $a_t$  & Action at step $t$:  $a\in\mathcal{A}$ \\
        $M_t$  & Up-to-now maximum state-wise cost at step $t$:  $M\in\mathcal{M}$ \\
        $\hat{s}_t$  & Augmented state at step $t$:  $\hat{s}\in\hat{\mathcal{S}}$  \\
        $\tau$  & Trajectory: a sequence of action and state  \\
        $H / h$  & Horizon of a trajectory  \\
        $\pi(a|s)$  & Probability of selecting action $a$ in state $s$ \\
        $\pi(\cdot | s)$ & Probability distribution of all action in state $s$\\
        $P(\cdot|s, a)$ & Probability distribution of all next state in state $s$ with action $a$\\
        $\mathcal{J}(\pi)$ & Expectation performance of policy $\pi$\\
        $\mathcal{D}_{\pi}(\hs_0)$ & Maximum state-wise cost performance sample of policy $\pi$ \\
        $R(\tau)$ & Discounted return of a trajectory with infinite horizon \\
        $V_\pi$ & Value function with infinite horizon of policy $\pi$\\
        $Q_\pi$ & Action-value function with infinite horizon of policy $\pi$\\
        $A_\pi$ & Advantage function with infinite horizon of policy $\pi$\\
        $V^H_\pi$ & Value function with $H$ horizon of policy $\pi$\\
        $Q^H_\pi$ & Action-value function with $H$ horizon of policy $\pi$\\
        $A^H_\pi$ & Advantage function with $H$ horizon of policy $\pi$\\
        
        \bottomrule
    \end{tabular}
\end{table}

\begin{table}[H]
    \centering
    \caption{Notation Table of ASCPO \textbf{II}}
    \begin{tabular}{p{2cm} p{13cm}}
        \toprule
        $V^H_{[D]\pi}$ & Value function of cost increment $D$ with $H$ horizon of policy $\pi$\\
        $Q^H_{[D]\pi}$ & Action-value function of cost increment $D$ with $H$ horizon of policy $\pi$\\
        $A^H_{[D]\pi}$ & Advantage function of cost increment $D$ with $H$ horizon of policy $\pi$\\
        $F_\pi$ & HJ reachability value functio\\
        $\mathcal{B}$ & Upper probability bound\\
        ${p}$ & Confidence of the probability bound\\
        $k$ & Probability factor: $k \geq 0$, $k \in \mathbb{R}$\\
        $\mathcal{V}_{min}$ & The minima of $\mathcal{D}_{\pi}(\hs_0)$\\
        $\psi$ & $\mathcal{V}_{min} \in \mathbb{R}^+$ \\
        $\mathcal{E}_{[D]}(\pi)$ & Expectation of the distribution of $\mathcal{D}_{\pi}(\hs_0)$\\
        $\mathcal{V}_{[D]}(\pi)$ & Variance of the distribution of $\mathcal{D}_{\pi}(\hs_0)$\\
        $\mathcal{B}_{[D]k}(\pi)$ & Upper probability bound of $\mathcal{D}_{\pi}(\hs_0)$ with confidence $p_k^\psi$\\
        $\mathcal{J}^l_{\pi, \pi_j}$ & Surrogate function for policy update to bound $\mathcal{J}(\pi)$ from below \\
        $\mathcal{E}^{u}_{[D]\pi, \pi_j}$ & Surrogate function for policy update to bound $\mathcal{E}_{[D]}(\pi)$ from above \\
        $\overline{MV}_{[D]\pi,\pi_j}$ & Upper bound of expected variance of the maximum state-wise cost\\
        $\overline{VM}_{[D]\pi,\pi_j}$ & Upper bound of the variance of the expected maximum state-wise cost \\
        $\mathcal{D}_{KL}(\pi \| \pi_j)[{\hat s}]$ & KL divergence between two policies $(\pi, \pi_j)$ at state $\hat s$\\
        $d_{\pi}$ & Discounted future state distribution of policy $\pi$ \\
        $\bar d_{\pi}$ & Non-discounted future state distribution of policy $\pi$ \\
        $\epsilon_{[D]}^{\pi}$ & Maximum expected advantage of policy $\pi$\\
        ${\mathcal{R}}_\pi(\hs_0)$ & Discounted return starts at state $s_0$ with infinite horizon of policy $\pi$\\
        $V_\pi(\hs_0)$ & Value of state $\hs_0$ with infinite horizon of policy $\pi$\\
        ${\mathcal{R}}_\pi^H\hs_0)$ & Discounted return starts at state $s_0$ with $H$ horizon of policy $\pi$\\
        $V_\pi^H(\hs_0)$ & Value of state $\hs_0$ with $H$ horizon of policy $\pi$\\
        $\mathcal{E}(\pi)$ & Expectation of the distribution of  ${\mathcal{R}}_\pi(\hs_0)$ of policy $\pi$\\
        $\mathcal{V}(\pi)$ & Variance of the distribution of  ${\mathcal{R}}_\pi(\hs_0)$ of policy $\pi$\\
        $\mathcal{B}_{k}(\pi, \gamma)$ & Upper probability bound of ${\mathcal{R}}_\pi(\hs_0)$\\
        $MV_\pi$ & \textbf{MeanVariance} of policy ${\pi}$\\
        $VM_\pi$ & \textbf{VarianceMean} of policy ${\pi}$\\
        $\omega_\pi^h(\hs)$ & Variance of the state-action value function ${Q^h_\pi}$ at state ${\hs}$ with $h$ horizon\\
        $\omega_{[D]\pi}^h(\hs)$ & Variance of the state-action value function $Q^h_{[D]\pi}$ at state ${\hs}$ with $h$ horizon\\
        $\bm \Omega_\pi^h$ & The vector of $\omega_\pi^h(\hs)$\\
        $\bm \Omega_{[D]\pi}^{h}$ & The vector of $\omega_{[D]\pi}^h(\hs)$\\
        $\xi$ & Action probability ratio\\
        
        \bottomrule
    \end{tabular}
\end{table}
% To solve large and continuous MDPs, policy search algorithms search for the optimal policy within a set $\Pi_\theta \subset \Pi$ of parametrized policies. In local policy search~\citep{peters2008reinforcement}, the policy is iteratively updated by maximizing $\mathcal{J}(\pi)$ over a local neighborhood of the most recent policy $\pi_k$. In local policy search for SCMDPs, policy iterates must be feasible, so optimization is over $\Pi_\theta \bigcap \bar{\Pi}_{C}$. The optimization problem is:
% To optimize \eqref{eq: apo optimization original}, we need to evaluate the objective and constraint with respect to an unknown $\pi$. Although the exact computations of $\mathcal{J}(\pi), {\mathcal{E}}_{[D_i]}(\pi), {\mathcal{V}}_{[D_i]}(\pi)$ are infeasible before the actual rollout, Our main intuition is to find surrogate functions for the objective and constraint, such that (i) they include a tight lower bound of the objective and a tight upper bound of the constraint; and (ii) they can be easily estimated from samples collected from the most recent policy. 
To optimize \eqref{eq: apo optimization original}, we need to evaluate the objective and constraints under an unknown $\pi$. While the exact computations of $\mathcal{J}(\pi)$, ${\mathcal{E}}_{[D_i]}(\pi)$, and ${\mathcal{V}}_{[D_i]}(\pi)$ are infeasible before the actual rollout, we can alternatively find surrogate functions for the objective and constraints of \eqref{eq: apo optimization original} such that (i) they provide a tight lower bound for the objective and a tight upper bound for the constraints, and (ii) they can be easily estimated from samples collected from the most recent policy.

Therefore, we introduce (i) $\mathcal{J}^l_{\pi, \pi_j}$ as a surrogate function to bound $\mathcal{J}(\pi)$ from below, (ii) $~~\mathcal{E}^{u}_{[D_i]\pi, \pi_j}$ as a surrogate function to bound $\mathcal{E}_{[D_i]}(\pi)$ from above, and (iii) \\$\left(\overline{MV}_{[D_i]\pi,\pi_j} + \overline{VM}_{[D_i]\pi,\pi_j} \right)$ as a surrogate function to bound $\mathcal{V}_{[D_i]}(\pi)$ from above, in the $(j+1)$-th iteration. All three surrogate functions could be estimated using samples from $\pi$. Notice that the upper bound of $\mathcal{V}_{[D_i]}(\pi)$ involves two terms, where $\overline{MV}_{[D_i]\pi,\pi_j}$ reflects the upper bound of expected variance of the maximum state-wise cost over different start states. $\overline{VM}_{[D_i]\pi,\pi_j}$ reflects the upper bound of variance of the expected maximum state-wise cost of different start states. The detailed interpretations are shown in \cref{fig:mv and vm}.

\subsection{Surrogate Functions for Objective and Constraints}
\label{sec: surro func obj and cons}
In the following discussion, we derive the surrogate functions $\mathcal{J}^l_{\pi, \pi_j}$, $~~\mathcal{E}^{u}_{[D_i]\pi, \pi_j}$, $\overline{MV}_{[D_i]\pi,\pi_j}$, and $\overline{VM}_{[D_i]\pi,\pi_j}$.
\paragraph{Lower Bound for Objective}
To bound the objective below, we directly follow the policy performance bound introduced by \citeauthor{achiam2017constrained}, which provides a tight lower bound for the objective. Mathematically, 
\begin{align}
    &\mathcal{J}^l_{\pi, \pi_j}\doteq\mathcal{J}(\pi_j)+\frac{1}{1-\gamma} \underset{\substack{\substack{\hat s \sim d^{\pi_j}\\a\sim {\pi}}}}{\mathbb{E}} \bigg[ A^H_{\pi_j}(\hat s,a) - \frac{2\gamma \epsilon^{\pi}}{1-\gamma} \sqrt{\frac 12 \mathcal{D}_{KL}({\pi} \| \pi_j)[\hat s]} \bigg] .
\end{align}
where $\mathcal{D}_{KL}(\pi \| \pi_j)[{\hat s}]$ is the KL divergence between two policies $(\pi, \pi_j)$ at state $\hat s$ and $d_{\pi_j} \doteq (1-\gamma)\sum\limits_{t=0}^H\gamma^t P({\hat s}_t={\hat s}|{\pi_j})$ is the discounted future state distribution.

\paragraph{Upper Bound of Maximum State-wise Cost Expectation}
Different from $\mathcal{J}(\pi)$, ${\mathcal{E}}_{[D_i]}(\pi)$ takes the form of a non-discounted summation over a finite horizon. Here we follow the tight upper bound of ${\mathcal{E}}_{[D_i]}(\pi)$ introduced by SCPO from \citeauthor{zhao2024statewise}, 
\begin{align}
    \label{eq: value upper bound}
    &\mathcal{E}^{u}_{[D_i]\pi, \pi_j}\doteq\mathcal{E}_{[D_i]}(\pi) +  \underset{\substack{{\hat s} \sim \bar d_{\pi_j} \\ a\sim \pi}}{\mathbb{E}}\Bigg[  A_{[D_i]\pi_j}^{H}({\hat s},a) + 2(H+1)\epsilon_{[D_i]}^{\pi} \sqrt{\frac{1}{2} \mathbb{E}_{{\hat s} \sim \bar d_{\pi_j}}[\mathcal{D}_{KL}( \pi \| \pi_j)[{\hat s}]]}\Bigg] .
\end{align}
where $\epsilon_{[D_i]}^{\pi} \doteq \underset{\hs}{\mathbf{max}}|\underset{a\sim\pi}{\mathbb{E}}[A^H_{[D_i]\pi_j}(\hs,a)]|$ is the maximum expected advantage and $\bar d_{\pi_j} \doteq \sum\limits_{t=0}^H P({\hat s}_t={\hat s}|{\pi_j})$ is the non-discounted future state distribution.

\begin{prop} 
For any policies $\pi', \pi$, 
% with $\epsilon^{\pi'} \doteq \underset{\hs}{\mathbf{max}}|\underset{a\sim\pi'}{\mathbb{E}}[A^H_{\pi}(\hs,a)]|$, and define $\overline{d}_\pi = \sum\limits_{t=0}^H P(\hs_t=\hs|\pi)$ as the non-discounted state distribution using $\pi$, then 
the following bound holds:
\begin{align}
  \mathcal{E}_{[D_i]}(\pi') \leq 
  % \mathcal{E}^u_{\pi^{\prime}, \pi} 
  \mathcal{E}^{u}_{[D_i]\pi, \pi'}
\end{align}
\label{lem: upper bound of mean}
\end{prop}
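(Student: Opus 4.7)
The plan is to follow the pattern used by Achiam et al.\ for CPO and its non-discounted adaptation in SCPO, carefully tailored to the finite-horizon, non-discounted cost-increment setting. The argument decomposes into (i) an exact performance-difference identity, (ii) a state-distribution swap, (iii) a variational (TV) bound on the distribution mismatch, and (iv) an application of Pinsker's inequality.

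First, I would establish a non-discounted finite-horizon analog of the Kakade--Langford performance-difference lemma for the cost-increment functional. Starting from the definition $\mathcal{E}_{[D_i]}(\pi) = \mathbb{E}_{\hat s_0 \sim \mu,\,\hat\tau \sim \pi}\left[\sum_{t=0}^{H} D_i(\hat s_t,a_t,\hat s_{t+1})\right]$ and a telescoping trick with the $H$-horizon value function $V^H_{[D_i]\pi'}$, one obtains
\begin{align*}
\mathcal{E}_{[D_i]}(\pi) - \mathcal{E}_{[D_i]}(\pi') \;=\; \underset{\hat s \sim \bar d_{\pi},\,a \sim \pi}{\mathbb{E}}\bigl[A^{H}_{[D_i]\pi'}(\hat s, a)\bigr].
\end{align*}
This is the non-discounted counterpart of the standard identity and underlies the surrogate form.

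Next, I would swap the state distribution on the right-hand side from $\bar d_{\pi}$ (hard to sample) to $\bar d_{\pi'}$ (easy to sample from most recent rollouts). The resulting error is controlled by $\|\bar d_{\pi}-\bar d_{\pi'}\|_1$ multiplied by $\epsilon^{\pi}_{[D_i]} \doteq \max_{\hat s}|\mathbb{E}_{a\sim\pi}[A^H_{[D_i]\pi'}(\hat s,a)]|$, since the inner action expectation can be bounded uniformly by $\epsilon^{\pi}_{[D_i]}$. To bound $\|\bar d_{\pi}-\bar d_{\pi'}\|_1$, I would run an induction over the horizon $t=0,\dots,H$, showing that at each step the state-distribution gap grows by at most $2\,\mathbb{E}_{\hat s \sim \bar d_{\pi'}}[D_{TV}(\pi\|\pi')(\hat s)]$ (via a coupling/one-step TV argument). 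Summing over $t=0,\dots,H$ produces the factor $2(H+1)$.

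Finally, I would apply Pinsker's inequality $D_{TV}(\pi\|\pi') \le \sqrt{\tfrac{1}{2}\,\mathcal{D}_{KL}(\pi\|\pi')}$ pointwise in $\hat s$, followed by Jensen's inequality to pull the square root outside the expectation. Combining these pieces yields
\begin{align*}
\mathcal{E}_{[D_i]}(\pi) \;\le\; \mathcal{E}_{[D_i]}(\pi') + \underset{\hat s \sim \bar d_{\pi'},\,a \sim \pi}{\mathbb{E}}\bigl[A^{H}_{[D_i]\pi'}(\hat s,a)\bigr] + 2(H+1)\,\epsilon^{\pi}_{[D_i]}\sqrt{\tfrac{1}{2}\,\mathbb{E}_{\hat s \sim \bar d_{\pi'}}[\mathcal{D}_{KL}(\pi\|\pi')[\hat s]]},
\end{align*}
which is exactly $\mathcal{E}^{u}_{[D_i]\pi,\pi'}$ (after identifying the roles of the two policies in the surrogate definition). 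The main obstacle will be the distribution-mismatch step: in the discounted infinite-horizon case the analogous coefficient $\tfrac{2\gamma}{1-\gamma}$ comes from a clean geometric sum, whereas in the non-discounted finite-horizon case I need a step-by-step recursion on the per-time-step occupancies to get the precise $2(H+1)$ factor without losing tightness. The rest of the argument is routine given the SCPO machinery assumed in the excerpt.
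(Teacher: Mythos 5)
Your proposal is correct and takes essentially the same route as the paper, whose entire proof of this proposition is a deferral to the SCPO reference (\citealp{zhao2024statewise}); the argument there is exactly the chain you describe — non-discounted finite-horizon performance-difference identity, occupancy swap from $\bar d_{\pi}$ to $\bar d_{\pi'}$ with error $\epsilon^{\pi}_{[D_i]}\|\bar d_{\pi}-\bar d_{\pi'}\|_1$, a per-step TV recursion over the $H+1$ time steps yielding the $2(H+1)$ factor, and Pinsker plus Jensen. The only friction is notational, not mathematical: the paper's index conventions for $\mathcal{E}^{u}_{[D_i]\cdot,\cdot}$ are internally inconsistent between the proposition statement and its later use, and your reading (new policy bounded by a surrogate evaluated on the old policy's state distribution) is the intended one.
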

\begin{proof}
The proof of \Cref{lem: upper bound of mean} follows \cite{zhao2024statewise}.
\end{proof}

\paragraph{Upper Bound of Maximum State-wise Cost Variance}
To understand maximum state-wise cost variance $\mathcal{V}_{[D_i](\pi)}$, we begin with
establishing a general version of \textbf{performance variance}, where we regard the cost increment function $D_i$ 
as a broader reward function $R$ with a discount factor $\gamma$. Note that this use of $R$ is a symbolic overload and differs from the previously defined reward. This redefinition aims to simplify the following discussion.
% Subsequently, by letting $\gamma \rightarrow 1^-$, we effectively demonstrate the proof of \Cref{theo: high prob stasfication}.

First we define ${\mathcal{R}}_\pi(\hs_0) = \sum_{t=0}^\infty \gamma^t R(\hs_t, a_t, \hs_{t+1})$ as infinite-horizon discounted return starts at state $s_0$ and define expected return ${V_\pi(\hs_0) = \underset{\hat \tau \sim \pi}{ \mathbb{E}}\big[\mathcal{R}_\pi(\hs_0)\big]}$ as the value of state $\hs_0$. Notice that for finite horizon MDP, ${\mathcal{R}}_\pi(\hs_0) = \mathcal{R}_\pi^H(\hs_0)$, where ${\mathcal{R}}_\pi(\hs_0) = \sum_{t=0}^H \gamma^t R(\hs_t, a_t, \hs_{t+1})$. Then for all trajectories $\hat \tau \sim \pi$ start from state ${\hs_0 \sim \mu}$, the expectation and variance of  $\mathcal{R}_\pi(\hs_0)$ can be respectively defined as  $\mathcal{E}(\pi)$ and $\mathcal{V}(\pi)$. Following \Cref{def: abs bound}, we define $\mathcal{B}_{k}(\pi, \gamma) \doteq \mathcal{E}(\pi)+k\mathcal{V}(\pi)$ as the upper probability bound of ${\mathcal{R}}_\pi(\hs_0)$ with discount term $\gamma$, and we treat $\mathcal{B}_{k}(\pi) = \mathcal{B}_{k}(\pi, 1)$. Formally:

\begin{align}
    \mathcal{E}(\pi) & = \underset{\substack{\hs_0 \sim \mu \\ {\hat \tau}\sim {\pi}}}{\mathbb{E}}\bigg[{\mathcal{R}}_\pi(\hs_0)\bigg] 
    = \underset{\substack{\hs_0 \sim \mu}}{\mathbb{E}} \bigg[V_\pi(\hs_0)\bigg] \\  
    \mathcal{V}(\pi) &= \underset{\substack{\hs_0 \sim \mu \\ {\hat \tau}\sim {\pi}}}{\mathbb{E}} \bigg[\big(\mathcal{R}_\pi(\hs_0) - \mathcal{E}(\pi)\big)^2\bigg] \label{eq:variance interpretation} \\ \nonumber 
    &= \underset{\substack{\hs_0 \sim \mu}}{\mathbb{E}}\bigg[\underset{\substack{{\hat \tau}\sim {\pi}}}{\mathbb{V}ar}\big[\mathcal{R}_\pi(\hs_0)\big] +\bigg[\underset{{\hat \tau} \sim \pi}{ \mathbb{E}}\big[\mathcal{R}_\pi(\hs_0)\big]\bigg]^2\bigg] - \mathcal{E}(\pi)^2  \\ \nonumber 
    &= \underset{\substack{\hs_0 \sim \mu}}{\mathbb{E}}\bigg[\underset{\substack{{\hat \tau}\sim {\pi}}}{\mathbb{V}ar}\big[\mathcal{R}_\pi(\hs_0)\big] + V_\pi(\hs_0)^2\bigg] - \mathcal{E}(\pi)^2  \\ \nonumber 
    &= \underset{\substack{\hs_0 \sim \mu}}{\mathbb{E}}\bigg[\underset{\substack{{\hat \tau}\sim {\pi}}}{\mathbb{V}ar}\big[\mathcal{R}_\pi(\hs_0)\big]\bigg] + \underset{\substack{\hs_0 \sim \mu}}{\mathbb{E}}\bigg[V_\pi(\hs_0)^2\bigg] - \mathcal{E}(\pi)^2  \\ \nonumber 
    &= {\underset{\substack{\hs_0 \sim \mu}}{\mathbb{E}}\bigg[\underset{\substack{{\hat \tau}\sim {\pi}}}{\mathbb{V}ar}\big[\mathcal{R}_\pi(\hs_0)\big]\bigg]} + \underset{\hs_0 \sim \mu}{\mathbb{V}ar} [V_\pi(\hs_0)] \\
    &= \underbrace{\underset{\substack{\hs_0 \sim \mu}}{\mathbb{E}}\bigg[\underset{\substack{{\hat \tau}\sim {\pi}}}{\mathbb{V}ar}\big[\mathcal{R}_\pi^H(\hs_0)\big]\bigg]}_{MeanVariance} + \underbrace{\underset{\hs_0 \sim \mu}{\mathbb{V}ar} [V_\pi^H(\hs_0)]}_{VarianceMean}
\end{align}
Note that for the derivation of ${\mathcal{V}(\pi)}$, we treat the return of all trajectories as a mixture of one-dimensional distributions. Each distribution consists of the returns of trajectories from the same start state. The variance can then be divided into two parts: \\
1. \textbf{MeanVariance} reflects the expected variance of the return over different start states. \\
2. \textbf{VarianceMean} reflects the variance of the average return of different start states. 

Subsequently, we can derive the bound of \textbf{MeanVariance} and \textbf{VarianceMean} with the following propositions, where the proofs of \Cref{lem: bound of MV} and \Cref{lem: bound of VM} are summarized in \Cref{proof: MeanVariance Bound} and \Cref{proof: VarianceMean Bound}, respectively. The analysis of \Cref{lem: bound of MV} leverages the performance variance expression for finite horizon Markov Decision Processes (MDPs) and applies divergence analysis to establish the difference for each term in the expression between two policies. \Cref{lem: bound of VM} is analyzed by explicitly breaking down the individual terms within $VarianceMean$, which are combinations of the value function, and then examining the value function differences between the two policies.

Additionally, \Cref{lem: bound of MV} and \Cref{lem: bound of VM} are results with discount term $\gamma$, and we will get to non-discount result in \Cref{sec: ascpo optimization main theory}.

\begin{prop}[Bound of MeanVariance]
\label{lem: bound of MV}
Denote \textbf{MeanVariance} of policy ${\pi}$ as $MV_\pi = \underset{\substack{\hs_0 \sim \mu}}{\mathbb{E}}\bigg[\underset{\substack{{\hat \tau}\sim {\pi}}}{\mathbb{V}ar}\big[\mathcal{R}_\pi^H(\hs_0)\big]\bigg]$. Given two policies $\pi', \pi$, the following bound holds:
\begin{align}
    |MV_{\pi'} - MV_{\pi}| & \leq   \|\mu^\top\|_\infty \sum_{h=1}^H \bigg\{ \gamma^{2(H-h)} \underset{\hs}{\mathbf{max}}\bigg|\underset{\substack{\\a\sim\pi\\\hs'\sim P}}{\mathbb{E}}\left[\left(\frac{\pi^{\prime}(a|\hs)}{\pi(a|\hs)}-1\right) A_{\pi}^h(\hs,a,\hs')^2\right] \\ \nonumber 
 &~~~~~+ 2\underset{\substack{a \sim \pi \\ \hs' \sim P}}{\mathbb{E}}\left[\left(\frac{\pi^{\prime}(a|\hs)}{\pi(a|\hs)}\right)A_{\pi}^h(\hs,a,\hs')\right]|K^h(\hs,a,\hs')|_{max} + |K^h(\hs,a,\hs')|_{max}^2\bigg| \\ \nonumber 
 &~~~~~+ 2\gamma^{2(H-h)} \|\bm \Omega_\pi^{h}\|_\infty \bigg\}
\end{align}
where $A^h_\pi(\hs,a) = \mathbb{E}_{\hs' \sim P}[A^h_\pi(\hs,a,\hs')] \doteq Q^h_\pi(\hs, a) - V^h_\pi(\hs)$, $\bm \Omega_\pi^h = \begin{bmatrix}
    \omega_\pi^h(\hs^1) \\
    \omega_\pi^h(\hs^2) \\
    \vdots
\end{bmatrix}$ and $\omega_\pi^h(\hs) = \underset{\substack{a \sim \pi \\ \hs' \sim P}}{\mathbb{E}} \big[Q_\pi^h(\hs,a,\hs')^2\big] - V_{\pi}^h(\hs)^2$ is defined as the variance of the state-action value function ${Q^h_\pi}$ at state ${\hs}$ for $h$-horizon MDP.
\begin{align}
    |K^h(\hs,a,\hs')|_{max} &= \left|L^h(\hs,a,\hs')\right| + \frac{4\epsilon(\gamma - \gamma^h)}{(1-\gamma)^2}\mathcal{D}_{KL}^{max}(\pi'\|\pi) \\ \nonumber
    L^h(\hs,a,\hs')&= \gamma\underset{\substack{\hs_0 = \hs' \\{\hat \tau} \sim \pi}}{\mathbb{E}}\bigg[\sum_{t=0}^{h-2} \gamma^t A_{\pi',\pi}^{h-1-t}(\hs_t)\bigg] - \underset{\substack{\hs_0 = \hs \\{\hat \tau} \sim \pi}}{\mathbb{E}}\bigg[\sum_{t=0}^{h-1} \gamma^tA_{\pi',\pi}^{h-t}(\hs_t)\bigg]\\ \nonumber
    \epsilon &= \underset{\hs, a, h}{\mathbf{max}}|A_\pi^h(\hs,a)|
\end{align}
where $A^h_{\pi',\pi}(\hs) =  \underset{a \sim \pi'}{\mathbb{E}}\bigg[A^h_{\pi}(\hs,a)\bigg]$.
\end{prop}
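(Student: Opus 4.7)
The plan is to start from a Bellman-style variance decomposition and then track how each component changes when the policy is perturbed from $\pi$ to $\pi'$. First I would establish the identity that, for a finite-horizon MDP, the per-trajectory variance admits a telescoping form
$\underset{\hat\tau\sim\pi}{\mathbb{V}ar}[\mathcal{R}_\pi^H(\hs_0)] = \sum_{h=1}^{H}\gamma^{2(H-h)}\underset{\hs_h\sim P_\pi^h(\cdot|\hs_0)}{\mathbb{E}}[\omega_\pi^h(\hs_h)]$,
where $\omega_\pi^h(\hs)$ is exactly the local state-action value variance defined in the statement. Taking the outer $\hs_0\sim\mu$ expectation then writes $MV_\pi$ as a weighted sum of $h$-step expectations of $\omega_\pi^h$, which isolates the quantities to be controlled.

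Next I would form the difference $MV_{\pi'} - MV_{\pi}$ term by term in $h$. At each horizon, the difference naturally splits into two parts. The first part comes from replacing $\pi$ with $\pi'$ in the action-sampling distribution while keeping the underlying value function fixed at $\pi$; applying importance sampling produces the factor $\pi'(a|\hs)/\pi(a|\hs) - 1$ multiplying $A_\pi^h(\hs,a,\hs')^2$, which matches the first summand inside $|\cdot|$ in the bound. The second part captures the shift of the value function itself from $\pi$ to $\pi'$, and I would absorb it into an auxiliary quantity $L^h(\hs,a,\hs')$ defined, as in the statement, by the expected difference of advantage-style terms along $\pi$-trajectories starting from $\hs$ versus $\hs'$. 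The cross term between these two perturbations produces the $2\,\mathbb{E}[(\pi'/\pi)A_\pi^h]\cdot|K^h|_{\max}$ contribution, and the pure value-shift square contributes $|K^h|_{\max}^2$.

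To pass from $L^h$ to $K^h$, I would invoke the performance-difference lemma iterated across the residual horizon, using $\epsilon = \max_{\hs,a,h}|A_\pi^h(\hs,a)|$ as a uniform advantage bound and Pinsker's inequality to convert the total-variation gap between $\pi$ and $\pi'$ into $\mathcal{D}_{KL}^{\max}(\pi'\|\pi)$. Summing the geometric series $\sum_{t}\gamma^t$ across the relevant truncated horizon produces the $\frac{4\epsilon(\gamma-\gamma^h)}{(1-\gamma)^2}\mathcal{D}_{KL}^{\max}(\pi'\|\pi)$ correction exactly. The remaining $2\gamma^{2(H-h)}\|\bm\Omega_\pi^h\|_\infty$ term arises as the baseline cross-term that survives when pairing the two decompositions of $MV_{\pi'}$ and $MV_\pi$, and I would bound it by pulling out $\|\bm\Omega_\pi^h\|_\infty$ and collecting the two-policy action-distribution mass difference via Hölder. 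Finally, extracting $\|\mu^\top\|_\infty$ from the outer initial-state expectation via Hölder yields the stated horizon sum.

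The main obstacle will be the second step: carefully tracking the cross terms that appear when $\omega_{\pi'}^h - \omega_\pi^h$ is expanded into a squared advantage part plus a value-shift part. Getting the right $h$-dependent constants — in particular, verifying that the recursive performance-difference control collapses to the closed form $\frac{4\epsilon(\gamma-\gamma^h)}{(1-\gamma)^2}\mathcal{D}_{KL}^{\max}$ rather than a looser horizon-summed expression — requires a delicate induction over $h$. Once this is in place, the rest is a book-keeping exercise combining triangle inequalities, Pinsker, and the variance decomposition established in the first step.
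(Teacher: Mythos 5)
Your plan follows essentially the same route as the paper's proof: the paper obtains your opening telescoping identity by unrolling the finite-horizon Sobel variance recursion $\bm X_\pi^H=\gamma^2\hat P_\pi\bm X_\pi^{H-1}+\bm\Omega_\pi^H$, then splits each horizon term exactly as you describe into (i) the $\|\bm\Omega_{\pi'}^h-\bm\Omega_\pi^h\|_\infty$ piece, expanded via the importance ratio $\pi'/\pi$ and the advantage shift $K^h=A_{\pi'}^h-A_\pi^h$ controlled by a TRPO-style coupling lemma plus Pinsker, and (ii) the state-distribution-shift piece, which it bounds crudely by $\|(\gamma^2\hat P_{\pi'})^{H-h}-(\gamma^2\hat P_\pi)^{H-h}\|_\infty\le 2\gamma^{2(H-h)}$ to give the $2\gamma^{2(H-h)}\|\bm\Omega_\pi^h\|_\infty$ term you attribute to the surviving cross-term. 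No induction over $h$ is needed for the constant $\frac{4\epsilon(\gamma-\gamma^h)}{(1-\gamma)^2}$; it falls out of summing the two truncated geometric series from the coupling bound applied to the two expectations in $K^h-L^h$.
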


\begin{prop}[Bound of VarianceMean]
\label{lem: bound of VM}
Denote \textbf{VarianceMean} of policy ${\pi}$ as $VM_\pi = \underset{\hs_0 \sim \mu}{\mathbb{V}ar} [V^H_\pi(\hs_0)]$. Given two policies $\pi', \pi$, the \textbf{VarianceMean} of $\pi'$ can be bounded by:
\begin{align}
    VM_{\pi'}  &\leq \underset{\hs_0 \sim \mu}{\mathbb{E}} [(V^H_\pi(\hs_0))^2] + \|\mu^T\|_\infty\underset{\hs}{\mathbf{max}}\bigg||\eta(\hs)|_{max}^2+2|V^H_\pi(\hs)|\cdot|\eta(\hs)|_{max}\bigg| \\ \nonumber
    &- \left(\mathbf{min}\left\{\mathbf{max}\left\{0,\ \mathcal{E}^l_{\pi^{\prime}, \pi}\right\}, \mathcal{E}^u_{\pi^{\prime}, \pi}\right\}\right)^2
\end{align}
where
\begin{align}
    |\eta(\hs)|_{max} &= \left|L(\hs)\right| + \frac{2\epsilon(\gamma - \gamma^H)}{(1-\gamma)^2}\mathcal{D}^{max}_{KL}(\pi'\|\pi) \\ \nonumber
    L(\hs) &= \underset{\substack{\hs_0 = \hs \\{\hat \tau} \sim \pi}}{\mathbb{E}}\bigg[\sum_{t=0}^{H-1} \gamma^t\bar A^{H-t}_{\pi',\pi}(\hs_t)\bigg] \\ \nonumber
    \mathcal{E}^l_{\pi^{\prime}, \pi}&=\mathcal{E}(\pi) + \underset{\substack{\hs \sim \overline{d}_\pi \\ a\sim {\pi'}}}{\mathbb{E}} \bigg[ A^H_\pi(\hs,a) - 2(H+1)\epsilon^{\pi'} \sqrt{\frac 12 \mathcal{D}_{KL}({\pi'} \| \pi)[\hs]} \bigg] \\ \nonumber
    \mathcal{E}^u_{\pi^{\prime}, \pi}&=\mathcal{E}(\pi) + \underset{\substack{\hs \sim \overline{d}_\pi \\ a\sim {\pi'}}}{\mathbb{E}} \bigg[ A^H_\pi(\hs,a) + 2(H+1)\epsilon^{\pi'} \sqrt{\frac 12 \mathcal{D}_{KL}({\pi'} \| \pi)[\hs]} \bigg]~~~~~~.
\end{align}
\end{prop}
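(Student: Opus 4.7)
The plan is to start from the variance identity
\begin{equation*}
VM_{\pi'} = \underset{\hs_0 \sim \mu}{\mathbb{E}}\bigl[V^H_{\pi'}(\hs_0)^2\bigr] - \mathcal{E}(\pi')^2,
\end{equation*}
which reduces the task to (i) upper bounding the first term by quantities evaluated at the known policy $\pi$, and (ii) lower bounding $\mathcal{E}(\pi')^2$ so that subtraction preserves the inequality direction. Setting $\eta(\hs) \doteq V^H_{\pi'}(\hs) - V^H_\pi(\hs)$ and expanding
\begin{equation*}
V^H_{\pi'}(\hs_0)^2 = V^H_\pi(\hs_0)^2 + 2V^H_\pi(\hs_0)\eta(\hs_0) + \eta(\hs_0)^2,
\end{equation*}
I would take expectation under $\mu$ and bound the two deviation terms pointwise via $\mathbb{E}_\mu[|\cdot|] \leq \|\mu^\top\|_\infty \max_\hs|\cdot|$. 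Provided $|\eta(\hs)|$ is controlled by $|\eta(\hs)|_{max}$, this already produces the first two summands on the right-hand side of the claim.

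The core technical step is establishing $|\eta(\hs)| \leq |\eta(\hs)|_{max}$. I would invoke the performance difference lemma and average the inner action under $\pi'$ to write
\begin{equation*}
\eta(\hs) = \underset{\substack{\hs_0=\hs \\ \hat\tau\sim\pi'}}{\mathbb{E}}\bigg[\sum_{t=0}^{H-1} \gamma^t \bar{A}^{H-t}_{\pi',\pi}(\hs_t)\bigg].
\end{equation*}
Swapping the rollout distribution from $\pi'$ to $\pi$ inside the outer expectation reproduces exactly $L(\hs)$. The residual is then bounded by a Kakade--Langford/TRPO-style coupling: the one-step transition laws of $\pi'$ and $\pi$ differ in total variation by at most $\tfrac{1}{2}\mathcal{D}^{max}_{KL}(\pi'\|\pi)$ via Pinsker, and chaining this coupling over $H$ transitions, combined with $|\bar{A}^{H-t}_{\pi',\pi}|\leq\epsilon$ and the geometric factor $\sum_t \gamma^t$, yields the residual $\frac{2\epsilon(\gamma-\gamma^H)}{(1-\gamma)^2}\mathcal{D}^{max}_{KL}(\pi'\|\pi)$. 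A triangle inequality delivers the pointwise bound $|\eta(\hs)| \leq |L(\hs)| + \frac{2\epsilon(\gamma-\gamma^H)}{(1-\gamma)^2}\mathcal{D}^{max}_{KL}(\pi'\|\pi) = |\eta(\hs)|_{max}$, which suffices to upper bound $|\eta|^2$ and $2|V^H_\pi||\eta|$ inside the $\max$.

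For the $-\mathcal{E}(\pi')^2$ subtraction, I would apply the symmetric trust-region surrogate bounds of \citeauthor{achiam2017constrained} to obtain $\mathcal{E}^l_{\pi',\pi} \leq \mathcal{E}(\pi') \leq \mathcal{E}^u_{\pi',\pi}$, which are precisely the quantities defined in the proposition statement. Since the returns under consideration are non-negative cost increments, I additionally have $\mathcal{E}(\pi')\geq 0$. The sharpest admissible non-negative lower bound consistent with the surrogate upper bound is $\min\{\max\{0,\mathcal{E}^l_{\pi',\pi}\},\mathcal{E}^u_{\pi',\pi}\}$ (the inner $\max$ enforces non-negativity; the outer $\min$ preserves consistency when the raw bounds are misordered). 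Squaring this non-negative quantity delivers the final subtracted term.

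The main obstacle will be the second step, namely deriving the KL-controlled bound on $|\eta(\hs)|$ with the specific horizon-dependent constant $\frac{2\epsilon(\gamma-\gamma^H)}{(1-\gamma)^2}$. This requires carefully telescoping the per-step coupling errors across $H$ transitions and tracking how the discount weighting $\gamma^t$ interacts with the cumulative TV penalty, rather than being absorbed into a crude horizon-independent $(1-\gamma)^{-1}$ factor; the remaining algebraic bookkeeping and the lower-bound manipulation for $\mathcal{E}(\pi')$ are then routine.
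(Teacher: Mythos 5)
Your proposal matches the paper's proof in both structure and substance: the same variance identity $VM_{\pi'}=\mathbb{E}_\mu[(V^H_{\pi'})^2]-\mathcal{E}(\pi')^2$, the same performance-difference representation of $\eta(\hs)=V^H_{\pi'}(\hs)-V^H_\pi(\hs)$ with the rollout swapped from $\pi'$ to $\pi$ and the residual controlled by the TRPO coupling lemma plus Pinsker, and the same clipped surrogate lower bound $\min\{\max\{0,\mathcal{E}^l_{\pi',\pi}\},\mathcal{E}^u_{\pi',\pi}\}$ for $\mathcal{E}(\pi')$. The only cosmetic difference is that you expand $(V^H_\pi+\eta)^2$ while the paper factors $(V^H_{\pi'})^2-(V^H_\pi)^2=\eta\cdot(V^H_{\pi'}+V^H_\pi)$; both yield the identical bound $|\eta|_{max}^2+2|V^H_\pi|\,|\eta|_{max}$.
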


With \Cref{lem: bound of MV} and \Cref{lem: bound of VM}, given existing policy $\pi$, we can define the upper bounds of $MV$ and $VM$ for unkonwn policy $\pi'$ as:

\begin{align}
\label{eq: MV and VM bound original definitions}
    &~~~~{MV}_{\pi',\pi} = MV_{\pi} + \|\mu^\top\|_\infty \sum_{h=1}^H \bigg( \gamma^{2(H-h)} \underset{\hs}{\mathbf{max}}\bigg|\underset{\substack{\\a\sim\pi\\\hs'\sim P}}{\mathbb{E}}\left[\left(\frac{\pi^{\prime}(a|\hs)}{\pi(a|\hs)}-1\right) A_{\pi}^h(\hs,a,\hs')^2\right] \\ \nonumber 
    &~~~~~~~~~~~~~~ + 2\underset{\substack{a \sim \pi \\ \hs' \sim P}}{\mathbb{E}}\left[\left(\frac{\pi^{\prime}(a|\hs)}{\pi(a|\hs)}\right)A_{\pi}^h(\hs,a,\hs')\right]|K^h(\hs,a,\hs')|_{max} + |K^h(\hs,a,\hs')|_{max}^2\bigg|\\ \nonumber 
    &~~~~~~~~~~~~~~ + 2\gamma^{2(H-h)} \|\bm \Omega_\pi^{h}\|_\infty \bigg) \\ \nonumber
    &~~~~{VM}_{\pi',\pi} = \underset{\hs_0 \sim \mu}{\mathbb{E}} [(V^H_\pi(\hs_0))^2] + \|\mu^T\|_\infty\underset{\hs}{\mathbf{max}}\bigg||\eta(\hs)|_{max}^2+2|V^H_\pi(\hs)|\cdot|\eta(\hs)|_{max}\bigg| \\ \nonumber
    &~~~~~~~~~~~~~~ -\left(\mathbf{min}\left\{\mathbf{max}\left\{0,\ \mathcal{E}^l_{\pi^{\prime}, \pi}\right\}, \mathcal{E}^u_{\pi^{\prime}, \pi}\right\}\right)^2~~~~~~.
\end{align}

By treating $R$ as the cost increment function $D_i$ and letting $\gamma \rightarrow 1^-$ (shown in proof of \Cref{theo: high prob stasfication}) from \eqref{eq: MV and VM bound original definitions}, we effectively obtain the upper bound of  $\mathcal{V}_{[D_i](\pi)}$ as $\left(\overline{MV}_{[D_i]\pi,\pi_j} + \overline{VM}_{[D_i]\pi,\pi_j} \right)$, where 
\begin{align}
    \label{eq: mv def}
     &\overline{MV}_{[D_i]\pi,\pi_j} 
     % =
     \doteq
     MV_{[D_i]\pi_j} + \\
     &~~\textcolor{darkred}{\|\mu^\top\|_\infty} \sum_{h=1}^H \bigg( \textcolor{darkgreen}{\underset{\hs}{\mathbf{max}}}\bigg|\underset{\substack{\\a\sim\pi_j\\\hs'\sim P}}{\mathbb{E}}\left[\left(\xi_j-1\right) (A_{i,j}^h)^2 + 2\xi_j A_{i,j}^h |\overline{K}_{i}^h| + |\overline{K}_{i}^h|^2\right]\bigg| + 2 \|\bm \Omega_{i,j}^{h}\|_\infty \bigg) \nonumber\\ 
\label{eq: vm def}
     &\overline{VM}_{[D_i]\pi,\pi_j} 
     % =
     \doteq
     \underset{\hs_0 \sim \mu}{\mathbb{E}} [(V^H_{[D_i]\pi_j}(\hs_0))^2] \\ \nonumber
     &~~~~~+ \textcolor{darkred}{\|\mu^\top\|_\infty}\textcolor{darkgreen}{\underset{\hs}{\mathbf{max}}}\bigg||\overline{\eta}_{[D_i]}(\hs)|_{max}^2+2|V^H_{[D_i]\pi_j}(\hs)|\cdot|\overline{\eta}_{[D_i]}(\hs)|_{max}\bigg| -\mathcal{E}_{i,j}^* \nonumber
\end{align}
where $\omega_{[D_i]\pi_j}^h(\hs) = \underset{\substack{a \sim \pi \\ \hs' \sim P}}{\mathbb{E}} \big[Q_{[D_i]\pi_j}^h(\hs,a,\hs')^2\big] - V_{[D_i]\pi_j}^h(\hs)^2$ is defined as the variance of the state-action value function; $\bm \Omega_{[D_i]\pi_j}^{h} \doteq \begin{bmatrix}
    \omega_{[D_i]\pi_j}^h(\hs^1) &
    \omega_{[D_i]\pi_j}^h(\hs^2) &
    \hdots
\end{bmatrix}^\top$ is the vector of variance of state-action value; $MV_{[D_i]\pi_j} \doteq \underset{\substack{\hs_0 \sim \mu}}{\mathbb{E}}\bigg[\underset{\substack{{\hat \tau}\sim {\pi}}}{\mathbb{V}ar}\big[\mathcal{D}_{i\pi}^H(\hs_0)\big]\bigg]$ is the expectation of maximum state-wise cost variance over initial state distribution;
$\xi_j \doteq \frac{\pi(a|\hs)}{\pi_j(a|\hs)}$ is the action probability ratio;
$A_{i,j}^h \doteq A_{[D_i]\pi_j}^h$ is the cost advantage;\\
$\mathcal{E}_{i,j}^* \doteq \left\{\mathbf{min}\left\{\mathbf{max}\left\{0,\ {\mathcal{E}}^{l}_{[D_i]\pi, \pi_j}\right\}, {\mathcal{E}}^{u}_{[D_i]\pi, \pi_j}\right\}\right\}^2$ is the minimal squared expectation of $\mathcal{D}_{i\pi}(\hs_0)$;
the lower bound surrogate function of $\mathcal{E}_{[D_i]}(\pi)$ is defined as:
\begin{align}
    &\mathcal{E}^{l}_{[D_i]\pi, \pi_j}\doteq\mathcal{E}_{[D_i]}(\pi) +  \underset{\substack{{\hat s} \sim \bar d_{\pi_j} \\ a\sim \pi}}{\mathbb{E}}\Bigg[  A_{[D_i]\pi_j}^{H}({\hat s},a) - 2(H+1)\epsilon_{[D_i]}^{\pi} \sqrt{\frac{1}{2} \mathbb{E}_{{\hat s} \sim \bar d_{\pi_j}}[\mathcal{D}_{KL}( \pi \| \pi_j)[{\hat s}]]}\Bigg] 
\end{align}
Additionally,
\begin{align}
    \label{eq: k define}
    &~~~~|\overline{K}_{i}^h| \doteq \textcolor{darkblue}{|\overline{K}_{[D_i]}^h(\hs,a,\hs')|_{max}} = \left|\underset{\substack{\hs_0 = \hs' \\{\hat \tau} \sim \pi_j}}{\mathbb{E}}\bigg[\sum_{t=0}^{h-2} \bar A_{[D_i]\pi',\pi_j}^{h-1-t}(\hs_t)\bigg] - \underset{\substack{\hs_0 = \hs \\{\hat \tau} \sim \pi_j}}{\mathbb{E}}\bigg[\sum_{t=0}^{h-1} \bar A_{[D_i]\pi,\pi_j}^{h-t}(\hs_t)\bigg]\right| \\ \nonumber 
    &~~~~~~~~~~~~~~~~~~~~~~~~~~~~~~~~~~~~~+ \epsilon_{[D_i]} h(1-h)\mathcal{D}^{max}_{KL}(\pi\|\pi_j) \\ 
    &~~~~\textcolor{darkgreen}{|\overline{\eta}_{[D_i]}(\hs)|_{max}} = \left|\underset{\substack{\hs_0 = \hs \\{\hat \tau} \sim \pi_j}}{\mathbb{E}}\bigg[\sum_{t=0}^{H-1} \bar A^{H-t}_{[D_i]\pi,\pi_j}(\hs_t)\bigg]\right| + \epsilon_{[D_i]} H(1-H)\mathcal{D}^{max}_{KL}(\pi\|\pi_j), 
\end{align}
where $\epsilon_{[D_i]} = \underset{\hs,a,h}{\mathbf{max}}|A_{[D_i]\pi}^h(\hs,a)|$ and $\mathcal{D}^{max}_{KL}(\pi\|\pi_j) = \underset{\hs}{max}~\mathcal{D}_{KL}(\pi\|\pi_j)[\hs]$.

\subsection{ASCPO Optimization}
\label{sec: ascpo optimization main theory}
With the surrogate functions derived in \Cref{sec: surro func obj and cons}, ASCPO solves the following optimization by looking for the optimal policy within a set $\Pi_\theta \subset \Pi$ of $\theta$-parametrized policies:
% To solve large and continuous MDPs, policy search algorithms look for the optimal policy within a set $\Pi_\theta \subset \Pi$ of $\theta$-parametrized policies. Mathematically, ASCPO solves the following optimization:
% \vspace{-120pt}

\begin{align}
\label{eq: ascpo optimization final}
    & \pi_{j+1} =  \underset{\pi \in \Pi_{\theta}}{\textbf{argmax}}  ~ \mathcal{J}^l_{\pi, \pi_j},~\textbf{s.t.}~ \forall i,  ~{\mathcal{E}}^u_{[D_i]\pi, \pi_j} + k\left(\overline{MV}_{[D_i]\pi,\pi_j} + \overline{VM}_{[D_i]\pi,\pi_j} \right)  \leq w_i.
\end{align}

\begin{theorem}[High Probability State-wise Constraints Satisfaction]
\label{theo: high prob stasfication} Suppose $\pi, \pi'$ are related by \eqref{eq: ascpo optimization final}, then the maximum state-wise cost for $\pi'$ satisfies 
% $ \mathcal{B}_k(\pi) = \mathcal{E}(\pi)+k\mathcal{V}(\pi)$ satisfies $\mathcal{B}_k(\pi') \leq \mathcal{B}_k(\pi)$.
\begin{align}
    \forall i \in 1,\cdots ,m,  Pr\big({\mathcal{D}_{\pi^\prime}}_i(\hs_0)\leq w_i \big) \geq p .\nonumber 
\end{align}
\end{theorem}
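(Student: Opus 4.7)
The plan is to chain together the surrogate bounds from \Cref{lem: upper bound of mean}, \Cref{lem: bound of MV}, and \Cref{lem: bound of VM} to verify the hypothesis of \Cref{prop: absolute bound definition}, from which the high-probability guarantee follows immediately.

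First I would set $\pi' = \pi_{j+1}$ and unpack the feasibility constraint of \eqref{eq: ascpo optimization final}, namely that for every $i$,
\begin{align*}
\mathcal{E}^{u}_{[D_i]\pi',\pi_j} + k\left(\overline{MV}_{[D_i]\pi',\pi_j} + \overline{VM}_{[D_i]\pi',\pi_j}\right) \leq w_i .
\end{align*}
By \Cref{lem: upper bound of mean}, the first summand dominates the true expectation: $\mathcal{E}_{[D_i]}(\pi') \leq \mathcal{E}^{u}_{[D_i]\pi',\pi_j}$. It therefore remains to show that the bracketed expression dominates the true variance, i.e.\ $\mathcal{V}_{[D_i]}(\pi') \leq \overline{MV}_{[D_i]\pi',\pi_j} + \overline{VM}_{[D_i]\pi',\pi_j}$.

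To obtain the variance inequality, I would apply the decomposition $\mathcal{V}(\pi') = MV_{\pi'} + VM_{\pi'}$ derived in the discussion preceding \Cref{lem: bound of MV}, then invoke \Cref{lem: bound of MV} and \Cref{lem: bound of VM} with $R$ specialized to the cost increment $D_i$. Those propositions are stated with a generic discount factor $\gamma$, whereas the maximum state-wise cost $\mathcal{D}_{i\pi'}(\hs_0)$ corresponds to an undiscounted finite-horizon sum. The bridge is to pass to the limit $\gamma \to 1^-$ in the definitions \eqref{eq: MV and VM bound original definitions}: the $\gamma$-dependent prefactors $\gamma^{2(H-h)}$ tend to $1$, and the KL-induced terms of the form $\frac{\epsilon(\gamma-\gamma^h)}{(1-\gamma)^2}$ reduce, via L'Hôpital or the explicit expansion $1+\gamma+\cdots+\gamma^{h-1}$, to $\tfrac{1}{2}\epsilon_{[D_i]} h(h-1)$, matching the $\epsilon_{[D_i]}h(1-h)$ absolute values appearing in \eqref{eq: k define}. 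The surrogate \eqref{eq: mv def}–\eqref{eq: vm def} is precisely the resulting expression with $MV_{[D_i]\pi_j}$ added back to convert the ``difference'' bound of \Cref{lem: bound of MV} into an upper bound, and with $\mathcal{E}_{i,j}^*$ chosen as a lower bound of $\mathcal{E}_{[D_i]}(\pi')^2$ (via the clipped interval $[\mathcal{E}^l_{[D_i]\pi',\pi_j},\mathcal{E}^u_{[D_i]\pi',\pi_j}]$ combined with the trivial lower bound $0$) so that subtracting it preserves the inequality direction in \Cref{lem: bound of VM}.

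Combining the two steps yields $\mathcal{E}_{[D_i]}(\pi') + k\mathcal{V}_{[D_i]}(\pi') \leq w_i$, so $\mathcal{B}_{[D_i]k}(\pi') \leq w_i$. Applying \Cref{prop: absolute bound definition} gives $Pr(\mathcal{D}_{i\pi'}(\hs_0) \leq \mathcal{B}_{[D_i]k}(\pi')) \geq p_k^{\psi}$, and monotonicity of the event $\{\mathcal{D}_{i\pi'}(\hs_0)\leq \cdot\}$ together with $\mathcal{B}_{[D_i]k}(\pi') \leq w_i$ upgrades this to $Pr(\mathcal{D}_{i\pi'}(\hs_0) \leq w_i) \geq p_k^\psi = p$.

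The main obstacle I foresee is the $\gamma\to 1^-$ passage: the discounted bounds in \Cref{lem: bound of MV} and \Cref{lem: bound of VM} contain $\tfrac{1}{(1-\gamma)^2}$ singularities that must be shown to cancel against the $(\gamma-\gamma^h)$ numerators, and one must verify that the finite-horizon value and advantage functions used in \eqref{eq: mv def}–\eqref{eq: vm def} are the pointwise limits of their discounted counterparts. Apart from this limiting argument, the rest of the proof is essentially a bookkeeping exercise of stitching the surrogate inequalities together and invoking \Cref{prop: absolute bound definition}.
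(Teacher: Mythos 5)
Your proposal follows essentially the same route as the paper's own proof: chain \Cref{lem: upper bound of mean}, \Cref{lem: bound of MV} and \Cref{lem: bound of VM} to dominate $\mathcal{E}_{[D_i]}(\pi') + k\mathcal{V}_{[D_i]}(\pi')$ by the surrogate appearing in the constraint of \eqref{eq: ascpo optimization final}, pass to the limit $\gamma\to 1^-$, and conclude via \Cref{prop: absolute bound definition} plus monotonicity of the tail event. The only caution concerns the limit you flag as the main obstacle: the closed form $\frac{\epsilon(\gamma-\gamma^h)}{(1-\gamma)^2}=\frac{\epsilon\gamma(1+\gamma+\cdots+\gamma^{h-2})}{1-\gamma}$ actually diverges as $\gamma\to1^-$ rather than tending to $\tfrac12\epsilon h(h-1)$, so the finite $h(h-1)$-type constant must be extracted from the partial-sum form $\sum_{t=0}^{h-1}\gamma^t\cdot 4\alpha(1-(1-\alpha)^t)\epsilon\le 2\epsilon h(h-1)\alpha^2$ before the geometric series is summed — which is also what the paper's assertion that $\mathcal{F}(\gamma)$ is a polynomial implicitly requires.
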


\begin{proof}
    With \Cref{lem: upper bound of mean}, \Cref{lem: bound of MV} and \Cref{lem: bound of VM}, we have the following upper bound of absolute performance bound $\mathcal{B}_k(\pi', \gamma)$:
\begin{align}
\label{eq: bk upper bound}
    \mathcal{B}_k(\pi', \gamma) \leq \mathcal{E}^u_{\pi', \pi} + k\left({MV}_{\pi',\pi} + {VM}_{\pi',\pi} \right)
\end{align}
% where
% \begin{align}
% \nonumber
%     &~~~~{MV}_{\pi',\pi} = MV_{\pi} + \|\mu^\top\|_\infty \sum_{h=1}^H \bigg( \gamma^{2(H-h)} \underset{\hs}{\mathbf{max}}\bigg|\underset{\substack{\\a\sim\pi\\\hs'\sim P}}{\mathbb{E}}\left[\left(\frac{\pi^{\prime}(a|\hs)}{\pi(a|\hs)}-1\right) A_{\pi}^h(\hs,a,\hs')^2\right] \\ \nonumber 
%     &~~~~~~~~~~~~~~ + 2\underset{\substack{a \sim \pi \\ \hs' \sim P}}{\mathbb{E}}\left[\left(\frac{\pi^{\prime}(a|\hs)}{\pi(a|\hs)}\right)A_{\pi}^h(\hs,a,\hs')\right]|K^h(\hs,a,\hs')|_{max} + |K^h(\hs,a,\hs')|_{max}^2\bigg|\\ \nonumber 
%     &~~~~~~~~~~~~~~ + 2\gamma^{2(H-h)} \|\bm \Omega_\pi^{h}\|_\infty \bigg) \\ \nonumber
%     &~~~~{VM}_{\pi',\pi} = \underset{\hs_0 \sim \mu}{\mathbb{E}} [(V^H_\pi(\hs_0))^2] + \|\mu^T\|_\infty\underset{\hs}{\mathbf{max}}\bigg||\eta(\hs)|_{max}^2+2|V^H_\pi(\hs)|\cdot|\eta(\hs)|_{max}\bigg| \\ \nonumber
%     &~~~~~~~~~~~~~~ -\left(\mathbf{min}\left\{\mathbf{max}\left\{0,\ \mathcal{E}^l_{\pi^{\prime}, \pi}\right\}, \mathcal{E}^u_{\pi^{\prime}, \pi}\right\}\right)^2 \\ \nonumber
%     &~~~~\mathcal{E}^u_{\pi^{\prime}, \pi}=\mathcal{E}(\pi) + \underset{\substack{\hs \sim \overline{d}_\pi \\ a\sim {\pi'}}}{\mathbb{E}} \bigg[ A^H_\pi(\hs,a) + 2(H+1)\epsilon^{\pi'} \sqrt{\frac 12 \mathcal{D}_{KL}({\pi'} \| \pi)[\hs]} \bigg]
% \end{align}

Note that according to \Cref{lem: bound of MV} and \Cref{lem: bound of VM}, we can only get \Cref{eq: bk upper bound} holds when $\gamma \in (0,1)$. 
To extend the result to non-discounted version, we observe \\
$\mathcal{F}(\gamma) \doteq \mathcal{E}^u_{\pi', \pi} + k\left({MV}_{\pi',\pi} + {VM}_{\pi',\pi} \right) - \mathcal{B}_k(\pi', \gamma)$ is a polynomial function and coefficients are all limited with the following conditions holds:

\begin{align}
\label{cond: F}
    &\mathcal{F}(\gamma) \geq 0, \text{when}~ \gamma \in (0,1)\\ \nonumber
    &\mathcal{F}(\gamma)\text{'s domain of definition is }  \mathcal{R} \\ \nonumber
    &\mathcal{F}(\gamma)\text{ is a polynomial function}
\end{align}

we have $\underset{\gamma\rightarrow 1^-}{\lim}\mathcal{F}(\gamma)$ exists and $\mathcal{F}(\gamma)$ is continuous at point $(1, \mathcal{F}(1)$). So $\mathcal{F}(1) = \underset{\gamma\rightarrow 1^-}{\lim}\mathcal{F}(\gamma)\geq 0$, which equals to:

\begin{align}
\nonumber
  \mathcal{B}_k(\pi', 1) \leq \mathcal{E}^u_{\pi', \pi} + k\left({\overline{MV}}_{\pi',\pi} + {\overline{VM}}_{\pi',\pi} \right)
\end{align}
where
\begin{align}
\nonumber
    &~~~~\overline{MV}_{\pi',\pi} = MV_{\pi} + \|\mu^\top\|_\infty \sum_{h=1}^H \bigg( \underset{\hs}{\mathbf{max}}\bigg|\underset{\substack{\\a\sim\pi\\\hs'\sim P}}{\mathbb{E}}\left[\left(\frac{\pi^{\prime}(a|\hs)}{\pi(a|\hs)}-1\right) A_{\pi}^h(\hs,a,\hs')^2\right] \\ \nonumber 
    &~~~~~~~~~~~~~~ + 2\underset{\substack{a \sim \pi \\ \hs' \sim P}}{\mathbb{E}}\left[\left(\frac{\pi^{\prime}(a|\hs)}{\pi(a|\hs)}\right)A_{\pi}^h(\hs,a,\hs')\right]|\overline{K}^h(\hs,a,\hs')|_{max} + |\overline{K}^h(\hs,a,\hs')|_{max}^2\bigg| + 2 \|\bm \Omega_\pi^{h}\|_\infty \bigg) \\ \nonumber
    &~~~~\overline{VM}_{\pi',\pi} = \underset{\hs_0 \sim \mu}{\mathbb{E}} [(V^H_\pi(\hs_0))^2] + \|\mu^T\|_\infty\underset{\hs}{\mathbf{max}}\bigg||\overline{\eta}(\hs)|_{max}^2+2|V^H_\pi(\hs)|\cdot|\overline{\eta}(\hs)|_{max}\bigg| \\ \nonumber
    &~~~~~~~~~~~~~~ -\left(\mathbf{min}\left\{\mathbf{max}\left\{0,\ \mathcal{E}^l_{\pi^{\prime}, \pi}\right\}, \mathcal{E}^u_{\pi^{\prime}, \pi}\right\}\right)^2 \\ \nonumber
    &~~~~|\overline{K}^h(\hs,a,\hs')|_{max} = \left|\underset{\substack{\hs_0 = \hs' \\{\hat \tau} \sim \pi}}{\mathbb{E}}\bigg[\sum_{t=0}^{h-2} A_{\pi',\pi}^{h-1-t}(\hs_t)\bigg] - \underset{\substack{\hs_0 = \hs \\{\hat \tau} \sim \pi}}{\mathbb{E}}\bigg[\sum_{t=0}^{h-1} A_{\pi',\pi}^{h-t}(\hs_t)\bigg]\right| + \epsilon h(1-h)\mathcal{D}^{max}_{KL}(\pi'\|\pi) \\ \nonumber
    &~~~~|\overline{\eta}(\hs)|_{max} = \left|\underset{\substack{\hs_0 = \hs \\{\hat \tau} \sim \pi}}{\mathbb{E}}\bigg[\sum_{t=0}^{H-1} \bar A^{H-t}_{\pi',\pi}(\hs_t)\bigg]\right| + \epsilon H(1-H)\mathcal{D}^{max}_{KL}(\pi'\|\pi)) 
\end{align}

We define $\mathcal{M}_k^j(\pi) = \mathcal{E}^u_{\pi, \pi_j} + k\left({\overline{MV}}_{\pi,\pi_j} + {\overline{VM}}_{\pi,\pi_j} \right)$, and by \Cref{eq: bk upper bound}, we have $\mathcal{B}_k(\pi_{j+1})\leq \mathcal{M}_k^j(\pi_{j+1})$. Thus, by constraining $\mathcal{M}_k^j$ under threshold $w$ at each iteration, we guarantee that the true $\mathcal{B}_k$ is under $w$. 

% \begin{remark}
%     $\mathcal{M}_k^j(\pi)$ is the constraints in our optimization problem which we can guarantee its under-threshold property theoretically. Thus the $\mathcal{M}_k^j(\pi)$ less than $w$, which can lead to the satisfaction of $\mathcal{B}_k(\pi)$ constraint.
% \end{remark}

Mathematically, the following inequality holds true:
\begin{align}
    \label{eq: ineq of prob}
    Pr\big(\mathcal{R}_{\pi_{j+1}}(\hs_0)\leq w)& \geq 
    Pr\big(\mathcal{R}_{\pi_{j+1}}(\hs_0)\leq \mathcal{M}_k^j(\pi_{j+1}))  \\ \nonumber
   &\geq  Pr\big(\mathcal{R}_{\pi_{j+1}}(\hs_0)\leq \mathcal{B}_k(\pi_{j+1})) \geq p .
\end{align}
Thus by bringing cost increment function $\mathcal{D}_i$ into function $\mathcal{R}$, we prove that \Cref{theo: high prob stasfication} holds.
\end{proof}

% The proof for \Cref{theo: high prob stasfication} is summarized in \Cref{proof: high prob stasfication}.
% In essence, we introduce the function $\mathcal{M}_{[D_i]k}^j(\pi) = {\mathcal{E}}^u_{[D_i]\pi, \pi_j} + k\left(\overline{MV}_{[D_i]\pi,\pi_j} + \overline{VM}_{[D_i]\pi,\pi_j} \right)$ (the left-hand side of constraint in \eqref{eq: ascpo optimization final}). Our demonstration establishes that $\mathcal{M}_{[D_i]k}^j(\pi)$ serves as the upper bound for the absolute bound $\mathcal{B}_{[D_i]k}(\pi)$ through the application of \Cref{lem: bound of MV}, \Cref{lem: bound of VM}, and \Cref{lem: upper bound of mean}. 
Furthermore, according to (Theorem 1, \citep{achiam2017constrained}), we also have a performance guarantee for ASCPO:
\begin{theorem}[Monotonic Improvement of Performance]
\label{theo: performance improvement} Suppose $\pi, \pi'$ are related by \eqref{eq: ascpo optimization final}, then performance $\mathcal{J}(\pi)$ satisfies $\mathcal{J}(\pi') \geq \mathcal{J}(\pi)$.
\end{theorem}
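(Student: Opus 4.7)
The plan is to invoke the lower-bound surrogate structure of the objective in \eqref{eq: ascpo optimization final} and mimic the standard monotonic-improvement argument of \citeauthor{achiam2017constrained}, with the single observation that the previous iterate $\pi_j$ is itself a feasible point of the update problem.

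\paragraph{Step 1: Recall the lower bound.} By construction, $\mathcal{J}^l_{\pi, \pi_j}$ is a tight lower bound on $\mathcal{J}(\pi)$ in the sense that $\mathcal{J}(\pi) \geq \mathcal{J}^l_{\pi, \pi_j}$ for every $\pi$, with equality when $\pi = \pi_j$. The equality at $\pi_j$ follows because $\mathbb{E}_{a \sim \pi_j}[A^H_{\pi_j}(\hat s, a)] = 0$ for every $\hat s$, and $\mathcal{D}_{KL}(\pi_j \| \pi_j)[\hat s] = 0$, so the entire correction term in $\mathcal{J}^l_{\pi_j, \pi_j}$ collapses and we obtain $\mathcal{J}^l_{\pi_j, \pi_j} = \mathcal{J}(\pi_j)$.

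\paragraph{Step 2: Show $\pi_j$ is feasible for the update.} I would evaluate the constraint surrogate ${\mathcal{E}}^u_{[D_i]\pi, \pi_j} + k(\overline{MV}_{[D_i]\pi,\pi_j} + \overline{VM}_{[D_i]\pi,\pi_j})$ at $\pi = \pi_j$. The same vanishing of the advantage integrals and of $\mathcal{D}_{KL}(\pi_j \| \pi_j)$ makes every divergence-dependent correction drop out, so the surrogate reduces to $\mathcal{E}_{[D_i]}(\pi_j) + k\mathcal{V}_{[D_i]}(\pi_j) = \mathcal{B}_{[D_i]k}(\pi_j)$. By the inductive use of \Cref{theo: high prob stasfication} at iteration $j$ (or by the feasibility assumption on the initial policy), this quantity is already bounded by $w_i$, hence $\pi_j$ lies in the feasible set of \eqref{eq: ascpo optimization final}.

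\paragraph{Step 3: Chain the inequalities.} Since $\pi_{j+1}$ is the $\operatorname{argmax}$ of $\mathcal{J}^l_{\cdot,\pi_j}$ over the feasible set and $\pi_j$ belongs to that set, $\mathcal{J}^l_{\pi_{j+1}, \pi_j} \geq \mathcal{J}^l_{\pi_j, \pi_j}$. Combining with Steps 1 and 2 gives
\begin{align}
\mathcal{J}(\pi_{j+1}) \;\geq\; \mathcal{J}^l_{\pi_{j+1}, \pi_j} \;\geq\; \mathcal{J}^l_{\pi_j, \pi_j} \;=\; \mathcal{J}(\pi_j),
\end{align}
which is exactly the claim. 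The only delicate point, and the place I would be most careful, is Step 2: one must verify that the additional variance-controlling terms $\overline{MV}$ and $\overline{VM}$ really do collapse to the true variance $\mathcal{V}_{[D_i]}(\pi_j)$ at $\pi = \pi_j$, so that feasibility of $\pi_j$ with respect to the expanded ASCPO constraint follows from the inductive feasibility at the previous iteration. Once that is in hand, the rest of the argument is a direct transcription of the CPO improvement proof.
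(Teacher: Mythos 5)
Your overall strategy --- a lower-bound surrogate for the objective, feasibility of the previous iterate, then the argmax comparison --- is exactly the CPO-style argument the paper has in mind; the paper itself offers nothing beyond a citation of (Theorem 1, \citep{achiam2017constrained}), so your Steps 1 and 3 are a faithful reconstruction of the intended proof and are correct as far as they go.

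The problem is the step you yourself flagged as delicate: it does not go through. At $\pi=\pi_j$ the divergence-dependent corrections in ${\mathcal{E}}^u_{[D_i]\pi,\pi_j}$ and $\overline{VM}_{[D_i]\pi,\pi_j}$ do vanish (the advantage expectations and KL terms are zero, and $\mathcal{E}^l=\mathcal{E}^u=\mathcal{E}_{[D_i]}(\pi_j)$ makes $\mathcal{E}_{i,j}^*$ collapse correctly), but $\overline{MV}_{[D_i]\pi,\pi_j}$ in \eqref{eq: mv def} contains the additive term $\|\mu^\top\|_\infty\sum_{h=1}^H 2\|\bm \Omega_{[D_i]\pi_j}^{h}\|_\infty$, which originates from the triangle-inequality bound $\|(\gamma^2\hat P_{\pi'})^{H-h}-(\gamma^2\hat P_{\pi_j})^{H-h}\|_\infty\leq 2\gamma^{2(H-h)}$ in the proof of \Cref{lem: bound of MV} and is entirely independent of the candidate policy $\pi$. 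It is strictly positive whenever the cost-increment action-value function has any variance under $\pi_j$, so $\overline{MV}_{[D_i]\pi_j,\pi_j} > MV_{[D_i]\pi_j}$ and the surrogate constraint evaluated at $\pi=\pi_j$ strictly exceeds $\mathcal{B}_{[D_i]k}(\pi_j)$. Consequently, constraint satisfaction at iteration $j$ (via \Cref{theo: high prob stasfication} or an assumption on $\pi_0$) does \emph{not} imply that $\pi_j$ is feasible for \eqref{eq: ascpo optimization final}, and without that feasibility the comparison $\mathcal{J}^l_{\pi_{j+1},\pi_j}\geq\mathcal{J}^l_{\pi_j,\pi_j}$ has no justification. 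Closing the argument requires either assuming outright that $\pi_j$ is feasible for the surrogate problem (a strictly stronger hypothesis than the theorem states) or invoking the recovery and line-search machinery of \Cref{sec:practical}, which lies outside the statement being proved. To be fair, the paper glosses over this entirely; your reconstruction has the virtue of making the hidden assumption visible, but the assertion in Step 2 that the surrogate ``reduces to $\mathcal{E}_{[D_i]}(\pi_j)+k\mathcal{V}_{[D_i]}(\pi_j)$'' is false as written.
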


\section{Practical Implementation}
\label{sec:practical}
% The direct implementation of \eqref{eq: ascpo optimization final} is challenging due to (i) small update steps brought by including KL divergence term in objective~\citep{schulman2015trust}, (ii) hard-to-exactly-compute parameters, i.e., infinity norm terms or supremum terms.  
% In this section, we show how to implement \eqref{eq: ascpo optimization final} more practically via
% (i) encourage larger update steps with trust region constraint, (ii) simplify complex computations.
The direct implementation of \eqref{eq: ascpo optimization final} presents challenges due to (i) small update steps caused by the inclusion of the KL divergence term in the objective function~\citep{schulman2015trust}, and (ii) the difficulty of precisely computing parameters, such as the infinity norm terms or the supremum terms. In this section, we demonstrate a more practical approach to implement \eqref{eq: ascpo optimization final} by (i) encouraging larger update steps through the use of a trust region constraint, and (ii) simplifying complex computations by equivalent transformations of the optimization problem.
Additinoally, we show how to (iii) encourage learning even when \eqref{eq: ascpo optimization final} becomes infeasible and (iv) handle the difficulty of fitting augmented value $V_{[D_i]}^\pi$.
The ASCPO pseudocode is summarized in \Cref{algo: ascpo_full}.
% \Cref{algo: ascpo} while the full version \Cref{algo: ascpo_full} 
% is provided in Appendix~\ref{append: ascpo} which also covers 

\paragraph{Trust Region Constraint} 
% In practice, strictly following the theoretical recommendations for the coefficients of KL divergence terms in \eqref{eq: ascpo optimization final} often leads to very small step sizes. Instead, a practical approach is to enforce a constraint on the KL divergence between the new and old policies~\citep{schulman2015trust}, commonly known as a trust region constraint. This strategy allows for taking larger steps in a robust way:
While the theoretical recommendations for the coefficients of the KL divergence terms in \eqref{eq: ascpo optimization final} often result in very small step sizes when followed strictly, a more practical approach is to impose a constraint on the KL divergence between the new and old policies. This constraint, commonly referred to as a trust region constraint~\citep{schulman2015trust}, allows for the taking of larger steps in a robust way:
\begin{align}
\label{eq: ascpo optimization weighted sum version}
    \pi_{j+1} &= \underset{\pi \in \Pi_{\theta}}{\textbf{argmax}} ~ \frac{1}{1-\gamma} \underset{\substack{\hat s \sim d_{\pi_j} \\ a\sim {\pi}}}{\mathbb{E}} \left[ A_{\pi_j}(s,a) \right]\\ \nonumber
    &\textbf{s.t.}~ \underset{{\hat s} \sim \bar d_{\pi_j}}{\mathbb{E}}[\mathcal{D}_{KL}(\pi \| \pi_j)[{\hat s}]]\leq \delta, \\ \nonumber
    & ~~~~~~~~~\forall i,  ~{{\mathcal{E}}}^u_{[D_i]\pi, \pi_j} + k\left(\overline{MV}_{[D_i]\pi,\pi_j} + \overline{VM}_{[D_i]\pi,\pi_j} \right)  \leq w_i.
\end{align}
where $\delta$ is the step size, 
% $\widetilde{\mathcal{E}}^u_{[D_i]\pi, \pi_j} \doteq\mathcal{E}^u_{[D_i]\pi, \pi_j} - 2(H+1)\epsilon_{[D_i]}^{\pi} \sqrt{\frac{1}{2} \mathbb{E}_{{\hat s} \sim \bar d_{\pi_j}}[\mathcal{D}_{KL}( \pi \| \pi_j)[{\hat s}]]}$
% $\widetilde{MV}_{[D_i]\pi,\pi_j}\doteq\overline{MV}_{[D_i]\pi,\pi_j} - MV_{[D_i]\pi_j}$ and $\widetilde{VM}_{[D_i]\pi.\pi_j} = \overline{VM}_{[D_i]\pi,\pi_j}-\underset{\hs_0 \sim \mu}{\mathbb{E}} [(V^H_{[D_i]\pi_j}(\hs_0))^2]$.
the set $\{\pi \in \Pi_\theta ~: ~ \mathbb{E}_{{\hat s} \sim \bar d^{\pi_j}}[\mathcal{D}_{KL}(\pi \| \pi_k)[{\hat s}]] \leq \delta\}$ is called \textit{trust region}. 

% Notice that $MV_{[D_i]\pi_j}$ and $\underset{\hs_0 \sim \mu}{\mathbb{E}} [(V^H_{[D_i]\pi_j}(\hs_0))^2]$ are computable constant.
\paragraph{Special Parameter Handling}
When implementing \ref{eq: ascpo optimization weighted sum version}, we first treat two items as hyperparameters. 
(i) \bm{$\textcolor{darkred}{\|\mu^\top\|_\infty}$}:~ 
Although the infinity norm of $\mu^\top$ is theoretically equal to 1, we found that treating it as a hyperparameter in $\mathbb{R}^+$ enhances performance in practical implementation. 
(ii) \bm{$\textcolor{darkblue}{|\overline{K}^h_{[D_i]}{(\hs,a,\hs')}|_{max}}$}: ~ We can either compute $|\overline{K}^h_{[D_i]}(\hs,a,\hs')|_{max}$ from the most recent policy with \eqref{eq: k define} or treat it as a hyperparameter since $|\overline{K}^h_{[D_i]}(\hs,a,\hs')|_{max}$ is bounded for any system with a bounded reward function. Due to the instability in the estimation error for this item and the highly erratic nature of taking the maximum value, the performance of the effect is highly unreliable. Consequently, we treated it as a hyperparameter in practice, which yielded excellent results.
(iii) \bm{$\textcolor{darkgreen}{|\overline{\eta}_{[D_i]}{(\hs)}|_{max}}$} and $\textcolor{darkgreen}{\underset{\hs}{\bm{max}}|~\bm{\cdot}~|}$: ~Furthermore, we find that taking the average of the state $\hs$ instead of the maximum yields superior and more stable convergence results. It is noteworthy that a similar technique has been employed in \citep{schulman2015trust} to manage maximum KL divergence.

\begin{algorithm}
\caption{Absolute State-wise Constrained Policy Optimization}
\label{algo: ascpo_full}
\begin{algorithmic}
\State \textbf{Input:} Initial policy $\pi_0\in\Pi_\theta$.
\For{$j=0,1,2,\dots$}
\State Sample trajectory $\tau\sim\pi_j=\pi_{\theta_j}$
\State Estimate gradient $g \gets \nabla_{\theta}\frac{1}{1-\gamma} \underset{\substack{\hs \sim d_{\pi_j} \\ a\sim {\pi}}}{\mathbb{E}} \left[ A_{\pi_j}(\hs,a) \right]\rvert_{\theta=\theta_j}$\Comment{\cref{sec:practical}}
% \State Define:
% \State ~~~~~~~~~~~~~$\mathcal{X}_{\pi, \pi_j}\doteq \underset{\substack{\hs \sim \overline{d}_{\pi_j} \\ a\sim {\pi}}}{\mathbb{E}} \left[ A_{[D_i]\pi_j}(\hs,a) \right] + k\left(\widetilde{MV}_{[D_i]\pi,\pi_j} + \widetilde{VM}_{[D_i]\pi,\pi_j} \right)$ 
% \State ~~~$\widetilde{MV}_{[D_i]\pi,\pi_j}\doteq\overline{MV}_{[D_i]\pi,\pi_j} - MV_{[D_i]\pi_j}$
% \State ~~~$\widetilde{VM}_{[D_i]\pi.\pi_j} = \overline{VM}_{[D_i]\pi,\pi_j}-\underset{\hs_0 \sim \mu}{\mathbb{E}} [(V^H_{[D_i]\pi_j}(\hs_0))^2]$
\State Estimate gradient $b_i \gets \nabla_{\theta} \mathcal{X}_{\pi, \pi_j} \rvert_{\theta=\theta_j}, \forall i=1,2,\dots,m$\Comment{\cref{eq: x def,eq: mv def in x,eq: vm def in x}}
\State Estimate Hessian $H \gets \nabla^2_{\theta} \underset{{\hat s} \sim \bar d_{\pi_j}}{\mathbb{E}}[\mathcal{D}_{KL}(\pi \| \pi_j)[{\hat s}]]\rvert_{\theta=\theta_j}$
\State Compute $c_i, \forall i=1,2,\dots,m$ \Comment{\cref{eq: ci def}} 
\State Solve convex programming \Comment{\cite{achiam2017cpo}}\begin{align*}
    \theta^*_{j+1} = \underset{\theta}{\textbf{argmax}} &~~~ g^\top(\theta-\theta_j) \\
    \textbf{s.t.}~ &~~~\frac{1}{2}(\theta-\theta_j)^\top H (\theta-\theta_j) \leq \delta \\
    &~~~ c_i + b_i^\top(\theta-\theta_j) \leq 0,~i=1,2,\dots,m 
\end{align*} 
\State Get search direction $\Delta\theta^* \gets \theta^*_{j+1} - \theta_j$
\For{$k=0,1,2,\dots$} \Comment{Line search}
\State $\theta' \gets \theta_{j} + \xi^k\Delta\theta^*$ \Comment{$\xi\in(0,1)$ is the backtracking coefficient}
\If{
$\underset{{\hat s} \sim \bar d_{\pi_j}}{\mathbb{E}}[\mathcal{D}_{KL}(\pi_{\theta'} \| \pi_j)[{\hat s}]] \leq \delta$ 
\textbf{and} \Comment{Trust region, \cref{constr: cost decrease}}\\
$~~~~~~~~~~~~~~\mathcal{X}_{\pi_{\theta'}, \pi_j} - \mathcal{X}_{\pi_j, \pi_j} \leq \mathrm{max}(-c_i,0),~\forall i$ \textbf{and} \Comment{Costs, \cref{constr: cost decrease}}\\
$~~~~~~~~~~~~~~\left(\underset{\substack{\hs \sim d_{\pi_{\theta'}} \\ a\sim {\pi}}}{\mathbb{E}} \left[ A_{\pi_j}(\hs,a) \right] \geq \underset{\substack{\hs \sim d_{\pi_j} \\ a\sim {\pi}}}{\mathbb{E}} \left[ A_{\pi_j}(\hs,a) \right] \mathbf{or}~\mathrm{infeasible~\eqref{eq: ascpo optimization weighted sum version}}\right)$} \Comment{Rewards}
\State $\theta_{j+1} \gets \theta'$ \Comment{Update policy}
\State \textbf{break}
\EndIf
\EndFor
\EndFor
\end{algorithmic}
\end{algorithm}

\paragraph{Infeasible Constraints}
An update to $\theta$ is 
% proposed 
computed
every time \eqref{eq: ascpo optimization final} is solved.
However, due to approximation errors, sometimes \eqref{eq: ascpo optimization final} can become infeasible.
In that case, we propose an recovery update that 
only
decreases the constraint value within the trust region.
In addition, approximation errors can also cause the proposed policy update (either feasible or recovery) to violate the original constraints in \eqref{eq: ascpo optimization final}.
Hence, each policy update is followed by a backtracking line search to ensure constraint satisfaction.
% In practice, however, it is still sometimes hard to satisfy the cost constraints within the trust region.
% Hence, 
If all these fails,
we relax the search condition by also accepting decreasing expected advantage with respect to the costs, when the cost constraints are already violated.
Define:
\begin{align}
    \label{eq: ci def}
    &c_i\doteq \mathcal{E}_{[D_i]}(\pi) + 2(H+1)\epsilon_{[D_i]}^{\pi} \sqrt{\frac{1}{2} \mathbb{E}_{{\hat s} \sim \bar d_{\pi_j}}[\mathcal{D}_{KL}( \pi \| \pi_j)[{\hat s}]]}  \\ \nonumber
    &~~~~~~~~~+ MV_{[D_i]\pi_j} + \underset{\hs_0 \sim \mu}{\mathbb{E}} [(V^H_{[D_i]\pi_j}(\hs_0))^2] - w_i\\
    \label{eq: x def}
    &\mathcal{X}_{\pi, \pi_j}\doteq \underset{\substack{\hs \sim \overline{d}_{\pi_j} \\ a\sim {\pi}}}{\mathbb{E}} \left[ A_{[D_i]\pi_j}(\hs,a) \right] + k\left(\widetilde{MV}_{[D_i]\pi,\pi_j} + \widetilde{VM}_{[D_i]\pi,\pi_j} \right)
\end{align}
where
\begin{align}    
    \label{eq: mv def in x}
    &\widetilde{MV}_{[D_i]\pi,\pi_j}\doteq\overline{MV}_{[D_i]\pi,\pi_j} - MV_{[D_i]\pi_j}\\
    \label{eq: vm def in x}
    &\widetilde{VM}_{[D_i]\pi.\pi_j}\doteq \overline{VM}_{[D_i]\pi,\pi_j}-\underset{\hs_0 \sim \mu}{\mathbb{E}} [(V^H_{[D_i]\pi_j}(\hs_0))^2].
\end{align}

The above criteria can be summarized into a set of new constraints as
\begin{align}
\left\{
\begin{aligned}
    &\underset{{\hat s} \sim \bar d_{\pi_j}}{\mathbb{E}}[\mathcal{D}_{KL}(\pi \| \pi_j)[{\hat s}]]\leq \delta \\
    &\mathcal{X}_{\pi, \pi_j} - \mathcal{X}_{\pi_j, \pi_j} \leq \mathrm{max}(-c_i,0),~\forall i 
\end{aligned}
\right.  
\label{constr: cost decrease}
\end{align}

\paragraph{Imbalanced Cost Value Targets}
\begin{wrapfigure}{r}{0.5\textwidth}
\centering
  \includegraphics[width=0.5\textwidth]{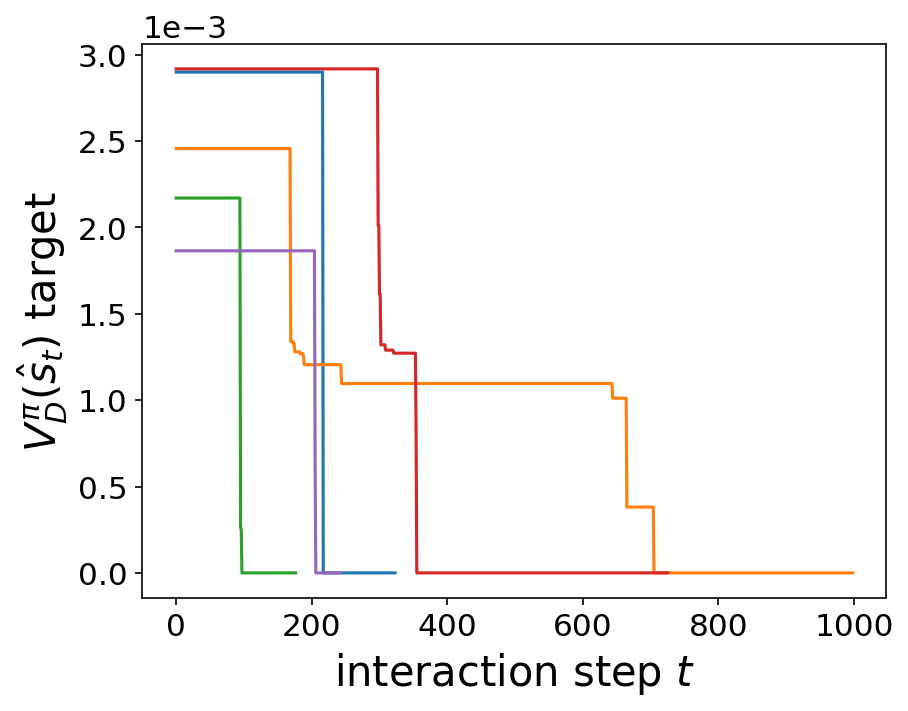}
  \caption{$V_{[D_i]\pi}(\hs)$ target of five sampled episodes.}
  \label{fig:vc_target}
  % \vspace{-20pt}
\end{wrapfigure}
% A critical step in resolving \Cref{eq: ascpo optimization weighted sum version} involves fitting cost increment value functions $V_{[D_i]\pi}(\hs)$. As demonstrated in \citep{zhao2024statewise}, $V_{[D_i]\pi}(\hs)$ is equal to the maximum cost increment in any future state over the maximal state-wise cost increment so far. In other words, $V_{[D_i]\pi}(\hs)$ forms a step function characterized by (i) pronounced zero-skewness, and (ii) monotonically decreasing trends. To alleviate the unbalanced value target population, we adopt the sub-sampling technique introduced in \cite{zhao2024statewise}. 
% To encourage the fitting of the monotonically decreasing tread, we design addition loss $\mathcal{L}_{\tau}$ for penalizing non-monotonicity:
A critical step in solving \Cref{eq: ascpo optimization weighted sum version} involves fitting the cost increment value functions $V_{[D_i]\pi}(\hs)$. As demonstrated in \citep{zhao2024statewise}, $V_{[D_i]\pi}(\hs)$ is equal to the maximum cost increment in any future state over the maximal state-wise cost increment so far. In other words, $V_{[D_i]\pi}(\hs)$ forms a step function characterized by (i) pronounced zero-skewness, and (ii) monotonically decreasing trends. Here we visualize an example of $V_{[D_i]\pi}(\hs)$ in \Cref{fig:vc_target}.

To alleviate the unbalanced value target population, we adopt the sub-sampling technique introduced in \cite{zhao2024statewise}.
To encourage the fitting of the monotonically decreasing trend, we design an additional loss term $\mathcal{L}_{\tau}$ for penalizing non-monotonicity:
\begin{align}
\label{eq: monotonic descent}
\mathcal{L}_{\tau}(y, \hat y) = \frac{1}{n}\sum_{i=0}^{n}(y_i - \hat{y_{i}})^2 + w\cdot\sum_{i=0}^{n-1}(\max(0, y_{i+1} - y_{i}))^2,
\end{align}
where $\hat{y}_i$ represents the i-th true value of cost in episode $\tau$, $y_i$ denotes the i-th predicted value of cost in episode $\tau$, and $w$ signifies the weight of the monotonic-descent term. In essence, the rationale is to penalize any prediction that violates the non-increasing characteristics of the target sequence, thus improving the fitting quality. Further details and analysis are presented in \Cref{sec: evaluate ASCPO}.

\section{Experiments}
In our experiments, we aim to answer the following questions:

\textbf{Q1} 
How does ASCPO compare with other state-of-the-art methods for safe RL?

\textbf{Q2} 
What benefits are demonstrated by constraining the
% absolute maximum state-wise cost?
upper probability bound of maximum state-wise cost? 
How did the illustration in \Cref{fig: cpo vs scpo vs ascpo} perform in the actual experiment?

\textbf{Q3}
How does the monotonic-descent trick of ASCPO in \cref{eq: monotonic descent} impact its
performance? Does it work for other baselines?

\textbf{Q4}
How does the resource usage of ASCPO compare to other algorithms?

\textbf{Q5}
Can ASCPO be extended to a PPO-based version?

\subsection{Experiment Setups}
\paragraph{GUARD} To demonstrate the efficacy of our absolute state-wise constrained policy optimization approach, we conduct experiments in the advanced safe reinforcement learning benchmark environment, GUARD ~\citep{zhao2024guard}. This environment has been augmented with additional robots and constraints integrated into the Safety Gym framework ~\citep{ray2019benchmarking}, allowing for more extensive and comprehensive testing scenarios. %Subsequently, a series of experiments is performed within this advanced environment to validate our approach.

Our experiments are based on seven different robots: (i) \textbf{Point} (Shown in \Cref{fig: Point}):   A point mass robot ($\mathcal{A} \subseteq \mathbb{R}^{2}$) that can move on the ground. (ii) \textbf{Swimmer} (Shown in: \Cref{fig: Swimmer}) A three-link robot ($\mathcal{A} \subseteq \mathbb{R}^{2}$) that can move on the ground. (iii) \textbf{Arm3} (Shown in: \Cref{fig: Arm3}) A fixed three-joint robot arm ($\mathcal{A} \subseteq \mathbb{R}^{3}$) that can move its end effector around with high flexibility. (iv) \textbf{Drone} (Shown in: \Cref{fig: Drone}) A quadrotor robot ($\mathcal{A} \subseteq \mathbb{R}^{4}$) that can move in the air. (v) \textbf{Humanoid} (Shown in: \Cref{fig: Humanoid}) A bipedal robot($\mathcal{A} \subseteq \mathbb{R}^{6}$) that has a torso with a pair of legs and arms. Since the benchmark mainly focuses on the navigation ability of the robots in designed tasks, the arm joints of Humanoid are fixed. (vi) \textbf{Ant} (Shown in: \Cref{fig: Ant}) A quadrupedal robot ($\mathcal{A} \subseteq \mathbb{R}^{8}$) that can move on the ground. (vii) \textbf{Walker} (Shown in: \Cref{fig: Walker}) A bipedal robot ($\mathcal{A} \subseteq \mathbb{R}^{10}$) that can move on the ground. 

All of the experiments are based on the goal task where the robot must navigate to a goal. Additionally, since we are interested in episodic tasks (finite-horizon MDP), the environment will be reset once the goal is reached. 
% Three 
Four
different types of constraints are considered: (i) \textbf{Hazard}: Dangerous areas as shown in~\Cref{fig: hazard}. Hazards are trespassable circles on the ground. The agent is penalized for entering them. (ii) \textbf{3D Hazard}: 3D Dangerous areas as shown in~\Cref{fig: 3Dhazard}. 3D Hazards are trespassable spheres in the air. The agent is penalized for entering them. (iii) \textbf{Pillar}: Non-traversable obstacles as shown in ~\Cref{fig: pillar}. The agent is penalized for hitting them. (iv) \textbf{Ghost}:
Moving circles as shown in ~\Cref{fig: ghost}. Ghosts can be either trespassable or non-trespassable. The robot is penalized for touching the non-trespassable ghosts and entering the trespassable ghosts.

Considering different robots, constraint types, and constraint difficulty levels, we design 19 test suites with 7 types of robots and 12 types of constraints, which are summarized in \Cref{tab: testing suites} in Appendix. We name these test suites as \texttt{\{Robot\}-\{Constraint Number\}-\{Constraint Type\}}.

\begin{figure}[t]
  \centering
  \begin{subfigure}[t]{0.135\linewidth}
    \includegraphics[scale=0.1]{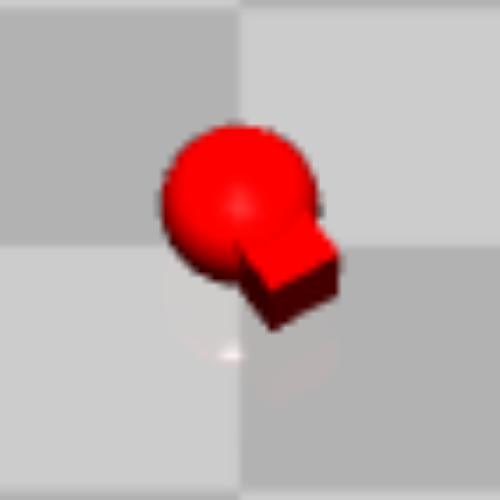}
    \centering
    \caption{Point}
    \label{fig: Point}
  \end{subfigure}
  \begin{subfigure}[t]{0.135\linewidth}
    \includegraphics[scale=0.1]{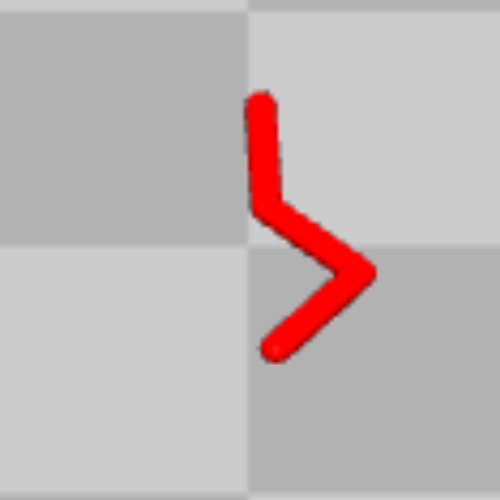}
    \centering
    \caption{Swimmer}
    \label{fig: Swimmer}
  \end{subfigure}
  \begin{subfigure}[t]{0.135\linewidth}
        \includegraphics[scale=0.1]{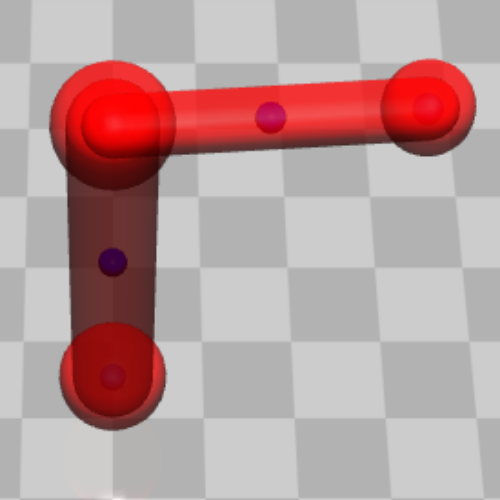}
    \centering
      \caption{Arm3}
      \label{fig: Arm3}
  \end{subfigure}
  \begin{subfigure}[t]{0.135\linewidth}
        \includegraphics[scale=0.1]{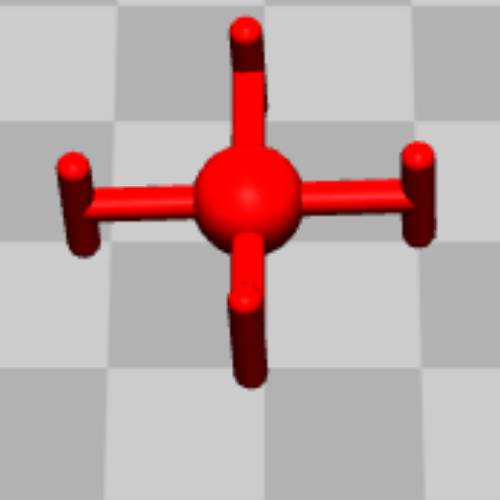}
    \centering
      \caption{Drone}
      \label{fig: Drone}
  \end{subfigure}
  \begin{subfigure}[t]{0.135\linewidth}
        \includegraphics[scale=0.1]{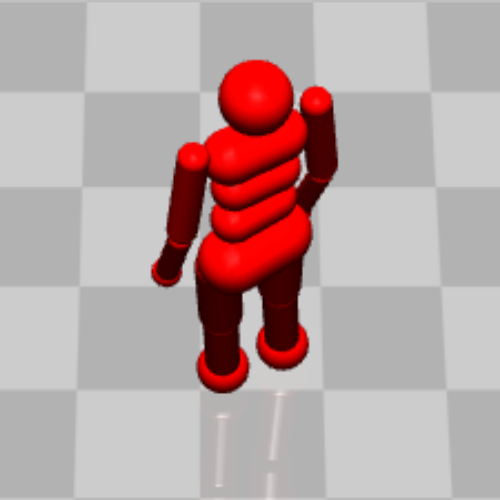}
    \centering
      \caption{Humanoid}
      \label{fig: Humanoid}
  \end{subfigure}
  \begin{subfigure}[t]{0.135\linewidth}
      \includegraphics[scale=0.1]{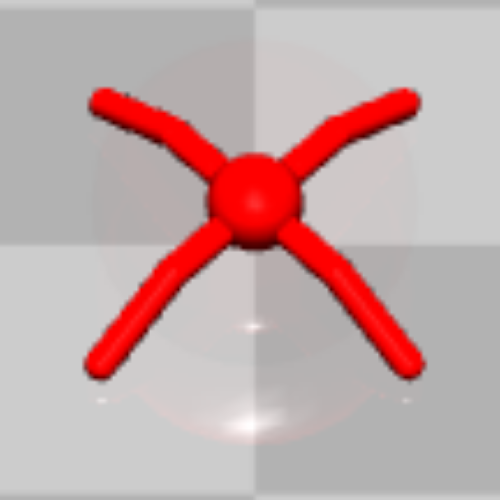}
    \centering
      \caption{Ant}
      \label{fig: Ant}
  \end{subfigure}
  \begin{subfigure}[t]{0.135\linewidth}
        \includegraphics[scale=0.1]{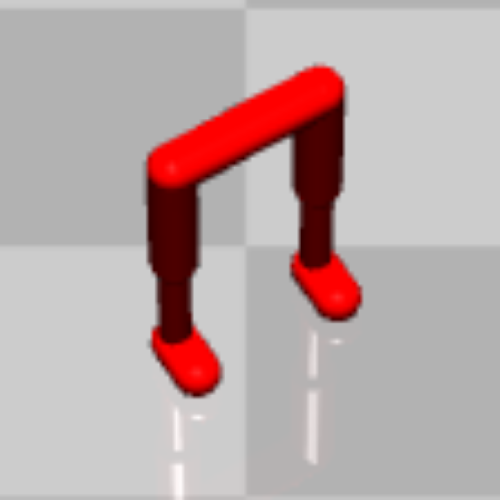}
    \centering
      \caption{Walker}
      \label{fig: Walker}
  \end{subfigure}
  \label{fig: robots}
  \caption{Robots of continuous control tasks benchmark GUARD.}
\end{figure}

\begin{figure}[t]
  \centering
  \begin{subfigure}[t]{0.24\linewidth}
    \includegraphics[scale=0.15]{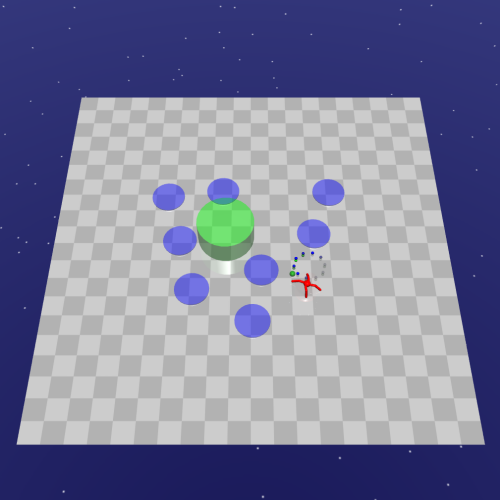}
    \centering
    \caption{Hazard}
    \label{fig: hazard}
  \end{subfigure}
  \begin{subfigure}[t]{0.24\linewidth}
    \includegraphics[scale=0.15]{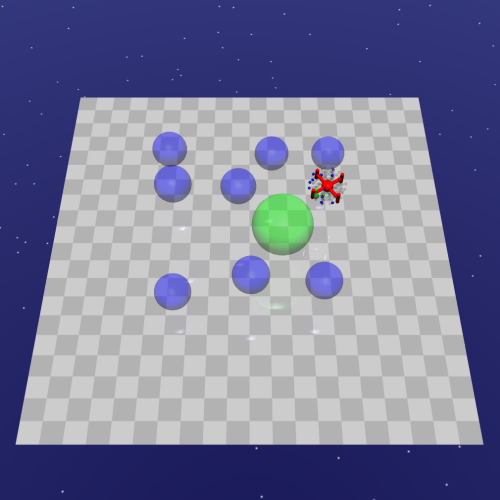}
    \centering
    \caption{3DHazard}
    \label{fig: 3Dhazard}
  \end{subfigure}
  \begin{subfigure}[t]{0.24\linewidth}
      \includegraphics[scale=0.15]{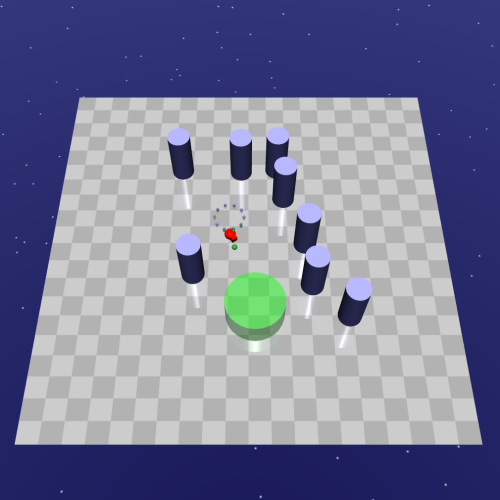}
    \centering
      \caption{Pillar}
      \label{fig: pillar}
  \end{subfigure}
  \begin{subfigure}[t]{0.24\linewidth}
      \includegraphics[scale=0.15]{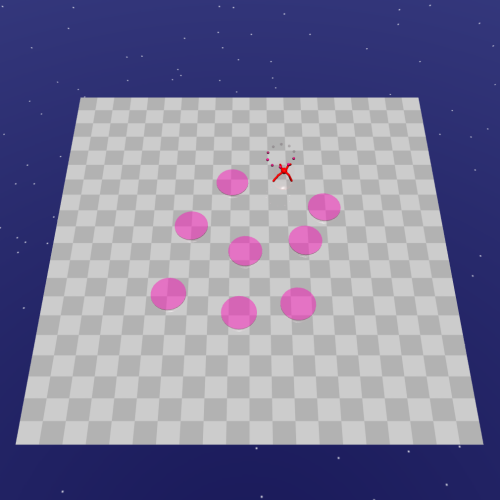}
    \centering
      \caption{Ghost}
      \label{fig: ghost}
  \end{subfigure}
  \label{fig: constraints}
  \caption{Tasks of continuous control tasks benchmark GUARD.}
\end{figure}

\paragraph{Comparison Group}
The comparison group encompasses various methods: (i) the unconstrained RL algorithm TRPO~\citep{schulman2015trust}; (ii) end-to-end constrained safe RL algorithms including SCPO~\citep{zhao2024statewise}, CPO~\citep{achiam2017constrained}, TRPO-Lagrangian~\citep{bohez2019value}, TRPO-FAC~\citep{ma2021feasible}, TRPO-IPO~\citep{liu2020ipo}, PCPO~\citep{yang2020projection}; and (iii) hierarchical safe RL algorithms such as TRPO-SL (TRPO-Safety Layer)\citep{dalal2018safe}, TRPO-USL (TRPO-Unrolling Safety Layer)\citep{zhang2022saferl}; and (iv) risk-sensitive algorithms TRPO-CVaR and CPO-CVaR~\citep{zhang2024cvarcpo}.
TRPO is chosen as the baseline method due to its state-of-the-art status and readily available safety-constrained derivatives for off-the-shelf testing. For hierarchical safe RL algorithms, a warm-up phase constituting one-third of the total epochs is employed, wherein unconstrained TRPO training is conducted. The data generated during this phase is then utilized to pre-train the safety critic for subsequent epochs.
Across all experiments, the policy $\pi$ and the values $(V^\pi, V_{D}^\pi)$ are encoded in feedforward neural networks featuring two hidden layers of size (64,64) with tanh activations. Additional details are provided in \Cref{sec: experiment details}.

\paragraph{Evaluation Metrics}
For comparative analysis, we assess algorithmic performance across three key metrics: (i) \textbf{reward performance} $J_r$, (ii) \textbf{average episode cost} $M_c$, and (iii) \textbf{cost rate (state-wise cost)} $\rho_c$. Detailed descriptions of these comparison metrics are provided in \Cref{sec:metrics}. To ensure consistency, we impose a cost limit of 0 for all safe RL algorithms, aligning with our objective to prevent any constraint violations. In conducting our comparison, we faithfully implement the baseline safe RL algorithms, adhering precisely to the policy update and action correction procedures outlined in their respective original papers. It is essential to note that for a fair comparison, 
% we afford the baseline safe RL algorithms every advantage provided to ASCPO
we provide the baseline safe RL algorithms every advantage provided to ASCPO, including equivalent trust region policy updates.

\begin{figure}[t]
  \centering
  \begin{subfigure}[t]{1.0\linewidth}
      \centering
      \begin{subfigure}[t]{0.24\linewidth}
           \begin{subfigure}[t]{1.0\linewidth}
            \includegraphics[width=0.99\textwidth]{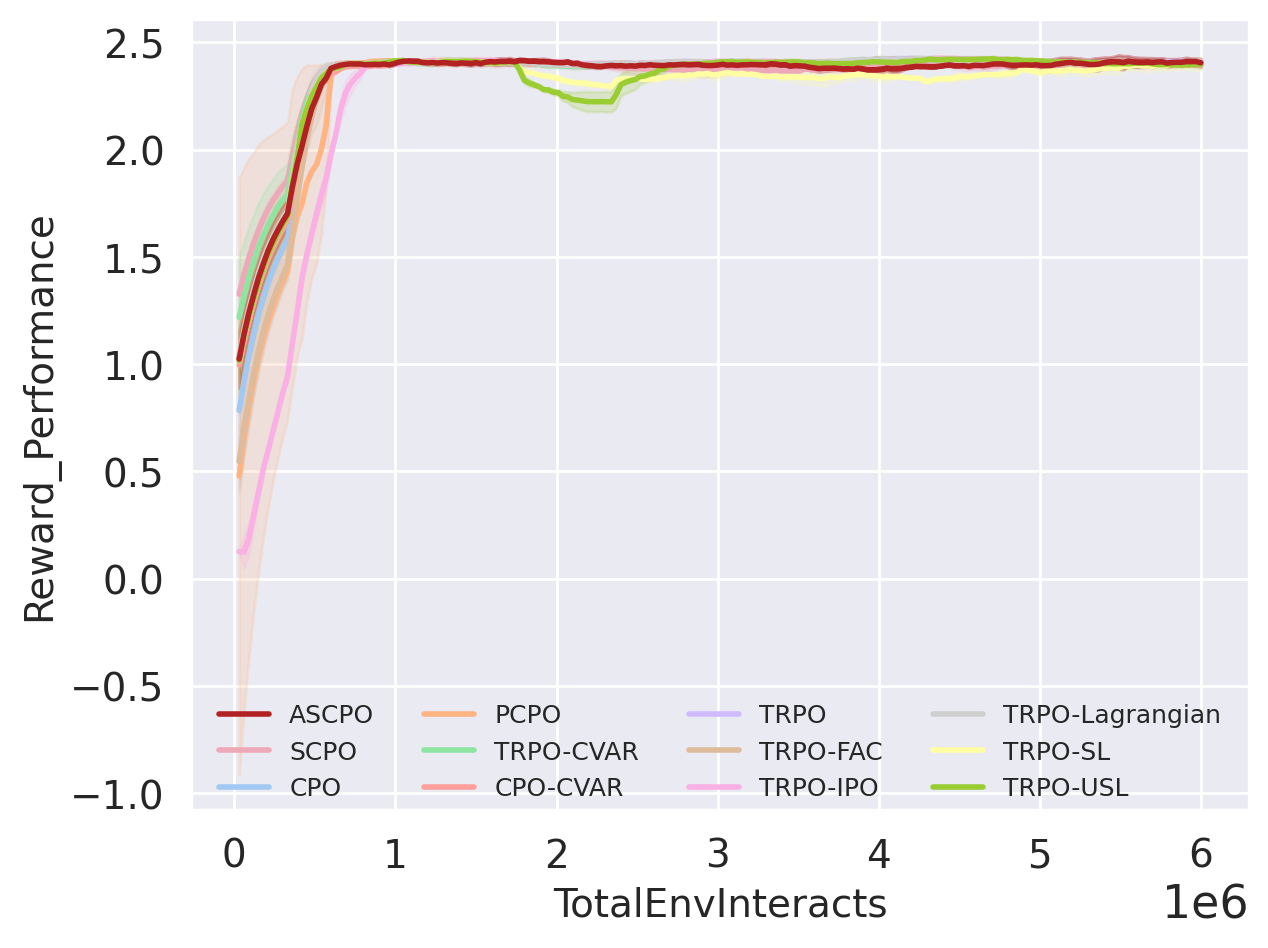}
          \end{subfigure}
          \hfill
          \begin{subfigure}[t]{1.0\linewidth}
              \includegraphics[width=0.99\textwidth]{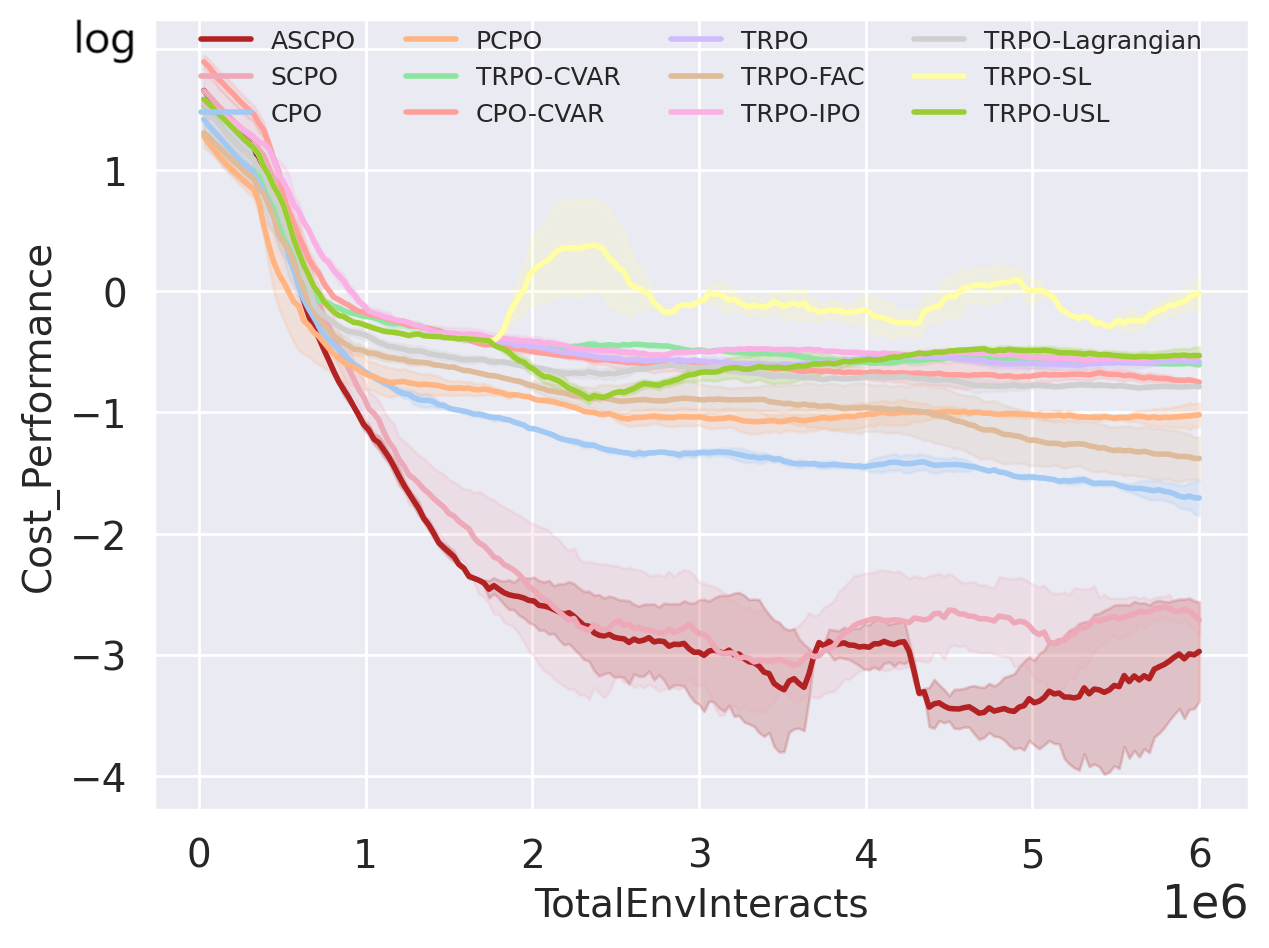}
          \end{subfigure}
          \hfill
          \begin{subfigure}[t]{1.0\linewidth}
              \includegraphics[width=0.99\textwidth]{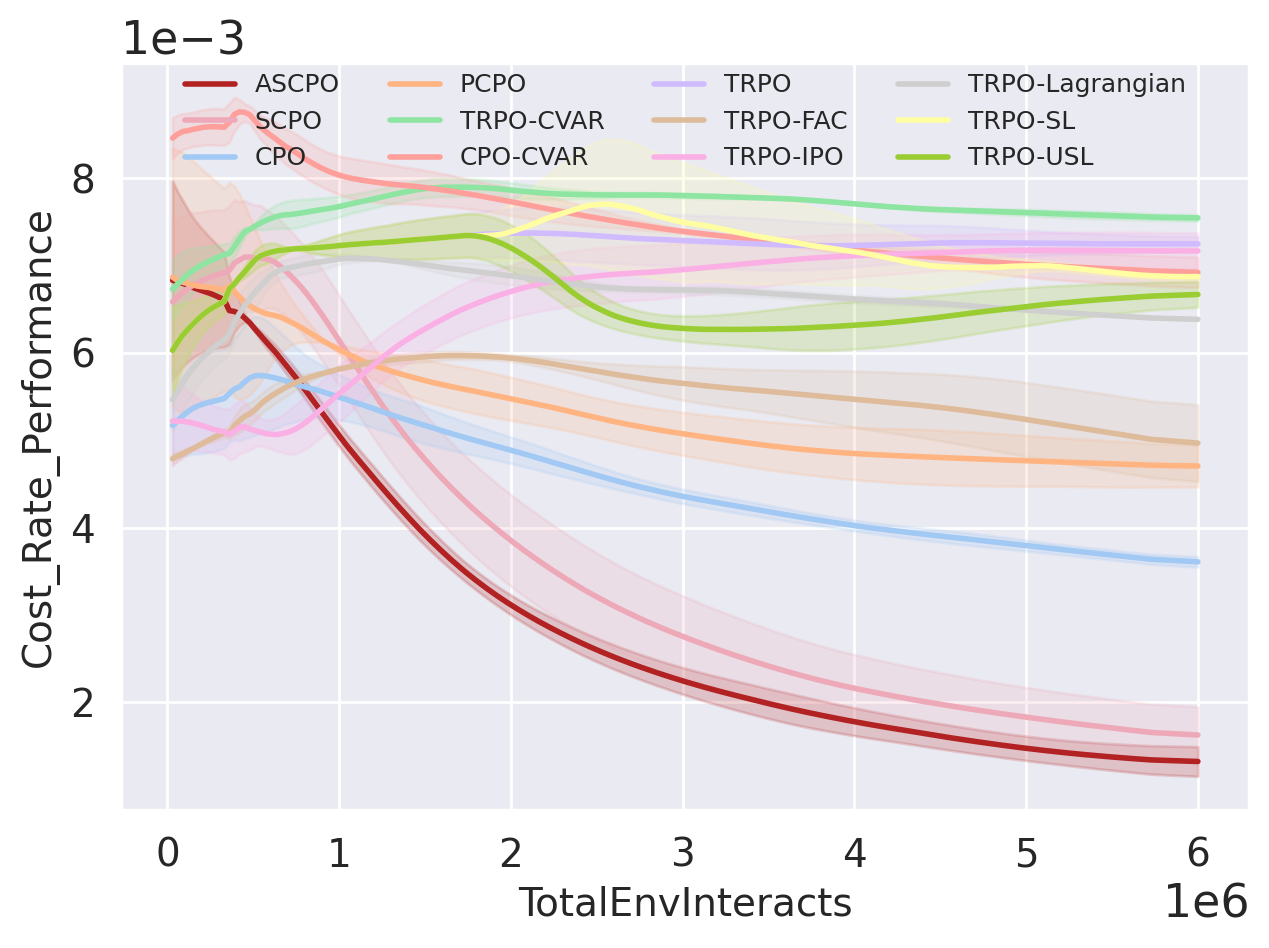}
          \end{subfigure}
      \caption{Point-8-Hazards}
      \label{Point-8-Hazards}
      \end{subfigure}
      \begin{subfigure}[t]{0.24\linewidth}
           \begin{subfigure}[t]{1.0\linewidth}
            \includegraphics[width=0.99\textwidth]{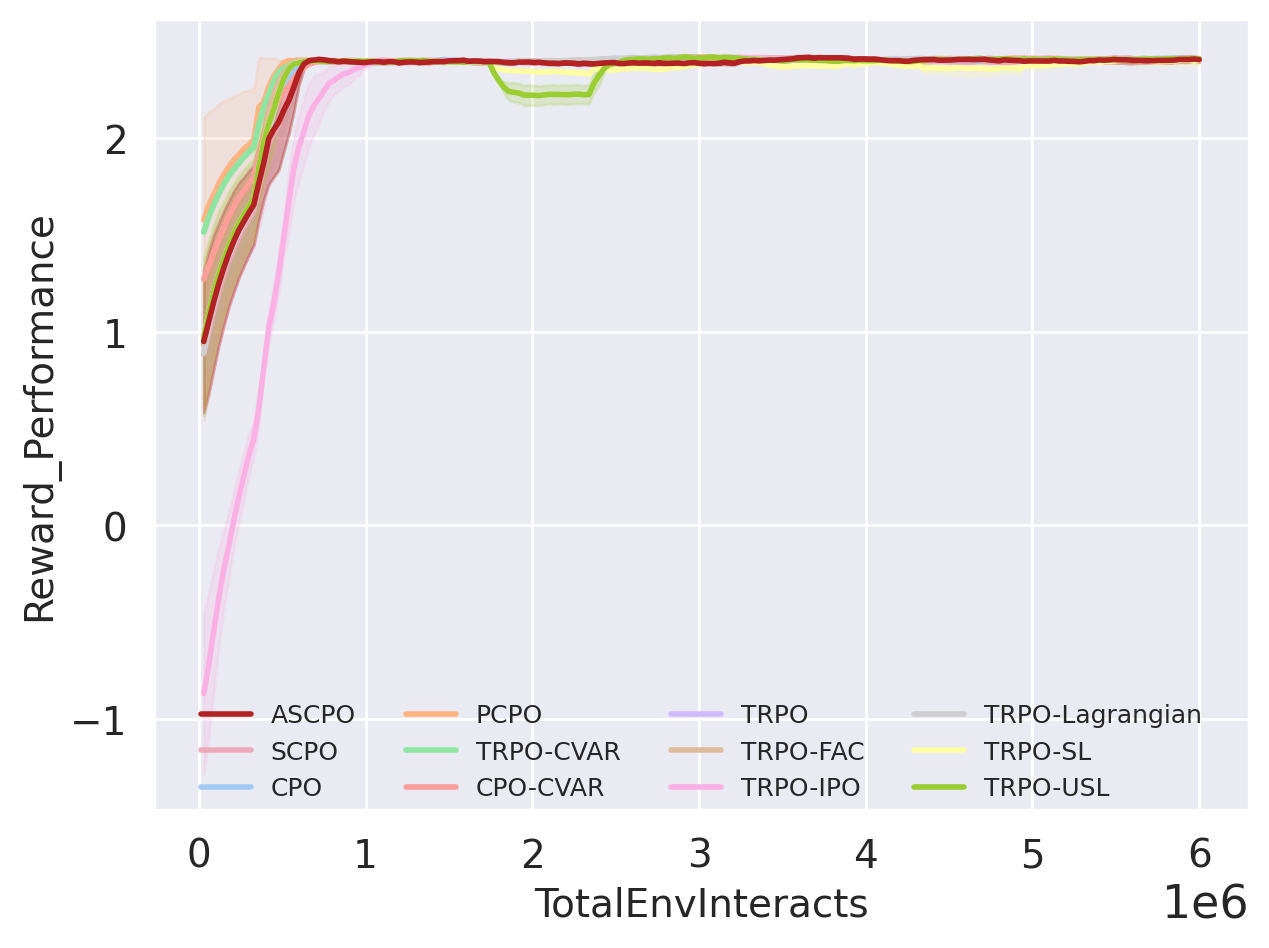}
          \end{subfigure}
          \hfill
          \begin{subfigure}[t]{1.0\linewidth}
              \includegraphics[width=0.99\textwidth]{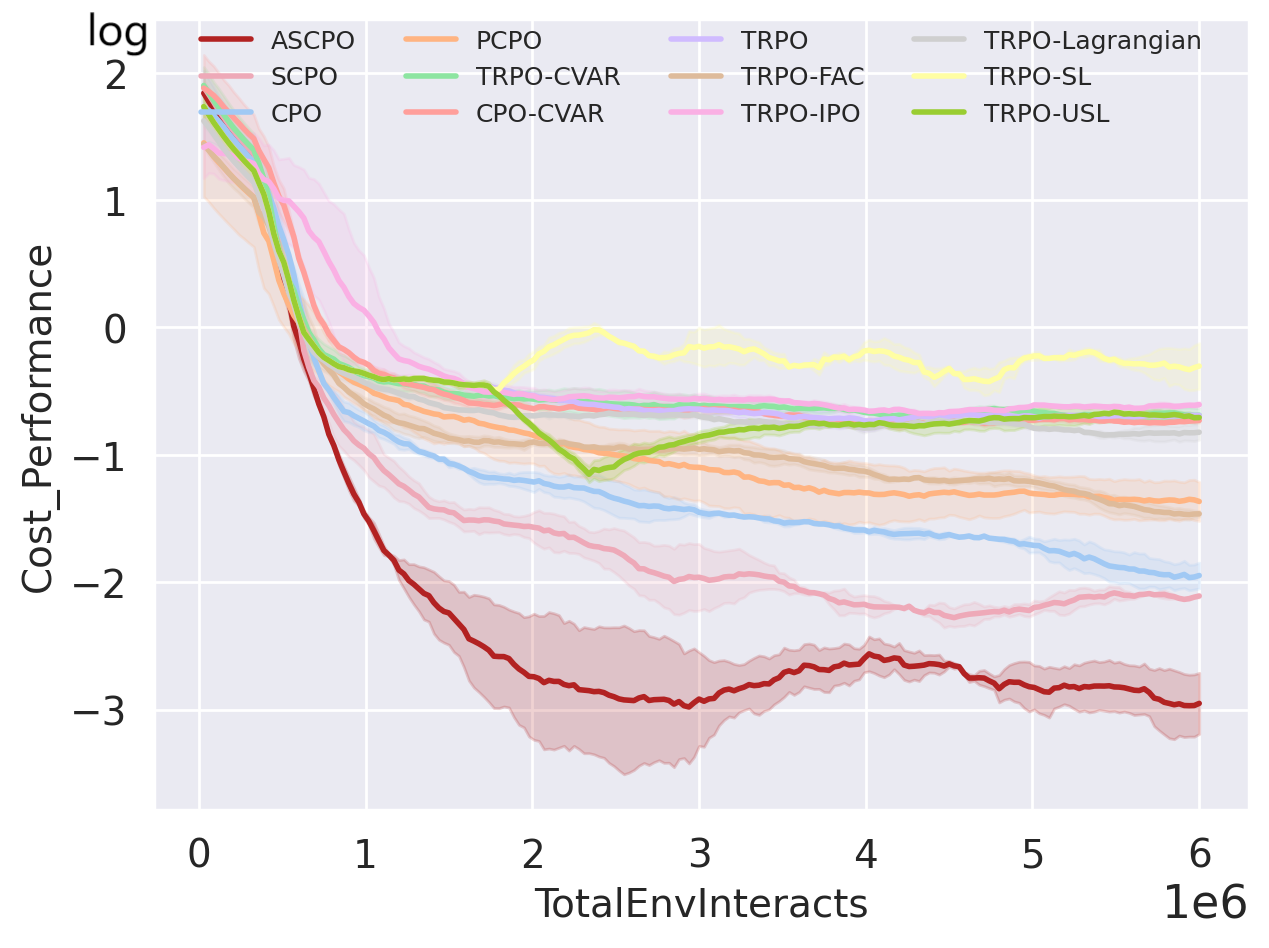}
          \end{subfigure}
          \hfill
          \begin{subfigure}[t]{1.0\linewidth}
              \includegraphics[width=0.99\textwidth]{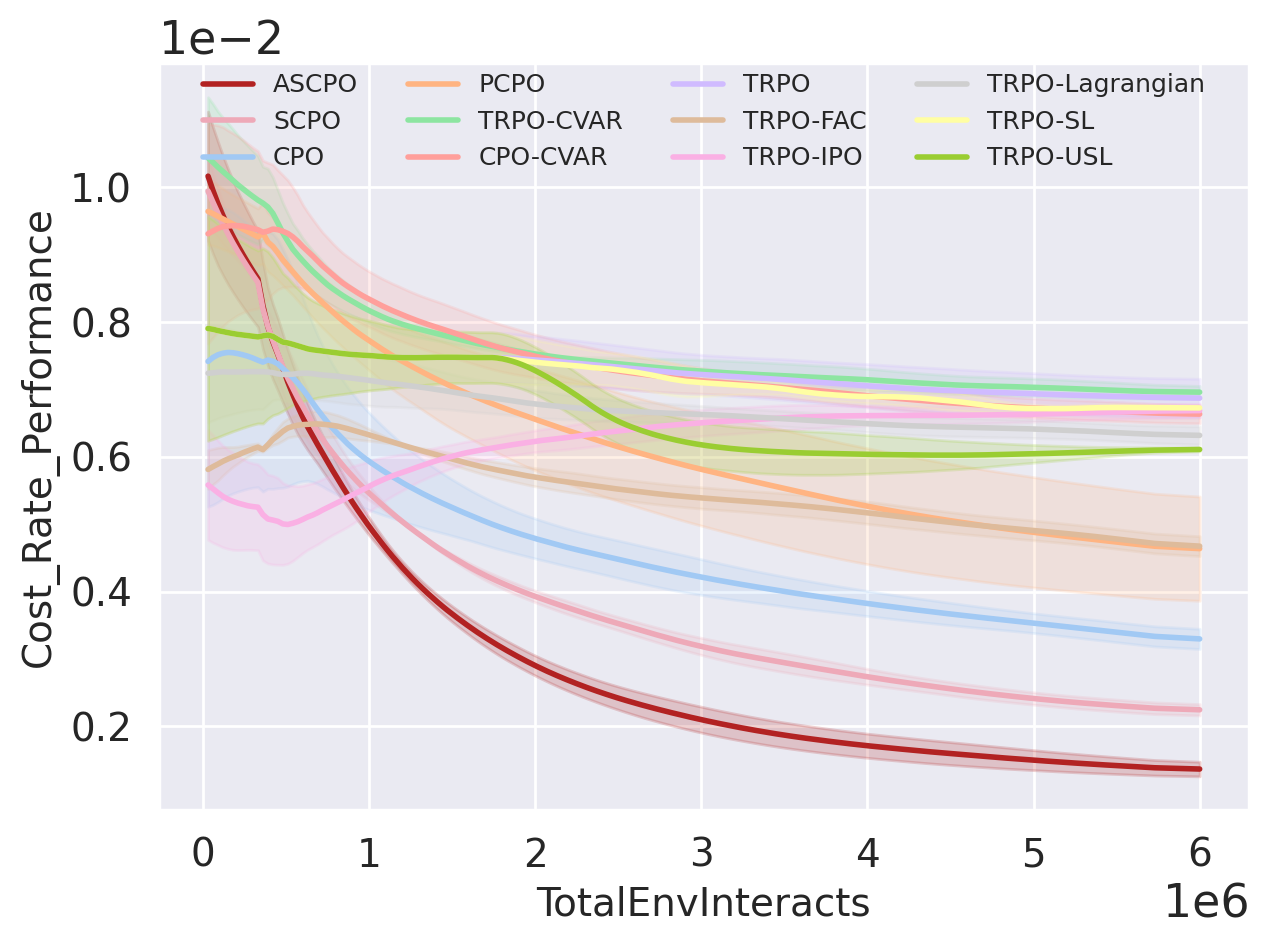}
          \end{subfigure}
      \caption{Point-8-Ghosts}
      \label{Point-8-Ghosts}
      \end{subfigure}
      \begin{subfigure}[t]{0.24\linewidth}
           \begin{subfigure}[t]{1.0\linewidth}
            \includegraphics[width=0.99\textwidth]{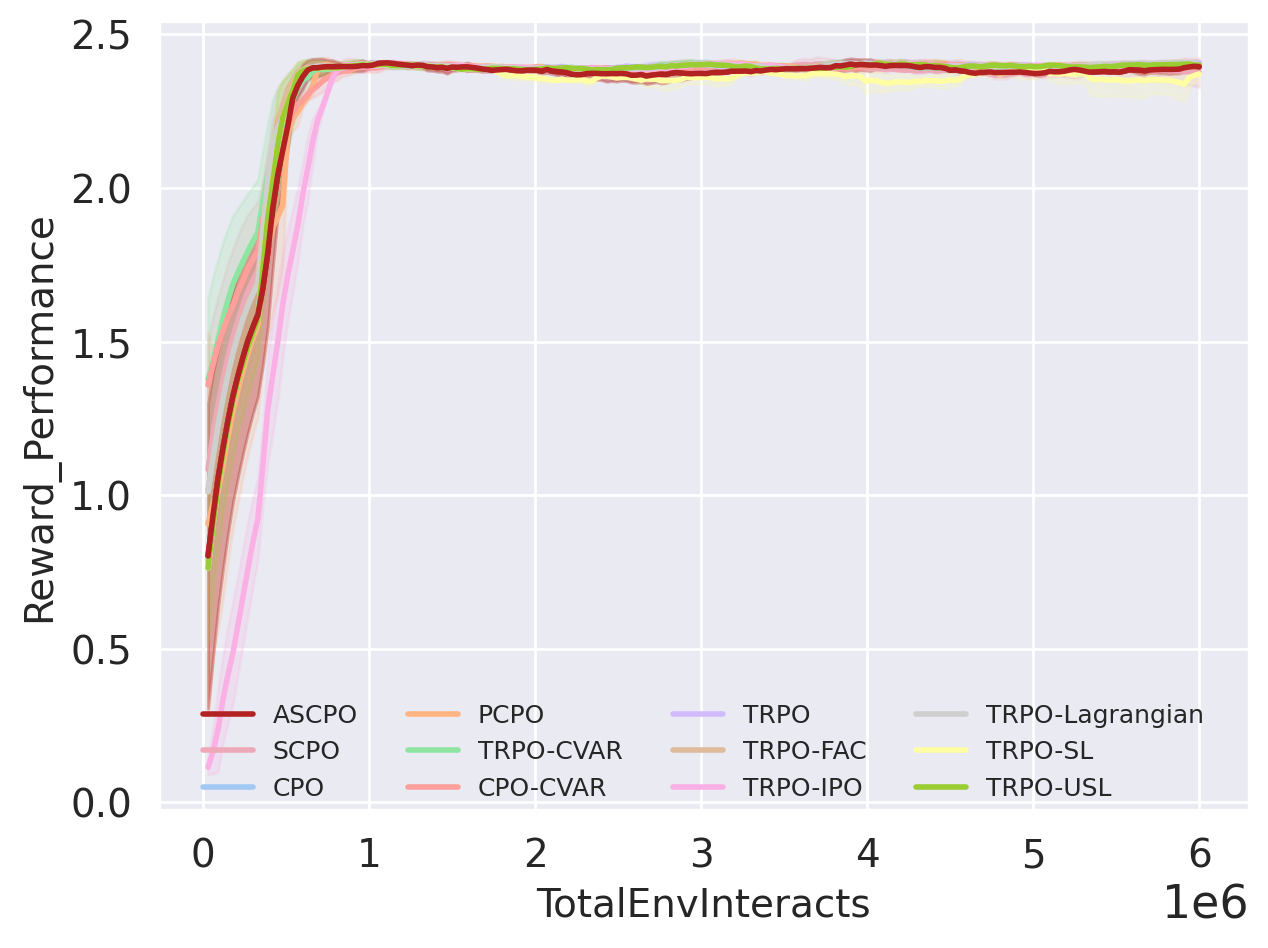}
          \end{subfigure}
          \hfill
          \begin{subfigure}[t]{1.0\linewidth}
              \includegraphics[width=0.99\textwidth]{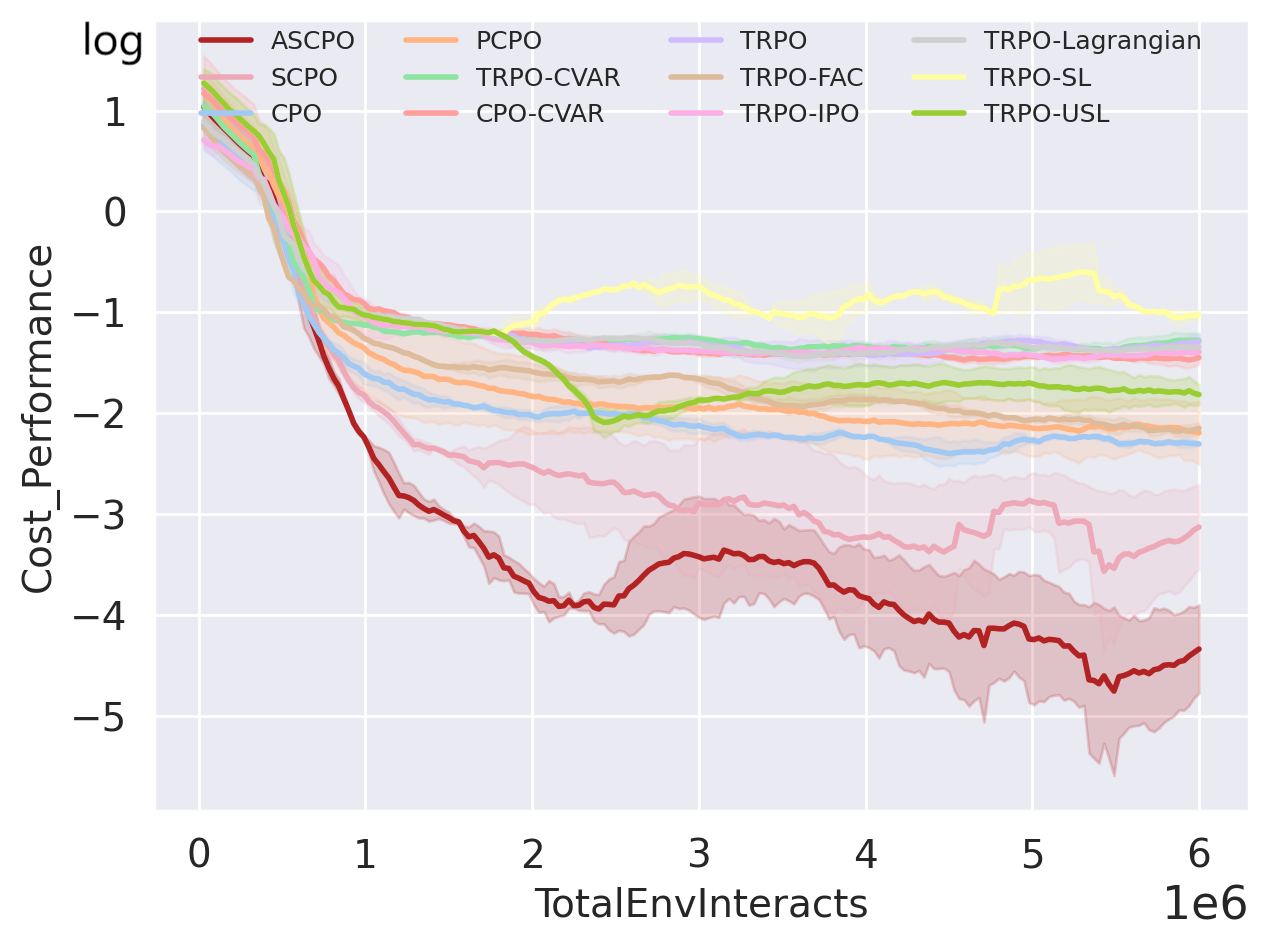}
          \end{subfigure}
          \hfill
          \begin{subfigure}[t]{1.0\linewidth}
              \includegraphics[width=0.99\textwidth]{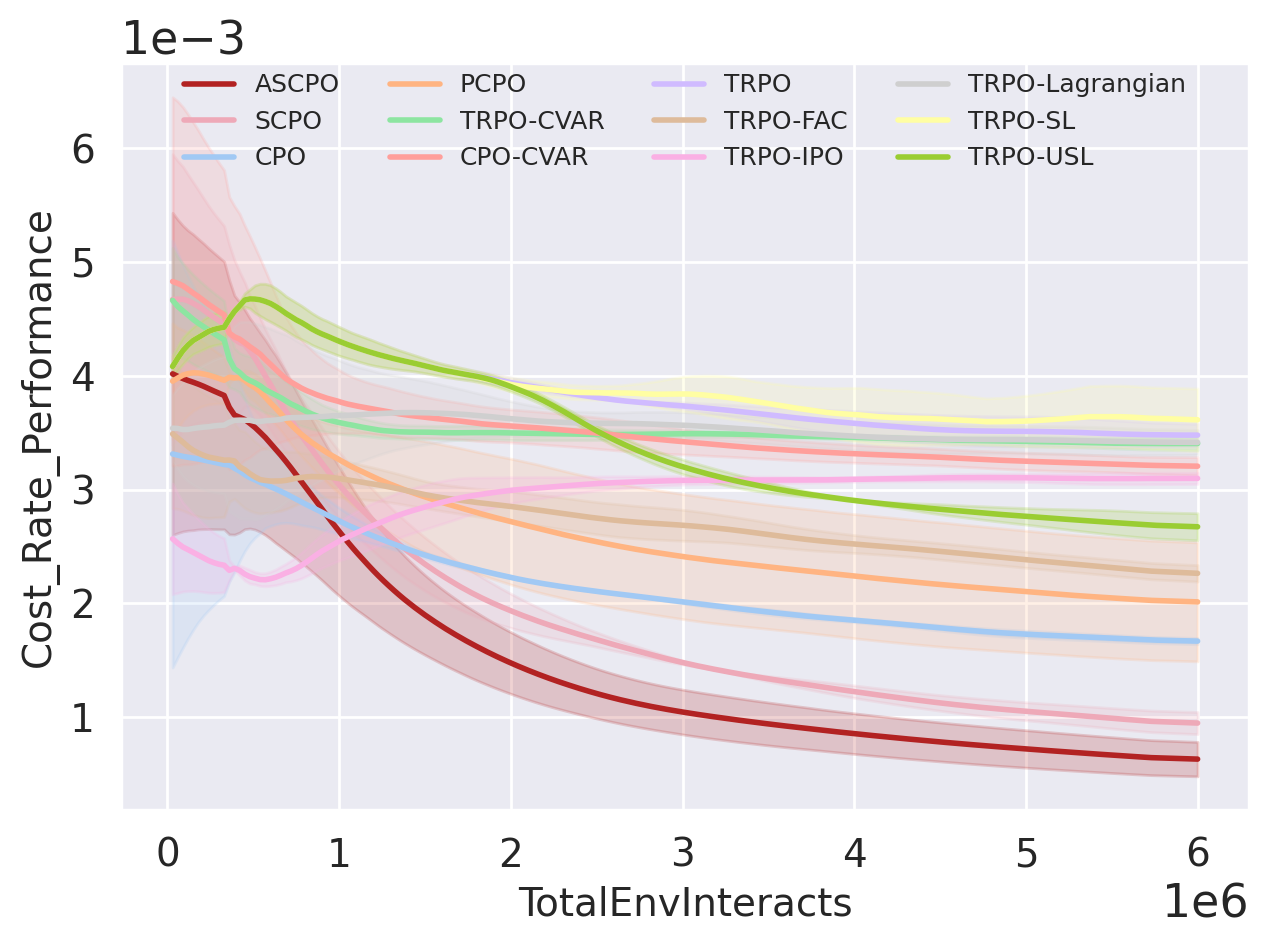}
          \end{subfigure}
      \caption{Point-4-Ghosts}
      \label{Point-4-Ghosts}
      \end{subfigure}
      \begin{subfigure}[t]{0.24\linewidth}
           \begin{subfigure}[t]{1.0\linewidth}
            \includegraphics[width=0.99\textwidth]{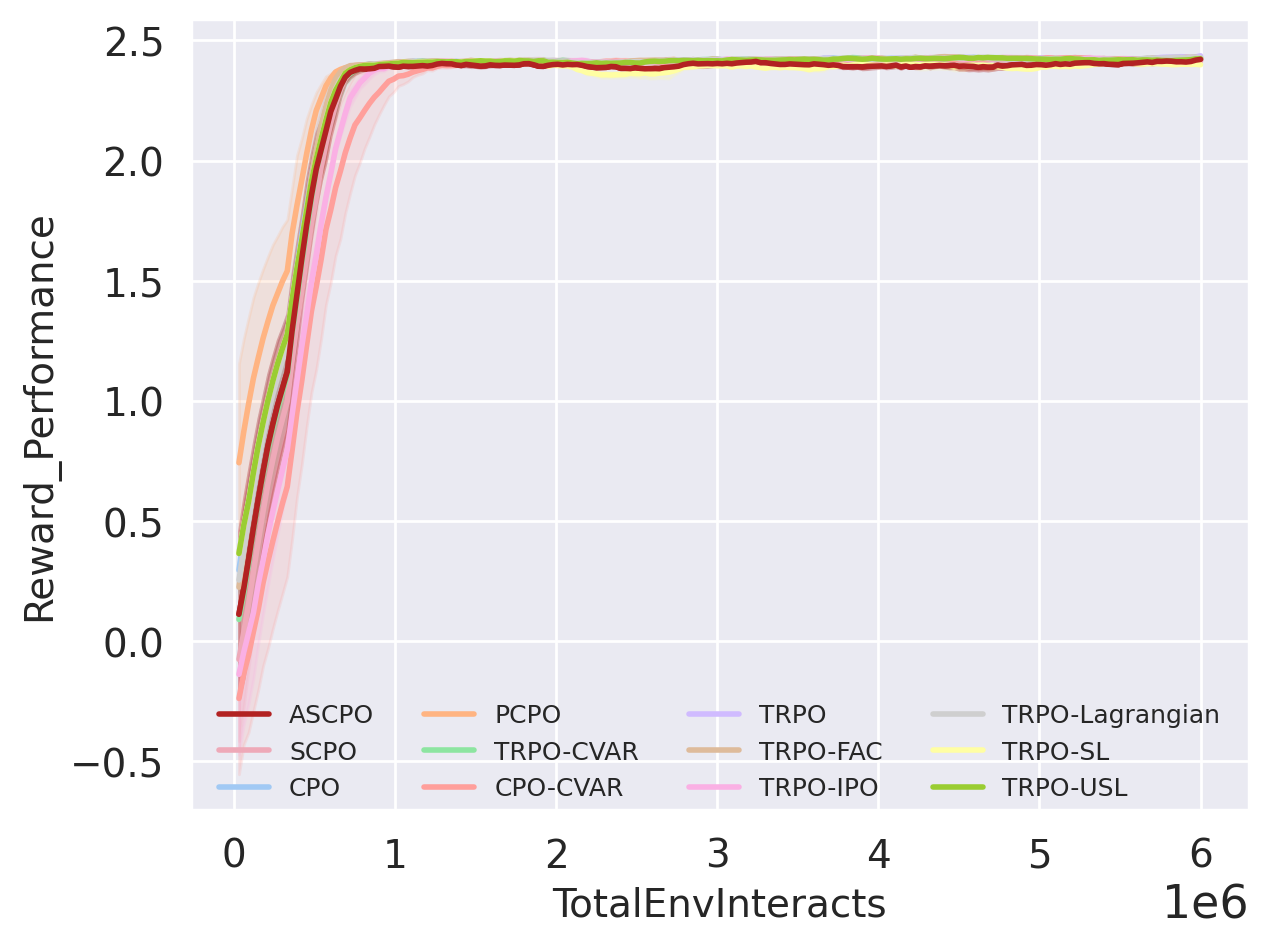}
          \end{subfigure}
          \hfill
          \begin{subfigure}[t]{1.0\linewidth}
              \includegraphics[width=0.99\textwidth]{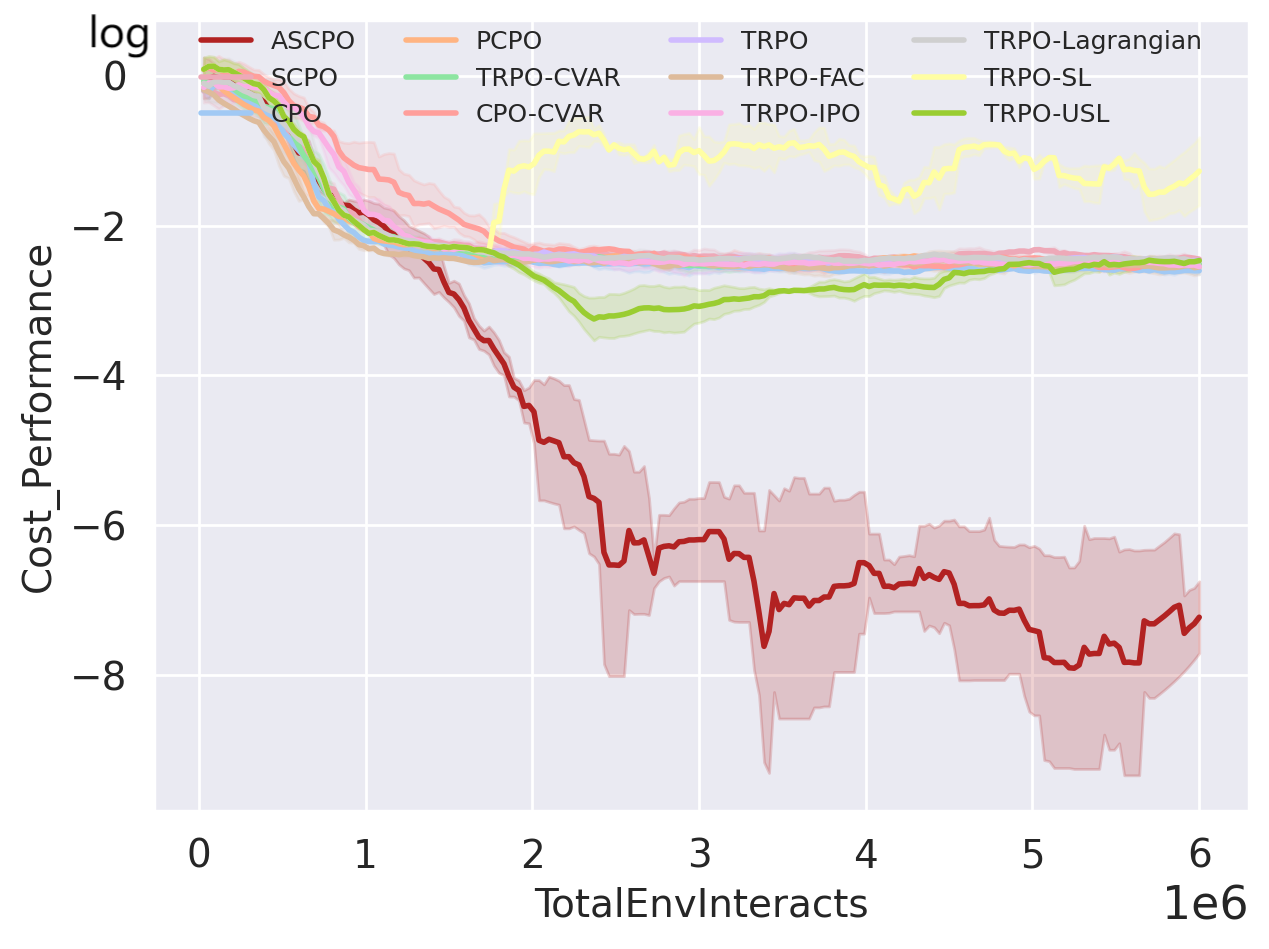}
          \end{subfigure}
          \hfill
          \begin{subfigure}[t]{1.0\linewidth}
              \includegraphics[width=0.99\textwidth]{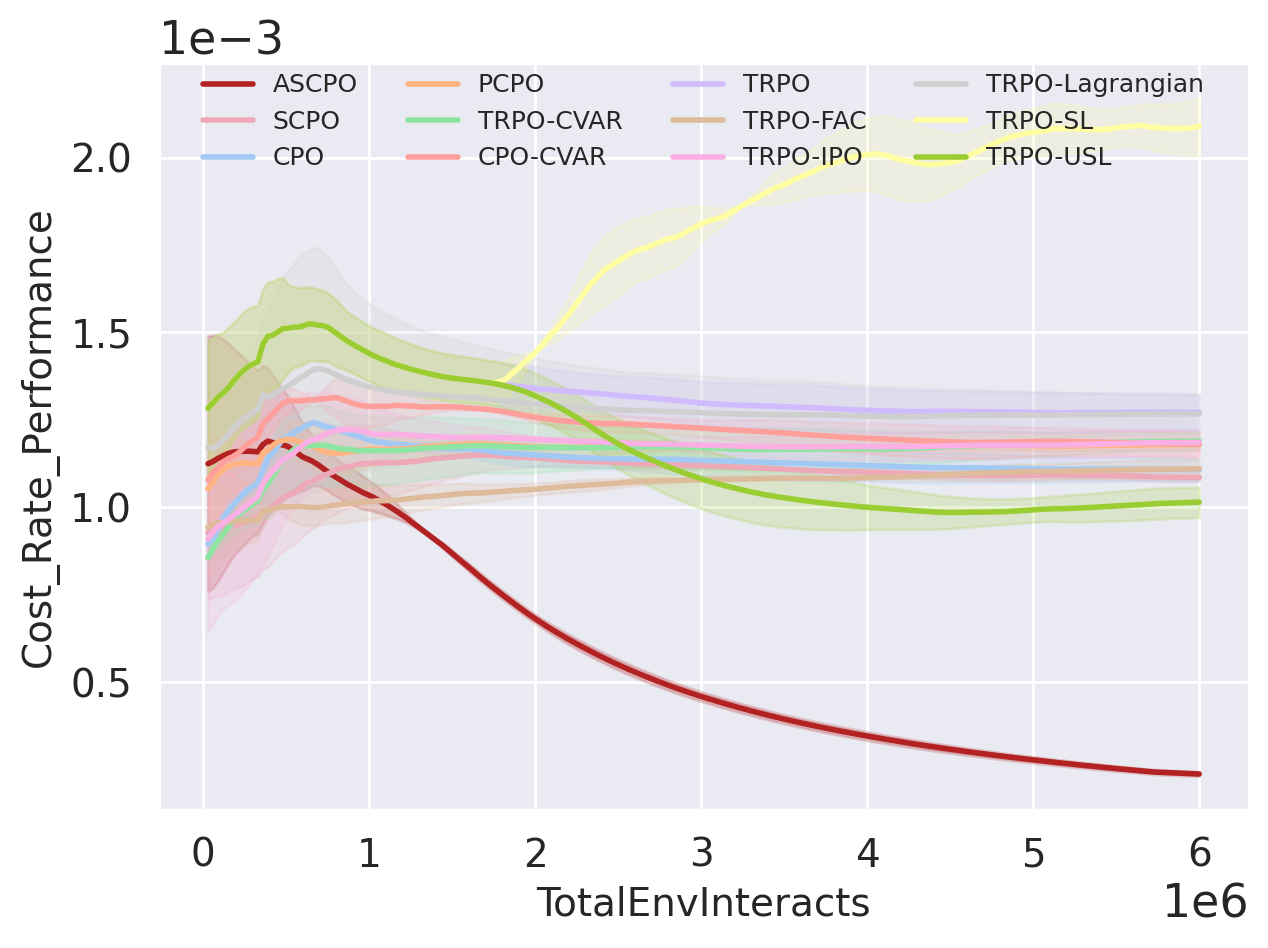}
          \end{subfigure}
      \caption{Swimmer-1-Hazards}
      \label{Swimmer-1-Hazards}
      \end{subfigure}
  \end{subfigure}

  \begin{subfigure}[t]{1.0\linewidth}
      \centering
      \begin{subfigure}[t]{0.24\linewidth}
           \begin{subfigure}[t]{1.0\linewidth}
            \includegraphics[width=0.99\textwidth]{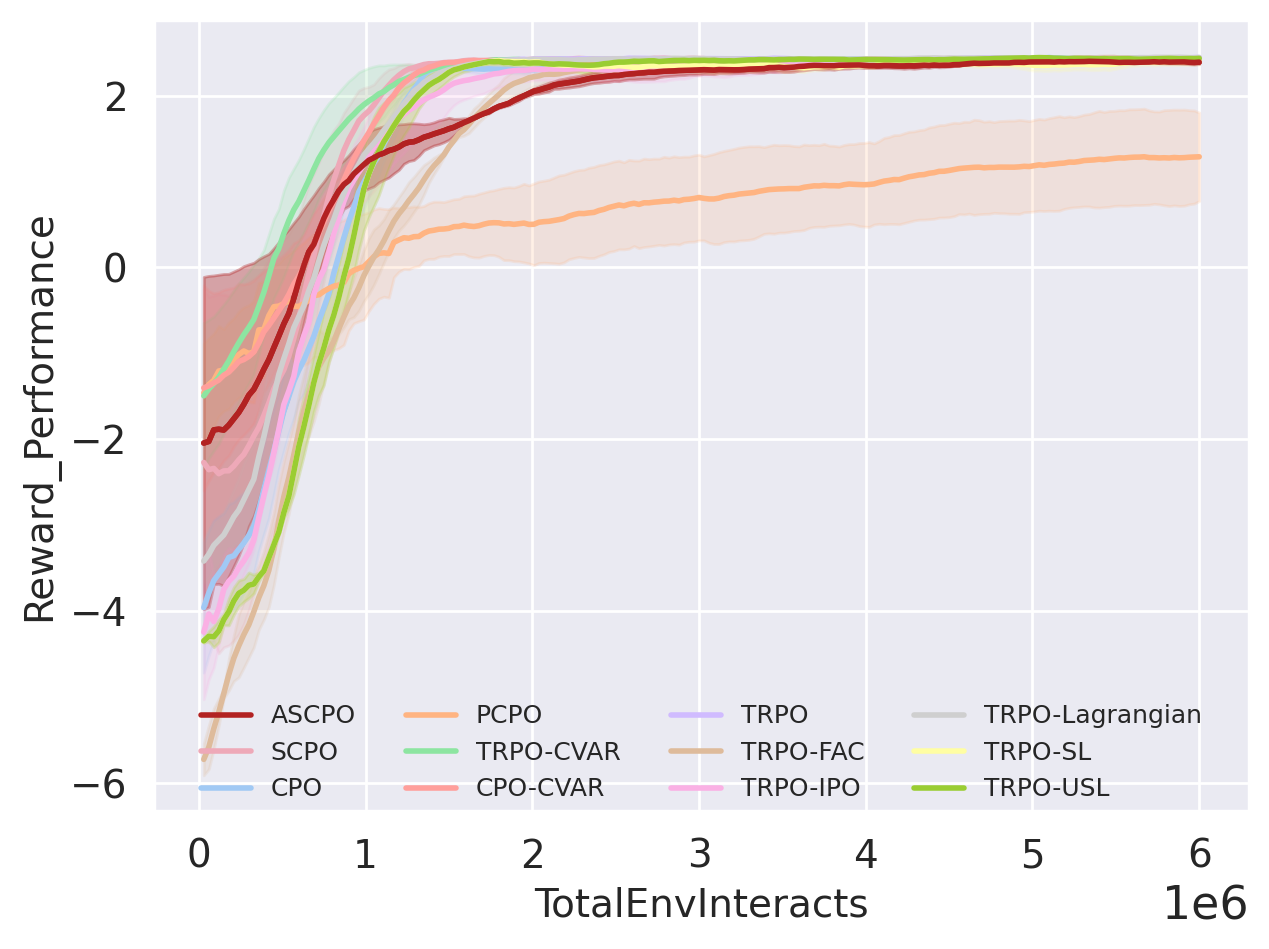}
          \end{subfigure}
          \hfill
          \begin{subfigure}[t]{1.0\linewidth}
              \includegraphics[width=0.99\textwidth]{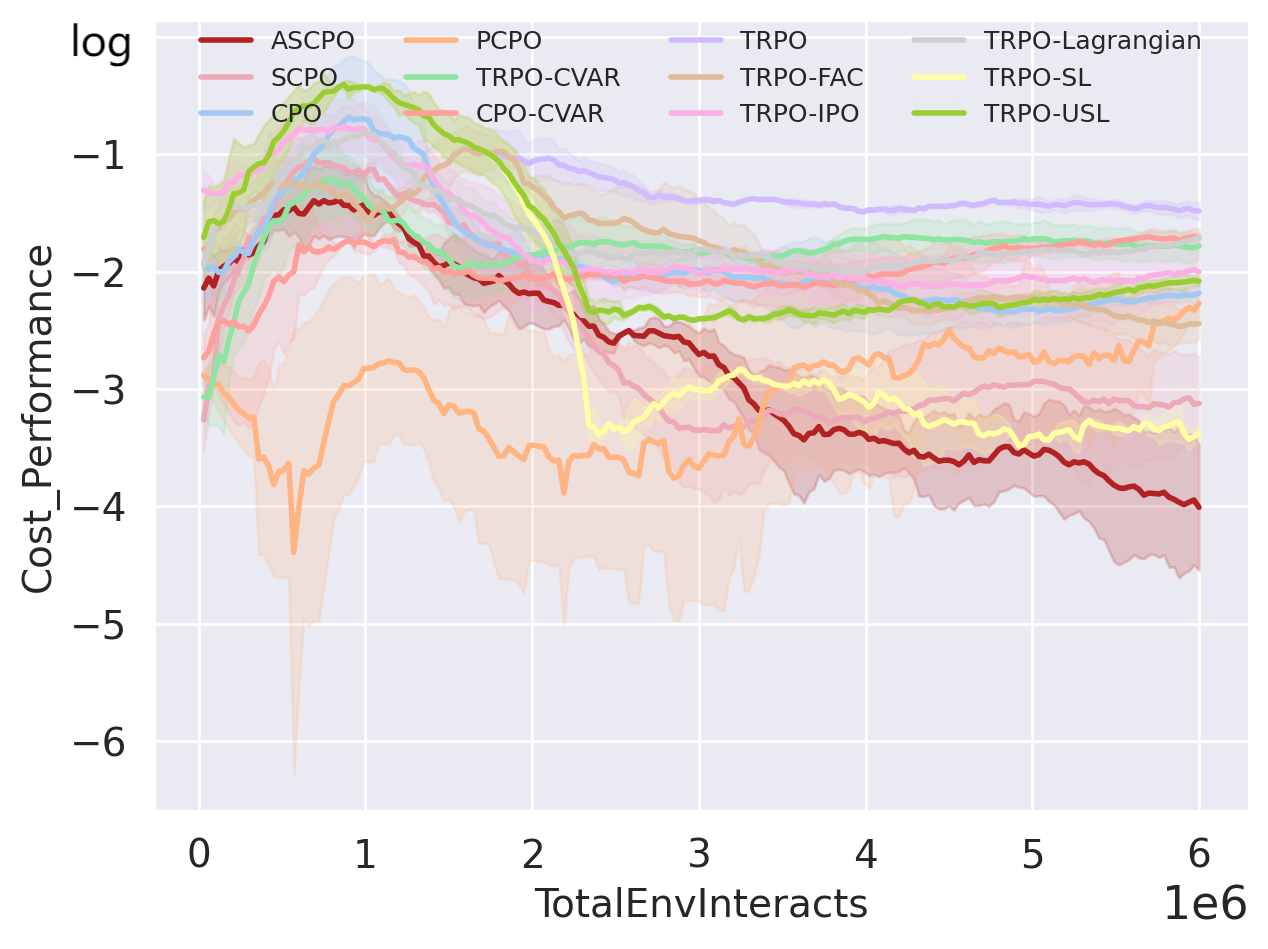}
          \end{subfigure}
          \hfill
          \begin{subfigure}[t]{1.0\linewidth}
              \includegraphics[width=0.99\textwidth]{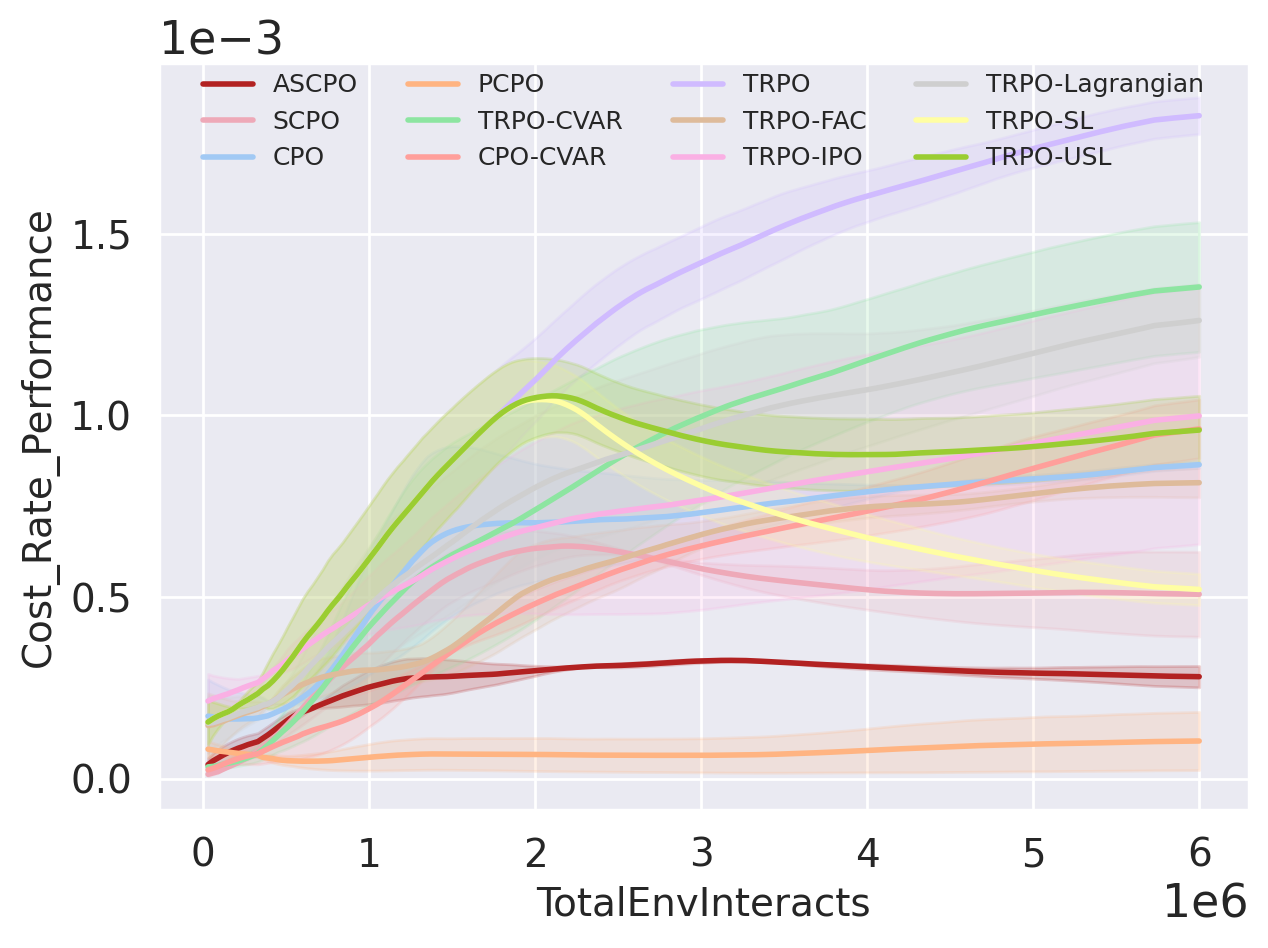}
          \end{subfigure}
      \caption{Drone-8-Hazards}
      \label{Drone-8-Hazards}
      \end{subfigure}
      \begin{subfigure}[t]{0.24\linewidth}
           \begin{subfigure}[t]{1.0\linewidth}
            \includegraphics[width=0.99\textwidth]{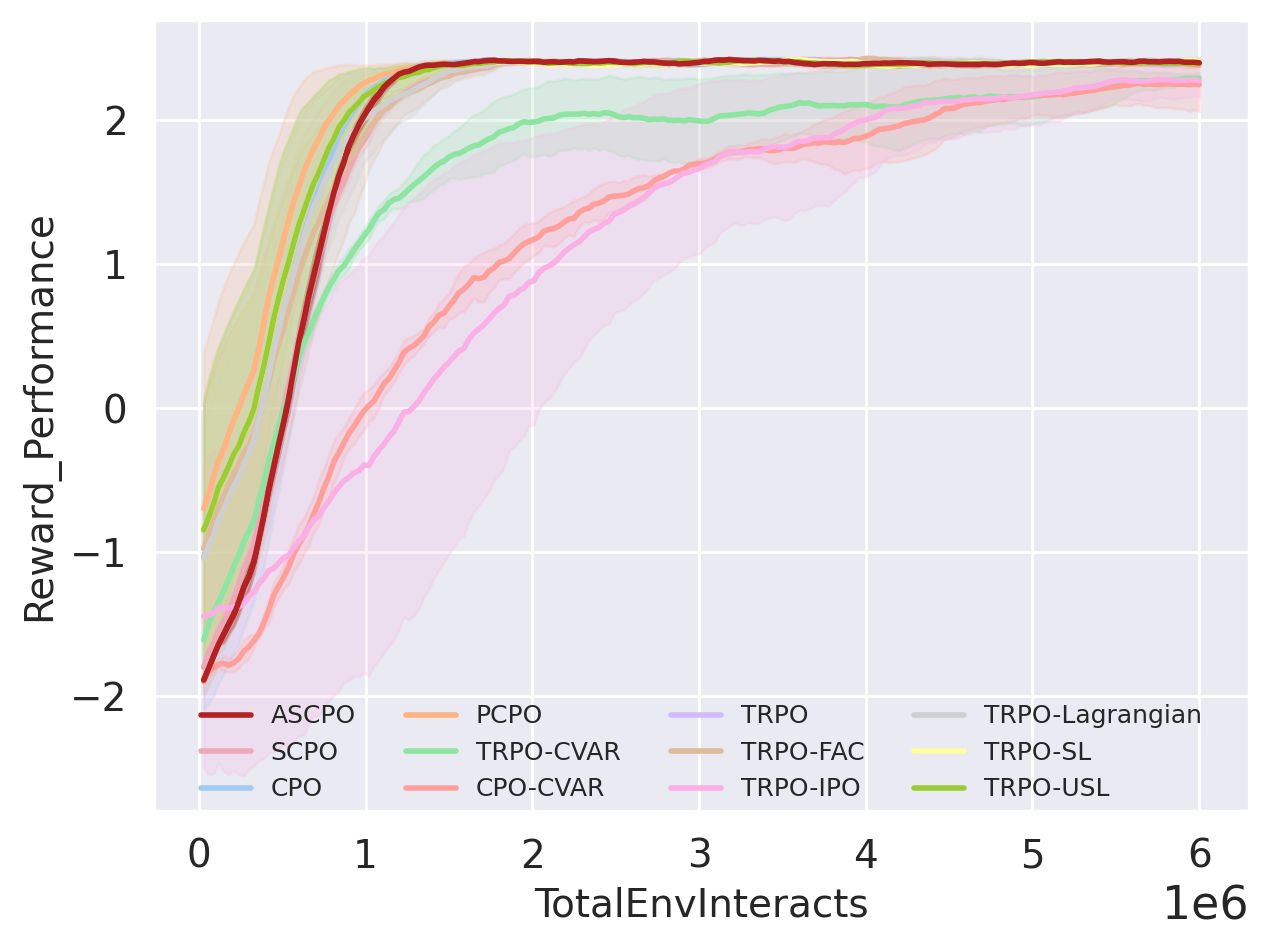}
          \end{subfigure}
          \hfill
          \begin{subfigure}[t]{1.0\linewidth}
              \includegraphics[width=0.99\textwidth]{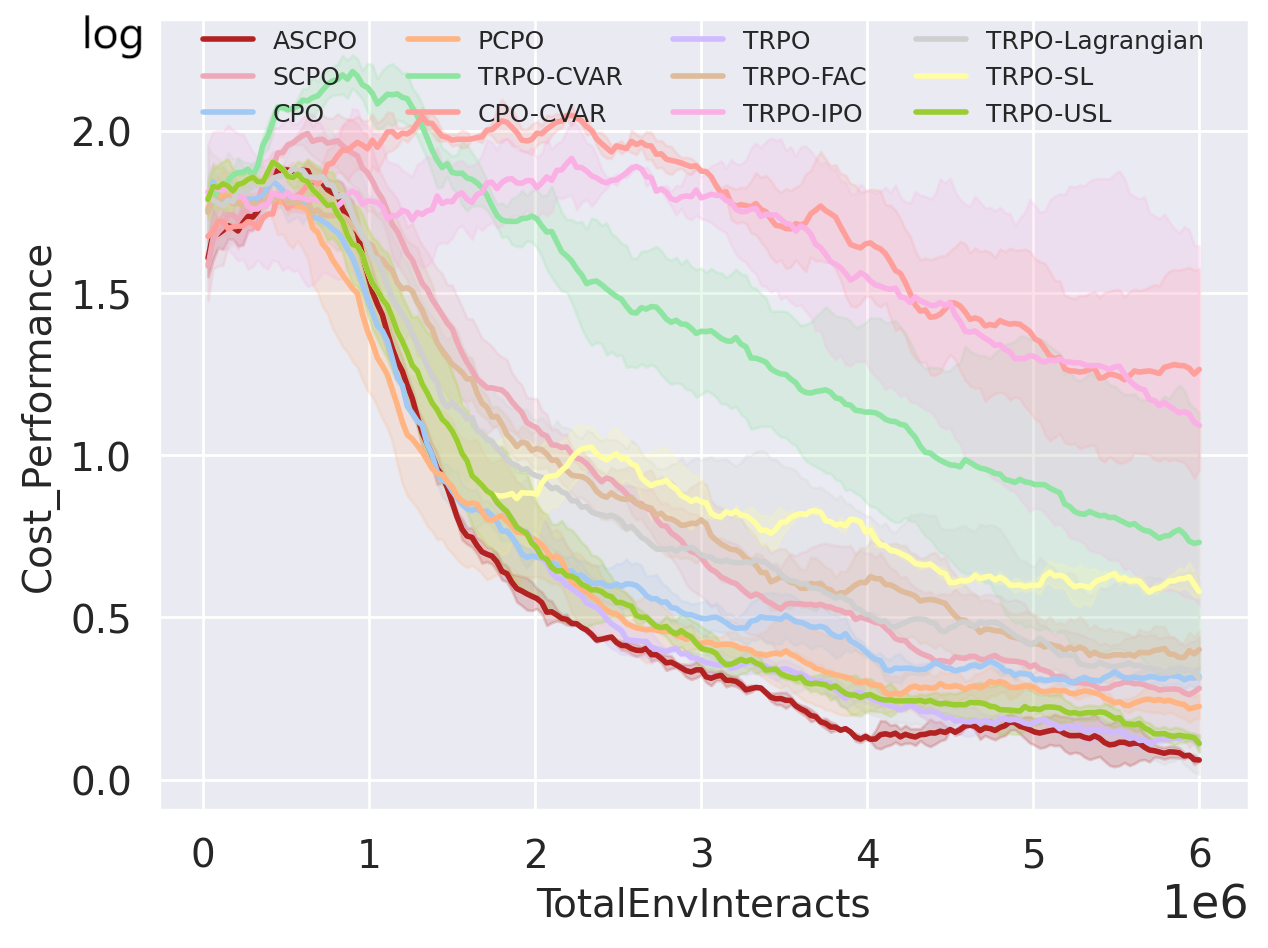}
          \end{subfigure}
          \hfill
          \begin{subfigure}[t]{1.0\linewidth}
              \includegraphics[width=0.99\textwidth]{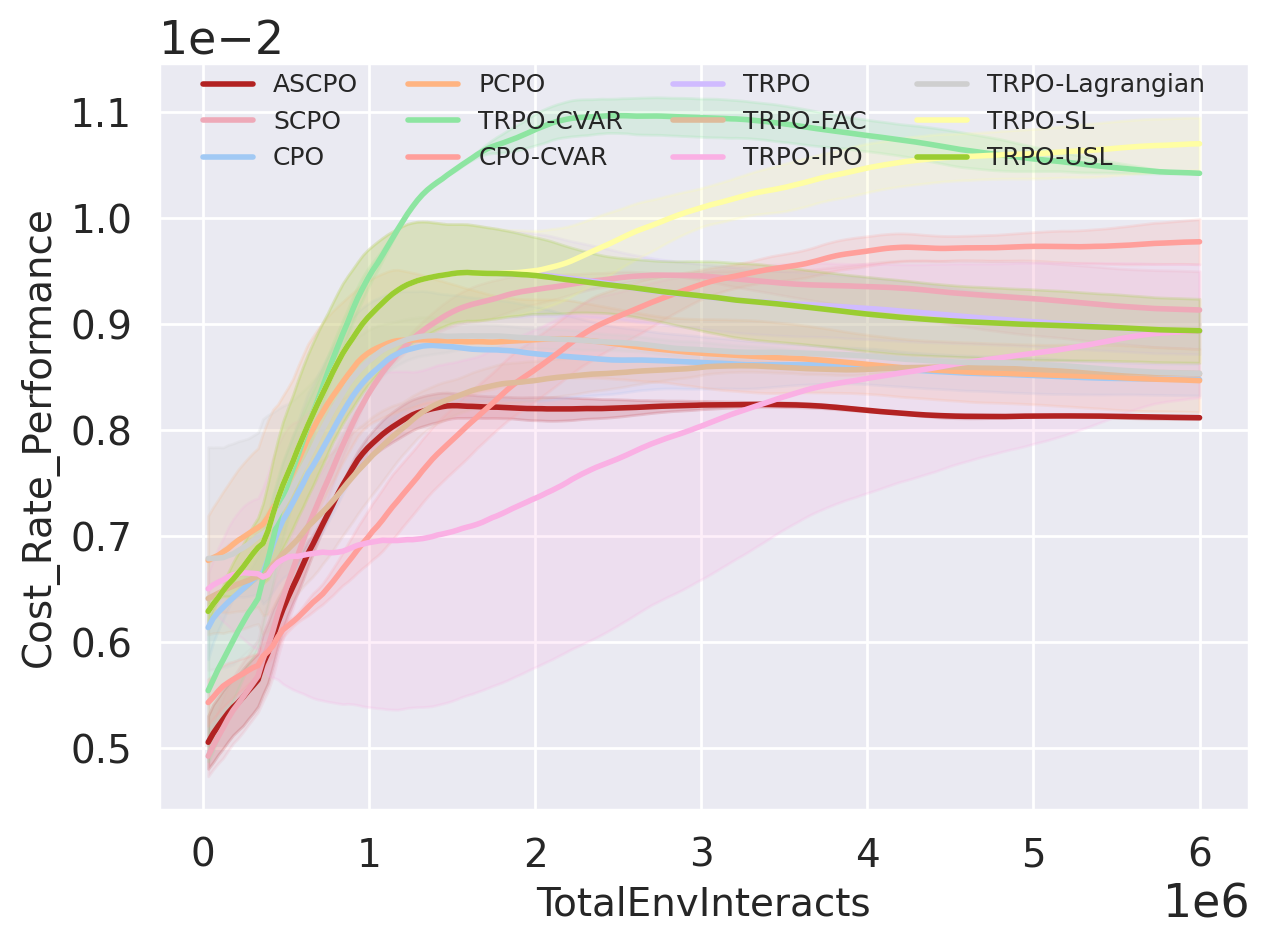}
          \end{subfigure}
      \caption{Humanoid-8-Hazards}
      \label{Humanoid-8-Hazards}
      \end{subfigure}
      \begin{subfigure}[t]{0.24\linewidth}
           \begin{subfigure}[t]{1.0\linewidth}
            \includegraphics[width=0.99\textwidth]{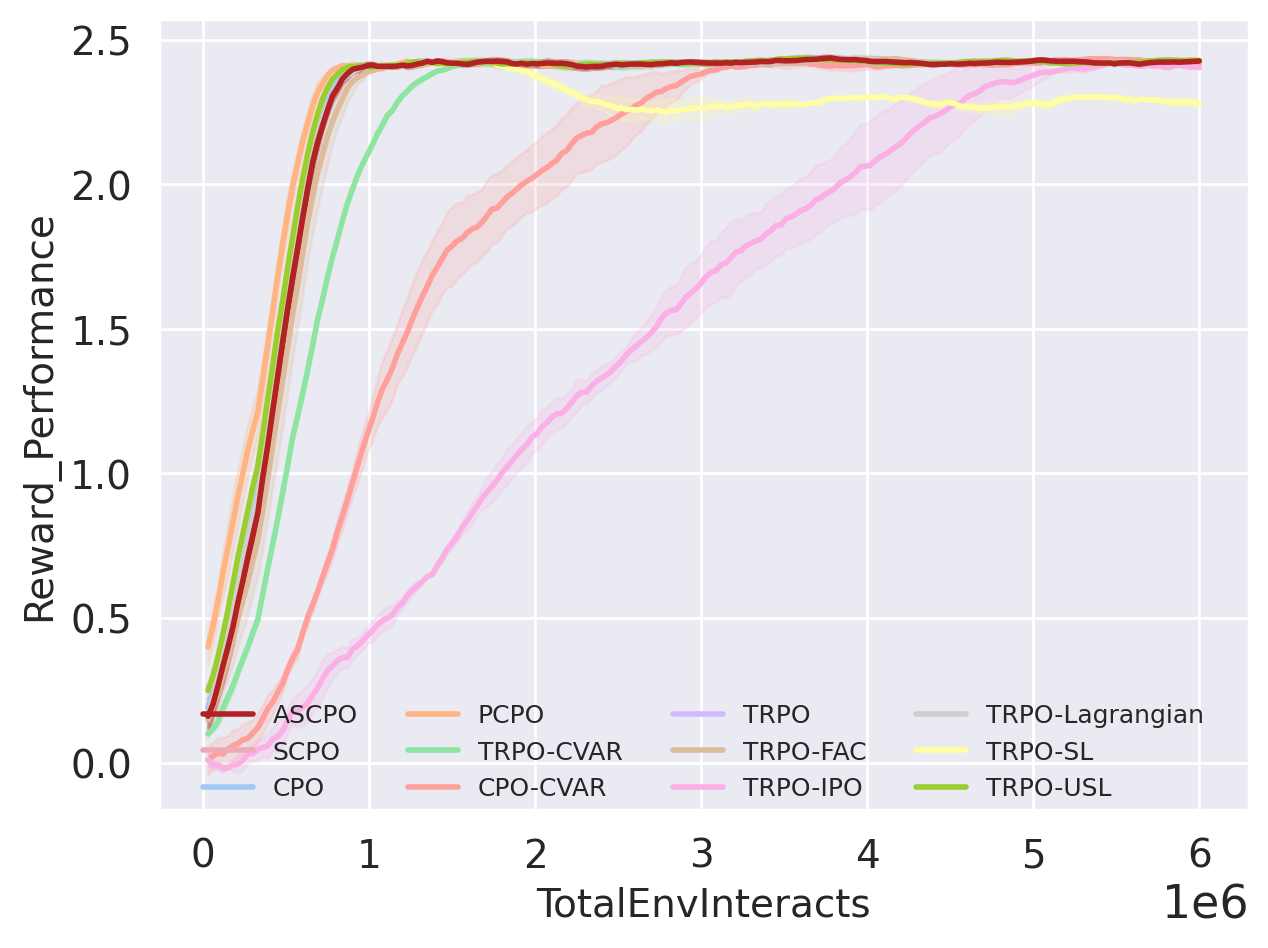}
          \end{subfigure}
          \hfill
          \begin{subfigure}[t]{1.0\linewidth}
              \includegraphics[width=0.99\textwidth]{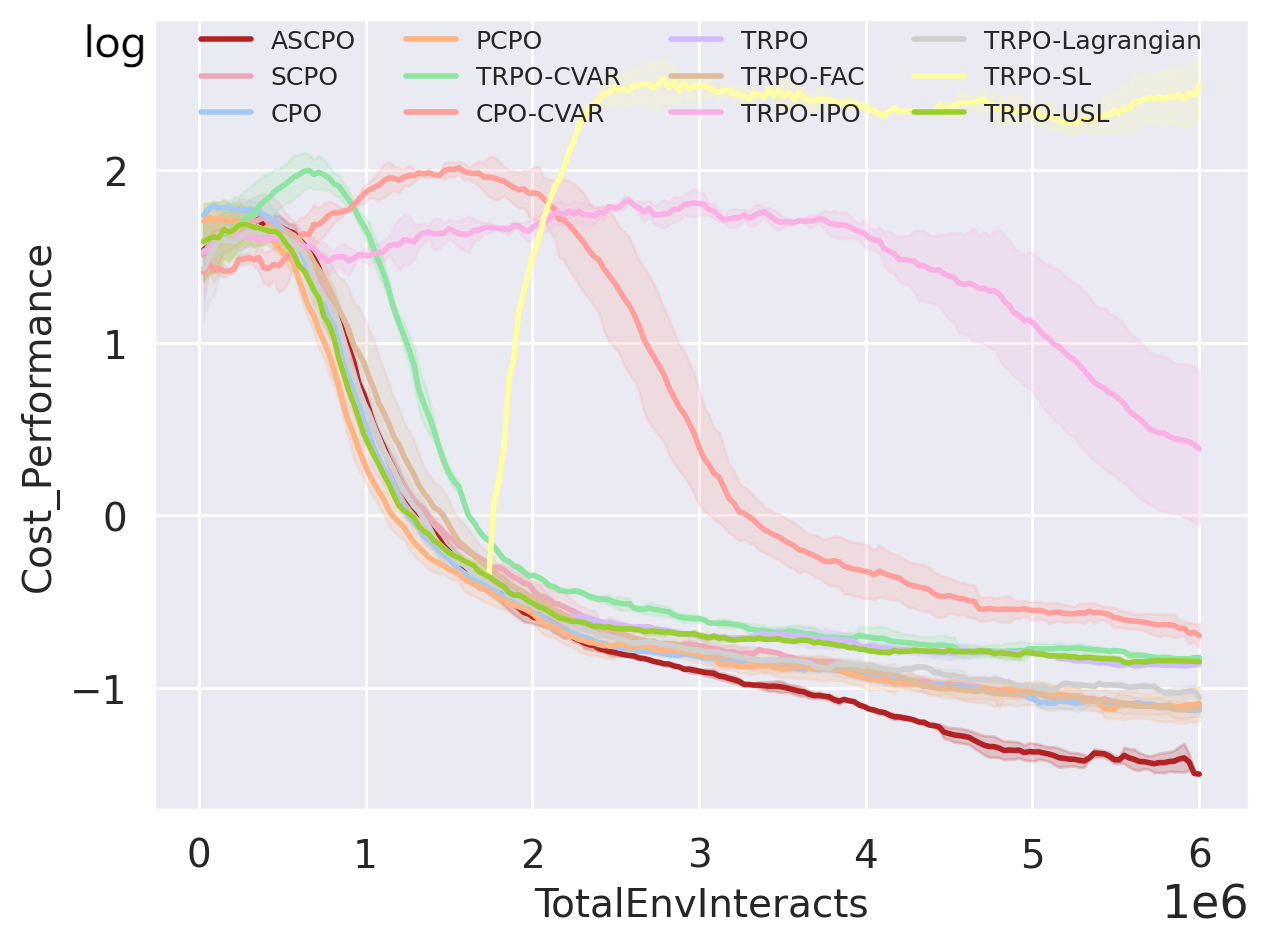}
          \end{subfigure}
          \hfill
          \begin{subfigure}[t]{1.0\linewidth}
              \includegraphics[width=0.99\textwidth]{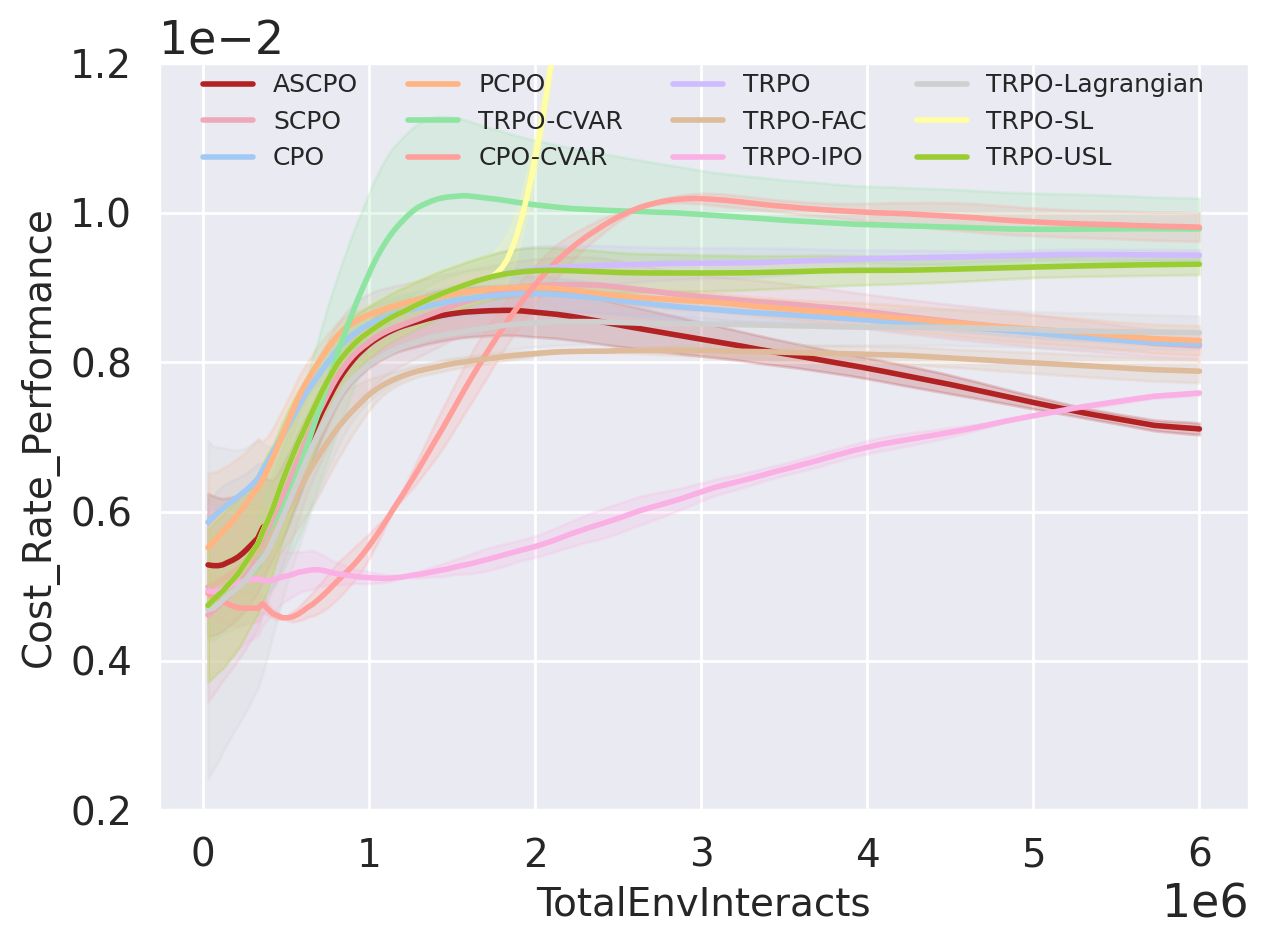}
          \end{subfigure}
      \caption{Ant-8-Hazards}
      \label{Ant-8-Hazards}
      \end{subfigure}
      \begin{subfigure}[t]{0.24\linewidth}
           \begin{subfigure}[t]{1.0\linewidth}
            \includegraphics[width=0.99\textwidth]{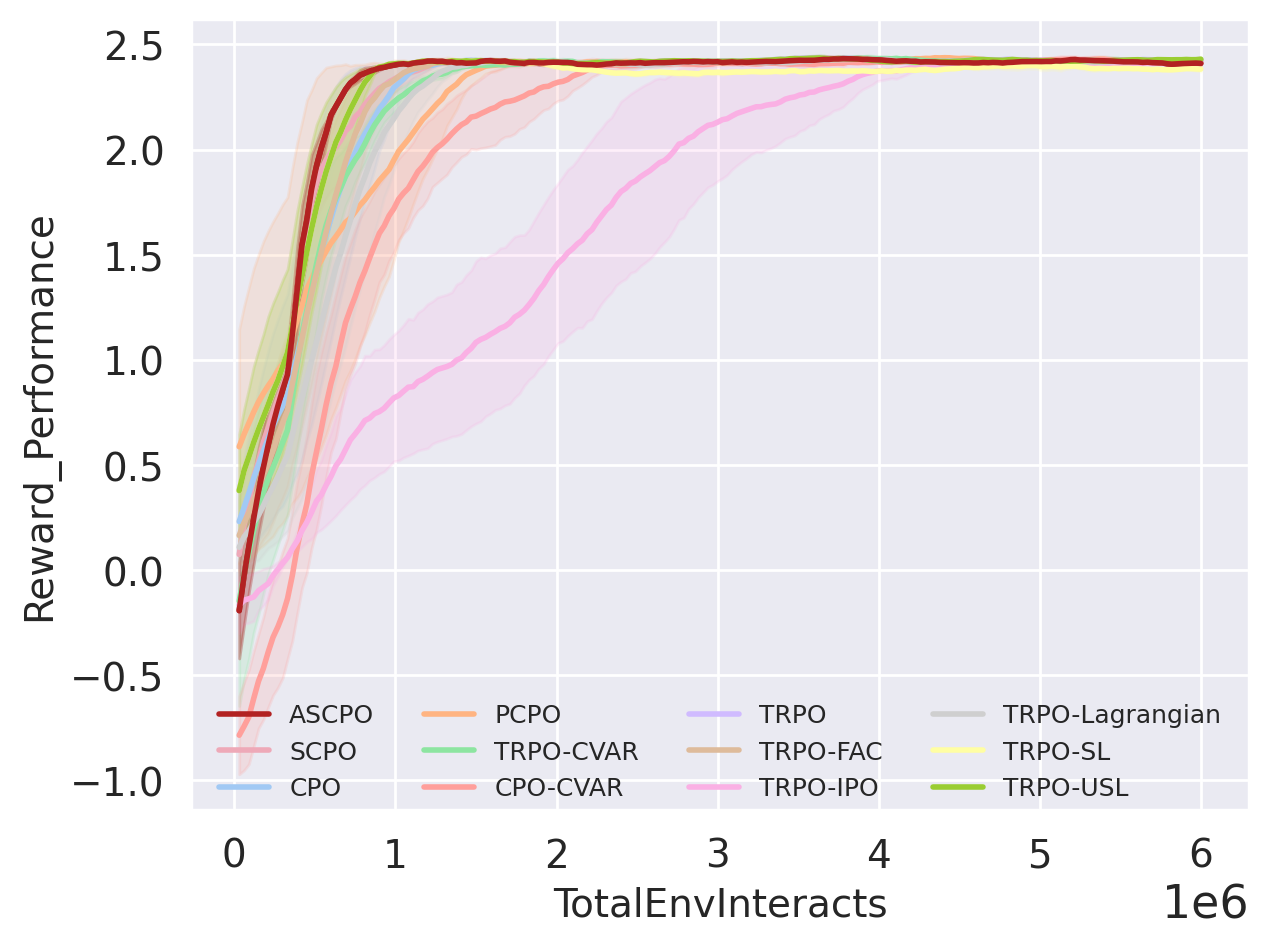}
          \end{subfigure}
          \hfill
          \begin{subfigure}[t]{1.0\linewidth}
              \includegraphics[width=0.99\textwidth]{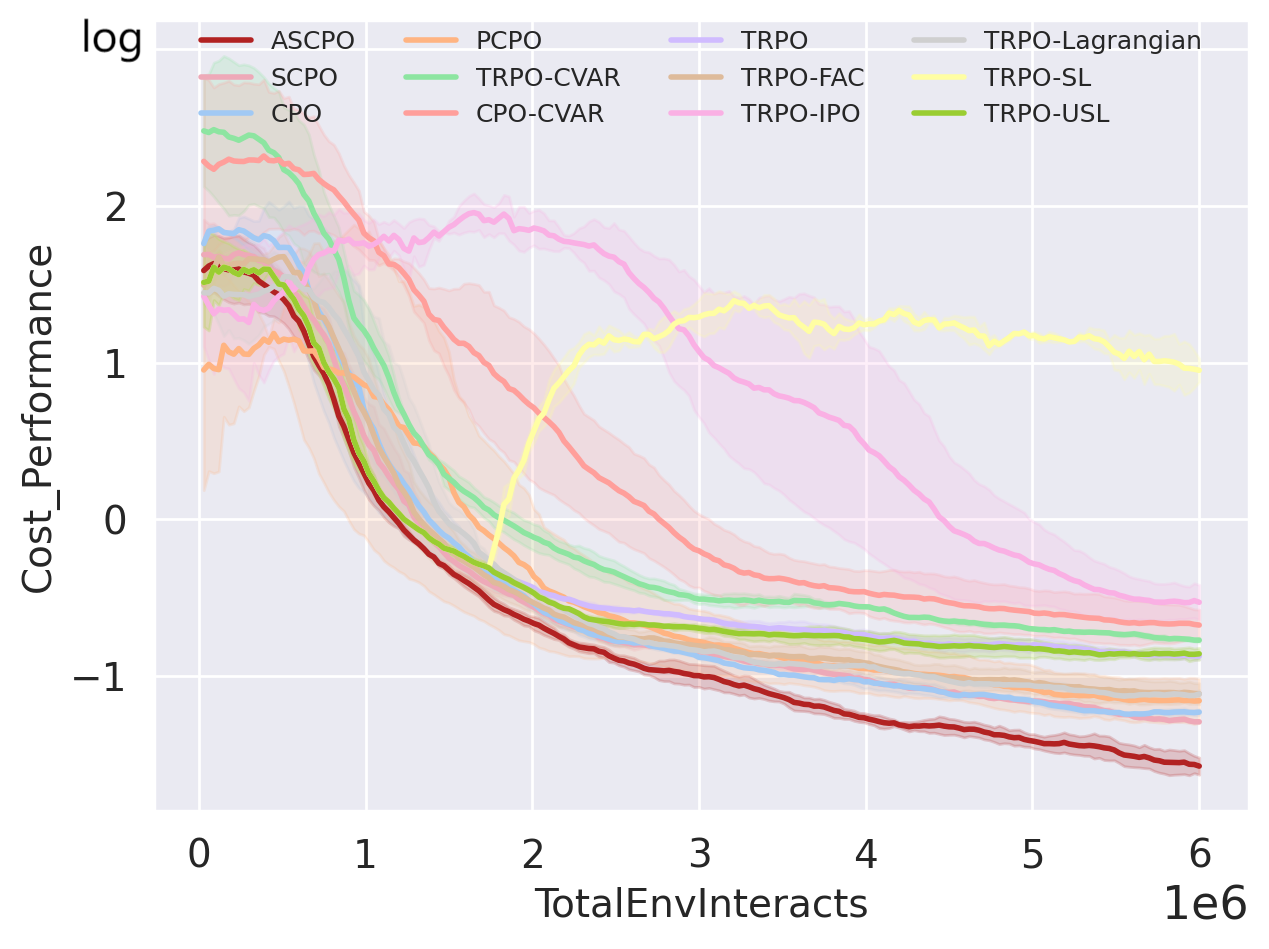}
          \end{subfigure}
          \hfill
          \begin{subfigure}[t]{1.0\linewidth}
              \includegraphics[width=0.99\textwidth]{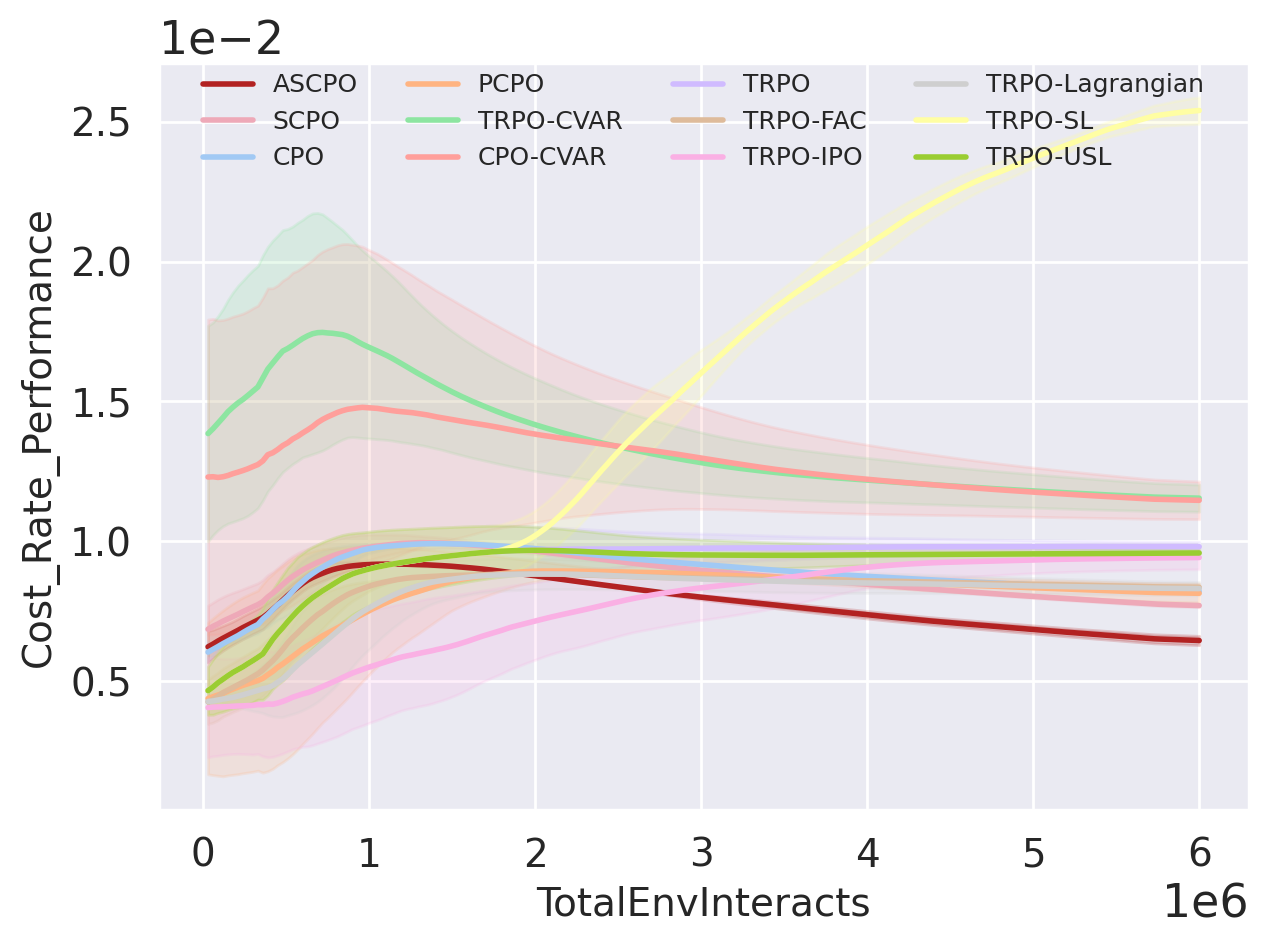}
          \end{subfigure}
      \caption{Walker-8-Hazards}
      \label{Walker-8-Hazards}
      \end{subfigure}
  \end{subfigure}
 
  \label{fig: main experiments}
  \caption{Comparison of results from (i) low dimensional systems with different types and numbers of constraints (Point, Swimmer: \cref{Point-8-Hazards,Point-8-Ghosts,Point-4-Ghosts,Swimmer-1-Hazards}) (ii) practical robots (Drone and Humanoid: \cref{Drone-8-Hazards,Humanoid-8-Hazards}) (iii) high dimensional systems (Ant, Walker:\cref{Ant-8-Hazards,Walker-8-Hazards}). The Y-axis of `Cost\_Performance' here uses logarithmic coordinates to show small level differences.}
\end{figure}

\subsection{Evaluating ASCPO and Comparison Analysis}
\label{sec: evaluate ASCPO}

\paragraph{Low Dimensional Systems}
We have selected four representative test suites (\cref{Point-8-Hazards,Point-8-Ghosts,Point-4-Ghosts,Swimmer-1-Hazards}) to exemplify ASCPO's performance on low-dimensional systems. The outcomes underscore ASCPO's remarkable ability to simultaneously achieve near-zero constraint violations and rapid reward convergence, a feat challenging for other algorithms within our comparison group. Specifically, compared to other baseline methods, ASCPO demonstrates: (i) near-zero average episode cost at the swiftest rate, (ii) substantially reduced cost rates, and (iii) stable and expedited reward convergence. 
Baseline end-to-end CMDP methods (CPO, PCPO, TRPO-Lagrangian, TRPO-FAC, TRPO-IPO) fall short of achieving near-zero cost performance even under a cost limit of zero. Similarly, while SCPO can approach near-zero cost, it exhibits deficiencies in both convergence speed and cost rate reduction, due to fundamental limitation in only regulating expectation in SCMDP methods. Risk-sensitive methods (TRPO-CVaR, CPO-CVaR) prove unsuitable for achieving optimal synthesis and are incapable of nearly zero cost performance.
Even with an explicit safety layer correcting unsafe actions at each time step, baseline hierarchical safe RL methods (TRPO-SL, TRPO-USL) fail to achieve near-zero cost performance due to inaccuracies stemming from the linear approximation of the cost function\citep{dalal2018safe} when confronted with highly nonlinear dynamics as encountered in our MuJoCo environments\citep{todorov2012mujoco}. A comprehensive summary of these results is provided in \Cref{sec:metrics}.

\paragraph{Practical Systems and High Dimension Systems}
To demonstrate the scalability of ASCPO and potential to solve complex robot learning problems, we conducted a series of experiments on practical and high-dimensional systems (\cref{Drone-8-Hazards,Humanoid-8-Hazards,Ant-8-Hazards,Walker-8-Hazards}). The results underscore ASCPO's ability to surpass all other algorithms by rapidly achieving both reward and cost convergence while maintaining a remarkably lower cost rate.

\begin{wrapfigure}{r}{0.5\textwidth}
    \vspace{-10pt}
    \centering
    \begin{subfigure}[b]{0.49\textwidth}
        \begin{subfigure}[t]{1.00\textwidth}
        \raisebox{-\height}{\includegraphics[width=\textwidth]{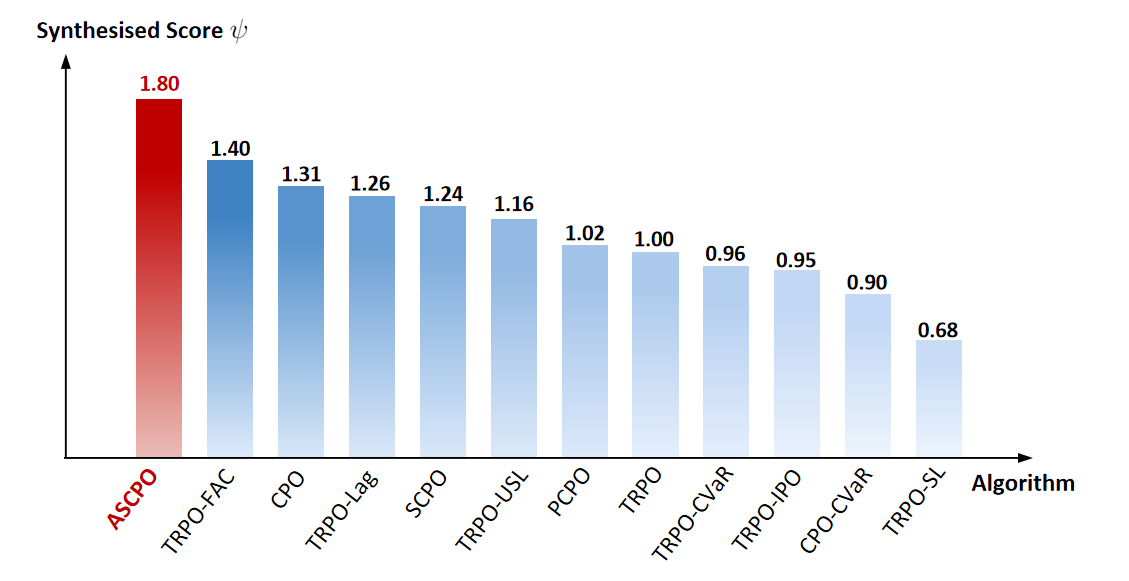}}
        \end{subfigure}
    \end{subfigure}
    \caption{The average of each algorithm's synthesised score $\psi$ on the 19 tasks.} 
    \label{fig: psi results}
    \vspace{-10pt}
\end{wrapfigure}

It is noteworthy that the PCPO demonstrates superior cost rate reduction performance in the Drone-8Hazards test suite. However, this comes at the expense of intolerably slow and unstable reward increases, rendering it unsuitable for simulation or real-world implementation. In the remaining three test suites, the Lagrangian method exhibits a favorable cost rate performance but concurrently displays poor performance of cost reduction and reward improvement during the early training stages. This discrepancy arises from the robots' initial struggle to learn task completion and reward optimization, resulting in longer episode lengths and consequently lower cost rates. This significant flaw in the Lagrangian approach underscores its limitations. More details and ablation experiments about Lagrangian method are provided in \Cref{sec: lagrandian ablation}.

\paragraph{Comprehensive Evaluation} To demonstrate the comprehensive ability of our algorithm, we take TRPO as baseline and design a new metric named synthesised score $\psi$ as follow:
\begin{align}
\label{eq: def of psi}
    \psi_{Current} = \frac{1}{3} \left( \frac{J_r^{Current}}{J_r^{TRPO}} + \frac{M_c^{TRPO}}{M_c^{Current}} + \frac{\rho_c^{TRPO}}{\rho_c^{Current}} \right)
\end{align}
This metric represents the average improvement magnitude of the current algorithm with respect to TRPO under the three metrics $J_r$, $M_c$ and $\rho_c$. It is used to show the comprehensive performance of the safe RL algorithm in a particular task. We averaged the $\psi$ for each of the 12 algorithms across the 19 tasks and present the results in \Cref{fig: psi results}. The metrics used for computing can be found in \Cref{tab: point_hazard,tab: point_pillar,tab: point_ghost,tab: swimmer_hazard,tab: drone_3Dhazard,tab: ant_walker_hazard} in appendix. The results show that our algorithm has a cliff-leading combined effect compared to other algorithms.

Above results demonstrate the superiority of ASCPO in comparison to various other safe RL state-of-the-art methods, which answer \textbf{Q1}.

\begin{figure}[t]
  \centering
  \begin{subfigure}[t]{0.24\linewidth}
    \includegraphics[width=0.99\textwidth]{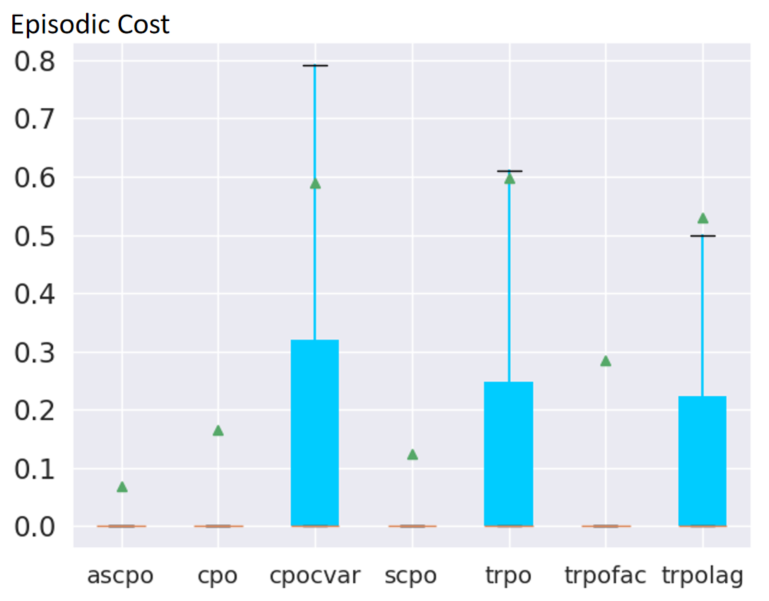}
    \centering
    \caption{Point-8-Hazards}
    \label{fig: Point-8-Hazards box}
  \end{subfigure}
  \begin{subfigure}[t]{0.24\linewidth}
      \includegraphics[width=0.99\textwidth]{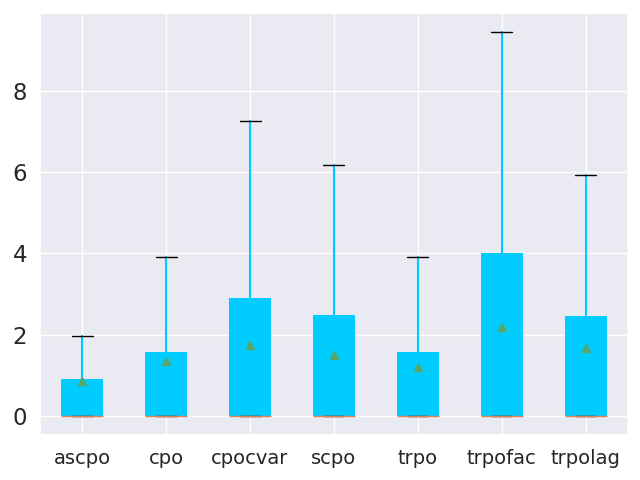}
    \centering
      \caption{Humanoid-8-Hazards}
      \label{fig: Humanoid-8-Hazards box}
  \end{subfigure}
  \begin{subfigure}[t]{0.24\linewidth}
      \includegraphics[width=0.99\textwidth]{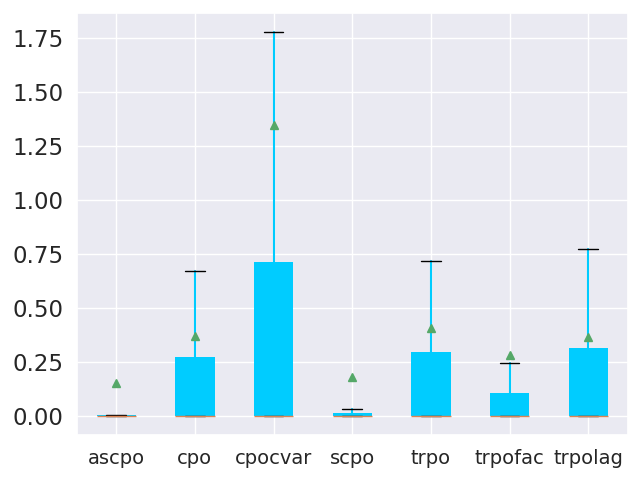}
    \centering
      \caption{Ant-8-Hazards}
      \label{fig: Ant-8-Hazards box}
  \end{subfigure}
  \begin{subfigure}[t]{0.24\linewidth}
      \includegraphics[width=0.99\textwidth]{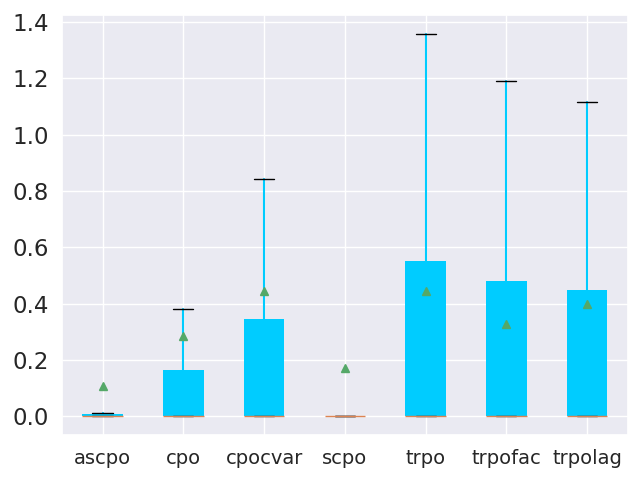}
    \centering
      \caption{Walker-8-Hazards}
      \label{fig: Walker-8-Hazards box}
  \end{subfigure}
  \caption{Box plots of episodic cost for testing representative algorithms on four representative test suites. Each plot features a box representing the interquartile range (IQR) of the data, accompanied by a line denoting the median. Whiskers extend from the box to illustrate the dataset's minimum and maximum values. In accordance with the scope of this study, no outliers were identified, which means that the entirety of the data was utilized for box plotting. Additionally, the mean values of episodic cost are indicated by green arrows.}
  \label{fig: box}
\end{figure}

\begin{figure}[t]
    \raisebox{-\height}{\includegraphics[width=\linewidth]{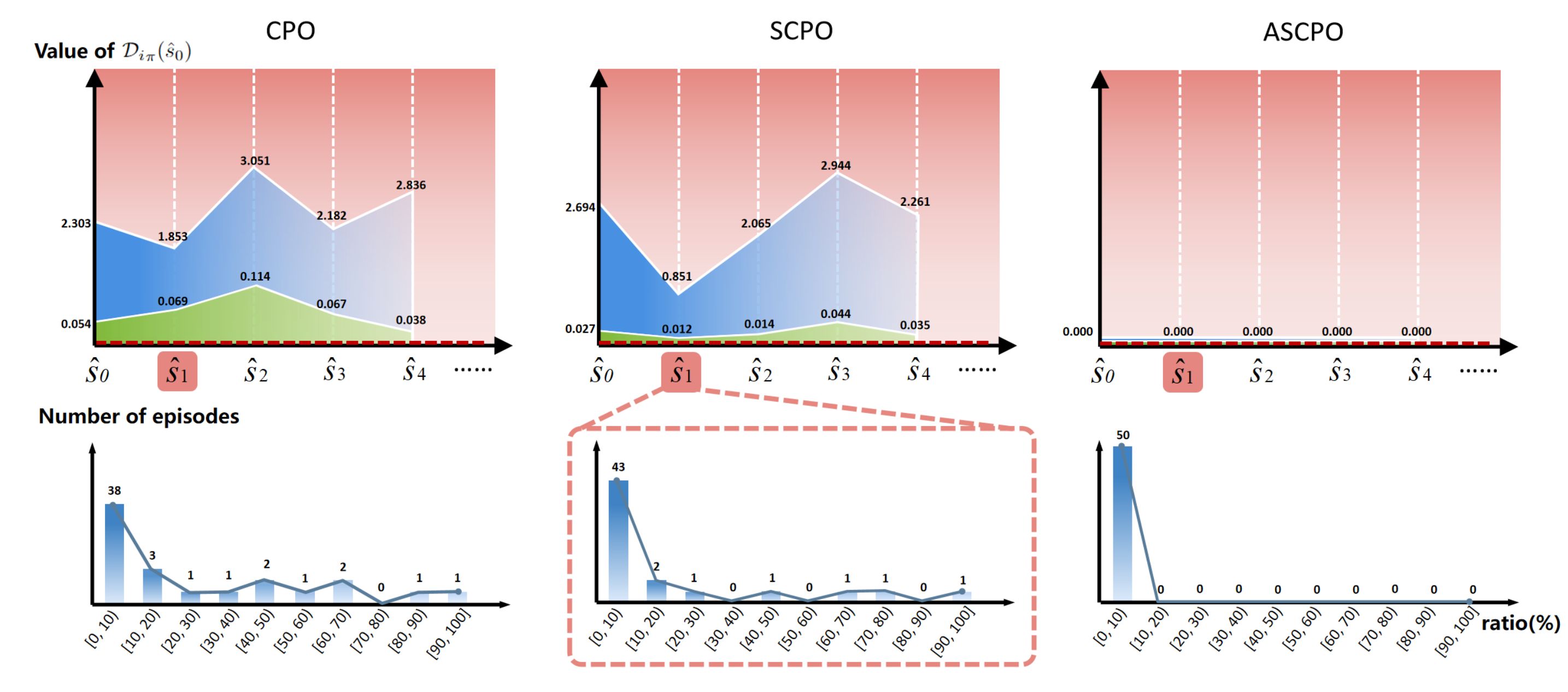}}
    \caption{Concretization of \Cref{fig: cpo vs scpo vs ascpo} with real data. We set threshold to 0 and take Swimmer-1-Hazard as a representative test suite. For each algorithm, we run 50 episodes under each of the five random seeds using the policy trained under 6e6 interactions to obtain the maximum (blue region) and expectation (green region) of episodic cost corresponding to each random seed (which represents a different initial state). Lower pictures show histograms of samples for a particular initial state.} 
    \label{fig:data proof of illustration}
\end{figure}

\paragraph{Absolute Maximum State-wise Cost}
In \Cref{fig: box}, we present a selection of four representative robots across varying dimensions to demonstrate ASCPO's proficiency in constraining episodic cost values below a specified threshold with exceptionally high probability. We include CPO, CPO-CVaR, SCPO, TRPO, TRPO-FAC and TRPO-Lagrangian as benchmark algorithms for comparison.
All algorithms undergo thorough training for 6 million steps, with data collection consisting of 30,000 steps under each of the three random seeds to ensure a robust data distribution. Our analysis reveals that ASCPO not only achieves the lowest mean episodic cost value but also effectively manages the maximum episodic cost, thereby demonstrating \textbf{ASCPO's success in controlling the entire distribution with high probability}. 
At the same time, in \Cref{fig:data proof of illustration}, we take Swimmer-1-Hazard as the representative test suite to concretize the illustration of \Cref{fig: cpo vs scpo vs ascpo} using real data. The results show that ASCPO can effectively constrain the maximum episodic cost within a safe threshold, demonstrating the superiority of our algorithm in controlling the entire distribution.
These results answer \textbf{Q2}.

\begin{wrapfigure}{r}{0.5\textwidth}
    \vspace{-20pt}
    \centering
    \begin{subfigure}[b]{0.24\textwidth}
        \begin{subfigure}[t]{1.00\textwidth}
        \raisebox{-\height}{\includegraphics[width=\textwidth]{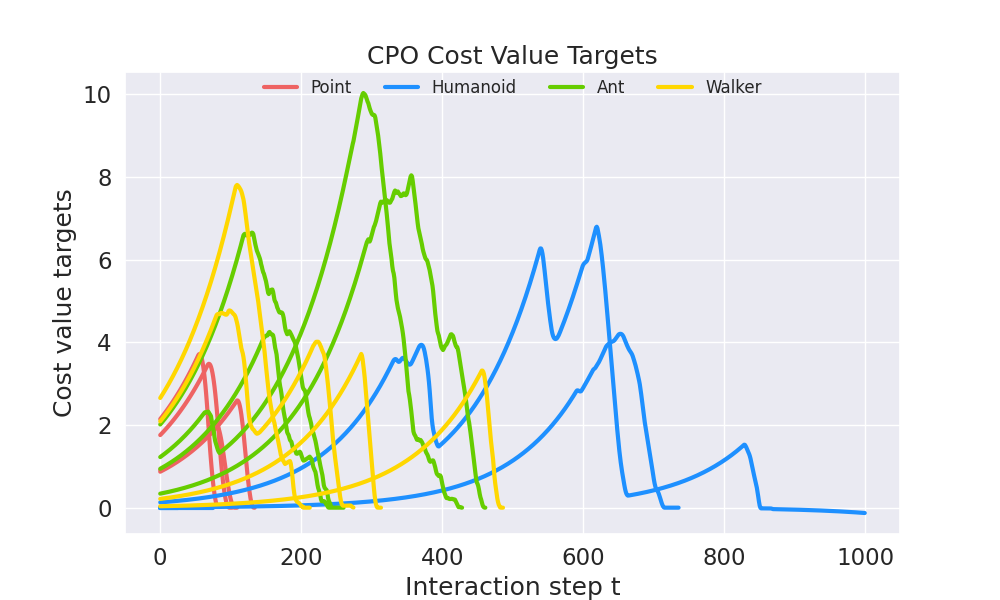}}
        \end{subfigure}
    \end{subfigure}
    \hfill
    \begin{subfigure}[b]{0.24\textwidth}
        \begin{subfigure}[t]{1.00\textwidth}
        \raisebox{-\height}{\includegraphics[width=\textwidth]{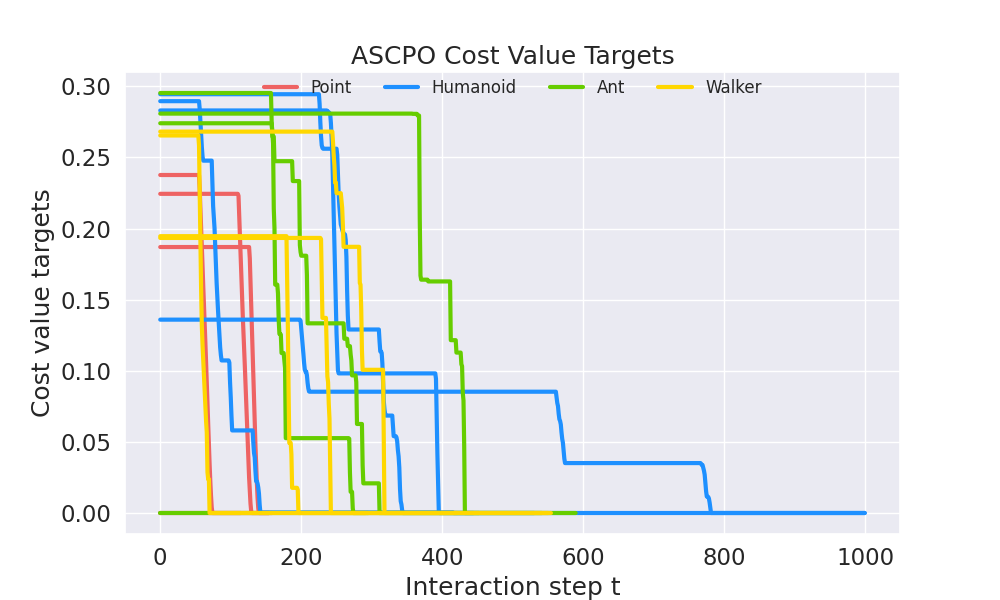}}
        \end{subfigure}
    \end{subfigure}
    \caption{Cost value function targets of four randomly sampled episodes of different tasks.} 
    \label{fig:cost value function target}
    \vspace{-10pt}
\end{wrapfigure}

\paragraph{Ablation on Monotonic-Descent Trick}
According to Zhao et al.~\citep{zhao2024statewise}, algorithms employing the MMDP framework feature cost target functions characterized by step functions, as illustrated in the right panel of \Cref{fig:cost value function target}. These functions delineate the maximum cost increment in any future state relative to the maximal state-wise cost encountered thus far. Consequently, strategies can be employed to enhance the neural network's fitting of these step functions. A pivotal characteristic of these functions is their monotonically decreasing nature, which informs the design of the loss \Cref{eq: monotonic descent}.
Moreover, as depicted in \Cref{fig:cost value function target}, the application of the monotonic-descent technique to non-MMDP scenarios (using CPO as an example) appears irrelevant and does not affect their performance. Subsequently, in \Cref{fig: ablation of monotonic descent}, we demonstrate the impact of employing the monotonic-descent strategy across four test suites. The results demonstrate that this approach accelerates the convergence of cost values towards near-zero values and effectively lowers the cost rate to a desirable level, which answer \textbf{Q3}.

\begin{figure}[t]
  \centering
  \begin{subfigure}[t]{1.0\linewidth}
      \centering
      \begin{subfigure}[t]{0.24\linewidth}
          \begin{subfigure}[t]{1.0\linewidth}
              \includegraphics[width=0.99\textwidth]{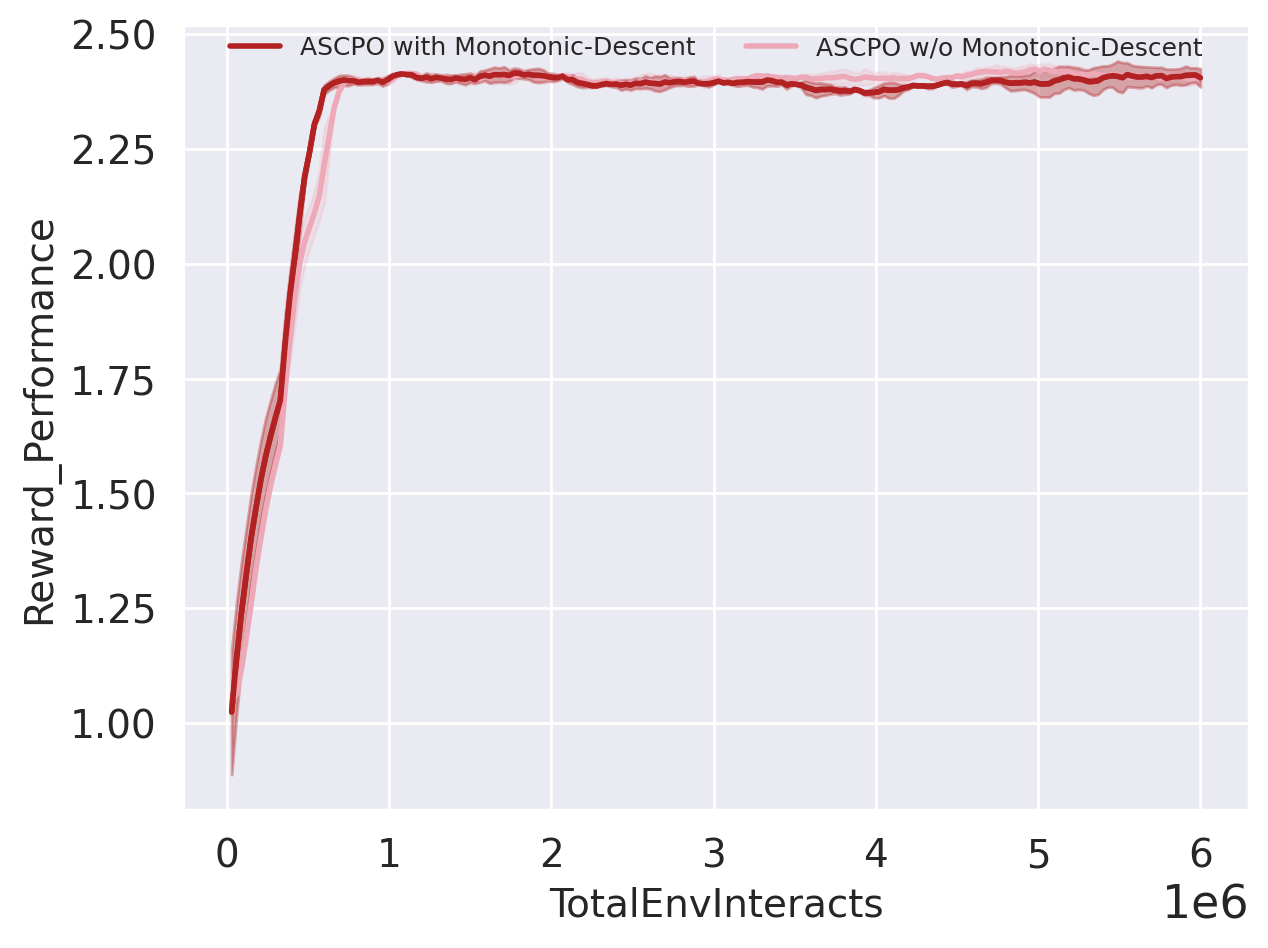}
          \end{subfigure}
          \hfill
          \begin{subfigure}[t]{1.0\linewidth}
              \includegraphics[width=0.99\textwidth]{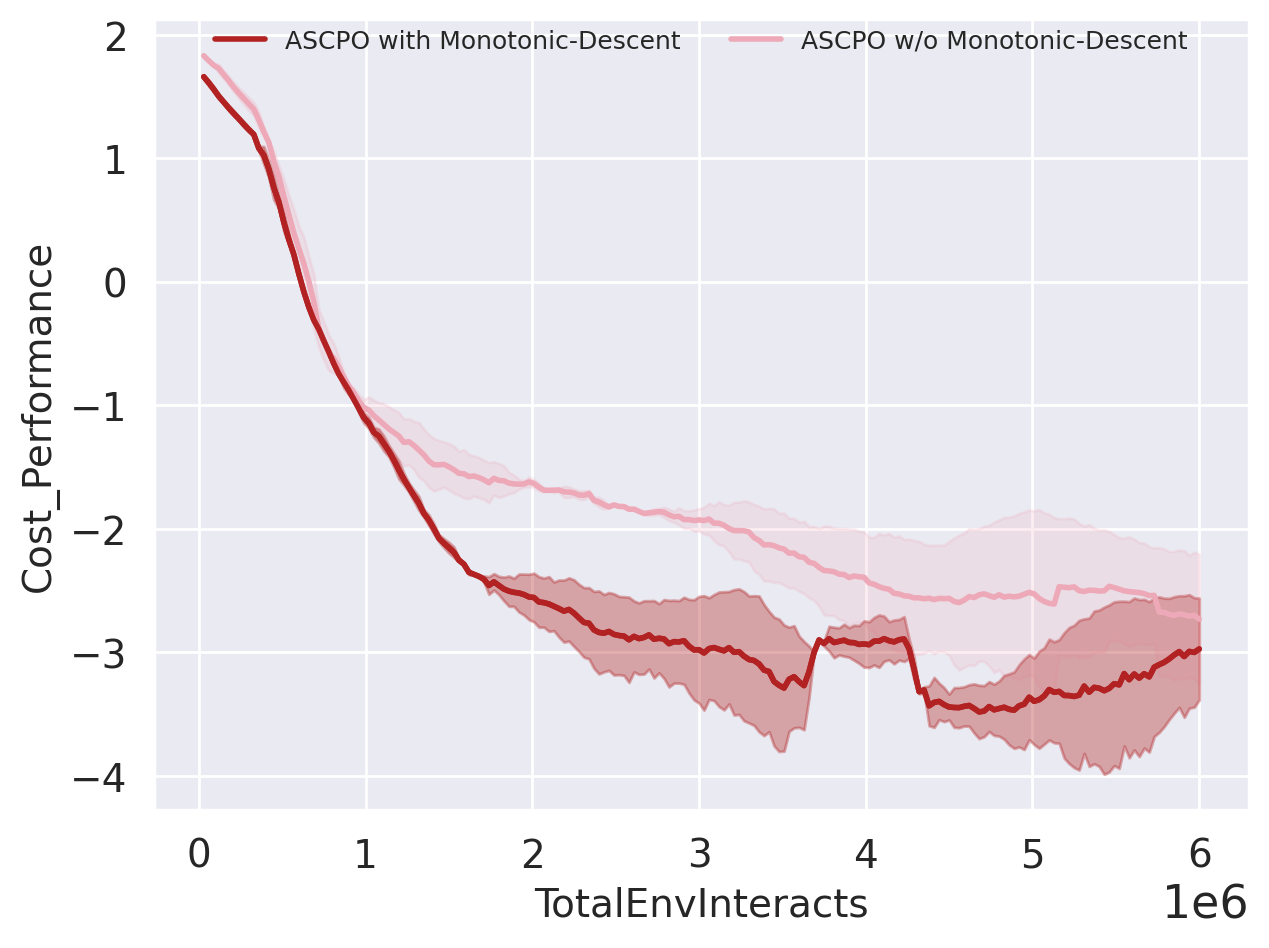}
          \end{subfigure}
          \hfill
          \begin{subfigure}[t]{1.0\linewidth}
              \includegraphics[width=0.99\textwidth]{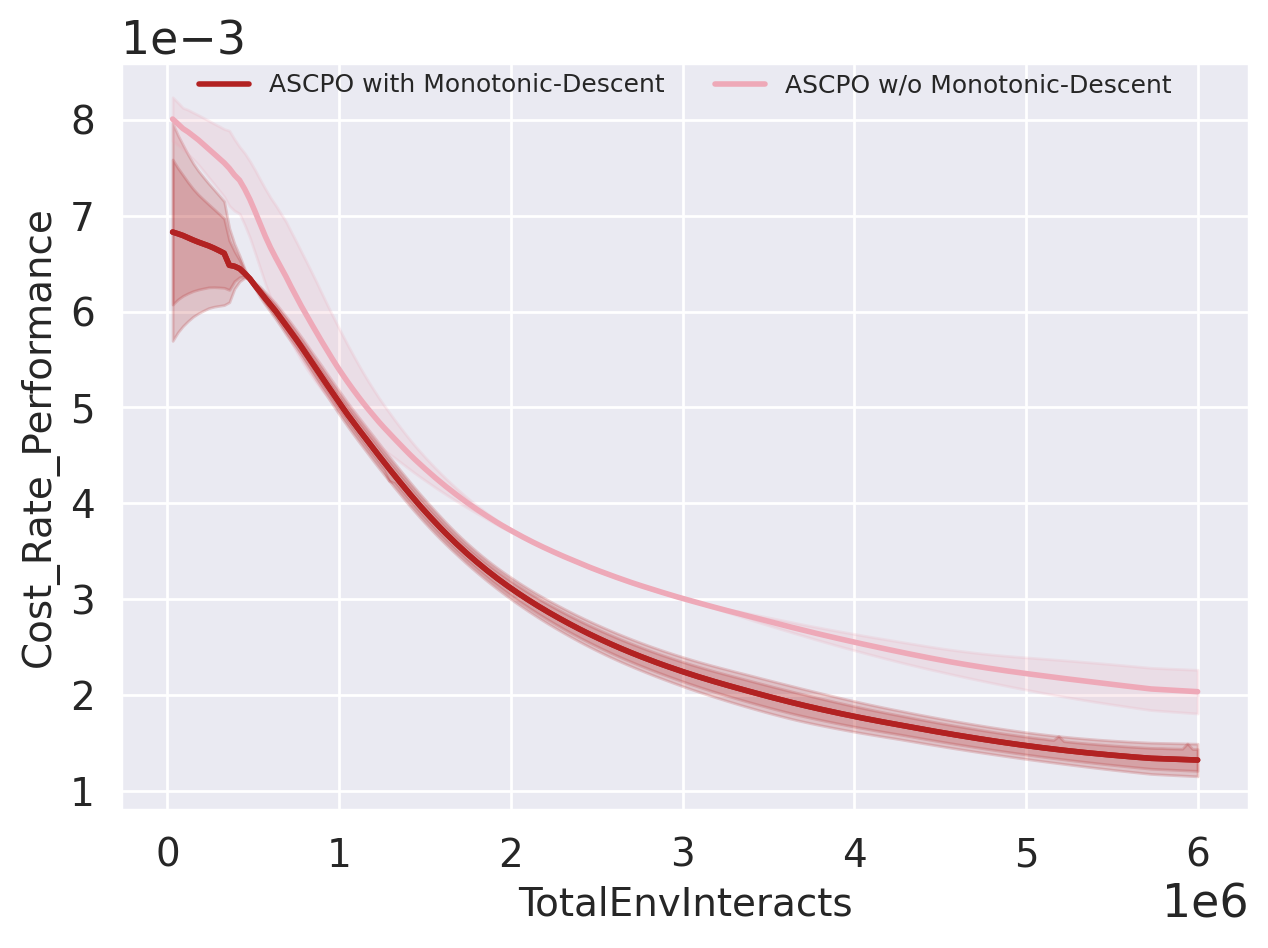}
          \end{subfigure}
      \caption{Point-8-Hazards}
      \end{subfigure}
      \begin{subfigure}[t]{0.24\linewidth}
          \begin{subfigure}[t]{1.0\linewidth}
              \includegraphics[width=0.99\textwidth]{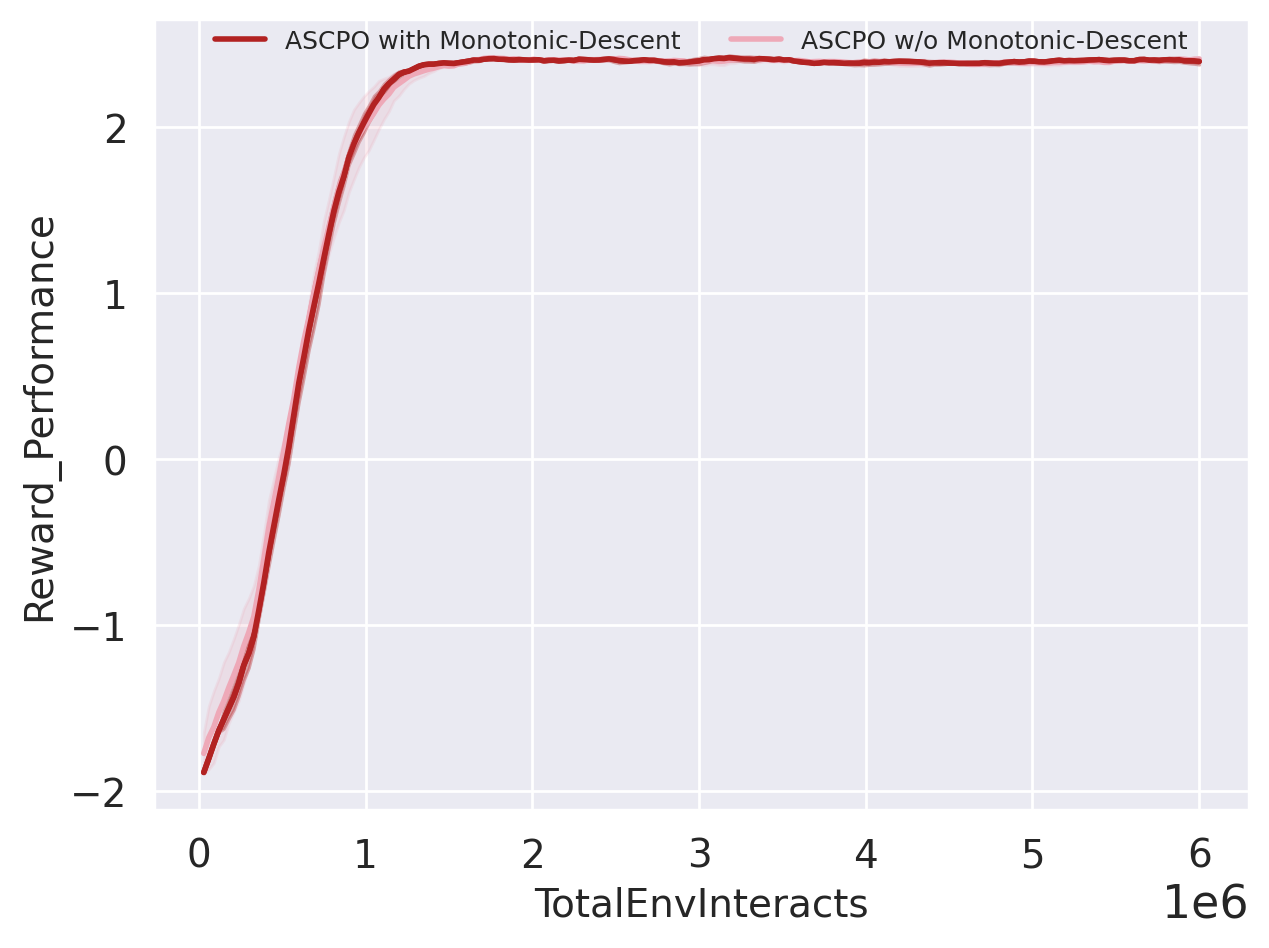}
          \end{subfigure}
          \hfill
          \begin{subfigure}[t]{1.0\linewidth}
              \includegraphics[width=0.99\textwidth]{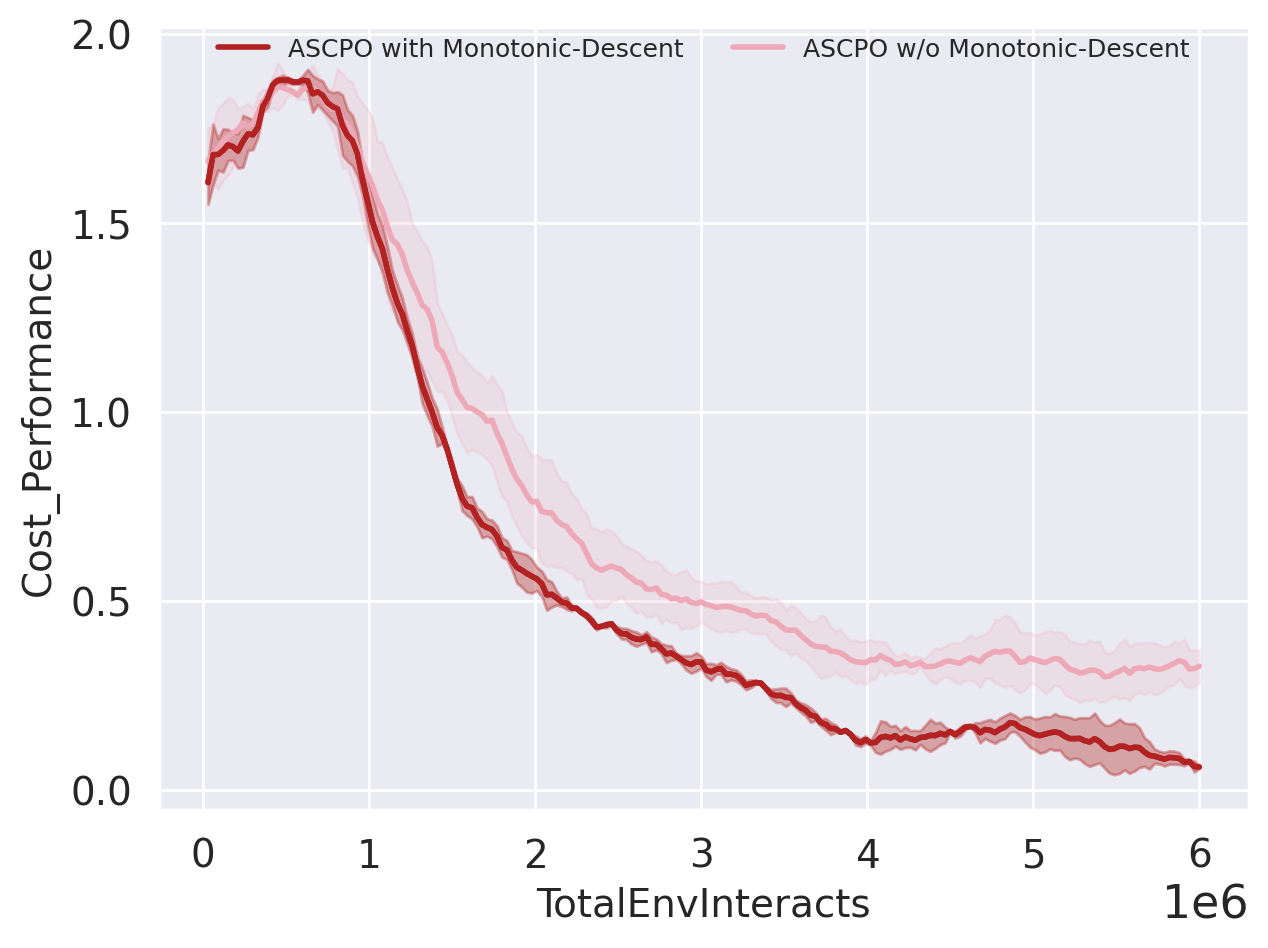}
          \end{subfigure}
          \hfill
          \begin{subfigure}[t]{1.0\linewidth}
              \includegraphics[width=0.99\textwidth]{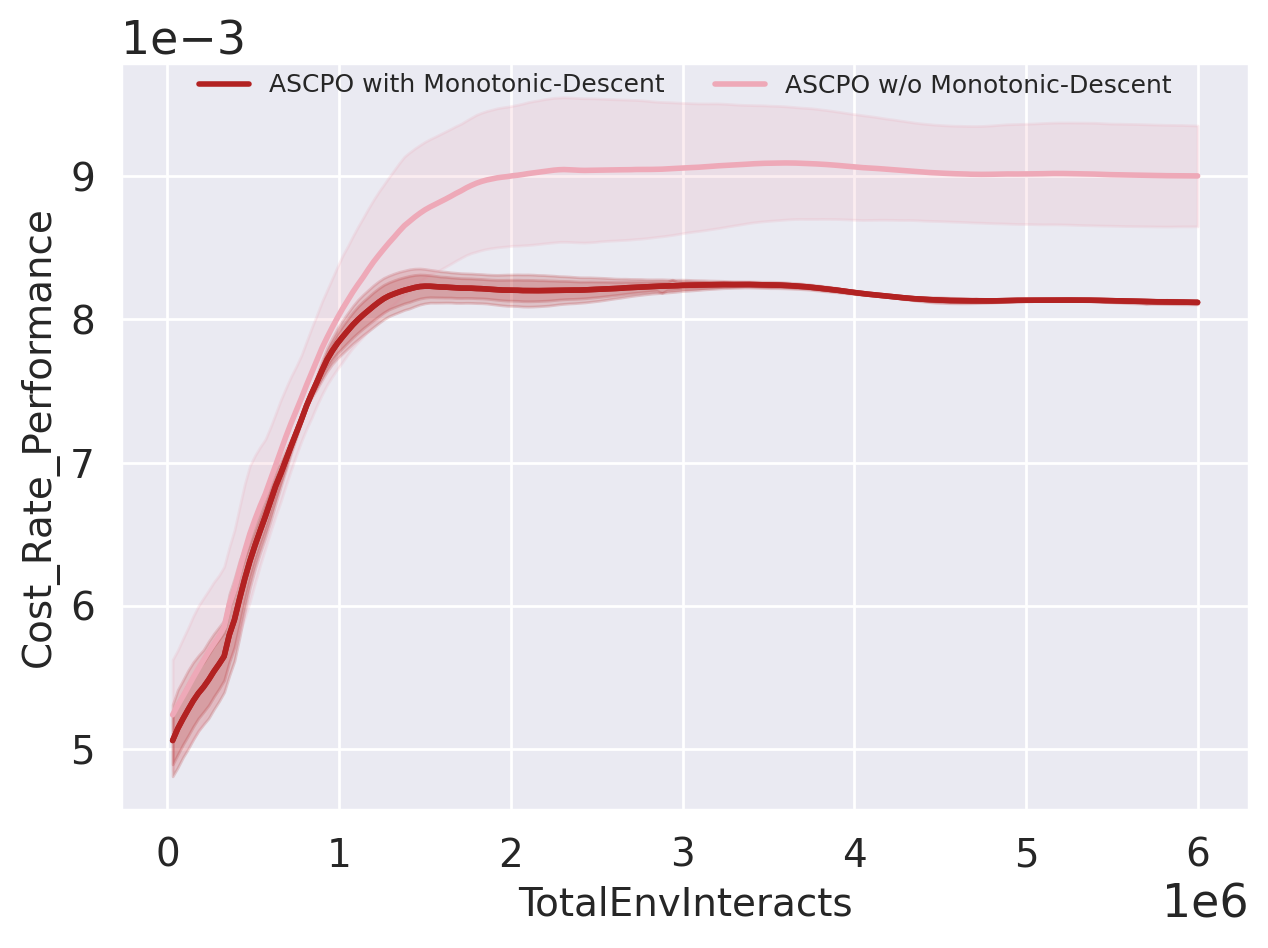}
          \end{subfigure}
      \caption{Humanoid-8-Hazards}
      \end{subfigure}
      \begin{subfigure}[t]{0.24\linewidth}
          \begin{subfigure}[t]{1.0\linewidth}
              \includegraphics[width=0.99\textwidth]{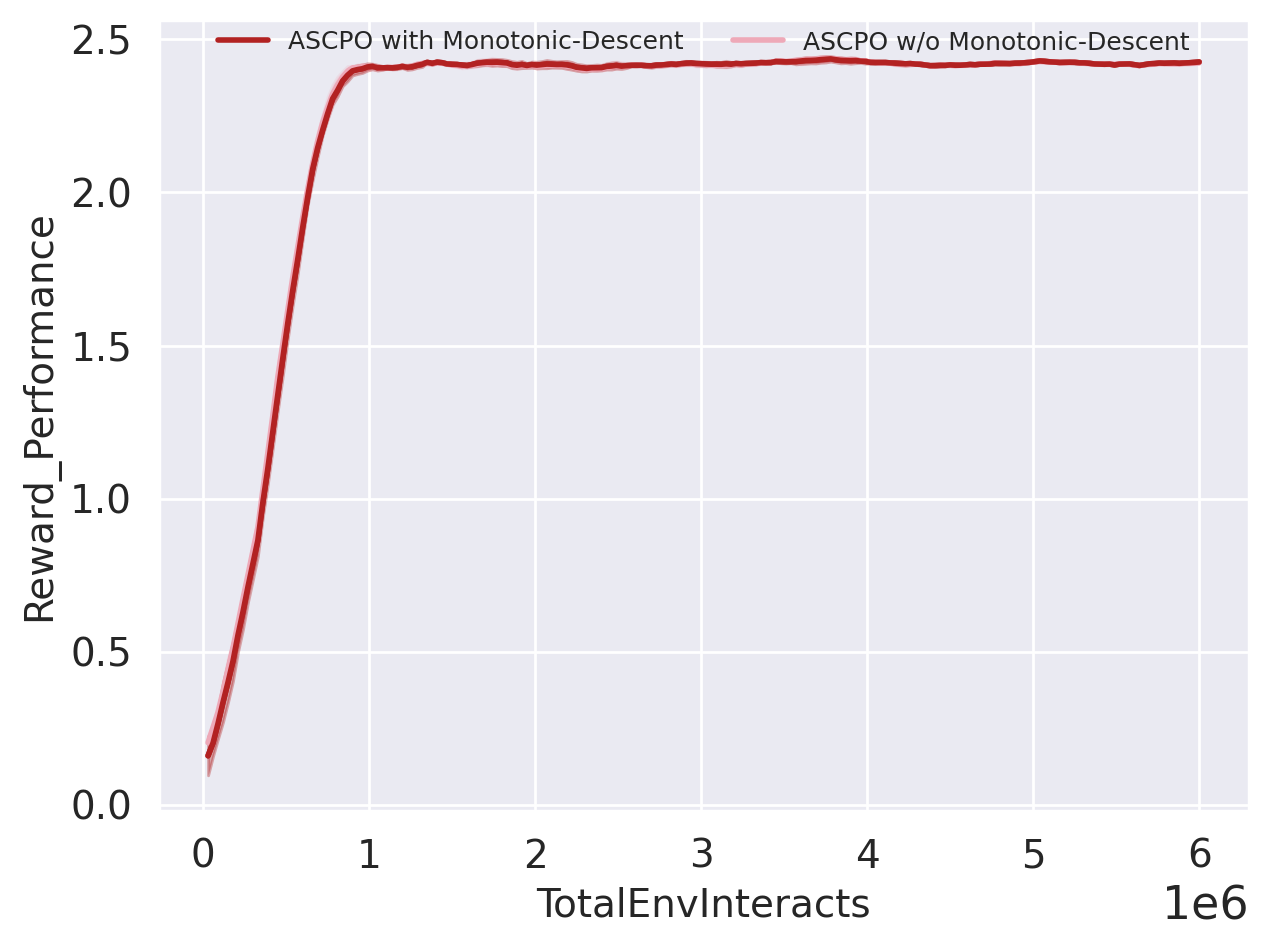}
          \end{subfigure}
          \hfill
          \begin{subfigure}[t]{1.0\linewidth}
              \includegraphics[width=0.99\textwidth]{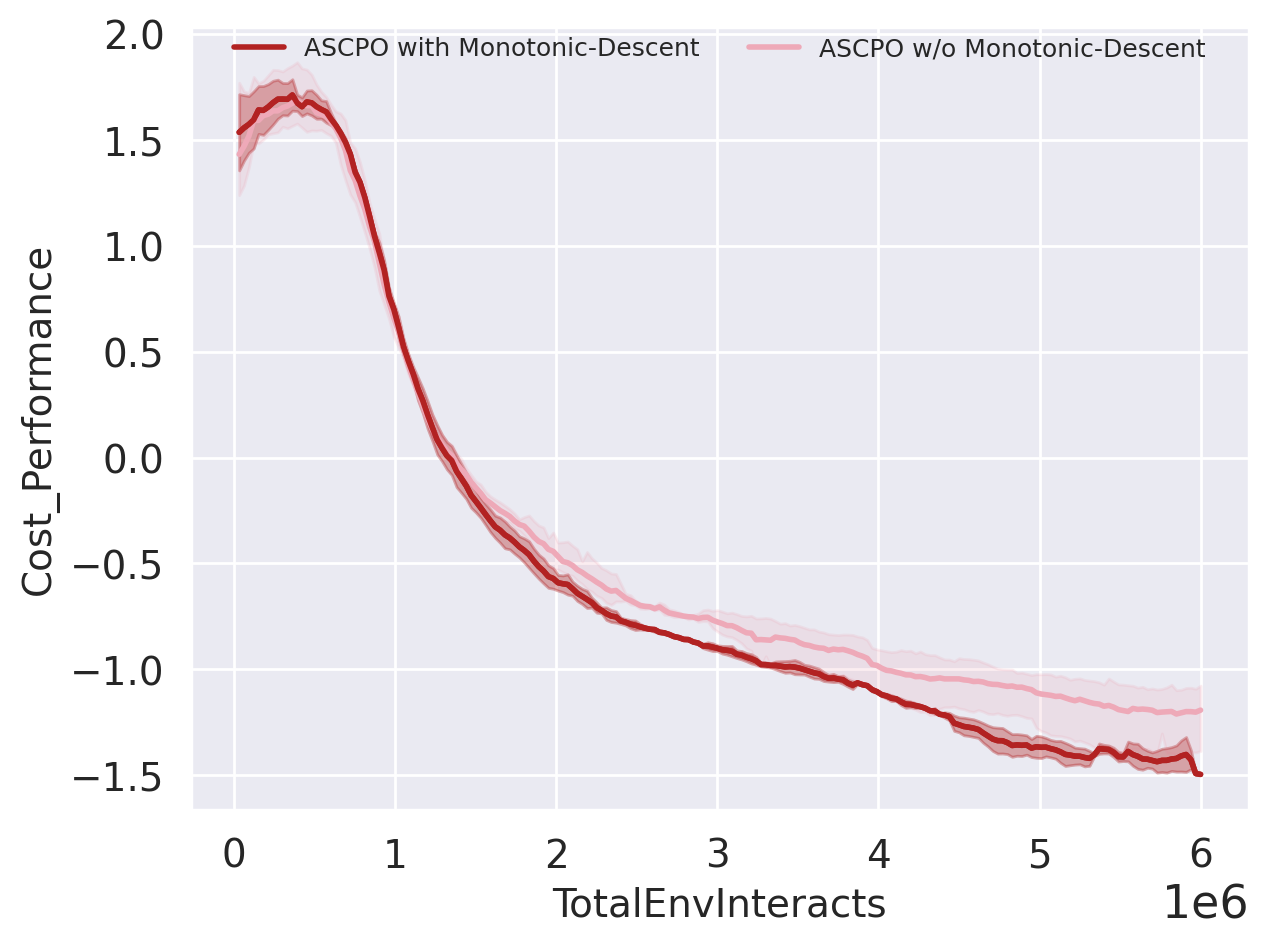}
          \end{subfigure}
          \hfill
          \begin{subfigure}[t]{1.0\linewidth}
              \includegraphics[width=0.99\textwidth]{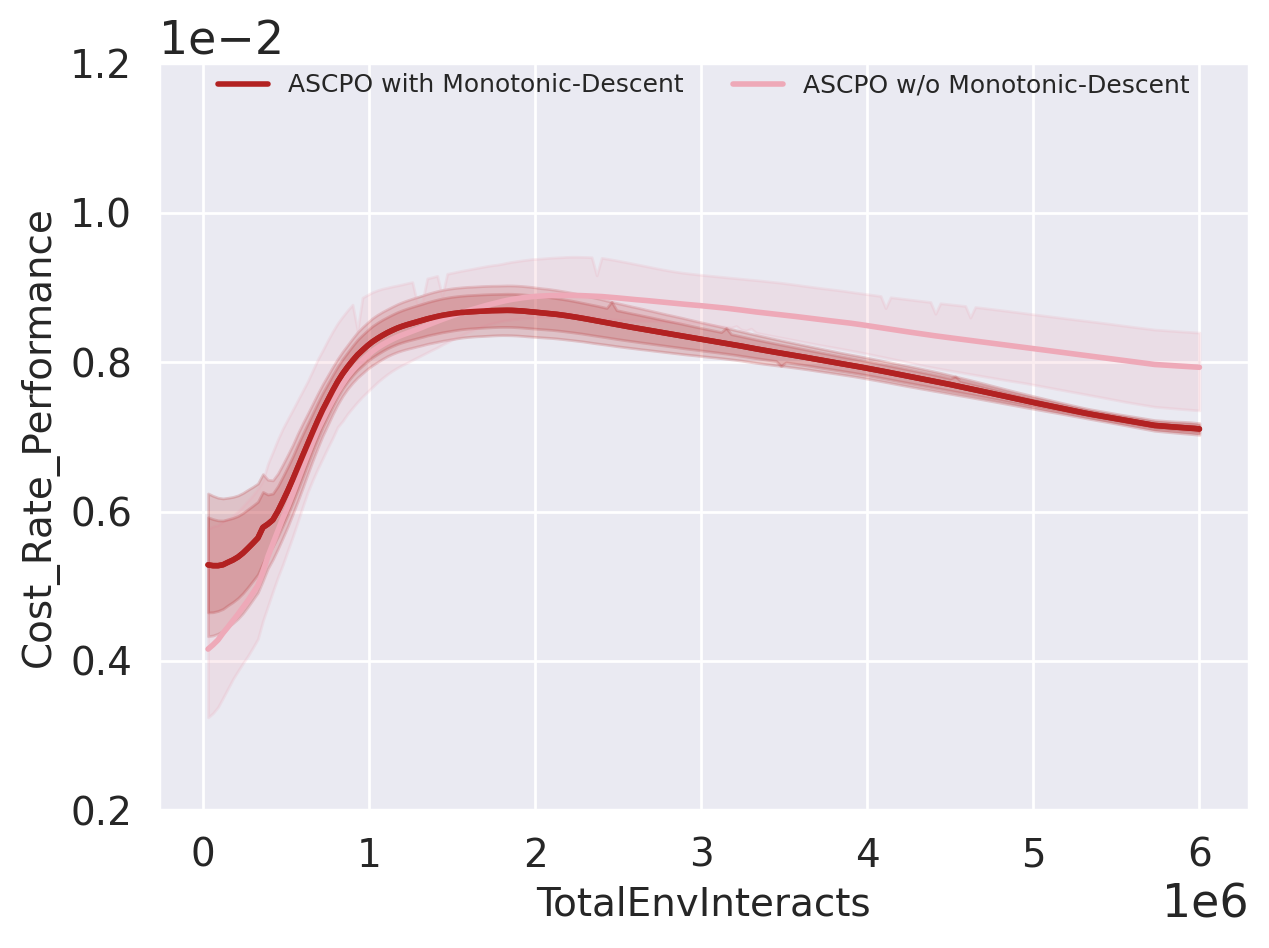}
          \end{subfigure}
      \caption{Ant-8-Hazards}
      \end{subfigure}
      \begin{subfigure}[t]{0.24\linewidth}
          \begin{subfigure}[t]{1.0\linewidth}
              \includegraphics[width=0.99\textwidth]{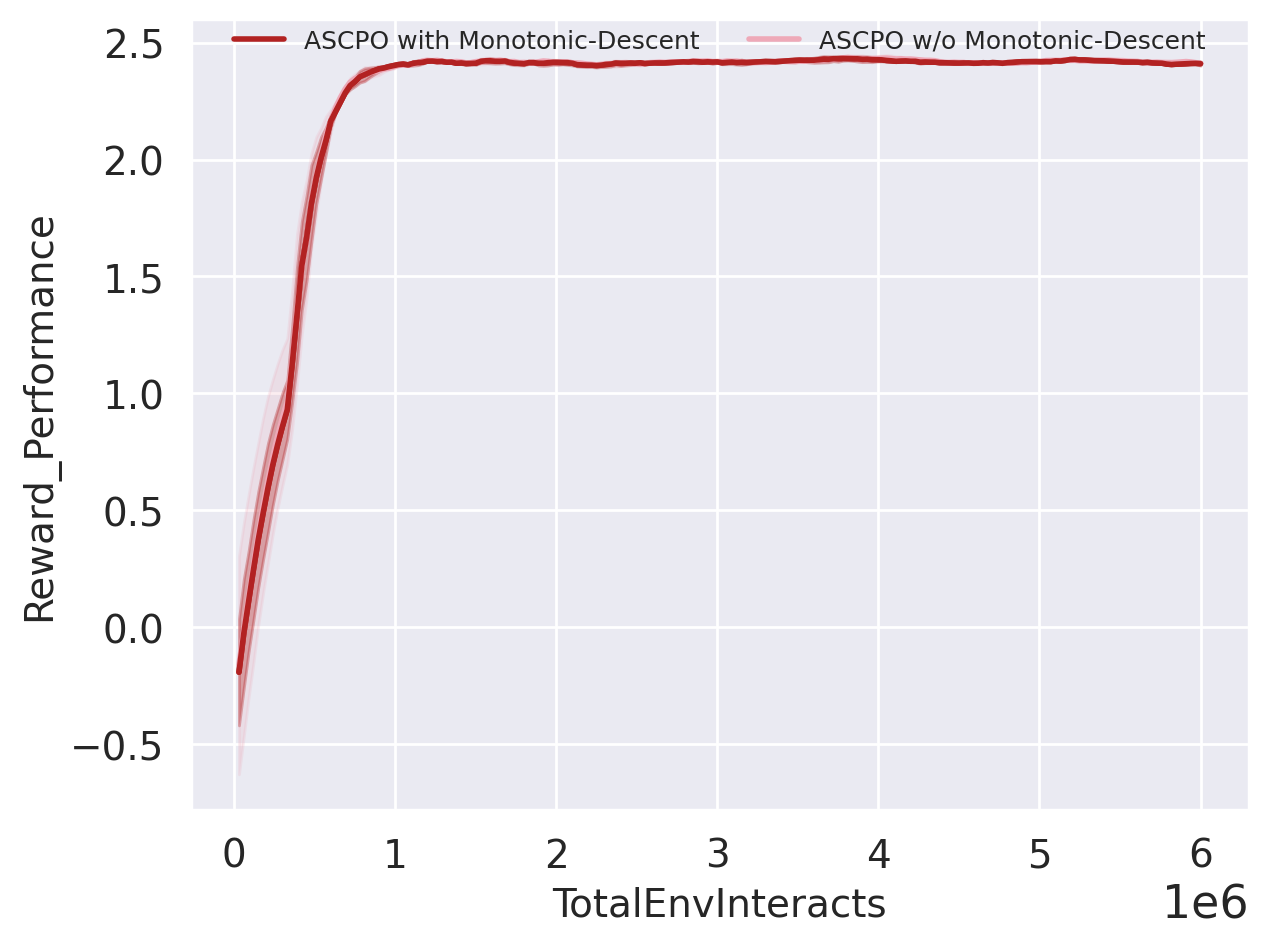}
          \end{subfigure}
          \hfill
          \begin{subfigure}[t]{1.0\linewidth}
              \includegraphics[width=0.99\textwidth]{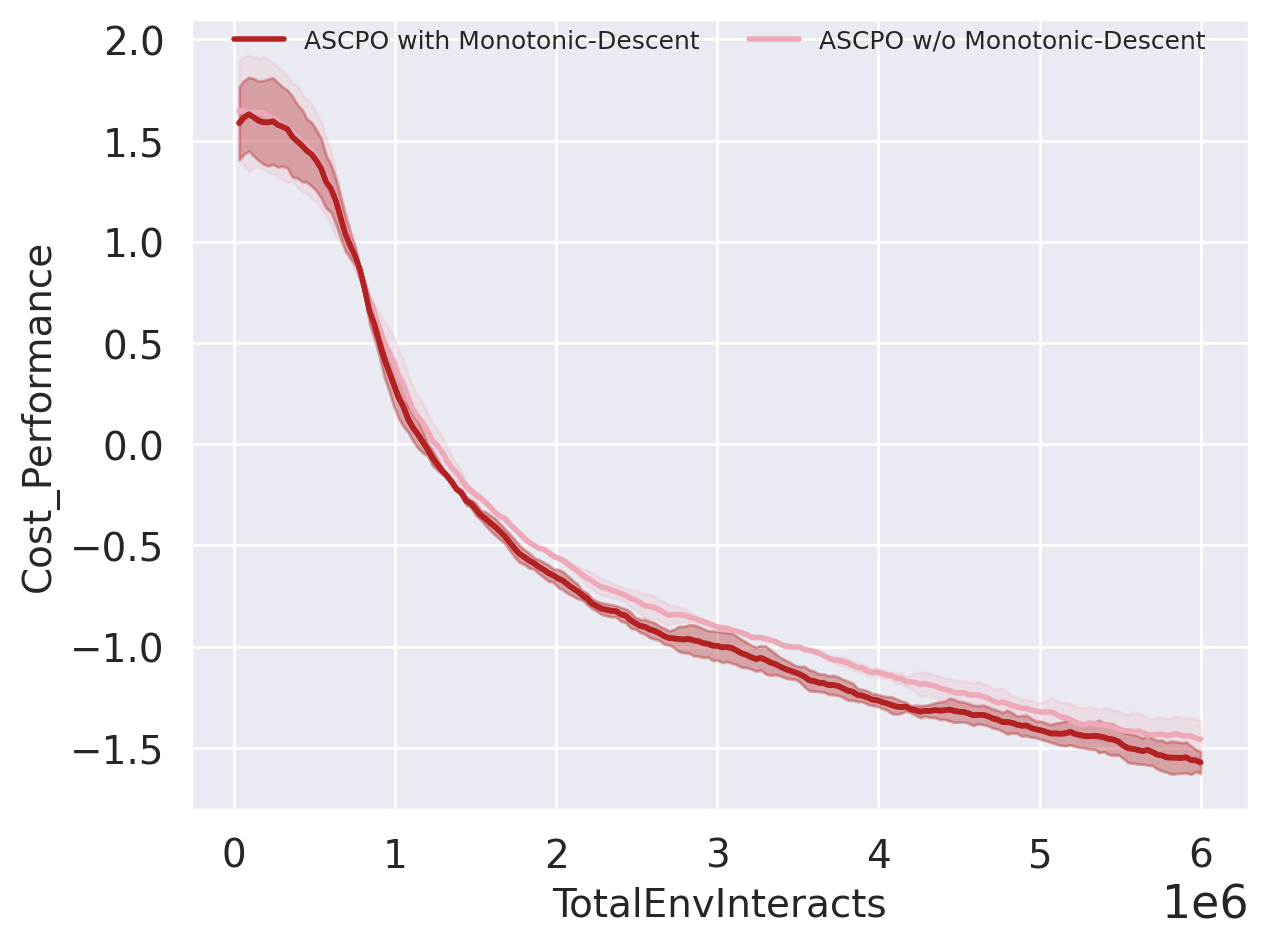}
          \end{subfigure}
          \hfill
          \begin{subfigure}[t]{1.0\linewidth}
              \includegraphics[width=0.99\textwidth]{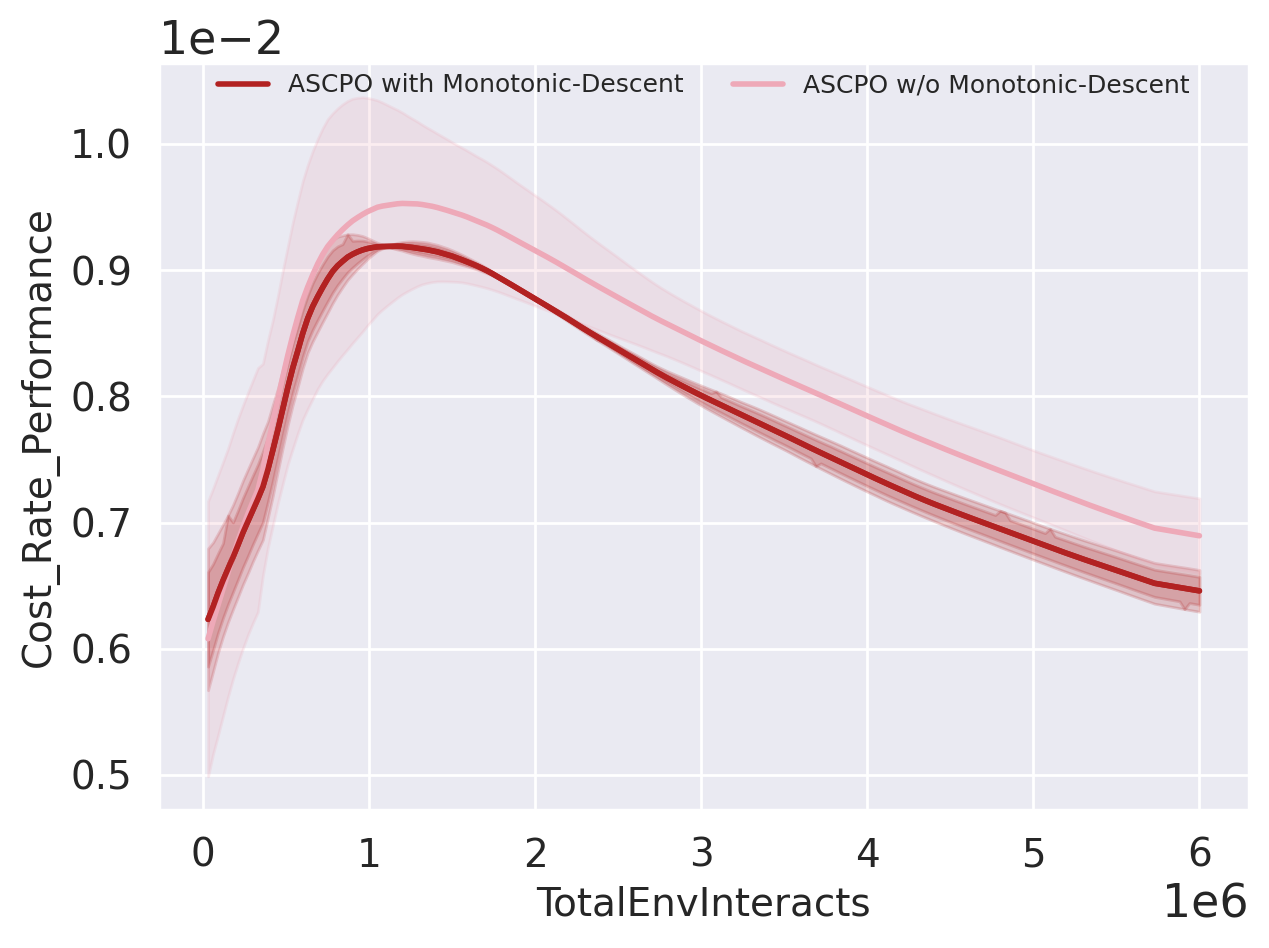}
          \end{subfigure}
      \caption{Walker-8-Hazards}
      \end{subfigure}
  \end{subfigure}
  \caption{ASCPO Monotonic-Descent ablation study with four representative test suites}
  \label{fig: ablation of monotonic descent}
\end{figure}

\begin{wrapfigure}{r}{0.5\textwidth}
    \vspace{-10pt}
    \centering
    \begin{subfigure}[b]{0.49\textwidth}
        \begin{subfigure}[t]{1.00\textwidth}
        \raisebox{-\height}{\includegraphics[width=\textwidth]{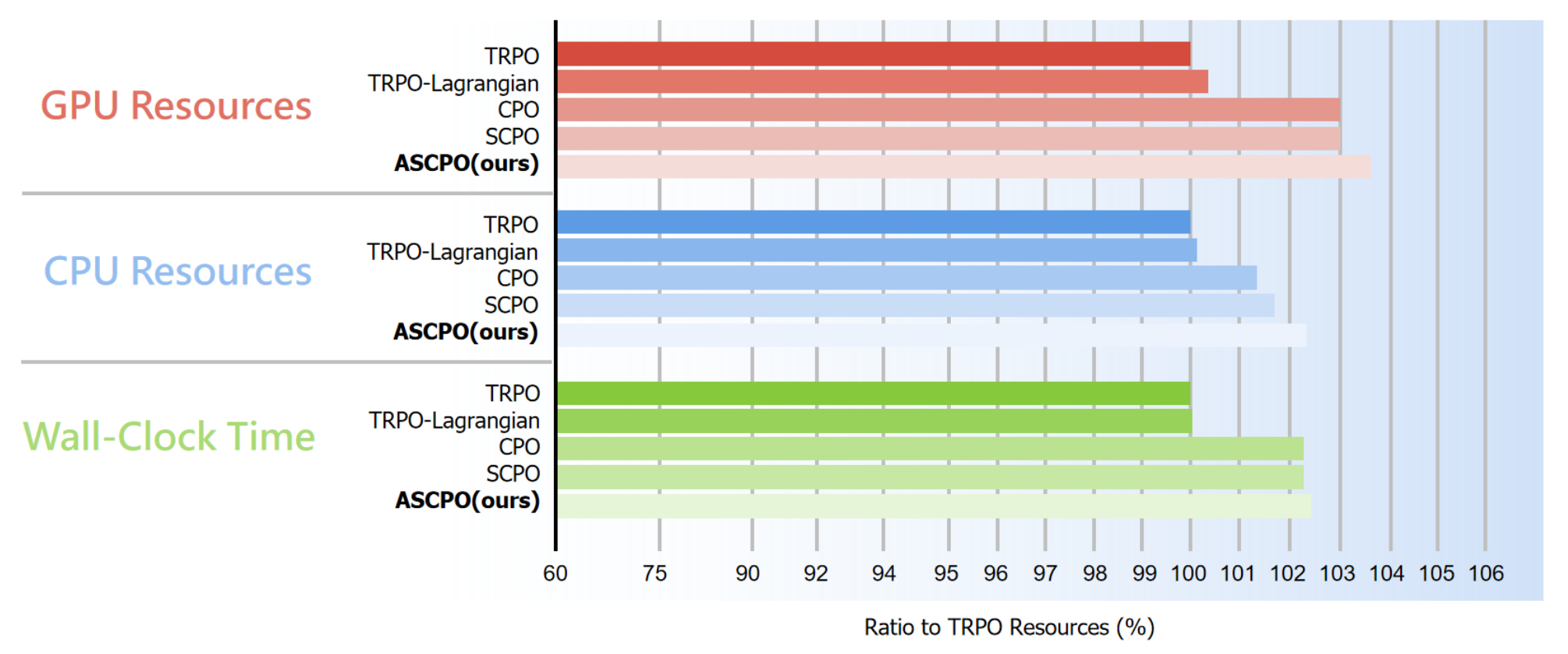}}
        \end{subfigure}
    \end{subfigure}
    \caption{Resource usage of ASCPO compared to other algorithms under the Goal-8-Hazard task, where TRPO is set as the baseline} 
    \label{fig: resources}
    \vspace{-10pt}
\end{wrapfigure}

\paragraph{Resources Usage} 
We conducted comprehensive tests comparing GPU and CPU memory usage, along with wall-clock time, across several algorithms, namely SCPO, CPO, TRPO, TRPO-Lagrangian, and ASCPO. These tests utilized identical system resources in the Goal-8-Hazard task. As illustrated in Figure \ref{fig: resources}, the results indicate that ASCPO marginally increases GPU and CPU resource consumption compared to CPO and SCPO, while exhibiting nearly identical wall-clock time performance. However, a closer examination of the horizontal coordinate magnitude reveals a significant performance enhancement achieved by ASCPO, requiring less than 1\% increase in resources for each aspect. This underscores the effectiveness of our algorithm in delivering exceptional results without imposing substantial demands on system resources and runtime. Furthermore, this observation addresses the pertinent question \textbf{Q4} regarding the efficacy of our approach.

\begin{figure}[t]
  \centering
  \begin{subfigure}[t]{1.0\linewidth}
      \centering
      \begin{subfigure}[t]{0.24\linewidth}
          \begin{subfigure}[t]{1.0\linewidth}
              \includegraphics[width=0.99\textwidth]{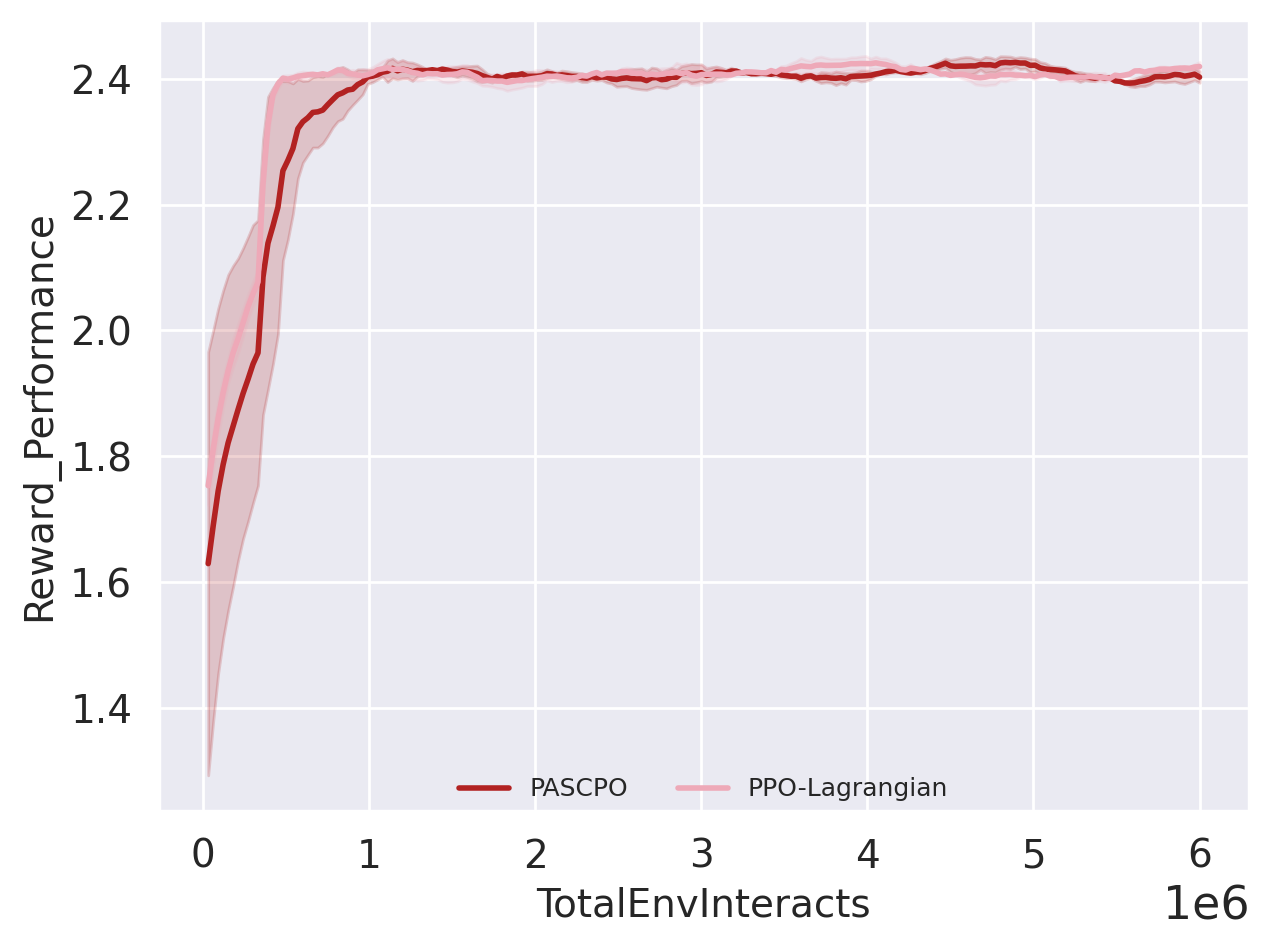}
          \end{subfigure}
          \hfill
          \begin{subfigure}[t]{1.0\linewidth}
              \includegraphics[width=0.99\textwidth]{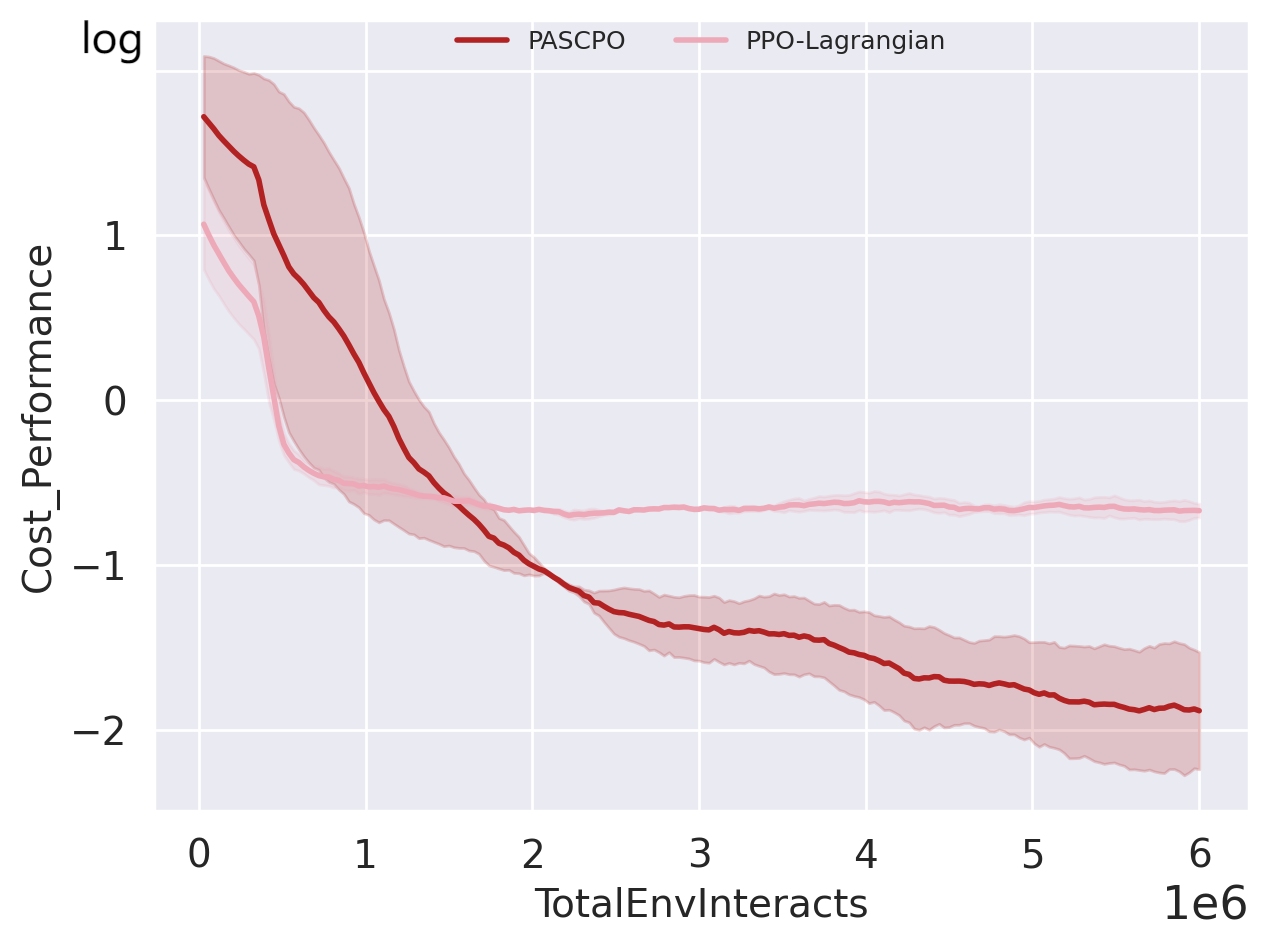}
          \end{subfigure}
          \hfill
          \begin{subfigure}[t]{1.0\linewidth}
              \includegraphics[width=0.99\textwidth]{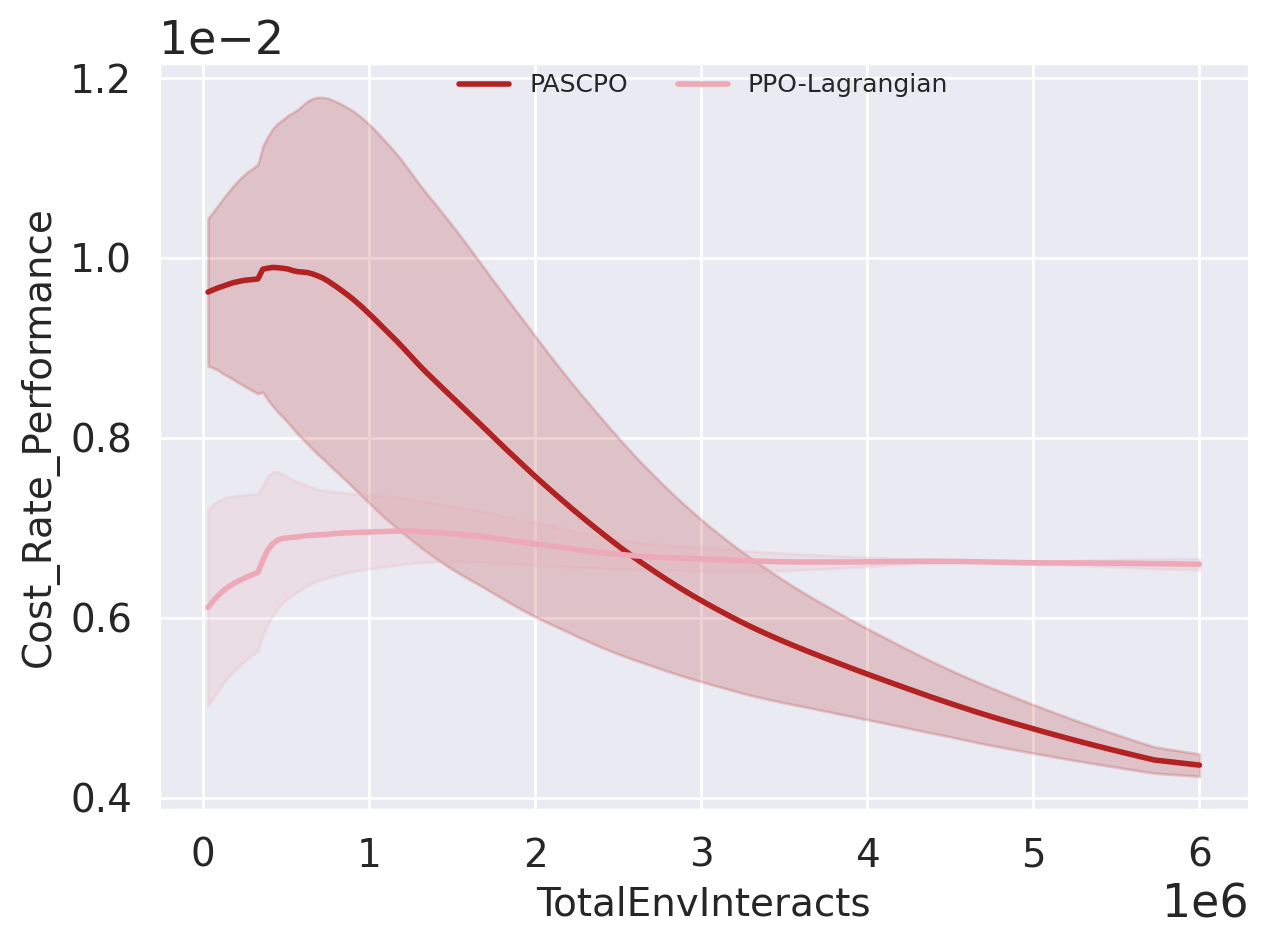}
          \end{subfigure}
          \hfill
          \begin{subfigure}[t]{1.0\linewidth}
              \includegraphics[width=0.99\textwidth]{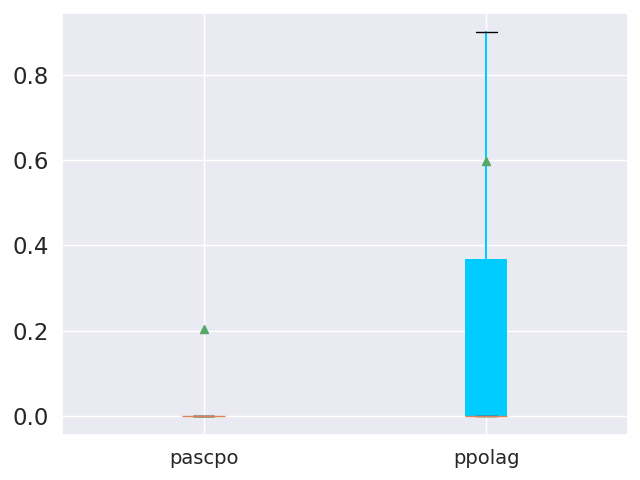}
          \end{subfigure}
      \caption{Point-8-Hazards}
      \end{subfigure}
      \begin{subfigure}[t]{0.24\linewidth}
          \begin{subfigure}[t]{1.0\linewidth}
              \includegraphics[width=0.99\textwidth]{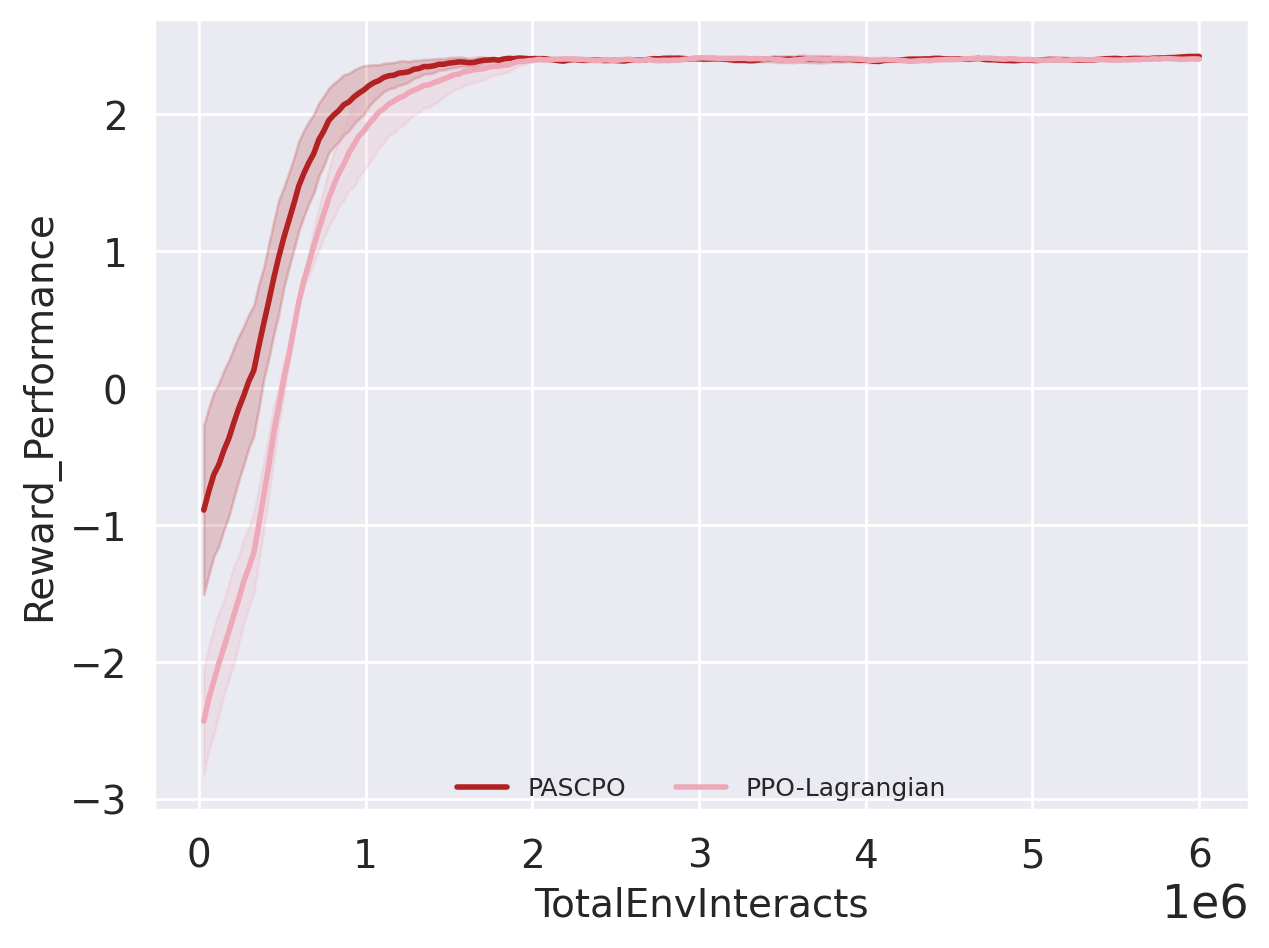}
          \end{subfigure}
          \hfill
          \begin{subfigure}[t]{1.0\linewidth}
              \includegraphics[width=0.99\textwidth]{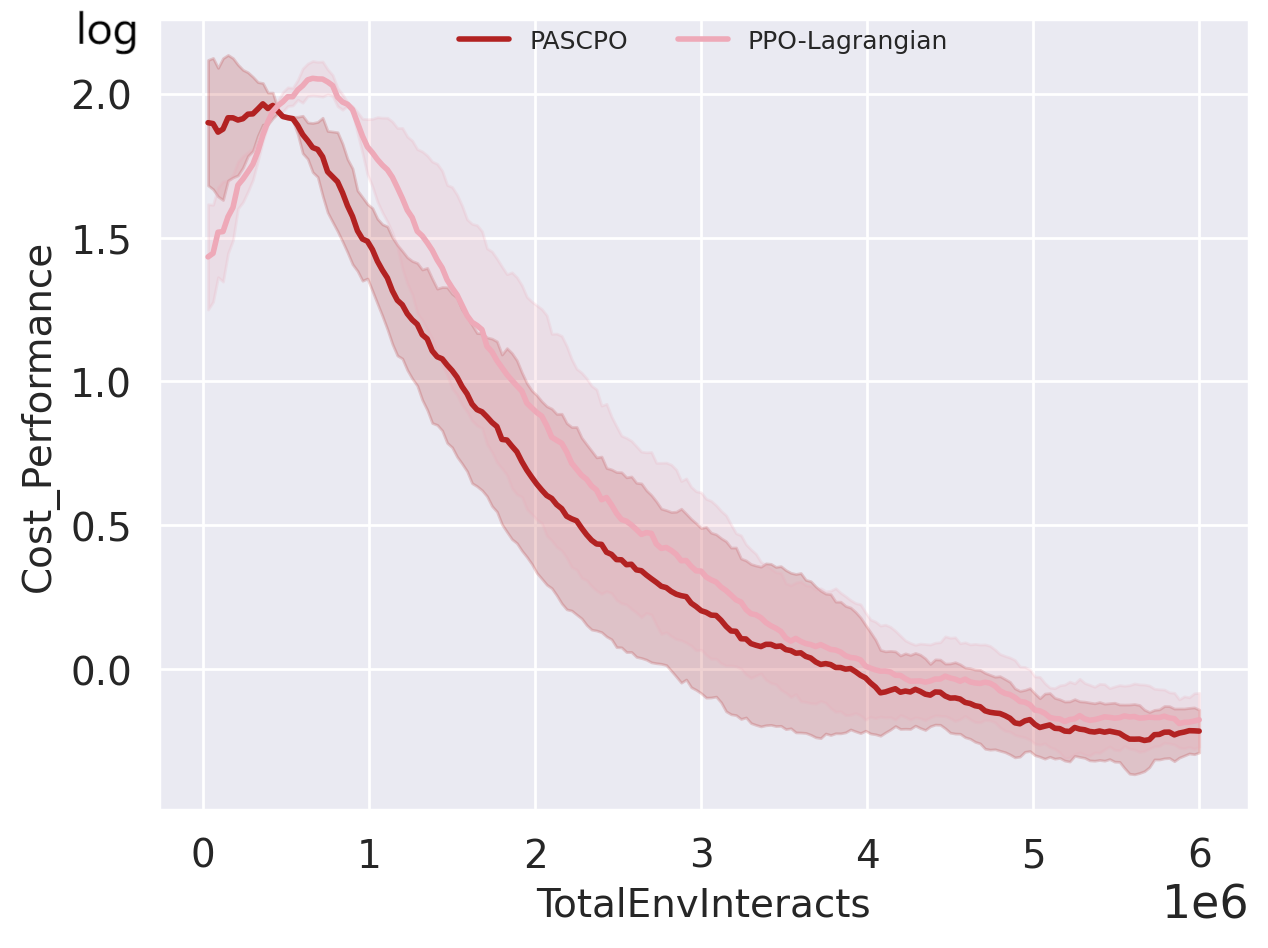}
          \end{subfigure}
          \hfill
          \begin{subfigure}[t]{1.0\linewidth}
              \includegraphics[width=0.99\textwidth]{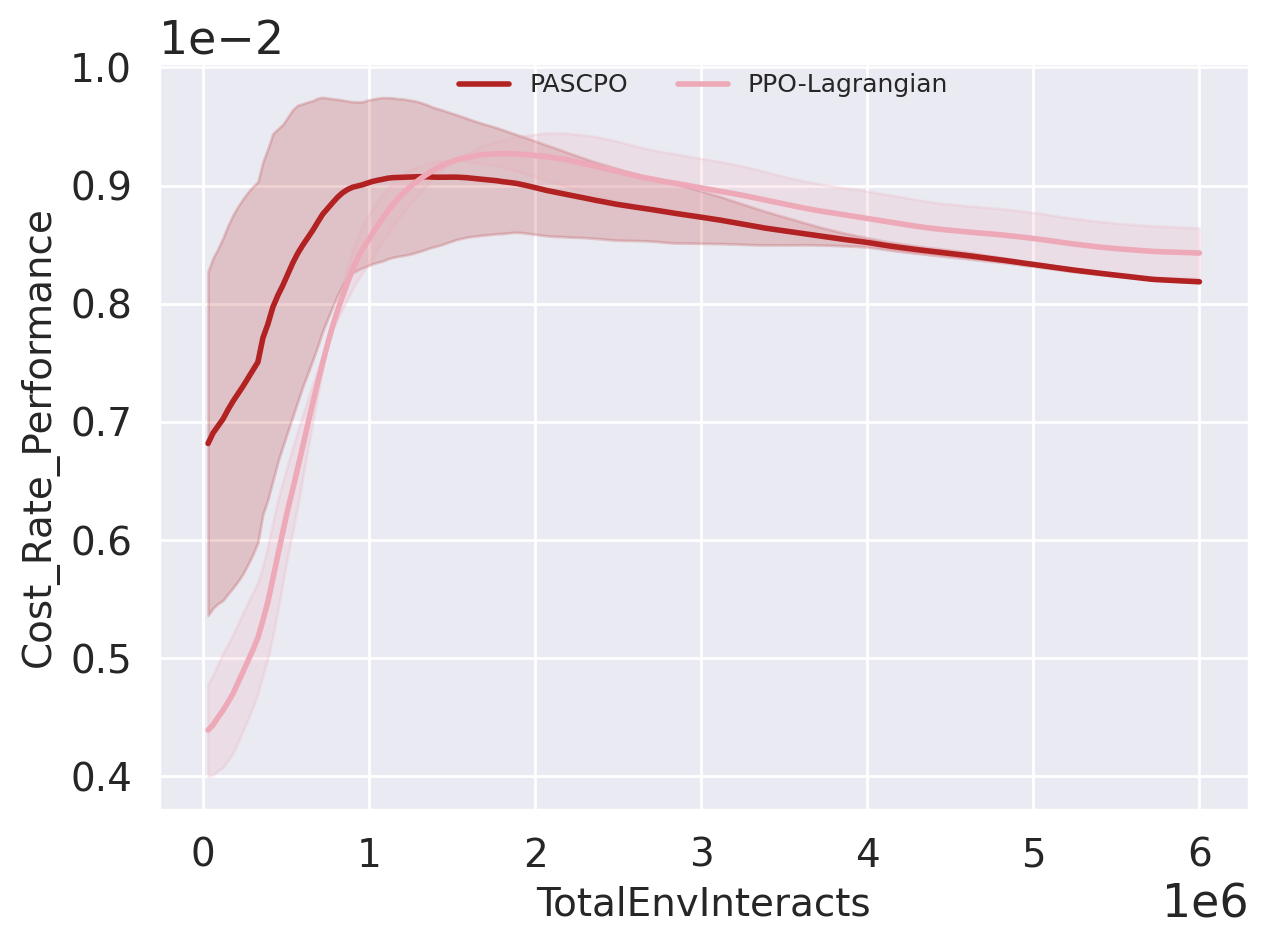}
          \end{subfigure}
          \hfill
          \begin{subfigure}[t]{1.0\linewidth}
              \includegraphics[width=0.99\textwidth]{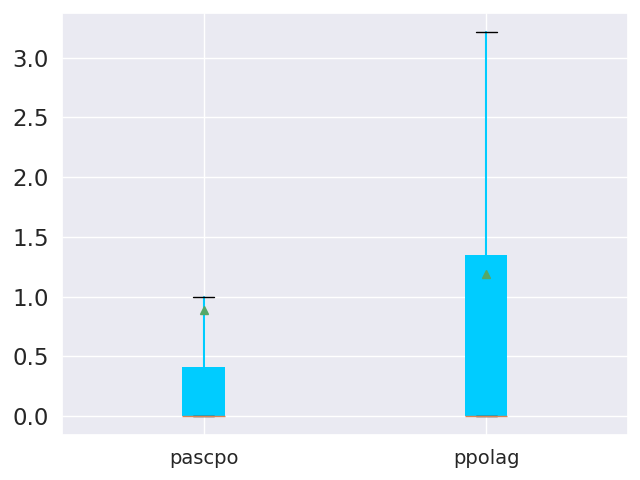}
          \end{subfigure}
      \caption{Humanoid-8-Hazards}
      \end{subfigure}
      \begin{subfigure}[t]{0.24\linewidth}
          \begin{subfigure}[t]{1.0\linewidth}
              \includegraphics[width=0.99\textwidth]{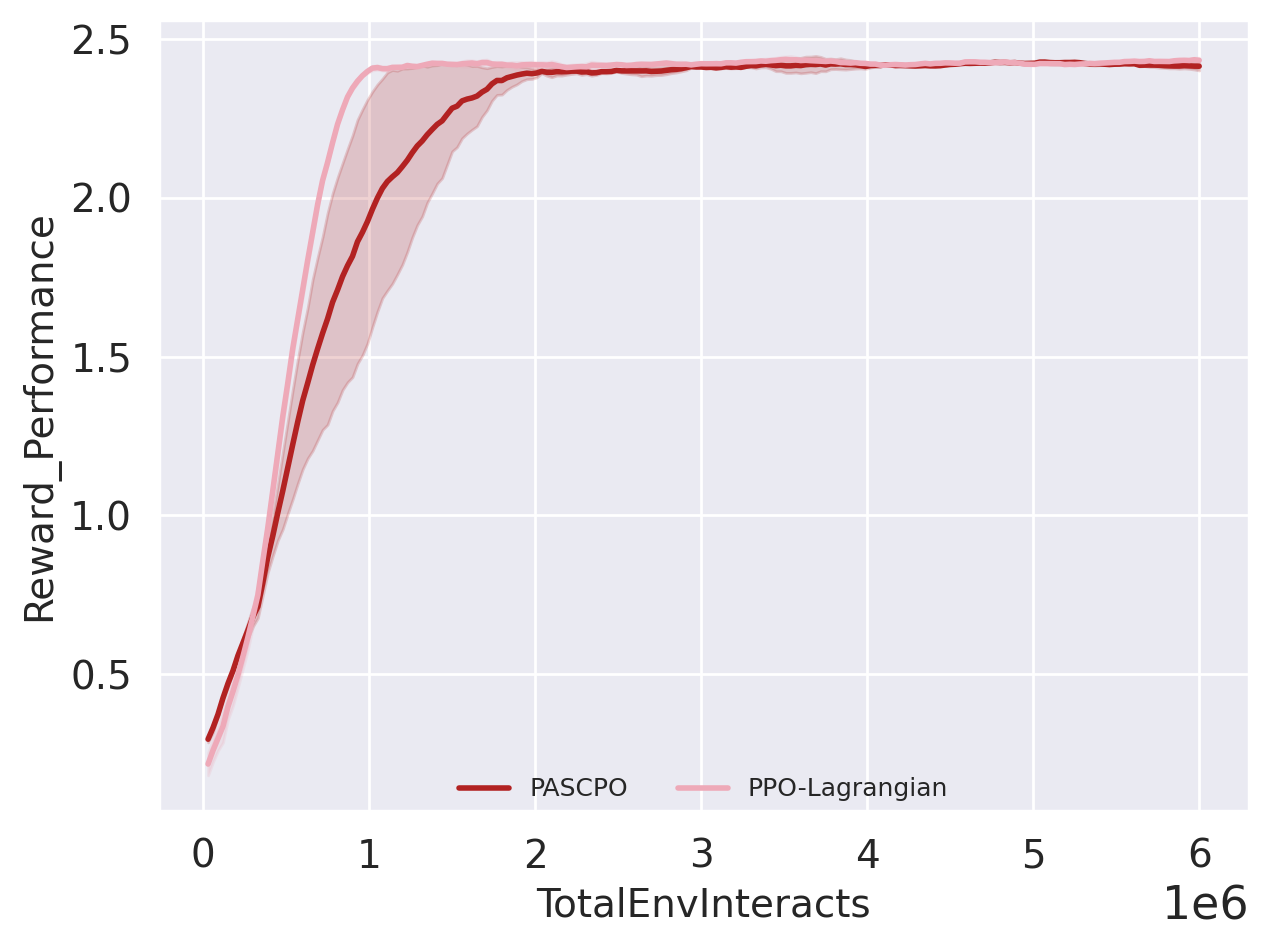}
          \end{subfigure}
          \hfill
          \begin{subfigure}[t]{1.0\linewidth}
              \includegraphics[width=0.99\textwidth]{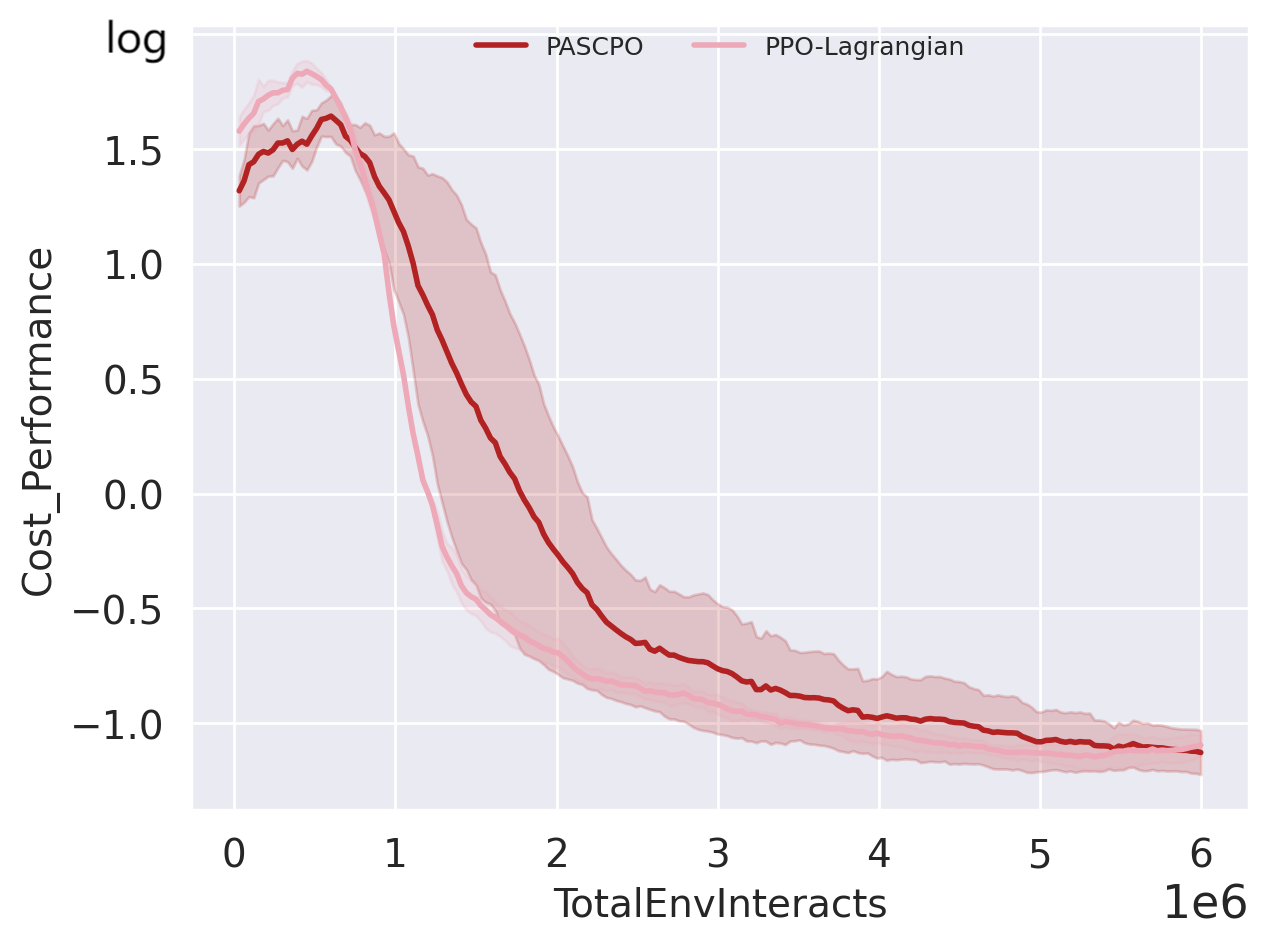}
          \end{subfigure}
          \hfill
          \begin{subfigure}[t]{1.0\linewidth}
              \includegraphics[width=0.99\textwidth]{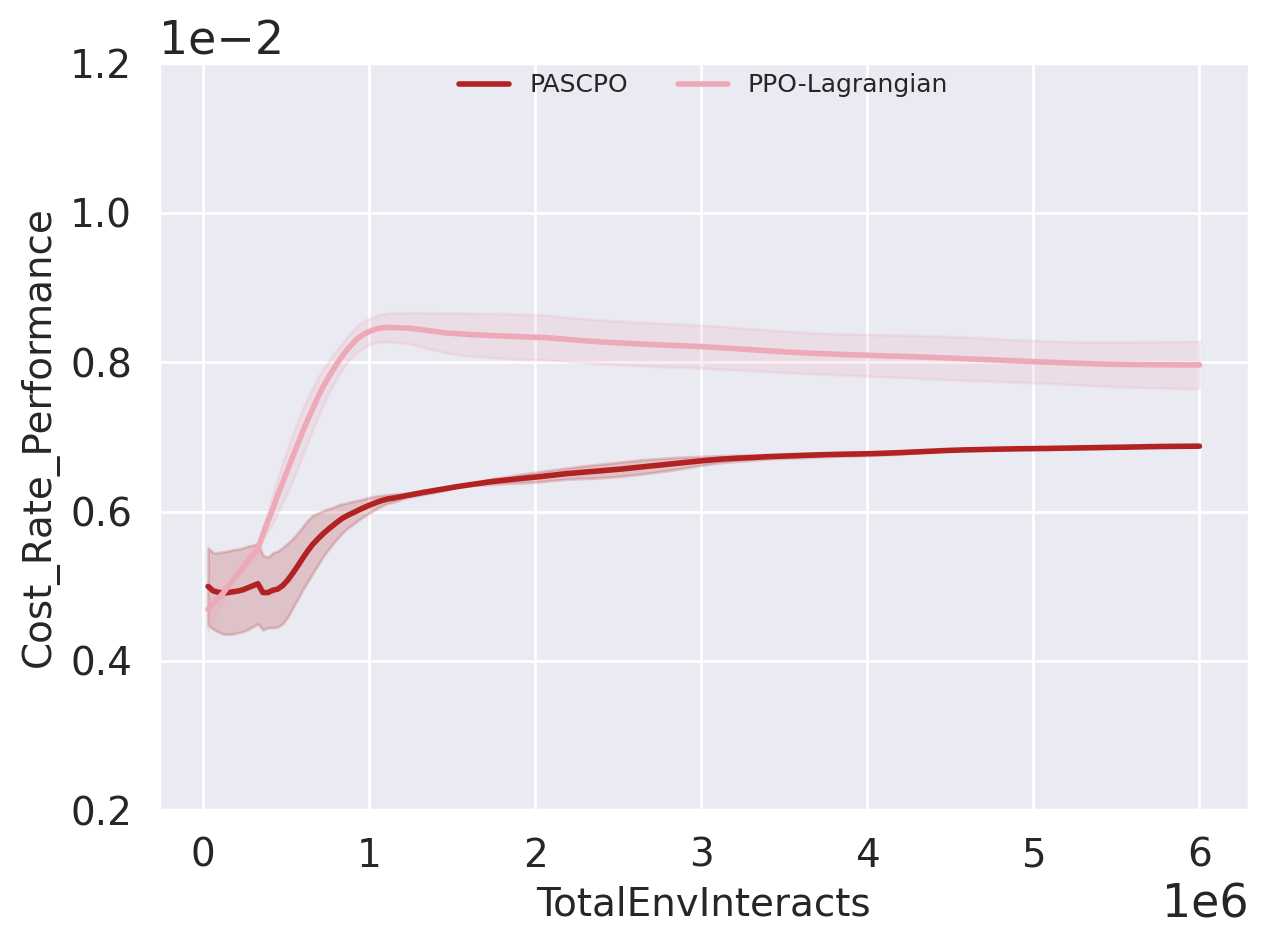}
          \end{subfigure}
          \hfill
          \begin{subfigure}[t]{1.0\linewidth}
              \includegraphics[width=0.99\textwidth]{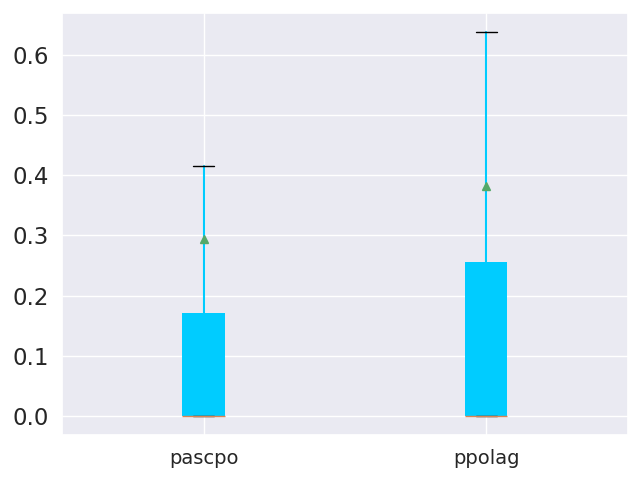}
          \end{subfigure}
      \caption{Ant-8-Hazards}
      \end{subfigure}
      \begin{subfigure}[t]{0.24\linewidth}
          \begin{subfigure}[t]{1.0\linewidth}
              \includegraphics[width=0.99\textwidth]{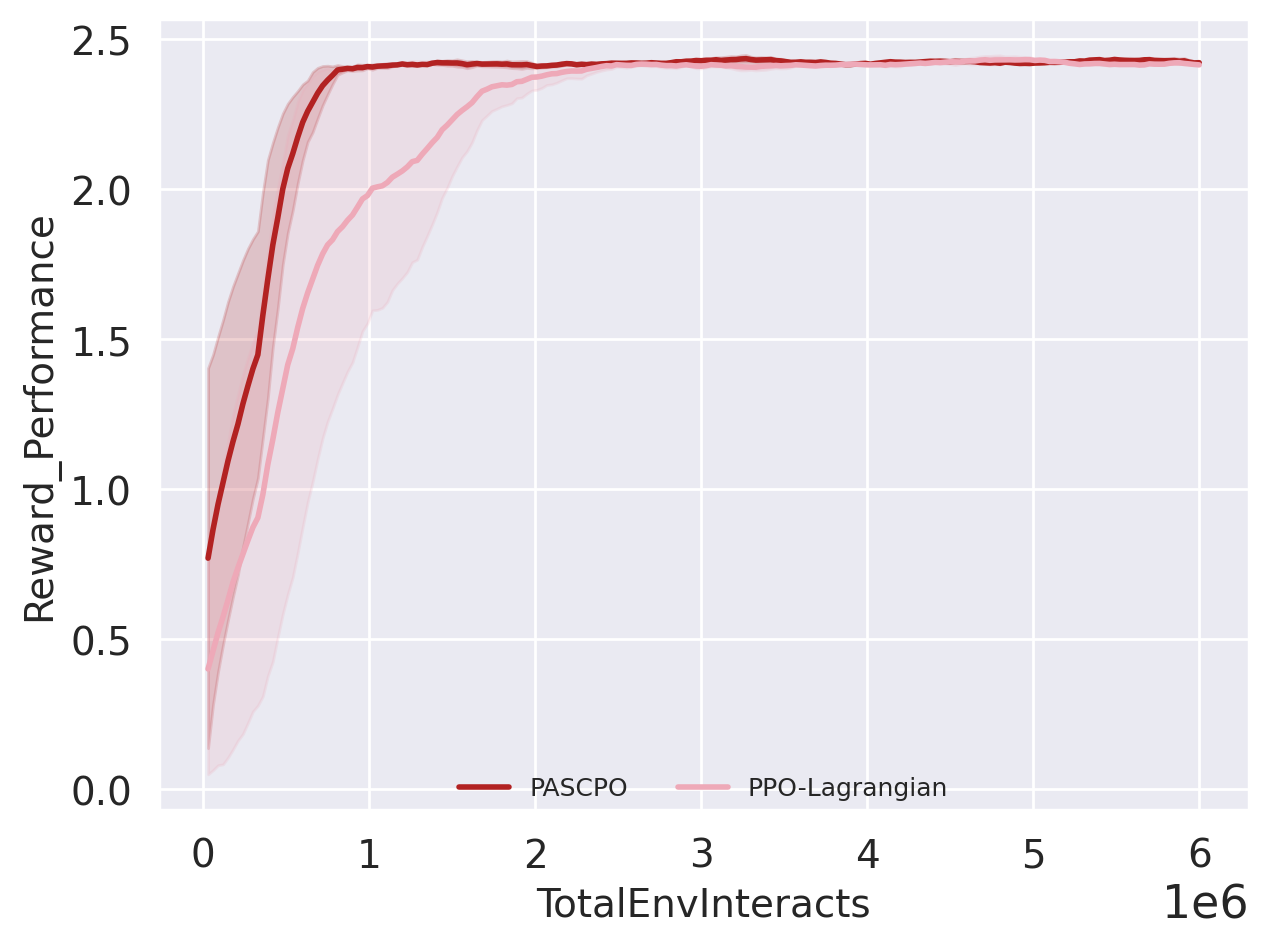}
          \end{subfigure}
          \hfill
          \begin{subfigure}[t]{1.0\linewidth}
              \includegraphics[width=0.99\textwidth]{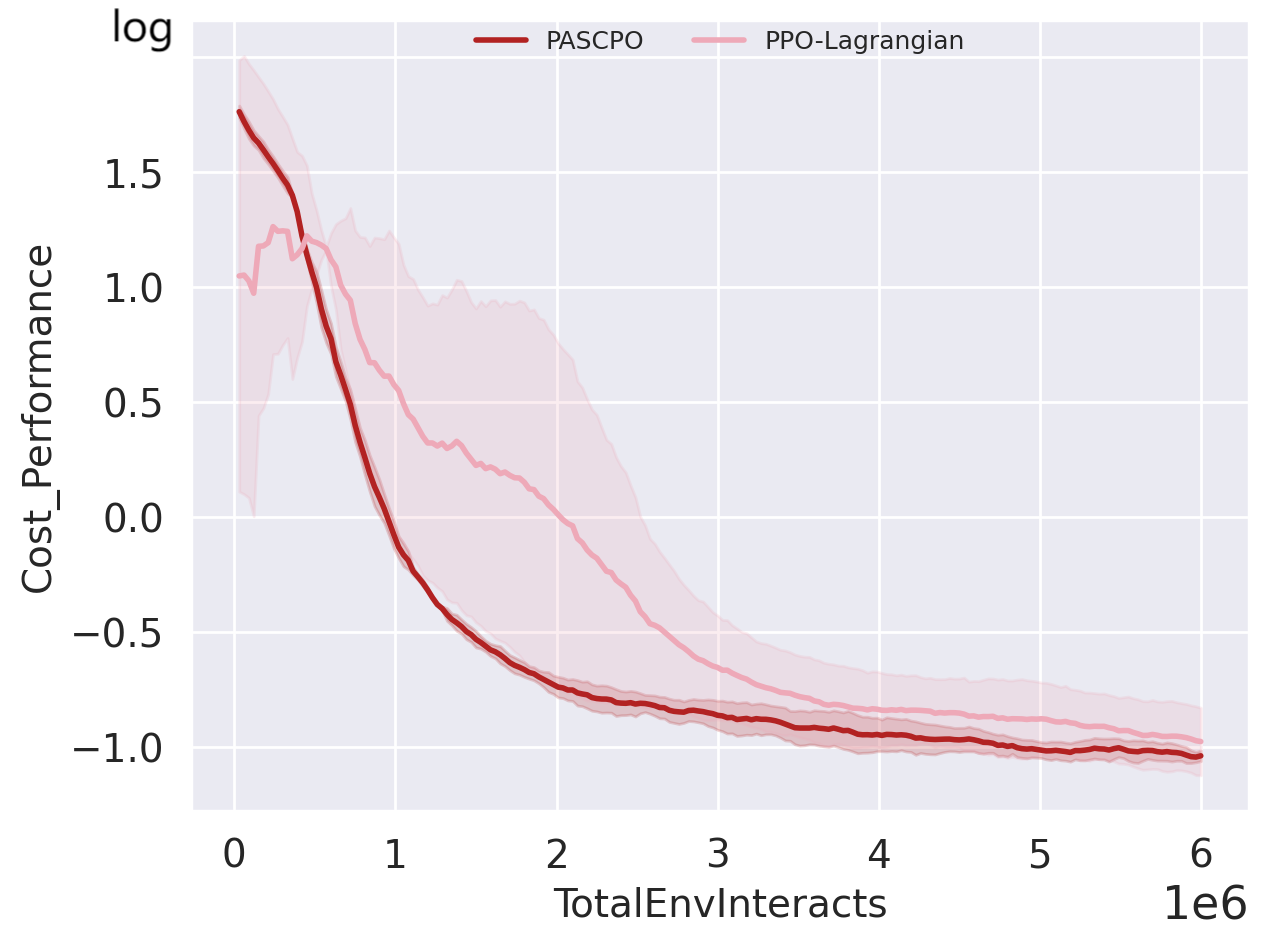}
          \end{subfigure}
          \hfill
          \begin{subfigure}[t]{1.0\linewidth}
              \includegraphics[width=0.99\textwidth]{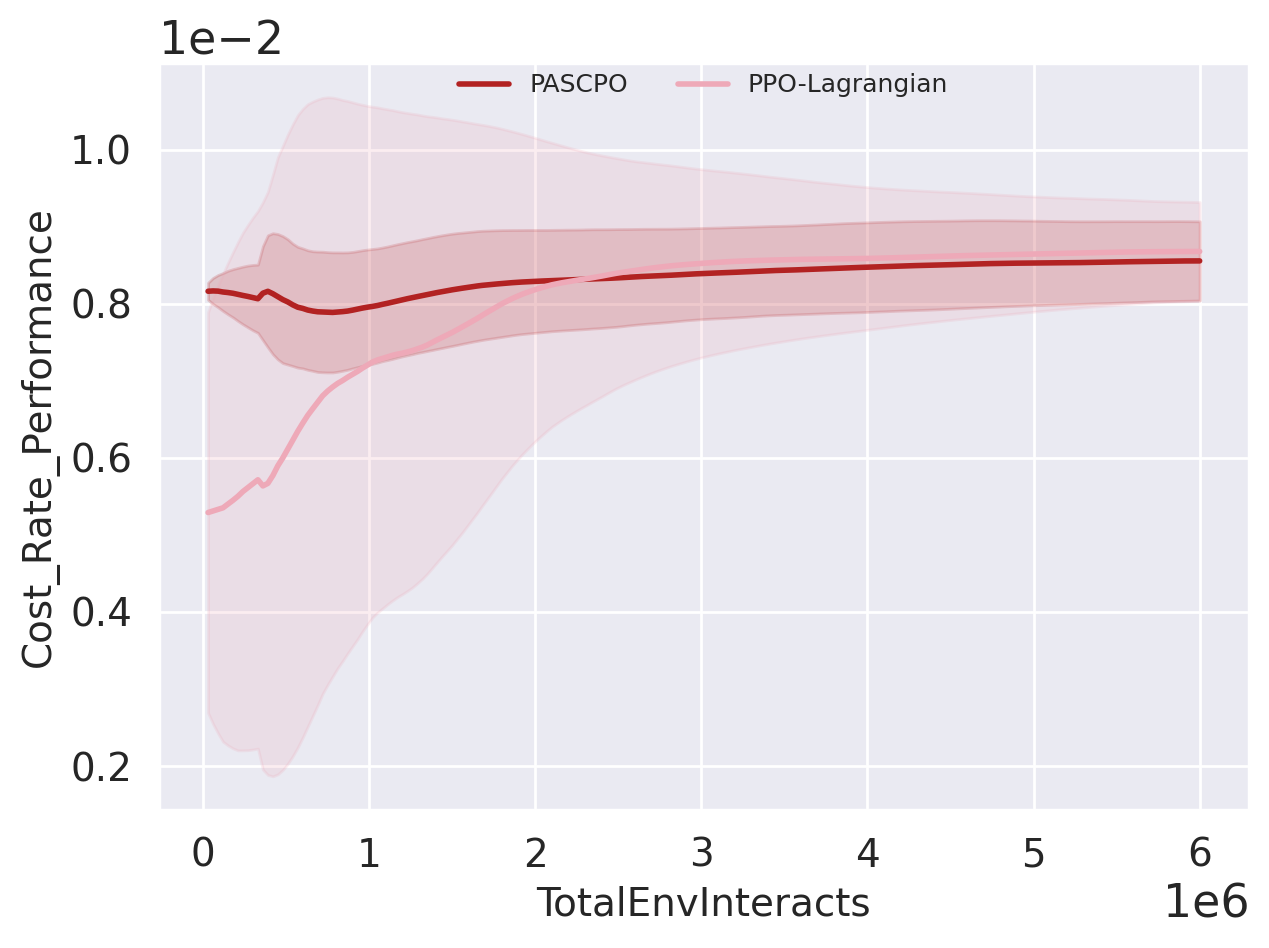}
          \end{subfigure}
          \hfill
          \begin{subfigure}[t]{1.0\linewidth}
              \includegraphics[width=0.99\textwidth]{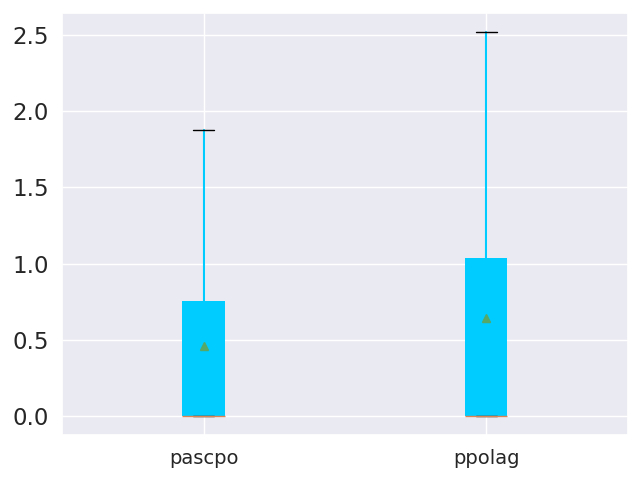}
          \end{subfigure}
      \caption{Walker-8-Hazards}
      \end{subfigure}
  \end{subfigure}
  \caption{PASCPO comparison curves and box plots of four representative test suites. The Y-axis of `Cost\_Performance' here uses logarithmic coordinates to show small level differences.}
  \label{fig: pascpo results}
\end{figure}

\begin{wrapfigure}{r}{0.5\textwidth}
    \vspace{-10pt}
    \centering
    \begin{subfigure}[b]{0.49\textwidth}
        \begin{subfigure}[t]{1.00\textwidth}
        \raisebox{-\height}{\includegraphics[width=\textwidth]{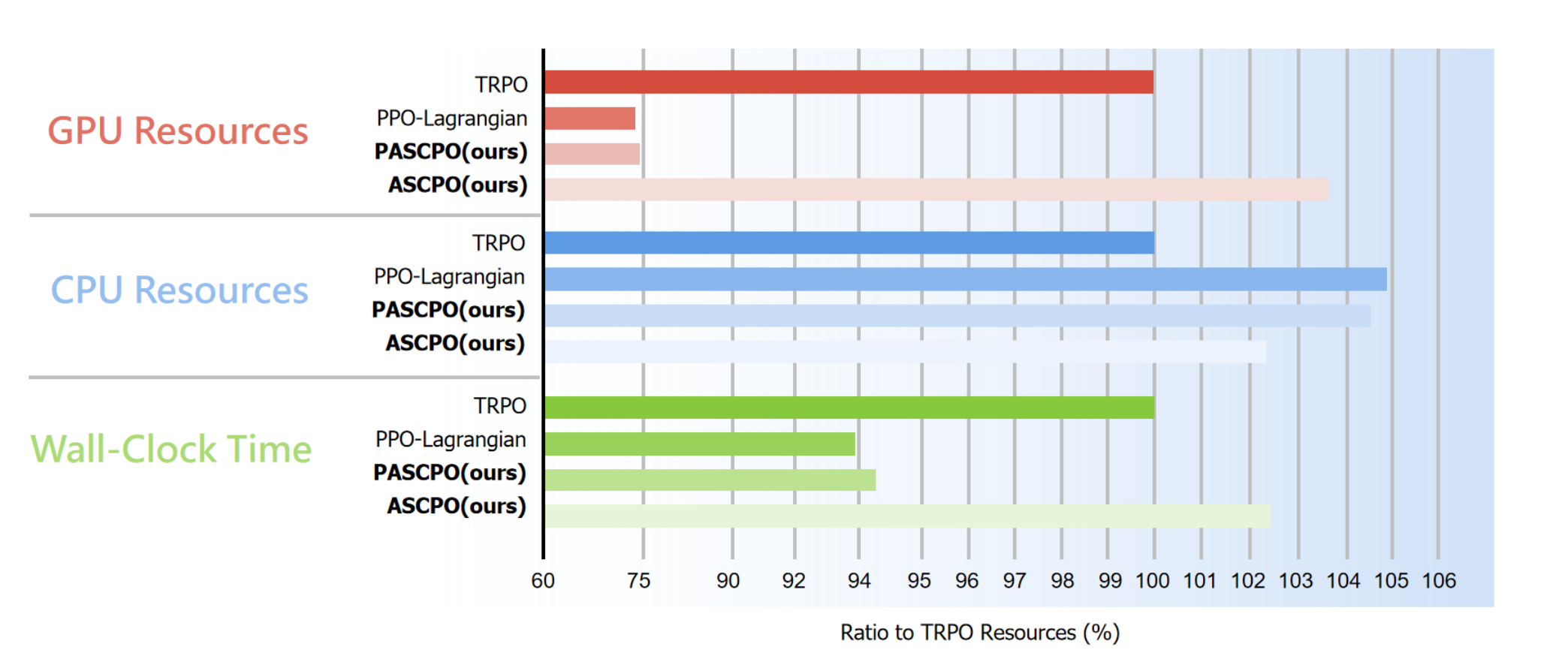}}
        \end{subfigure}
    \end{subfigure}
    \caption{Resources usage of PASCPO compared to other algorithms underi the Goal-8-Hazard task, where TRPO is set as the baseline.} 
    \label{fig: pascpo resources}
    \vspace{-10pt}
\end{wrapfigure}

\section{Proximal Absolute State-wise Constrained Policy Optimization}
With the proven success of the TRPO-based ASCPO in addressing various control tasks, a natural question arises: Can ASCPO be extended to a PPO-based version, similar to the PPO-Lagrangian implemented in \cite{ray2019benchmarking}? 
To explore this possibility, we introduce PASCPO, which integrates a \textbf{Clipped Surrogate Objective} \citep{schulman2017ppo} with the \textbf{Lagrangian Method}~\cite{ray2019benchmarking}. In detail, the original constrained optimization problem within the trust region, as outlined in \Cref{algo: ascpo_full}, is reformulated as a single-objective optimization problem. The local update is performed using a clipped surrogate objective, while the constraint is incorporated through the Lagrangian method. In \Cref{fig: pascpo results}, we compare our algorithm with fine-tuned PPO-Lagrangian in four representative tasks, demonstrating our effectiveness in achieving lower cost values and more efficient control across the entire distribution. 
Furthermore, as shown in \Cref{fig: pascpo resources}, PASCPO achieves significantly better performance while maintaining essentially the same hardware resource usage and wall-clock time. This demonstrates that our surrogate absolute bound, despite its complexity, does not introduce additional unacceptable computational costs. It is worth noting that PASCPO has significantly higher computing efficiency compared to ASCPO, albeit with some performance loss, making it suitable for specific scenarios.
These results address \textbf{Q5} and highlight the potential of extending ASCPO to a PPO-based method.

\section{Conclusion}
This paper proposed ASCPO, the first general-purpose policy search algorithm that ensures state-wise constraints satisfaction with high confidence.
We demonstrate ASCPO's effectiveness on challenging continuous control benchmark tasks, showing its significant performance improvement 
compared to existing methods and ability to handle state-wise constraints.

% Acknowledgements and Disclosure of Funding should go at the end, before appendices and references

\acks{This work is partially supported by the National Science Foundation, Grant No. 2144489.}

% Manual newpage inserted to improve layout of sample file - not
% needed in general before appendices/bibliography.

\newpage

\appendix
\renewcommand{\contentsname}{Contents of Appendix}
\tableofcontents
\clearpage
\addtocontents{toc}{\protect\setcounter{tocdepth}{2}}
% \appendix

\clearpage
\section{Additional Proofs}
\label{proof: high prob stasfication}

\subsection{Proof of \Cref{prop: absolute bound definition}}
\label{sec: proof of prop absolute bound}
\begin{proof}
According to Selberg's inequality theory~\cite{saw1984chebyshev}, if random variable $\mathcal{R}_\pi(\hs_0)$ has finite non-zero variance $\mathcal{V}(\pi)$ and finite expected value $\mathcal{E}(\pi)$. Then for any real number $k\geq0$, following inequality holds:
\begin{align}
    Pr(\mathcal{R}_\pi(\hs_0) > \mathcal{E}(\pi)+k\mathcal{V}(\pi)) ) \le \frac{1}{k^2\mathcal{V}(\pi)+1}~~~,
\end{align}
which equals to:
\begin{align}
    Pr\big(\mathcal{R}_\pi(\hs_0)\leq \mathcal{B}_k(\pi)\big) \geq 1 - \frac{1}{k^2\mathcal{V}(\pi)+1}~~~.
\end{align}\\
Considering that $\pi\in\Pi$, then $\mathcal{V}(\pi)$ belongs to a corresponding variance space which has a non-zero minima of $\mathcal{V}(\pi)$, which is denoted as $\mathcal{V}_{min} \in \mathbb{R}^+$. Therefore, by treating $\psi = \mathcal{V}_{min}$, the following condition holds:
\begin{align}
        \text{There exists $\psi > 0$, s.t. }Pr\big(\mathcal{R}_\pi(\hs_0)\leq \mathcal{B}_k(\pi)\big) &\geq 1 - \frac{1}{k^2\mathcal{V}(\pi)+1} \\ \nonumber
        &\geq 1 - \frac{1}{k^2\mathcal{V}_{min}+1} \\ \nonumber 
        &= \underbrace{1 - \frac{1}{k^2\psi+1}}_{p_k^\psi} ~~~.
    \end{align}
\end{proof}

% Now to prove \Cref{theo: high prob stasfication}, we need to prove the surrogate cost in \Cref{theo: high prob stasfication} serves as the upper bound of $\mathcal{B}_k(\pi')$. To do this, we first try to obtain the following terms:\\
% 1. Upper bound of \textbf{MeanVariance} with ${\pi'}$. (\Cref{proof: MeanVariance Bound})\\
% 2. Upper bound of \textbf{VarianceMean} with ${\pi'}$. (\Cref{proof: VarianceMean Bound})\\
% 3. Upper bound of ${\mathcal{E}(\pi')}$.(\Cref{proof: Expectation Bound})\\
% And then we can proof our theorem by leveraging these upper bounds in \Cref{proof: theo 1}

% To start with, we try to find the upper bound of:

% \begin{align}
%     |\mathcal{B}_k(\pi') - \mathcal{B}_k(\pi)| &=
%     |(\mathcal{E}(\pi')-k\mathcal{V}(\pi')) - (\mathcal{E}(\pi)-k\mathcal{V}(\pi))|\\
%     &\leq |\mathcal{E}(\pi') - \mathcal{E}(\pi)| + 
% \end{align}

\subsection{Proof of \Cref{lem: bound of MV}}
\label{proof: MeanVariance Bound}

\begin{proof}
According to [Finite-horizon Version of Theorem 1, \citep{sobel1982variance}], the following proposition holds: 
\begin{prop}
\label{prop: V result}
Define $\bm X_\pi^H = \begin{bmatrix} \underset{\substack{{\hat \tau}\sim {\pi}}}{\mathbb{V}ar}[\mathcal{R}_\pi^H(\hs^1)] \\ \underset{\substack{{\hat \tau}\sim {\pi}}}{\mathbb{V}ar}[\mathcal{R}_\pi^H(\hs^2)] \\ \vdots \end{bmatrix}$,
% where $\underset{\substack{\hs_0 \sim \mu}}{\mathbb{E}}\bigg[\underset{\substack{{\hat \tau}\sim {\pi}}}{\mathbb{V}ar}\big[\mathcal{R}_\pi^H(\hs_0)\big]\bigg] = \mu^\top \bm X_\pi^H$, 
and ${\hat P}_\pi = P_\pi^\top$, where $\hat{P}_\pi(i, j)$ denotes the probability of the transfer from i-th state to j-th state, the following equation holds 
\begin{align}
    \bm X_\pi^H =\gamma^2 \hat P_\pi \bm X_\pi^{H-1} + \bm \Omega_\pi^H
    % (I - \gamma^2 P_\pi)^{-1}\bm\Omega_\pi~~.
\end{align}
where $\bm X_\pi^0 = \begin{bmatrix}
    0 &
    0 &
    \hdots
\end{bmatrix}^\top$.
\end{prop}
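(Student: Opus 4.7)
The plan is to establish the stated recursion by conditioning on the first transition and invoking the law of total variance. First, I would write the finite-horizon discounted return starting from $\hs$ as
\[
\mathcal{R}_\pi^H(\hs) = R(\hs, a_0, \hs_1) + \gamma\, \mathcal{R}_\pi^{H-1}(\hs_1),
\]
where randomness on the right comes from $a_0 \sim \pi(\cdot\mid \hs)$, $\hs_1 \sim P(\cdot\mid \hs, a_0)$, and the continuation trajectory sampled under $\pi$ from $\hs_1$. This is just unrolling one step of the Bellman-type expansion for the return.

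Next, I would apply the law of total variance by conditioning on the pair $(a_0, \hs_1)$:
\[
\mathbb{V}\mathrm{ar}\!\left[\mathcal{R}_\pi^H(\hs)\right] = \mathbb{E}_{a_0, \hs_1}\!\left[\mathbb{V}\mathrm{ar}\!\left[\mathcal{R}_\pi^H(\hs)\,\big|\,a_0, \hs_1\right]\right] + \mathbb{V}\mathrm{ar}_{a_0, \hs_1}\!\left[\mathbb{E}\!\left[\mathcal{R}_\pi^H(\hs)\,\big|\,a_0, \hs_1\right]\right].
\]
Conditional on $(a_0, \hs_1)$, the immediate term $R(\hs, a_0, \hs_1)$ is deterministic and only $\gamma\,\mathcal{R}_\pi^{H-1}(\hs_1)$ carries variability, so the first summand equals $\gamma^2\,\mathbb{E}_{a_0 \sim \pi, \hs_1 \sim P}\!\left[X_\pi^{H-1}(\hs_1)\right]$. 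Stacked over the state index, this is exactly the row-$\hs$ entry of $\gamma^2 \hat{P}_\pi \bm{X}_\pi^{H-1}$, since $\hat{P}_\pi(i, j)$ encodes the $\pi$-induced transition kernel from state $i$ to state $j$.

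For the second summand, I would identify $\mathbb{E}[\mathcal{R}_\pi^H(\hs)\mid a_0, \hs_1] = R(\hs, a_0, \hs_1) + \gamma V_\pi^{H-1}(\hs_1) = Q_\pi^H(\hs, a_0, \hs_1)$, so that its variance under $a_0 \sim \pi, \hs_1 \sim P$ simplifies via $\mathbb{E}_{a,\hs'}[Q_\pi^H(\hs, a, \hs')] = V_\pi^H(\hs)$ to
\[
\mathbb{E}_{a \sim \pi, \hs' \sim P}\!\left[Q_\pi^H(\hs, a, \hs')^2\right] - V_\pi^H(\hs)^2 = \omega_\pi^H(\hs),
\]
matching the paper's definition of $\bm{\Omega}_\pi^H$. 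Collecting both summands for every state yields the claimed vector identity $\bm X_\pi^H = \gamma^2 \hat P_\pi \bm X_\pi^{H-1} + \bm \Omega_\pi^H$. The base case $\bm X_\pi^0 = \bm 0$ is immediate since the $0$-horizon return is deterministically zero.

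The main obstacle I anticipate is purely notational: reconciling the paper's $Q_\pi^h(\hs, a, \hs')$ (which carries the realized next state) with the usual $Q_\pi^h(\hs, a)$ obtained by marginalizing over $\hs'$, and making sure this same convention is used when identifying the variance of the conditional mean with $\omega_\pi^H(\hs)$. Once the conditioning variables are fixed to $(a_0, \hs_1)$ consistently, everything reduces to routine algebra and linearity of expectation, with no additional technical hurdle.
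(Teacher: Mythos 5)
Your proof is correct. Note that the paper itself does not actually prove this proposition: it simply invokes it as the ``finite-horizon version of Theorem~1'' of Sobel (1982) on the variance of discounted MDPs. What you have written is precisely the standard derivation underlying that theorem --- unroll one step of the return, $\mathcal{R}_\pi^H(\hs) = R(\hs,a_0,\hs_1) + \gamma\,\mathcal{R}_\pi^{H-1}(\hs_1)$, apply the law of total variance conditioning on $(a_0,\hs_1)$, identify the conditional-variance term with $\gamma^2 (\hat P_\pi \bm X_\pi^{H-1})(\hs)$ and the variance-of-conditional-mean term with $\omega_\pi^H(\hs) = \underset{a\sim\pi,\hs'\sim P}{\mathbb{E}}[Q_\pi^H(\hs,a,\hs')^2] - V_\pi^H(\hs)^2$ --- so your argument supplies the missing details rather than taking a genuinely different route. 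Your identification $\mathbb{E}[\mathcal{R}_\pi^H(\hs)\mid a_0,\hs_1] = R(\hs,a_0,\hs_1)+\gamma V_\pi^{H-1}(\hs_1) = Q_\pi^H(\hs,a_0,\hs_1)$ is consistent with how the paper later uses $\omega_\pi^h$ (it equates $\omega_\pi^h(\hs)$ with $\mathbb{V}ar[A_\pi^h(\hs,a,\hs')]$ where $A_\pi^h(\hs,a,\hs') = R(\hs,a,\hs')+\gamma V_\pi^{h-1}(\hs')-V_\pi^h(\hs)$), so the notational reconciliation you flag as the main obstacle is handled correctly. The only residual wrinkle is inherited from the paper, not introduced by you: the paper's definition of $\mathcal{R}_\pi^H$ sums rewards from $t=0$ to $H$, which is one more term than the recursion terminating at $\bm X_\pi^0 = \bm 0$ strictly accounts for; under the natural convention that the $h$-horizon return collects $h$ reward terms, your induction is airtight.
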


With \Cref{prop: V result}, the complete $\bm X_\pi^H$ formula follows immediately
\begin{align}
\label{eq: complete X}
    \bm X_\pi^H &= \bm \Omega_\pi^H  + \gamma^2 \hat P_\pi \bm \Omega_\pi^{H-1} + (\gamma^2 \hat P_\pi)^2 \bm \Omega_\pi^{H-2} + \cdots (\gamma^2 \hat P_\pi)^{H-1}\Omega_\pi^{1} \\ \nonumber 
    &= \sum_{h=1}^H (\gamma^2 \hat P_\pi)^{H-h} \bm \Omega_\pi^h~~~.
\end{align}

% , we have 
% \begin{align}
% \label{eq: ME}
%     \underset{\substack{\hs_0 \sim \mu}}{\mathbb{E}}[\mathbb{V}ar[\overline{\mathcal{D}}_\pi^H(\hs_0)] 
%     &=\mu^\top \bm X_\pi^H \\ \nonumber
%     &= \mu^\top \bigg(\sum_{t=0}^H (\gamma^2 P_\pi)^t \bm \Omega_\pi^{H-t}\bigg)\\ \nonumber
% \end{align}

With \eqref{eq: complete X}, The divergence of \textbf{MeanVariance} we want to bound can be written as:
\begin{align}
\label{eq:MV1 original}
    |MV_{\pi'} - MV_{\pi}| &=\big|\underset{\substack{\hs_0 \sim \mu}}{\mathbb{E}}[\underset{\substack{{\hat \tau}\sim {\pi}}}{\mathbb{V}ar}[\mathcal{R}_{\pi'}^H(\hs_0)] - \underset{\substack{\hs_0 \sim \mu}}{\mathbb{E}}[\underset{\substack{{\hat \tau}\sim {\pi}}}{\mathbb{V}ar}[\mathcal{R}_\pi^H(\hs_0)]\big| \\ \nonumber
    &= \|\mu^\top \big(\bm X_{\pi'}^H - \bm X_\pi^H) \|_\infty\\ \nonumber
    &\leq \|\mu^\top\|_\infty  \|\bm X_{\pi'}^H - \bm X_\pi^H\|_\infty \\ \nonumber  
    &\leq \|\mu^\top\|_\infty \bigg\| \sum_{h=1}^H (\gamma^2 \hat P_{\pi'})^{H-h} \bm \Omega_{\pi'}^h - \sum_{h=1}^{H} (\gamma^2 \hat P_\pi)^{H-h} \bm \Omega_\pi^h \bigg\|_\infty \\ \nonumber 
    &\leq \|\mu^\top\|_\infty \sum_{h=1}^H \bigg\|  (\gamma^2 \hat P_{\pi'})^{H-h} \bm \Omega_{\pi'}^h - (\gamma^2 \hat P_\pi)^{H-h} \bm \Omega_\pi^h \bigg\|_\infty
    ~~~~`.
\end{align}

Next, we will derive the upper bound of  $\bigg\| (\gamma^2 \hat P_{\pi'})^{H-h} \bm \Omega_{\pi'}^{h} - (\gamma^2 \hat P_\pi)^{H-h} \bm \Omega_\pi^{h} \bigg\|_\infty$

\begin{align}
\label{eq: MV_1}
    &\|(\gamma^2 \hat P_{\pi'})^{H-h} \bm \Omega_{\pi'}^{h} - (\gamma^2 \hat P_\pi)^{H-h} \bm \Omega_\pi^{h}\|_\infty \\ \nonumber 
    &= \|(\gamma^2 \hat P_{\pi'})^{H-h} \bm \Omega_{\pi'}^{h} - (\gamma^2 \hat P_{\pi'})^{H-h} \bm \Omega_\pi^{h} + (\gamma^2 \hat P_{\pi'})^{H-h} \bm \Omega_{\pi}^{h} - (\gamma^2 \hat P_\pi)^{H-h} \bm \Omega_\pi^{h}\|_\infty \\ \nonumber 
    &= \|(\gamma^2 \hat P_{\pi'})^{H-h} \big(\bm \Omega_{\pi'}^{h} - \bm \Omega_\pi^{h}\big) + \big((\gamma^2 \hat P_{\pi'})^{H-h} - (\gamma^2 \hat P_\pi)^{H-h}\big) \bm \Omega_\pi^{h}\|_\infty \\  \nonumber 
    &\leq \|(\gamma^2 \hat P_{\pi'})^{H-h} \big(\bm \Omega_{\pi'}^{h} - \bm \Omega_\pi^{h}\big)\|_\infty + \|\big((\gamma^2 \hat P_{\pi'})^{H-h} - (\gamma^2 \hat P_\pi)^{H-h}\big) \bm \Omega_\pi^{h}\|_\infty \\ \nonumber 
    &\leq \|(\gamma^2 \hat P_{\pi'})^{H-h}\|_\infty \|\bm \Omega_{\pi'}^{h} - \bm \Omega_\pi^{h}\|_\infty + \|(\gamma^2 \hat P_{\pi'})^{H-h} - (\gamma^2 \hat P_\pi)^{H-h}\|_\infty \|\bm \Omega_\pi^{h}\|_\infty
\end{align}

Since we already know 
\begin{align}
    \|(\gamma^2 \hat P_{\pi'})^{H-h}\|_\infty = ||\gamma^{2(H-h)} \hat P_{\pi'}^{H-h}||_\infty = \gamma^{2(H-h)}, \label{eq: mv 1 term} \\ 
    {\|\bm\Omega_\pi^{h)}\|_\infty} \text{  is a known constant vector given } \pi. \label{eq: mv 4 term}
\end{align}
% $\|(\gamma^2 P_{\pi'})^{H-h}\|_1 = ||\gamma^{2(H-h)} P_{\pi'}^{H-h}||_1 = \gamma^{2(H-h)}$ and ${\|\bm\Omega_\pi^{2(H-h)}\|_\infty}$ is a known constant vector given $\pi$.
We only need to bound $\|\bm \Omega_{\pi'}^{h} - \bm \Omega_\pi^{h}\|_\infty$ and $\|(\gamma^2 \hat P_{\pi'})^{H-h} - (\gamma^2 \hat P_\pi)^{H-h}\|_\infty$.

To address $\|(\gamma^2 \hat P_{\pi'})^{H-h} - (\gamma^2 \hat P_\pi)^{H-h}\|_\infty$, we have
\begin{align}
\label{eq:mv 3 term}
    &\|(\gamma^2 \hat P_{\pi'})^{H-h} - (\gamma^2 \hat P_\pi)^{H-h}\|_\infty \\ \nonumber 
    &\leq \|(\gamma^2 \hat P_{\pi'})^{H-h}\|_\infty + \|(\gamma^2 \hat P_\pi)^{H-h}\|_\infty \\ \nonumber 
    &= \gamma^{2(H-h)} (\|\hat P_{\pi'}^{H-h}\|_\infty + \|\hat P_\pi^{H-h}\|_\infty) \\ \nonumber 
    &= 2\gamma^{2(H-h)} ~~~.
\end{align}

To address $\|\bm\Omega_{\pi'}^{h} - \bm\Omega_\pi^{h}\|_\infty$, we notice that $\omega_\pi^h(\hs) = \underset{\substack{a \sim \pi \\ \hs' \sim P}}{\mathbb{V}ar}[Q_\pi^h(\hs,a,\hs')] = \underset{\substack{a \sim \pi \\ \hs' \sim P}}{\mathbb{V}ar}[A_\pi^h(\hs,a,\hs')]$, which means:
\begin{align}
    &\|\bm\Omega_{\pi'}^{h} -\bm\Omega_\pi^{h}\|_\infty = \underset{\hs}{\mathbf{max}}\bigg|\underset{\substack{a \sim \pi' \\ \hs' \sim P}}{\mathbb{V}ar}[A_{\pi'}^h(\hs,a,\hs')] - \underset{\substack{a \sim \pi \\ \hs' \sim P}}{\mathbb{V}ar}[A_\pi^h(\hs,a,\hs')]\bigg|\\ \nonumber
\end{align}
Where
\begin{align}
    A_\pi^h(\hs,a,\hs')&= R(\hs, a, \hs') + \gamma V_{\pi}^{h-1}(\hs') - V_{\pi}^h(\hs)\\ \nonumber
    A_{\pi'}^h(\hs,a,\hs')&= R(\hs, a, \hs') + \gamma V_{\pi'}^{h-1}(\hs') - V_{\pi'}^h(\hs)\\ \nonumber
\end{align}
Define $K^h(\hs,a,\hs')=A_{\pi'}^h(\hs,a,\hs') - A_{\pi}^h(\hs,a,\hs')$, we have:
\begin{align}
    K^h(\hs,a,\hs')&= \gamma( V_{\pi'}^{h-1}(\hs') - V_{\pi}^{h-1}(\hs')) - (V_{\pi'}^h(\hs) - V_{\pi}^h(\hs))\\ \nonumber
\end{align}
Similar to TRPO~\cite{schulman2015trust}:
\begin{align}
\label{eq: Expectation of A = -v + v}
    &\underset{\substack{\hs_0 = \hs \\{\hat \tau} \sim \pi'}}{\mathbb{E}}\bigg[\sum_{t=0}^{h-1} \gamma^tA_{\pi}^{h-t}(\hs_t,a_t,\hs_{t+1})\bigg]\\ \nonumber
    &=\underset{\substack{\hs_0 = \hs \\{\hat \tau} \sim \pi'}}{\mathbb{E}}\bigg[\sum_{t=0}^{h-1} \gamma^t(R_{\pi}(\hs_t,a_t,\hs_{t+1})+\gamma V_{\pi}^{h-t-1}(\hs_{t+1})-V_{\pi}^{h-t}(\hs_{t}))\bigg]\\ \nonumber
    &=\underset{\substack{\hs_0 = \hs \\{\hat \tau} \sim \pi'}}{\mathbb{E}}\bigg[-V_{\pi}^h(\hs_{0}) + \gamma^{h}V_{\pi}^0(\hs_h) +\sum_{t=0}^{h-1} \gamma^tR_{\pi}(\hs_t,a_t,\hs_{t+1})\bigg]\\ \nonumber
    &= \underset{\substack{\hs_0 = \hs \\{\hat \tau} \sim \pi'}}{\mathbb{E}}\bigg[-V_{\pi}^h(\hs_{0}) +\sum_{t=0}^{h-1} \gamma^tR_{\pi}(\hs_t,a_t,\hs_{t+1})\bigg]\\ \nonumber
    &=\underset{\substack{\hs_0 = \hs }}{\mathbb{E}}\bigg[-V_{\pi}^h(\hs_{0})\bigg] + \underset{\substack{\hs_0 = \hs \\{\hat \tau} \sim \pi'}}{\mathbb{E}}\bigg[\sum_{t=0}^{h-1} \gamma^tR_{\pi}(\hs_t,a_t,\hs_{t+1})\bigg]\\ \nonumber
    &=-V_{\pi}^h(\hs) + V_{\pi'}^h(\hs)\\ \nonumber
\end{align}
Then $K^h(\hs,a,\hs')$ can be written as:
\begin{align}
    K^h(\hs,a,\hs')&= \gamma\underset{\substack{\hs_0 = \hs' \\{\hat \tau} \sim \pi'}}{\mathbb{E}}\bigg[\sum_{t=0}^{h-2} \gamma^tA_{\pi}^{h-1-t}(\hs_t,a_t,\hs_{t+1})\bigg] - \underset{\substack{\hs_0 = \hs \\{\hat \tau} \sim \pi'}}{\mathbb{E}}\bigg[\sum_{t=0}^{h-1} \gamma^tA_{\pi}^{h-t}(\hs_t,a_t,\hs_{t+1})\bigg]\\ \nonumber
\end{align}
Define ${A_{\pi',\pi}^{h}(\hs)}$ to be the expected advantage of ${\pi'}$ over ${\pi}$ at state ${\hs}$:
\begin{align}
    A_{\pi',\pi}^{h}(\hs) =  \underset{a \sim \pi'}{\mathbb{E}}\bigg[A_{\pi}^{h}(\hs,a)\bigg]\\ \nonumber
\end{align}
Now $K^h(\hs,a,\hs')$ can be written as:
\begin{align}
    K^h(\hs,a,\hs')&= \gamma\underset{\substack{\hs_0 = \hs' \\{\hat \tau} \sim \pi'}}{\mathbb{E}}\bigg[\sum_{t=0}^{h-2} \gamma^t A_{\pi',\pi}^{h-1-t}(\hs_t)\bigg] - \underset{\substack{\hs_0 = \hs \\{\hat \tau} \sim \pi'}}{\mathbb{E}}\bigg[\sum_{t=0}^{h-1} \gamma^tA_{\pi',\pi}^{h-t}(\hs_t)\bigg]\\ \nonumber
\end{align}
Define ${L(\hs,a,\hs')}$ as:
\begin{align}
    L^h(\hs,a,\hs')&= \gamma\underset{\substack{\hs_0 = \hs' \\{\hat \tau} \sim \pi}}{\mathbb{E}}\bigg[\sum_{t=0}^{h-2} \gamma^t A_{\pi',\pi}^{h-1-t}(\hs_t)\bigg] - \underset{\substack{\hs_0 = \hs \\{\hat \tau} \sim \pi}}{\mathbb{E}}\bigg[\sum_{t=0}^{h-1} \gamma^tA_{\pi',\pi}^{h-t}(\hs_t)\bigg]\\ \nonumber
\end{align}
With $\epsilon = \underset{\hs,a,h}{\mathbf{max}}|A_\pi^h(\hs,a)|$ and the fact that $\big|\mathbb{E}_{\hs_t \sim \pi'}[A^h_{\pi',\pi}(\hs_t)] - \mathbb{E}_{\hs_t \sim \pi}[A^h_{\pi',\pi}(\hs_t)]\big| \leq 4\alpha (1 - (1-\alpha)^t )\epsilon$ ([Lemma3, \citep{schulman2015trust}]), where $\alpha = \underset{\hs}{\max} \mathcal{D}_{TV}(\pi'\|\pi)[\hs] = \mathcal{D}_{TV}^{max}(\pi'\|\pi)$, we have:
\begin{align}
    &|K^h(\hs,a,\hs') - L^h(\hs,a,\hs')| \\ \nonumber
    &= \bigg|\gamma\bigg(\underset{\substack{\hs_0 = \hs' \\{\hat \tau} \sim \pi'}}{\mathbb{E}}\bigg[\sum_{t=0}^{h-2} \gamma^t A_{\pi',\pi}^{h-1-t}(\hs_t)\bigg] - \underset{\substack{\hs_0 = \hs' \\{\hat \tau} \sim \pi}}{\mathbb{E}}\bigg[\sum_{t=0}^{h-2} \gamma^t A_{\pi',\pi}^{h-1-t}(\hs_t)\bigg]\bigg) \\ \nonumber
    &- \bigg(\underset{\substack{\hs_0 = \hs \\{\hat \tau} \sim \pi'}}{\mathbb{E}}\bigg[\sum_{t=0}^{h-1} \gamma^tA_{\pi',\pi}^{h-t}(\hs_t)\bigg] - \underset{\substack{\hs_0 = \hs \\{\hat \tau} \sim \pi}}{\mathbb{E}}\bigg[\sum_{t=0}^{h-1} \gamma^tA_{\pi',\pi}^{h-t}(\hs_t)\bigg]\bigg)\bigg|\\ \nonumber
    &\leq \gamma\bigg|\underset{\substack{\hs_0 = \hs' \\{\hat \tau} \sim \pi'}}{\mathbb{E}}\bigg[\sum_{t=0}^{h-2} \gamma^t A_{\pi',\pi}^{h-1-t}(\hs_t)\bigg] - \underset{\substack{\hs_0 = \hs' \\{\hat \tau} \sim \pi}}{\mathbb{E}}\bigg[\sum_{t=0}^{h-2} \gamma^t A_{\pi',\pi}^{h-1-t}(\hs_t)\bigg]\bigg| \\ \nonumber
    &+ \bigg|\underset{\substack{\hs_0 = \hs \\{\hat \tau} \sim \pi'}}{\mathbb{E}}\bigg[\sum_{t=0}^{h-1} \gamma^tA_{\pi',\pi}^{h-t}(\hs_t)\bigg] - \underset{\substack{\hs_0 = \hs \\{\hat \tau} \sim \pi}}{\mathbb{E}}\bigg[\sum_{t=0}^{h-1} \gamma^tA_{\pi',\pi}^{h-t}(\hs_t)\bigg]\bigg|\\ \nonumber
    % &\leq\gamma\sum_{t=0}^{h-2}\gamma^{t}4\epsilon^{h-1-t}\alpha(1-(1-\alpha)^{t}) + \sum_{t=0}^{h-1}\gamma^{t}4\epsilon^{h-t}\alpha(1-(1-\alpha)^{t})\\ \nonumber
    &\leq 4\epsilon\alpha \bigg(\gamma\frac{1 - \gamma^{h-2}}{1-\gamma} - \gamma \frac{1 - \gamma^{h-2} (1-\alpha)^{h-2}}{1- \gamma(1-\alpha)} + \frac{1 - \gamma^{h-1}}{1-\gamma} - \frac{1 - \gamma^{h-1} (1-\alpha)^{h-1}}{1- \gamma(1-\alpha)}\bigg)\\\nonumber
    &\leq  \frac{8\epsilon(\gamma - \gamma^h)}{(1-\gamma)^2}(\mathcal{D}_{TV}^{max}(\pi'\|\pi))^2
    % &\leq \dfrac{4\gamma(1+\gamma)\epsilon}{(1-\gamma)^2}(\mathcal{D}_{TV}(\pi'\|\pi)[\hs])^2
\end{align}

Then according to \cite{pinskerInequ} $\mathcal{D}_{TV}(\pi'\|\pi)[\hs]\leq\sqrt{\frac{1}{2}\mathcal{D}_{KL}(\pi'\|\pi)[\hs]}$, we can then bound ${|K(\hs,a,\hs')|}$ with:
\begin{align}
    |K^h(\hs,a,\hs')| &\leq \left|L^h(\hs,a,\hs')\right| + \frac{8\epsilon(\gamma - \gamma^h)}{(1-\gamma)^2}(\mathcal{D}_{TV}^{max}(\pi'\|\pi))^2 \\ \nonumber
    &\leq \left|L^h(\hs,a,\hs')\right| + \frac{4\epsilon(\gamma - \gamma^h)}{(1-\gamma)^2}\mathcal{D}_{KL}^{max}(\pi'\|\pi) \doteq |K^h(\hs,a,\hs')|_{max}
\end{align}

where $\mathcal{D}_{KL}^{max}(\pi'\|\pi) = \max_s \mathcal{D}_{KL}(\pi'\|\pi)[s]$. With $A_{\pi'}^h(\hs,a,\hs') = A_{\pi}^h(\hs,a,\hs') + K^h(\hs,a,\hs')$, we have:

\begin{align}
    &\underset{\substack{a \sim \pi' \\ \hs' \sim P}}{\mathbb{V}ar}[A_{\pi'}^h(\hs,a,\hs')] - \underset{\substack{a \sim \pi \\ \hs' \sim P}}{\mathbb{V}ar}[A_\pi^h(\hs,a,\hs')]\\ \nonumber
    =&\underset{\substack{a \sim \pi' \\ \hs' \sim P}}{\mathbb{E}}[A_{\pi'}^h(\hs,a,\hs')^2] - \underset{\substack{a \sim \pi \\ \hs' \sim P}}{\mathbb{E}}[A_{\pi}^h(\hs,a,\hs')^2]\\ \nonumber
    =& \underset{\substack{a \sim \pi' \\ \hs' \sim P}}{\mathbb{E}}[(A_{\pi}^h(\hs,a,\hs') + K^h(\hs,a,\hs'))^2] - \underset{\substack{a \sim \pi \\ \hs' \sim P}}{\mathbb{E}}[A_{\pi}^h(\hs,a,\hs')^2]\\ \nonumber
    =& \underset{\substack{a \sim \pi' \\ \hs' \sim P}}{\mathbb{E}}[A_{\pi}^h(\hs,a,\hs')^2] - \underset{\substack{a \sim \pi \\ \hs' \sim P}}{\mathbb{E}}[A_{\pi}^h(\hs,a,\hs')^2] + 2\underset{\substack{a \sim \pi' \\ \hs' \sim P}}{\mathbb{E}}[A_{\pi}(\hs,a,\hs') K^h(\hs,a,\hs')] + \underset{\substack{a \sim \pi' \\ \hs' \sim P}}{\mathbb{E}}[K^h(\hs,a,\hs')^2]\\ \nonumber
    =& \underset{\substack{\\a\sim\pi\\\hs'\sim P}}{\mathbb{E}}\left[\left(\frac{\pi^{\prime}(a|\hs)}{\pi(a|\hs)}-1\right) A_{\pi}^h(\hs,a,\hs')^2\right] + 2\underset{\substack{a \sim \pi' \\ \hs' \sim P}}{\mathbb{E}}[A_{\pi}^h(\hs,a,\hs') K^h(\hs,a,\hs')] + \underset{\substack{a \sim \pi' \\ \hs' \sim P}}{\mathbb{E}}[K^h(\hs,a,\hs')^2]\\ \nonumber
    &\leq \underset{\substack{\\a\sim\pi\\\hs'\sim P}}{\mathbb{E}}\left[\left(\frac{\pi^{\prime}(a|\hs)}{\pi(a|\hs)}-1\right) A_{\pi}^h(\hs,a,\hs')^2 + 2\left(\frac{\pi^{\prime}(a|\hs)}{\pi(a|\hs)}\right)A_{\pi}^h(\hs,a,\hs')|K^h(\hs,a,\hs')|_{max} + |K^h(\hs,a,\hs')|_{max}^2 \right]\\ \nonumber
\end{align}
Then we can bound ${\|\bm\Omega_{\pi'} -\bm\Omega_\pi\|_\infty}$ with:
\begin{align}
\label{eq: Omega differences}
 &\|\bm\Omega_{\pi'} -\bm\Omega_\pi\|_\infty \\ \nonumber
 &\leq \underset{\hs}{\mathbf{max}}\bigg|\underset{\substack{\\a\sim\pi\\\hs'\sim P}}{\mathbb{E}}\bigg[\left(\frac{\pi^{\prime}(a|\hs)}{\pi(a|\hs)}-1\right) A_{\pi}^h(\hs,a,\hs')^2 \\ \nonumber 
 &~~~~~+ 2\left(\frac{\pi^{\prime}(a|\hs)}{\pi(a|\hs)}\right)A_{\pi}^h(\hs,a,\hs')|K^h(\hs,a,\hs')|_{max} + |K^h(\hs,a,\hs')|_{max}^2\bigg]\bigg|\\ \nonumber
\end{align}
By substituting \eqref{eq: mv 1 term}, \eqref{eq: mv 4 term}, \eqref{eq:mv 3 term} and \eqref{eq: Omega differences} into \Cref{eq: MV_1}, we have:
\begin{align}
    &\|(\gamma^2 \hat P_{\pi'})^{H-h} \bm \Omega_{\pi'}^{h} - (\gamma^2 P_\pi)^{H-h} \bm \Omega_\pi^{h}\|_1 \label{eq: MV1 final}\\ \nonumber 
    &\leq \gamma^{2(H-h)} \underset{\hs}{\mathbf{max}}\bigg|\underset{\substack{\\a\sim\pi\\\hs'\sim P}}{\mathbb{E}}\bigg[\left(\frac{\pi^{\prime}(a|\hs)}{\pi(a|\hs)}-1\right) A_{\pi}^h(\hs,a,\hs')^2 \\ \nonumber 
 &~~~~~+ 2\left(\frac{\pi^{\prime}(a|\hs)}{\pi(a|\hs)}\right)A_{\pi}^h(\hs,a,\hs')|K^h(\hs,a,\hs')|_{max} + |K^h(\hs,a,\hs')|_{max}^2\bigg]\bigg| + 2\gamma^{2(H-h)} \|\bm \Omega_\pi^{h}\|_\infty
\end{align}

By substituting \eqref{eq: MV1 final} into \eqref{eq:MV1 original}, \Cref{lem: bound of MV} is proved.

\end{proof}
\subsection{Proof of \Cref{lem: bound of VM}}
\label{proof: VarianceMean Bound}
% \begin{prop}[Bound of VarianceMean]
% \label{lem: bound of VM}
% Denote \textbf{VarianceMean} of policy ${\pi}$ as $VM_\pi = \underset{\hs_0 \sim \mu}{\mathbb{V}ar} [V_\pi(\hs_0)] = \underset{\hs_0 \sim \mu}{\mathbb{V}ar} [V^H_\pi(\hs_0)]$. Given two policies $\pi', \pi$, the \textbf{VarianceMean} of $\pi'$ can be bounded by:
% \begin{align}
%     VM_{\pi'}  &\leq \underset{\hs_0 \sim \mu}{\mathbb{E}} [(V^H_\pi(\hs_0))^2] + \|\mu^T\|_\infty\underset{\hs}{\mathbf{max}}\bigg||\eta(\hs)|_{max}^2+2|V^H_\pi(\hs)|\cdot|\eta(\hs)|_{max}\bigg| \\ \nonumber
%     &- \left(\mathbf{min}\left\{\mathbf{max}\left\{0,\ \mathcal{E}^l_{\pi^{\prime}, \pi}\right\}, \mathcal{E}^u_{\pi^{\prime}, \pi}\right\}\right)^2
% \end{align}
% where
% \begin{align}
%     |\eta(\hs)|_{max} &= \left|L(\hs)\right| + \frac{2\epsilon(\gamma - \gamma^H)}{(1-\gamma)^2}\mathcal{D}^{max}_{KL}(\pi'\|\pi) \\ \nonumber
%     L(\hs) &= \underset{\substack{\hs_0 = \hs \\{\hat \tau} \sim \pi}}{\mathbb{E}}\bigg[\sum_{t=0}^{H-1} \gamma^t\bar A^{H-t}_{\pi',\pi}(\hs_t)\bigg]
% \end{align}
% \end{prop}

\begin{proof}
\begin{align}
\label{eq: bound of VM}
    VM_{\pi'} = \underset{\hs_0 \sim \mu}{\mathbb{E}} [(V^H_{\pi'}(\hs_0))^2] - \mathcal{E}(\pi')^2
\end{align}
Since both terms on the right of \Cref{eq: bound of VM} are non-negative, we can bound $VM_{\pi'}$ with the upper bound of $\underset{\hs_0 \sim \mu}{\mathbb{E}} [(V_{\pi'}^H(\hs_0))^2]$ and the lower bound of $\mathcal{E}(\pi')^2$.\\
Define $\bm Y_\pi = \begin{bmatrix} (V_\pi^H(\hs^1))^2 \\ (V_\pi^H(\hs^2))^2 \\ \vdots \end{bmatrix}$, where $\underset{\hs_0 \sim \mu}{\mathbb{E}} [(V_\pi^H(\hs_0))^2] = \mu^\top \bm Y_\pi$.
Then we have
\begin{align}
\label{eq: bound EV}
    &\big|\underset{\hs_0 \sim \mu}{\mathbb{E}} [(V_{\pi'}^H(\hs_0))^2]-\underset{\hs_0 \sim \mu}{\mathbb{E}} [(V_\pi^H(\hs_0))^2]\big| \\ \nonumber
    &= \|\mu^\top \big(\bm Y_{\pi'} - \bm Y_\pi) \|_\infty\\ \nonumber
    &\leq \|\mu^\top\|_\infty  \|\bm Y_{\pi'} - \bm Y_\pi\|_\infty \\ \nonumber
\end{align}
To address $\|\bm Y_{\pi'} - \bm Y_\pi\|_\infty$, we have:
\begin{align}
    (V_{\pi'}^H(\hs))^2 - (V_\pi^H(\hs))^2 
    &= \bigg(V^H_{\pi'}(\hs) - V^H_\pi(\hs)\bigg)\bigg(V^H_{\pi'}(\hs)+V^H_\pi(\hs)\bigg) \\ \nonumber
\end{align}
According to \eqref{eq: Expectation of A = -v + v}:
\begin{align}
    V^H_{\pi'}(\hs) - V^H_{\pi}(\hs) &= \underset{\substack{\hs_0 = \hs \\{\hat \tau} \sim \pi'}}{\mathbb{E}}\bigg[\sum_{t=0}^{H-1} \gamma^t A^{H-t}_{\pi}(\hs_t,a_t,\hs_{t+1})\bigg] \\ \nonumber
    &= \underset{\substack{\hs_0 = \hs \\{\hat \tau} \sim \pi'}}{\mathbb{E}}\bigg[\sum_{t=0}^{H-1} \gamma^t\bar A^{H-t}_{\pi',\pi}(\hs_t)\bigg]\\ \nonumber
    &\dot = \ \ \eta(\hs)
\end{align}
Define $L(\hs) = \underset{\substack{\hs_0 = \hs \\{\hat \tau} \sim \pi}}{\mathbb{E}}\bigg[\sum_{t=0}^{H-1} \gamma^t\bar A^{H-t}_{\pi',\pi}(\hs_t)\bigg]$, then we have:
\begin{align}
    |\eta(\hs)-L(\hs)| &= 
    \bigg|\underset{\substack{\hs_0 = \hs \\{\hat \tau} \sim \pi'}}{\mathbb{E}}\bigg[\sum_{t=0}^{H-1} \gamma^t\bar A^{H-t}_{\pi',\pi}(\hs_t)\bigg] - \underset{\substack{\hs_0 = \hs \\{\hat \tau} \sim \pi}}{\mathbb{E}}\bigg[\sum_{t=0}^{H-1} \gamma^t\bar A^{H-t}_{\pi',\pi}(\hs_t)\bigg] \bigg| \\ \nonumber
    &\leq 4\epsilon\alpha\left(\frac{1-\gamma^{H-1}}{1-\gamma}-\frac{1-\gamma^{H-1}(1-\alpha)^{H-1}}{1-\gamma(1-\alpha)}\right) \\ \nonumber
    & \leq \frac{4\epsilon(\gamma - \gamma^H)}{(1-\gamma)^2}(\mathcal{D}_{TV}(\pi'\|\pi)[\hs])^2
\end{align}
And according to \cite{pinskerInequ}, we can bound $|\eta(\hs)|$ with:
\begin{align}
    |\eta(\hs)| &\leq \left|L(\hs)\right| + \frac{4\epsilon(\gamma - \gamma^H)}{(1-\gamma)^2}(\mathcal{D}_{TV}(\pi'\|\pi)[\hs])^2 \\ \nonumber
    &\leq \left|L(\hs)\right| + \frac{2\epsilon(\gamma - \gamma^H)}{(1-\gamma)^2}\mathcal{D}^{max}_{KL}(\pi'\|\pi) \doteq |\eta(\hs)|_{max} 
\end{align}
Further, we can obtain:
\begin{align}
    |V^H_{\pi'}(\hs)+V^H_\pi(\hs)| &\leq  |V^H_{\pi'}(\hs)|+|V^H_\pi(\hs)| \\ \nonumber
    &= |V^H_{\pi'}(\hs)| - |V^H_\pi(\hs)| + 2|V^H_\pi(\hs)| \\ \nonumber
    &\leq |V^H_{\pi'}(\hs)-V^H_\pi(\hs)| + 2|V^H_\pi(\hs)| \\ \nonumber
    &\leq |\eta(\hs)|_{max} + 2|V^H_\pi(\hs)| 
\end{align}
Thus the following inequality holds:
\begin{align}
\label{eq: bound V difference}
    &\|\bm Y_{\pi'} - \bm Y_\pi\|_\infty \\ \nonumber
    &\leq \underset{\hs}{\mathbf{max}}\bigg||V^H_{\pi'}(\hs)-V^H_\pi(\hs)|\cdot|V^H_{\pi'}(\hs)+V^H_\pi(\hs)|\bigg| \\ \nonumber
    &\leq \underset{\hs}{\mathbf{max}}\bigg||\eta(\hs)|_{max}\cdot\left(|\eta(\hs)|_{max} + 2|V^H_\pi(\hs)|\right)\bigg| \\ \nonumber
    &= \underset{\hs}{\mathbf{max}}\bigg||\eta(\hs)|_{max}^2+2|V^H_\pi(\hs)|\cdot|\eta(\hs)|_{max}\bigg|
\end{align}
Substitute \Cref{eq: bound V difference} into \Cref{eq: bound EV} the upper bound of $\underset{\hs_0 \sim \mu} {\mathbb{E}} [(V_{\pi'}^H(\hs_0))^2]$ is obtained:
\begin{align}
\label{eq: upper bound of the first term}
\underset{\hs_0 \sim \mu} {\mathbb{E}} [(V_{\pi'}^H(\hs_0))^2] &\leq \underset{\hs_0 \sim \mu}{\mathbb{E}} [(V^H_\pi(\hs_0))^2] + \|\mu^T\|_\infty\underset{\hs}{\mathbf{max}}\bigg||\eta(\hs)|_{max}^2+2|V^H_\pi(\hs)|\cdot|\eta(\hs)|_{max}\bigg| \\ \nonumber
\end{align}
The lower bound of $\mathcal{E}(\pi')^2$ can then be obtained according to \citep{zhao2024statewise}:
\begin{align}
\label{eq: lower bound of the second term}
    \mathcal{E}(\pi')^2 \geq \left(\mathbf{min}\left\{\mathbf{max}\left\{0,\ \mathcal{E}^l_{\pi^{\prime}, \pi}\right\}, \mathcal{E}^u_{\pi^{\prime}, \pi}\right\}\right)^2
\end{align}
where
\begin{align}
\nonumber
    \mathcal{E}^l_{\pi^{\prime}, \pi}&=\mathcal{E}(\pi) + \underset{\substack{\hs \sim \overline{d}_\pi \\ a\sim {\pi'}}}{\mathbb{E}} \bigg[ A^H_\pi(\hs,a) - 2(H+1)\epsilon^{\pi'} \sqrt{\frac 12 \mathcal{D}_{KL}({\pi'} \| \pi)[\hs]} \bigg] \\ \nonumber
    \mathcal{E}^u_{\pi^{\prime}, \pi}&=\mathcal{E}(\pi) + \underset{\substack{\hs \sim \overline{d}_\pi \\ a\sim {\pi'}}}{\mathbb{E}} \bigg[ A^H_\pi(\hs,a) + 2(H+1)\epsilon^{\pi'} \sqrt{\frac 12 \mathcal{D}_{KL}({\pi'} \| \pi)[\hs]} \bigg] \\
    \nonumber
    \overline{d}_\pi &= \sum\limits_{t=0}^H P(\hs_t=\hs|\pi)
\end{align}
By substituting \Cref{eq: upper bound of the first term} and \Cref{eq: lower bound of the second term} into \Cref{eq: bound of VM} \Cref{lem: bound of VM} is proved.
\end{proof}

\clearpage

\section{Experiment Details}
\label{sec: experiment details}
\subsection{Environment Settings}
\paragraph{Goal Task}
In the Goal task environments, the reward function is:
\begin{equation}\notag
\begin{split}
    & r(x_t) = d^{g}_{t-1} - d^{g}_{t} + \mathbf{1}[d^g_t < R^g]~,\\
\end{split}
\end{equation}
where $d^g_t$ is the distance from the robot to its closest goal and $R^g$ is the size (radius) of the goal. When a goal is achieved, the goal location is randomly reset to someplace new while keeping the rest of the layout the same. The test suites of our experiments are summarized in \Cref{tab: testing suites}.

\begin{table}[h]
\vskip 0.15in
\caption{The test suites environments of our experiments}
\begin{center}
\begin{tabular}{c|ccc|ccc|c}
\toprule
 & \multicolumn{6}{c|}{Ground robot} &\multicolumn{1}{c}{Aerial robot}\\
% \cmidrule(r{1pt}){2-5}
\cline{2-8}\\[-1.02em]
\textbf{Task Setting} & \multicolumn{3}{c|}{Low dimension} &\multicolumn{3}{c|}{High dimension} &\\
% \cmidrule(r{1pt}){2-6}
\cline{2-8}\\[-1.02em]
% & proposed & interior point & SQP &  proposed & interior point & SQP & proposed & interior point & SQP\\
&Point &Swimmer & \multicolumn {1}{c|}{Arm3} 
& Humanoid &Ant & \multicolumn {1}{c|}{Walker} & Drone\\
% \cmidrule(r{1pt}){1-6}
\cline{1-8}\\[-1.02em]
1-Hazard    & \checkmark & \checkmark &      &      &       &     &  \cellcolor[HTML]{C0C0C0}  \\
4-Hazard    & \checkmark & \checkmark &      &      &       &     &  \cellcolor[HTML]{C0C0C0}  \\
8-Hazard    & \checkmark & \checkmark &\checkmark& \checkmark & \checkmark&\checkmark &  \cellcolor[HTML]{C0C0C0}  \\
1-Pillar    & \checkmark &            &      &      &       &     &  \cellcolor[HTML]{C0C0C0}  \\
4-Pillar    & \checkmark &            &      &      &       &     &  \cellcolor[HTML]{C0C0C0}  \\
8-Pillar    & \checkmark &            &      &      &       &     &  \cellcolor[HTML]{C0C0C0}  \\
1-Ghost & \checkmark  & & &  &  &  & \cellcolor[HTML]{C0C0C0}\\
4-Ghost & \checkmark   & & &  &  &  & \cellcolor[HTML]{C0C0C0}\\
8-Ghost &  \checkmark  & & &  &  &  & \cellcolor[HTML]{C0C0C0}\\
3DHazard-1 & \cellcolor[HTML]{C0C0C0} & \cellcolor[HTML]{C0C0C0}& \cellcolor[HTML]{C0C0C0}& \cellcolor[HTML]{C0C0C0} & \cellcolor[HTML]{C0C0C0} & \cellcolor[HTML]{C0C0C0} & \checkmark\\
3DHazard-4 &  \cellcolor[HTML]{C0C0C0} & \cellcolor[HTML]{C0C0C0}& \cellcolor[HTML]{C0C0C0}& \cellcolor[HTML]{C0C0C0} & \cellcolor[HTML]{C0C0C0} & \cellcolor[HTML]{C0C0C0} & \checkmark\\
3DHazard-8 &  \cellcolor[HTML]{C0C0C0} & \cellcolor[HTML]{C0C0C0}& \cellcolor[HTML]{C0C0C0}& \cellcolor[HTML]{C0C0C0} & \cellcolor[HTML]{C0C0C0} & \cellcolor[HTML]{C0C0C0} & \checkmark\\
\bottomrule
\end{tabular}
\label{tab: testing suites}
\end{center}
\end{table}

\paragraph{Hazard Constraint}
In the Hazard constraint environments, the cost function is:
\begin{equation}\notag
\begin{split}
    & c(x_t) = \max(0, R^h - d^h_t)~,\\
\end{split}
\end{equation}
where $d^h_t$ is the distance to the closest hazard and $R^h$ is the size (radius) of the hazard.

\paragraph{Pillar Constraint}
In the Pillar constraint environments, the cost $c_t = 1$ if the robot contacts with the pillar otherwise $c_t = 0$. 

\paragraph{Ghost Constraint} 
In the Ghost constraint environments, the cost function is:
\begin{equation}\notag
\begin{split}
    & c(x_t) = \max(0, R^h - d^h_t)~,\\
\end{split}
\end{equation}
where $d^h_t$ is the distance to the closest ghost and $R^h$ is the size (radius) of the ghost. And dynamics of ghosts are as follow:
\begin{equation}
\dot x_{object}= 
\begin{cases}
    \begin{aligned}
    v_0 * &d_{origin} ,& \text{if } &\norm{d_{origin}} > r_0\\
    v_1 * &d_{robot} ,& \text{if } &\norm{d_{origin}} \leq r_0 \text{ and } \norm{d_{robot}} > r_1\\
    &0 ,& \text{if }& \norm{d_{origin}} \leq r_0 \text{ and } \norm{d_{robot}} \leq r_1\\
    \end{aligned}
\end{cases},
\end{equation}
where $d_{\text{origin}} = x_{\text{origin}} - x_{\text{object}}$ represents the distance from the position of the dynamic object $x_{\text{object}}$, $d_{\text{robot}} = x_{\text{robot}} - x_{\text{object}}$ represents the distance from the dynamic object $x_{\text{object}}$ to the position of the robot $x_{\text{robot}}$,  $r_0$ defines a circular area centered at the origin point within which the objects are limited to move. $r_1$ represents the threshold distance that the dynamic objects strive to maintain from the robot and $v_0$, $v_1$ are configurable non-negative velocity constants for the dynamic objects.

\paragraph{State Space}
The state space is composed of two parts. The internal state spaces describe the state of the robots, which can be obtained from standard robot sensors (accelerometer, gyroscope, magnetometer, velocimeter, joint position sensor, joint velocity sensor and touch sensor). The details of the internal state spaces of the robots in our test suites are summarized in \Cref{tab:internal_state_space}.
The external state spaces are describe the state of the environment observed by the robots, which can be obtained from 2D lidar or 3D lidar (where each lidar sensor perceives objects of a single kind). The state spaces of all the test suites are summarized in \Cref{tab:external_state_space}. Note that Vase and Gremlin are two other constraints in Safety Gym \citep{ray2019benchmarking} and all the returns of vase lidar and gremlin lidar are zero vectors (i.e., $[0, 0, \cdots, 0] \in \mathbb{R}^{16}$) in our experiments since none of our test suites environments has vases.

\begin{table}[h]
\vskip 0.15in
\caption{The internal state space components of different test suites environments.}
\begin{center}
\begin{tabular}{c|ccccccc}
\toprule
\textbf{Internal State Space} & Point  & Swimmer & Walker & Ant & Drone & Arm3 & Humanoid\\
\hline
% \vskip 0.15in
Accelerometer ($\mathbb{R}^3$) & \checkmark & \checkmark & \checkmark & \checkmark & \checkmark  & \checkmark $\times 5$ & \checkmark\\
Gyroscope ($\mathbb{R}^3$) & \checkmark & \checkmark & \checkmark & \checkmark & \checkmark  & \checkmark $\times 5$ & \checkmark\\
Magnetometer ($\mathbb{R}^3$) & \checkmark & \checkmark & \checkmark & \checkmark & \checkmark  & \checkmark $\times 5$ & \checkmark\\
Velocimeter ($\mathbb{R}^{3}$) & \checkmark & \checkmark & \checkmark & \checkmark & \checkmark & \checkmark $\times 5$ & \checkmark\\
Joint position sensor ($\mathbb{R}^{n}$) & ${n=0}$ & ${n=2}$ & ${n=10}$ & ${n=8}$ & ${n=0}$ & ${n=3}$ & ${n=17}$\\
Joint velocity sensor ($\mathbb{R}^{n}$)  & ${n=0}$ & ${n=2}$ & ${n=10}$ & ${n=8}$ & ${n=0}$ & ${n=3}$ & ${n=17}$\\
Touch sensor ($\mathbb{R}^{n}$) & ${n=0}$ & ${n=4}$ & ${n=2}$ & ${n=8}$ & ${n=0}$ & ${n=1}$ & ${n=2}$ \\
\bottomrule
\end{tabular}
\label{tab:internal_state_space}
\end{center}
%\vskip -0.1in
\end{table}

\begin{table}[h]
\vskip 0.15in
\caption{The external state space components of different test suites environments.}
\begin{center}
\begin{tabular}{c|ccc}
\toprule
\textbf{External State Space} &  Goal-Hazard & 3D-Goal-Hazard & Goal-Pillar\\
\hline\\[-0.8em]
Goal Compass ($\mathbb{R}^{3}$) & \checkmark & \checkmark & \checkmark\\
Goal Lidar ($\mathbb{R}^{16}$) & \checkmark & $\times$ & \checkmark\\
3D Goal Lidar ($\mathbb{R}^{60}$) & $\times$ & \checkmark & $\times$\\
Hazard Lidar ($\mathbb{R}^{16}$) & \checkmark & $\times$ & $\times$\\
3D Hazard Lidar ($\mathbb{R}^{60}$) & $\times$ & \checkmark & $\times$\\
Pillar Lidar ($\mathbb{R}^{16}$) & $\times$ & $\times$ & \checkmark\\
Vase Lidar ($\mathbb{R}^{16}$) & \checkmark & $\times$ & \checkmark\\
Gremlin Lidar ($\mathbb{R}^{16}$) & \checkmark & $\times$ & \checkmark\\
\bottomrule
\end{tabular}
\label{tab:external_state_space}
\end{center}
%\vskip -0.1in
\end{table}

\paragraph{Control Space}
For all the experiments, the control space of all robots are continuous, and linearly scaled to [-1, +1].

\subsection{Policy Settings}
The hyper-parameters used in our experiments are listed in \Cref{tab:policy_setting} as default.

Our experiments use separate multi-layer perception with ${tanh}$ activations for the policy network, value network and cost network. Each network consists of two hidden layers of size (64,64). All of the networks are trained using $Adam$ optimizer with learning rate of 0.01.

We apply an on-policy framework in our experiments. During each epoch the agent interact $B$ times with the environment and then perform a policy update based on the experience collected from the current epoch. The maximum length of the trajectory is set to 1000 and the total epoch number $N$ is set to 200 as default.

The policy update step is based on the scheme of TRPO, which performs up to 100 steps of backtracking with a coefficient of 0.8 for line searching.

For all experiments, we use a discount factor of $\gamma = 0.99$, an advantage discount factor $\lambda =0.95$, and a KL-divergence step size of $\delta_{KL} = 0.02$.

For experiments which consider cost constraints we adopt a target cost $\delta_{c} = 0.0$ to pursue a zero-violation policy.

Other unique hyper-parameters for each algorithms are hand-tuned to attain reasonable performance. 

Each model is trained on a server with a 48-core Intel(R) Xeon(R) Silver 6426Y CPU @ 2.5.GHz, Nvidia RTX A6000 GPU with 48GB memory, and Ubuntu 22.04.

For all tasks, we train each model for 6e6 steps which takes around seven hours.

\begin{sidewaystable}
\vskip 0.15in
\caption{Important hyper-parameters of different algorithms in our experiments}
\begin{center}
\resizebox{\textwidth}{!}{%
\begin{tabular}{lr|cccccccccccc}
\toprule
\textbf{Policy Parameter} & & TRPO & TRPO-Lagrangian & TRPO-SL [18' Dalal]& TRPO-USL & TRPO-IPO & TRPO-FAC & CPO & PCPO &  SCPO & TRO-CVaR & CPO-CVaR & ASCPO \\
% \midrule
% Random & 100.38 $\pm$ 28.25 & 14.21 $\pm$ 11.20 & 0.18 $\pm$ 4.35 & 0.89 $\pm$ 10.96 & 49.57 $\pm$ 16.88 \\
\hline\\[-1.0em]
% \vskip 0.15in
Epochs & $N$ & 200 & 200 & 200 & 200 & 200 & 200 & 200 & 200 & 200 & 200 & 200 & 200\\
Steps per epoch & $B$& 30000 & 30000 & 30000 & 30000 & 30000 & 30000 & 30000 & 30000 & 30000 & 30000 & 30000 & 30000\\
Maximum length of trajectory & $L$ & 1000 & 1000 & 1000 & 1000 & 1000 & 1000 & 1000 & 1000 & 1000 & 1000 & 1000 & 1000\\
Policy network hidden layers & & (64, 64) & (64, 64) & (64, 64) & (64, 64) & (64, 64) & (64, 64) & (64, 64) & (64, 64) & (64, 64) & (64, 64) & (64, 64) & (64, 64)\\
Discount factor  &  $\gamma$ & 0.99 & 0.99 & 0.99 & 0.99 & 0.99 & 0.99 & 0.99 & 0.99 & 0.99 & 0.99 & 0.99 & 0.99\\
Advantage discount factor  & $\lambda$ & 0.97 & 0.97 & 0.97 & 0.97 & 0.97 & 0.97 & 0.97 & 0.97 & 0.97 & 0.97 & 0.97 & 0.97\\
TRPO backtracking steps & &100 &100 &100 &100 &100 &100 &100 & - &100 &100 &100 &100\\
TRPO backtracking coefficient & &0.8 &0.8 &0.8 &0.8 &0.8 &0.8 &0.8 & - &0.8 &0.8 &0.8 &0.8\\
Target KL & $\delta_{KL}$& 0.02 & 0.02 & 0.02 & 0.02 & 0.02 & 0.02 & 0.02 & 0.02 & 0.02 & 0.02 & 0.02 & 0.02\\

Value network hidden layers & & (64, 64) & (64, 64) & (64, 64) & (64, 64) & (64, 64) & (64, 64) & (64, 64) & (64, 64) & (64, 64) & (64, 64) & (64, 64) & (64, 64)\\
Value network iteration & & 80 & 80 & 80 & 80 & 80 & 80 & 80 & 80 & 80 & 80 & 80 & 80\\
Value network optimizer & & Adam & Adam & Adam & Adam & Adam & Adam & Adam & Adam & Adam & Adam & Adam & Adam\\
Value learning rate & & 0.001 & 0.001 & 0.001 & 0.001 & 0.001 & 0.001 & 0.001 & 0.001 & 0.001 & 0.001 & 0.001 & 0.001\\

Cost network hidden layers & & - & (64, 64) & (64, 64) & (64, 64) & - & (64, 64) & (64, 64) & (64, 64) & (64, 64) & (64, 64) & (64, 64) & (64, 64)\\
Cost network iteration & & - & 80 & 80 & 80 & - & 80 & 80 & 80 & 80 & 80 & 80 & 80\\
Cost network optimizer & & - & Adam & Adam & Adam & - & Adam & Adam & Adam & Adam & Adam & Adam & Adam\\
Cost learning rate & & - & 0.001 & 0.001 & 0.001 & - & 0.001 & 0.001 & 0.001 & 0.001 & 0.001 & 0.001 & 0.001\\
Target Cost & $\delta_{c}$& - & 0.0 &  0.0 &  0.0 &  0.0 & 0.0 & 0.0 & 0.0 & 0.0 & 0.0 & 0.0 & 0.0\\

Lagrangian optimizer & & - & - & - & - & - & Adam & - & - & - & - & - & -\\
Lagrangian learning rate & & - & 0.005 & - & - & - & 0.0001 & - & - & - & - & - & -\\
USL correction iteration & & - & - & - & 20 & - & - & - & - & - & - & - & -\\
USL correction rate & & - & - & - & 0.05 & - & - & - & - & - & - & - & -\\
Warmup ratio & & - & - & 1/3 & 1/3 & - & - & - & - & - & - & - & -\\
IPO parameter  & ${t}$ & - & - & - & - & 0.01 & - & - & - & - & - & - & -\\
Cost reduction & & - & - & - & - & - & - & 0.0 & - & 0.0 & - & 0.0 & 0.0\\
Probability factor & k & - & - & - & - & - & - & - & - & - & - & - & 7.0\\
\bottomrule
\end{tabular}
}
\label{tab:policy_setting}
\end{center}
%\vskip -0.1in
\end{sidewaystable}
% \end{table}

\subsection{Metrics Comparison}
\label{sec:metrics}
In \Cref{tab: point_hazard,tab: point_pillar,tab: point_ghost,tab: swimmer_hazard,tab: drone_3Dhazard,tab: ant_walker_hazard}, we report all the $19$ results of our test suites by three metrics:
\begin{itemize}
    \item The average episode return $J_r$.
    \item The average episodic sum of costs $M_c$.
    \item The average state-wise cost over the entirety of training $\rho_c$.
\end{itemize}
All of the three metrics were obtained from the final epoch after convergence. Each metric was averaged over two random seed.

The learning curves of all experiments are shown in \Cref{fig:exp-point-hazard,fig:exp-point-pillar,fig:exp-point-ghost,fig:exp-swimmer-hazard,fig:exp-drone-hazard,fig:exp-ant-walker-hazard}. 

\subsection{Ablation study on large penalty for infractions}
\label{sec: lagrandian ablation}

\begin{wrapfigure}{r}{0.6\textwidth}
    % \vspace{-50pt}
    \centering
    \begin{subfigure}[b]{0.19\textwidth}
        \begin{subfigure}[t]{1.00\textwidth}
        \raisebox{-\height}{\includegraphics[width=\textwidth]{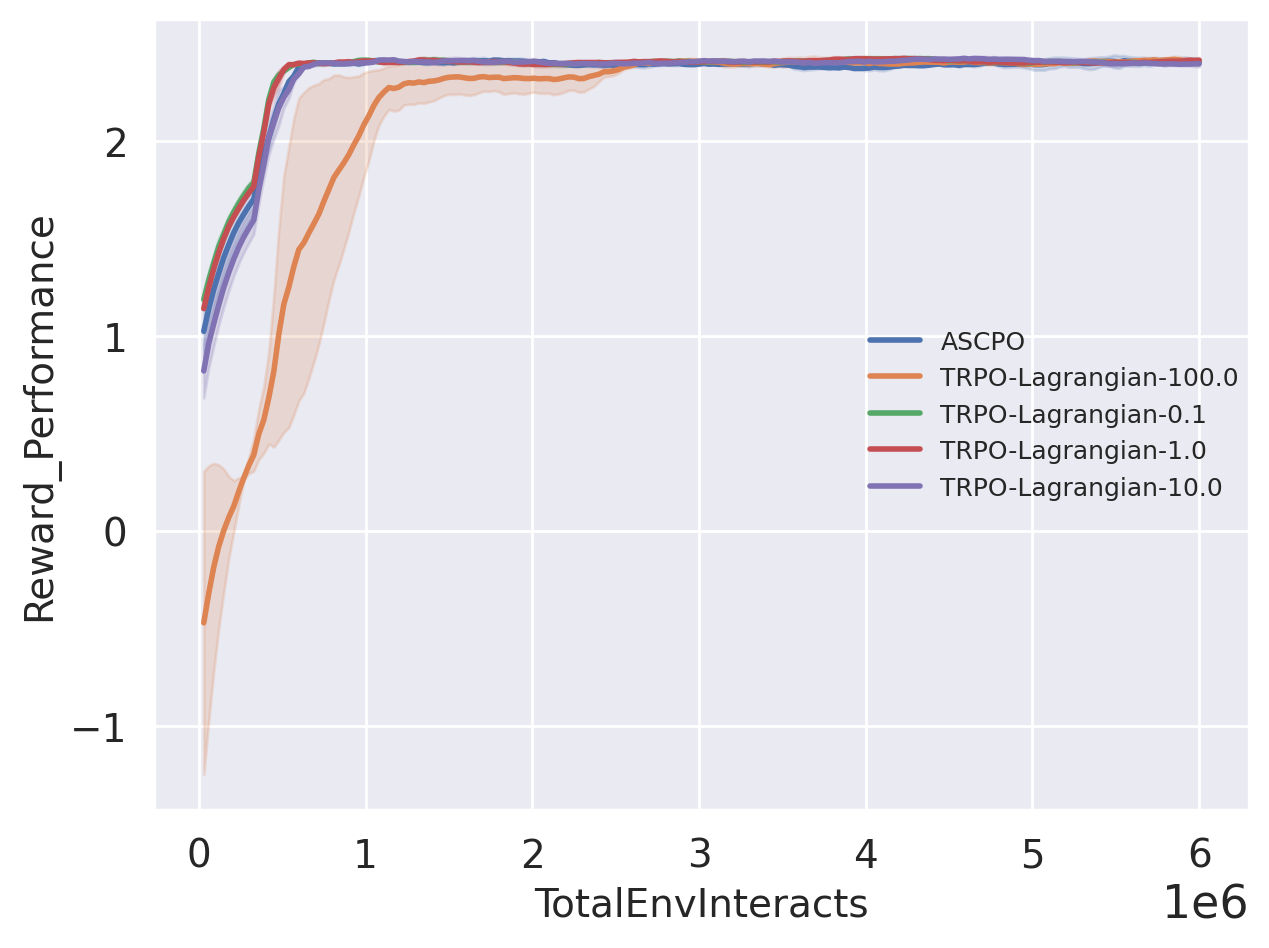}}
        % \caption{Cost rate.} 
        \end{subfigure}
    \end{subfigure}
    \hfill
    \begin{subfigure}[b]{0.19\textwidth}
        \begin{subfigure}[t]{1.00\textwidth}
        \raisebox{-\height}{\includegraphics[width=\textwidth]{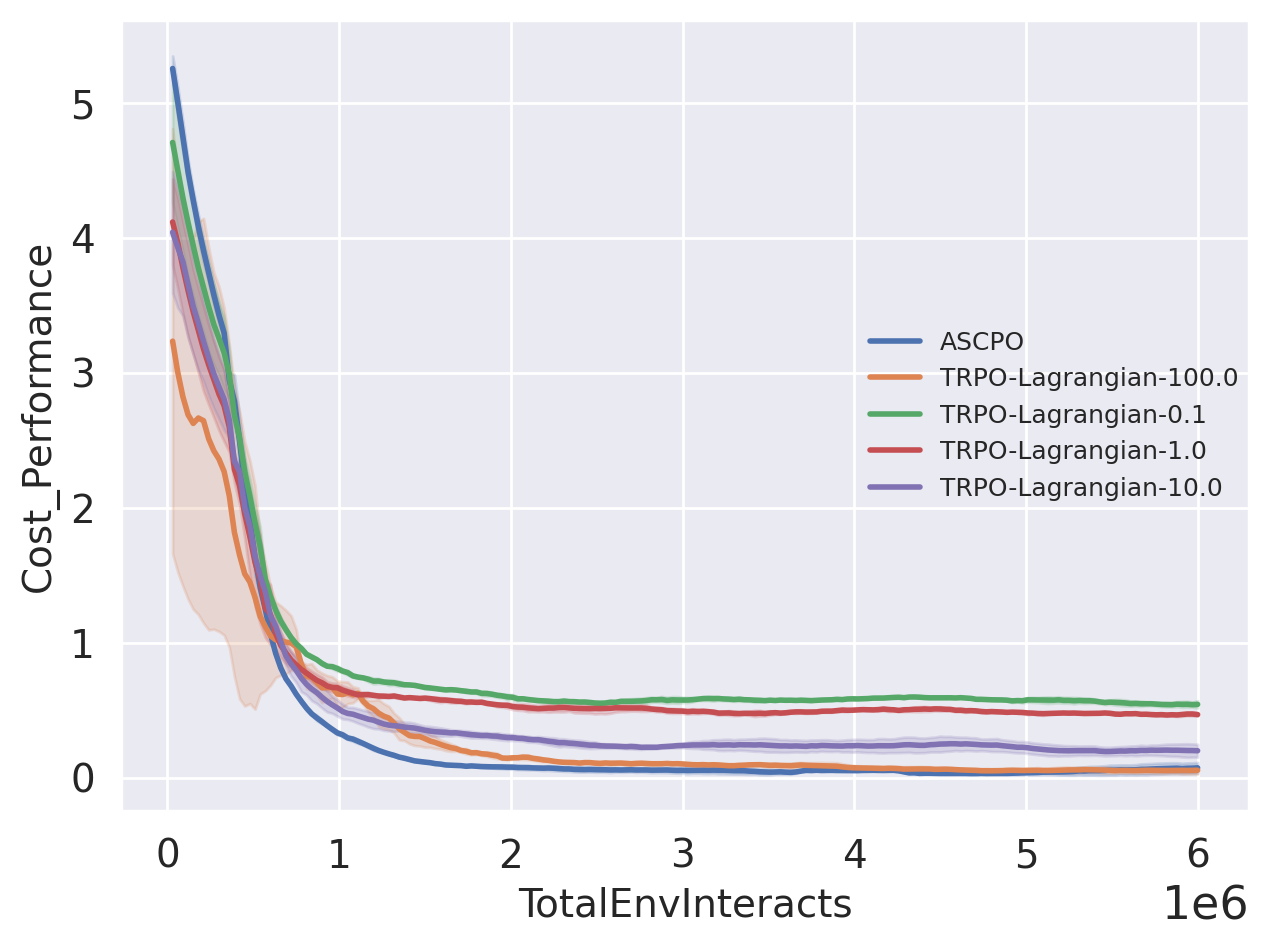}}
        % \caption{Cost rate.} 
        \end{subfigure}
    \end{subfigure} 
    \hfill
    \begin{subfigure}[b]{0.19\textwidth}
        \begin{subfigure}[t]{1.00\textwidth}
        \raisebox{-\height}{\includegraphics[width=\textwidth]{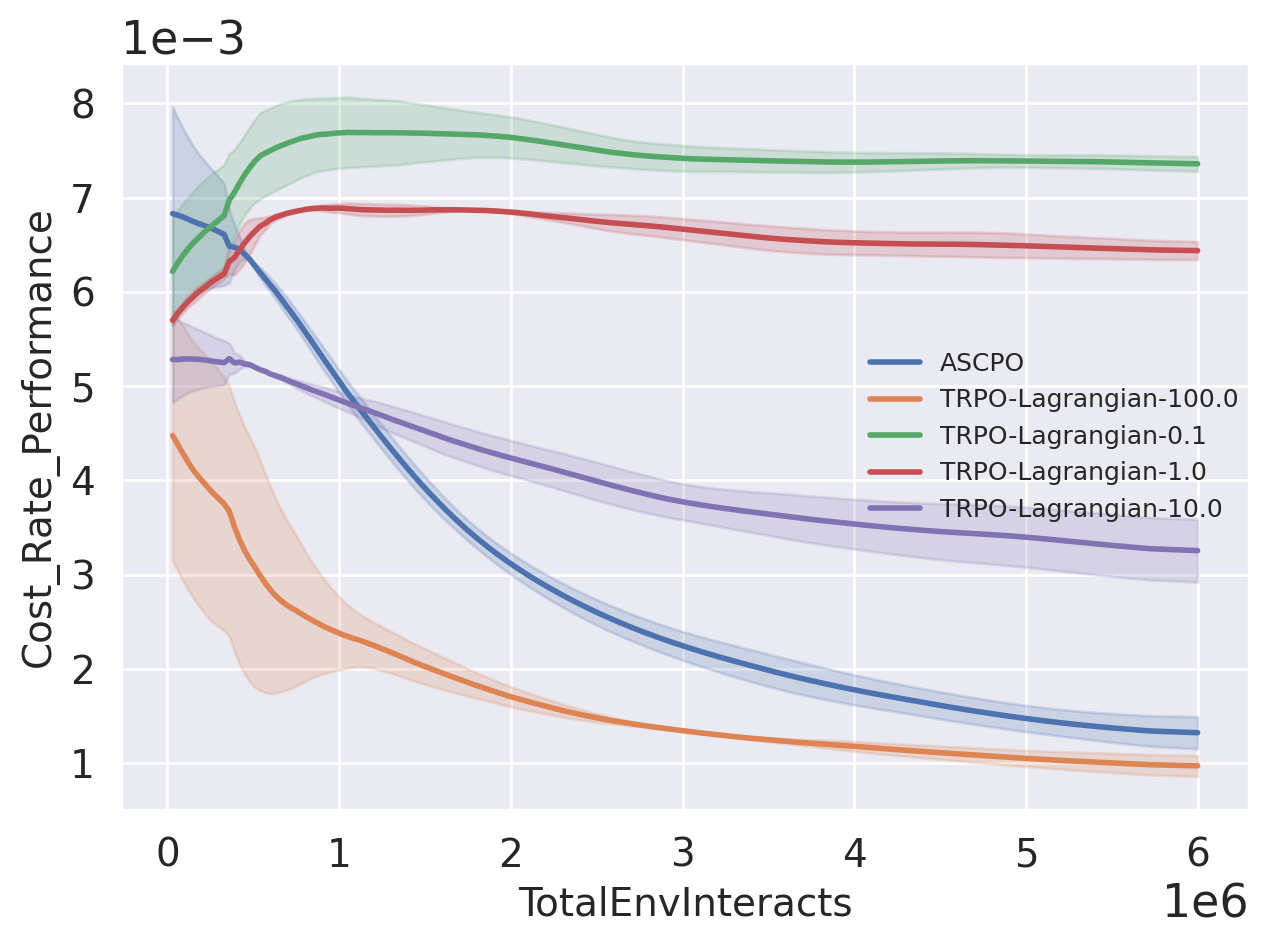}}
        % \caption{Cost rate.} 
        \end{subfigure}
    \end{subfigure}
    \caption{TRPO-Lagrangian method ablation study with Point-8-Hazard} 
    \label{fig: lag ablation}
    \vspace{-10pt}
\end{wrapfigure}

In our experiments employing the Lagrangian method, we utilized an adaptive penalty coefficient. Consequently, we augmented this coefficient by a factor denoted as $\lambda$ to explore the trade-off between optimizing rewards and satisfying constraints. We designated these experiments as TRPO-Lagrangian-{$\lambda$} and juxtaposed them with ASCPO in \Cref{fig: lag ablation}. Observably, as $\lambda$ increases, both the cost rate and cost value exhibit a notable decrease in the Lagrangian method. However, this reduction in convergence speed of rewards is concurrently observed. Conversely, ASCPO demonstrates the swiftest convergence alongside the most favorable convergence values, achieving notable progress in both reward convergence and cost reduction. Moreover, ASCPO performs comparably, if not superiorly, in terms of cost rate. These results underscore the inadequacy of simplistic coefficient adjustments within the Lagrangian method when compared to the efficacy of our algorithm.

\begin{table}[p]
\caption{Metrics of three \textbf{Point-Hazard} environments obtained from the final epoch.}
%line1
\begin{subtable}[tb]{4cm}
\caption{Point-1-Hazard}
% \vspace{-0.1in}
\begin{center}
\vskip -0.1in
\resizebox{!}{1.3cm}{
\begin{tabular}{c|ccc}
\toprule
\textbf{Algorithm} & $\bar{J}_r$ & $\bar{M}_c$ & $\bar{\rho}_c$\\
\hline\\[-0.8em]
TRPO & 2.3840 & 0.0697 & 0.0010 \\ 
TRPO-Lagrangian & 2.3777 & 0.0596 & 0.0009 \\ 
TRPO-SL & 2.3614 & 0.0480 & 0.0009 \\ 
TRPO-USL & 2.3859 & 0.0894 & 0.0007 \\ 
TRPO-IPO & 2.3580 & 0.0719 & 0.0009 \\ 
TRPO-FAC & 2.3646 & 0.0516 & 0.0007 \\ 
CPO & \textbf{2.3937} & 0.0315 & 0.0005 \\ 
PCPO & 2.3831 & 0.0565 & 0.0006 \\ 
SCPO & 1.2580 & 0.0021 & \textbf{0.0001} \\ 
TRPO-CVaR & 2.3887 & 0.0764 & 0.0009 \\ 
CPO-CVaR & 2.3459 & 0.0539 & 0.0008 \\ 
ASCPO & 2.3415 & \textbf{0.0010} & 0.0002 \\ 

\bottomrule
\end{tabular}}
\label{tab: point_hazard_1}
\end{center}
\end{subtable}
\hfill
\begin{subtable}[tb]{4cm}
\caption{Point-4-Hazard}
\vspace{-0.1in}
\begin{center}
\resizebox{!}{1.3cm}{
\begin{tabular}{c|ccc}
\toprule
\textbf{Algorithm} & $\bar{J}_r$ & $\bar{M}_c$ & $\bar{\rho}_c$\\
\hline\\[-0.8em]
TRPO & 2.3397 & 0.2697 & 0.0038 \\ 
TRPO-Lagrangian & 2.3797 & 0.2197 & 0.0033 \\ 
TRPO-SL & 2.4054 & 0.4020 & 0.0035 \\ 
TRPO-USL & 2.4083 & 0.2428 & 0.0029 \\ 
TRPO-IPO & 2.3807 & 0.2349 & 0.0034 \\ 
TRPO-FAC & 2.3806 & 0.1225 & 0.0025 \\ 
CPO & 2.4243 & 0.0924 & 0.0019 \\ 
PCPO & 2.4123 & 0.1107 & 0.0023 \\ 
SCPO & 2.3686 & 0.0313 & 0.0011 \\ 
TRPO-CVaR & 2.4179 & 0.3165 & 0.0036 \\ 
CPO-CVaR & 2.3999 & 0.2846 & 0.0036 \\ 
ASCPO & \textbf{2.4272} & \textbf{0.0290} & \textbf{0.0007} \\ 
\bottomrule
\end{tabular}}
\label{tab: point_hazard_4}
\end{center}
\end{subtable}
\hfill
\begin{subtable}[tb]{4cm}
\caption{Point-8-Hazard}
\vspace{-0.1in}
\begin{center}
\resizebox{!}{1.3cm}{
\begin{tabular}{c|ccc}
\toprule
\textbf{Algorithm} & $\bar{J}_r$ & $\bar{M}_c$ & $\bar{\rho}_c$\\
\hline\\[-0.8em]
TRPO & 2.4603 & 0.6222 & 0.0073 \\ 
TRPO-Lagrangian & 2.4067 & 0.4531 & 0.0064 \\ 
TRPO-SL & 2.4465 & 0.9176 & 0.0069 \\ 
TRPO-USL & 2.3686 & 0.5688 & 0.0067 \\ 
TRPO-IPO & 2.4138 & 0.5673 & 0.0072 \\ 
TRPO-FAC & 2.3708 & 0.2330 & 0.0049 \\ 
CPO & 2.4412 & 0.1961 & 0.0036 \\ 
PCPO & 2.4205 & 0.3953 & 0.0047 \\ 
SCPO & 2.3666 & 0.0919 & 0.0016 \\ 
TRPO-CVaR & 2.3783 & 0.5255 & 0.0075 \\ 
CPO-CVaR & 2.3685 & 0.4479 & 0.0069 \\ 
ASCPO & \textbf{2.4785} & \textbf{0.0413} & \textbf{0.0013} \\ 
\bottomrule
\end{tabular}}
\label{tab: point_hazard_8}
\end{center}
\end{subtable}
\hfill
\label{tab: point_hazard}
\end{table}

\begin{table}[p]
\caption{Metrics of three \textbf{Point-Pillar} experiments obtained from the final epoch.}
%line1
\begin{subtable}[tb]{4cm}
\caption{Point-1-Pillar}
\vspace{-0.1in}
\begin{center}
\resizebox{!}{1.3cm}{
\begin{tabular}{c|ccc}
\toprule
\textbf{Algorithm} & $\bar{J}_r$ & $\bar{M}_c$ & $\bar{\rho}_c$\\
\hline\\[-0.8em]
TRPO & 2.3742 & 0.1181 & 0.0015 \\ 
TRPO-Lagrangian & 2.3823 & 0.0694 & 0.0012 \\ 
TRPO-SL & 2.3721 & 0.0319 & 0.0008 \\ 
TRPO-USL & 2.3905 & 0.1753 & 0.0011 \\ 
TRPO-IPO & 2.3702 & 0.1594 & 0.0013 \\ 
TRPO-FAC & 2.3745 & 0.0595 & 0.0008 \\ 
CPO & 2.3779 & 0.0657 & 0.0011 \\ 
PCPO & 2.3634 & 0.0809 & 0.0012 \\ 
SCPO & 2.4085 & 0.0523 & \textbf{0.0006} \\ 
TRPO-CVaR & 2.3648 & 0.0724 & 0.0010 \\ 
CPO-CVaR & 2.3650 & 0.1990 & 0.0012 \\ 
ASCPO & \textbf{2.4202} & \textbf{0.0272} & 0.0007 \\ 
\bottomrule
\end{tabular}}
\label{tab: point_pillar_1}
\end{center}
\end{subtable}
\hfill
\begin{subtable}[tb]{4cm}
\caption{Point-4-Pillar}
\vspace{-0.1in}
\begin{center}
\resizebox{!}{1.3cm}{
\begin{tabular}{c|ccc}
\toprule
\textbf{Algorithm} & $\bar{J}_r$ & $\bar{M}_c$ & $\bar{\rho}_c$\\
\hline\\[-0.8em]
TRPO & 2.4169 & 0.4133 & 0.0057 \\ 
TRPO-Lagrangian & 2.3515 & 0.3376 & 0.0061 \\ 
TRPO-SL & 2.3071 & 0.2210 & \textbf{0.0027} \\ 
TRPO-USL & 2.4058 & 0.5679 & 0.0044 \\ 
TRPO-IPO & 2.4075 & 0.2745 & 0.0051 \\ 
TRPO-FAC & 2.3883 & \textbf{0.1982} & 0.0032 \\ 
CPO & 2.4012 & 0.2964 & 0.0060 \\ 
PCPO & 2.3992 & 0.3868 & 0.0065 \\ 
SCPO & 2.3918 & 0.2512 & 0.0042 \\ 
TRPO-CVaR & 2.3960 & 0.3610 & 0.0063 \\ 
CPO-CVaR & 2.3970 & 0.3109 & 0.0056 \\ 
ASCPO & \textbf{2.4405} & 0.2341 & 0.0041 \\ 
\bottomrule
\end{tabular}}
\label{tab: point_pillar_4}
\end{center}
\end{subtable}
\hfill
\begin{subtable}[tb]{4cm}
\caption{Point-8-Pillar}
\vspace{-0.1in}
\begin{center}
\resizebox{!}{1.3cm}{
\begin{tabular}{c|ccc}
\toprule
\textbf{Algorithm} & $\bar{J}_r$ & $\bar{M}_c$ & $\bar{\rho}_c$\\
\hline\\[-0.8em]
TRPO & 2.3295 & 2.3581 & 0.0216 \\ 
TRPO-Lagrangian & 2.3494 & 0.5958 & 0.0114 \\ 
TRPO-SL & 2.3756 & 0.5944 & \textbf{0.0066} \\ 
TRPO-USL & 2.3309 & 0.7557 & 0.0133 \\ 
TRPO-IPO & 2.3824 & 1.1487 & 0.0140 \\ 
TRPO-FAC & 2.4024 & \textbf{0.3398} & 0.0082 \\ 
CPO & 2.4128 & 0.8449 & 0.0157 \\ 
PCPO & 2.4003 & 4.6241 & 0.0193 \\ 
SCPO & 2.3897 & 3.2559 & 0.0129 \\ 
TRPO-CVaR & 2.4081 & 1.1789 & 0.0166 \\ 
CPO-CVaR & 2.3700 & 1.1933 & 0.0160 \\ 
ASCPO & \textbf{2.4202} & 0.4341 & 0.0106 \\ 
\bottomrule
\end{tabular}}
\label{tab: point_pillar_8}
\end{center}
\end{subtable}
\hfill
\label{tab: point_pillar}
\end{table}

\begin{table}[p]
\caption{Metrics of three \textbf{Point-Ghost} experiments obtained from the final epoch.}
%line1
\begin{subtable}[tb]{4cm}
\caption{Point-1-Ghost}
\vspace{-0.1in}
\begin{center}
\resizebox{!}{1.3cm}{
\begin{tabular}{c|ccc}
\toprule
\textbf{Algorithm} & $\bar{J}_r$ & $\bar{M}_c$ & $\bar{\rho}_c$\\
\hline\\[-0.8em]
TRPO & 2.4178 & 0.0854 & 0.0008 \\ 
TRPO-Lagrangian & 2.3868 & 0.0396 & 0.0008 \\ 
TRPO-SL & 2.3880 & 0.0256 & 0.0006 \\ 
TRPO-USL & 2.3867 & 0.0487 & 0.0006 \\ 
TRPO-IPO & 2.4201 & 0.0737 & 0.0007 \\ 
TRPO-FAC & 2.3861 & 0.0438 & 0.0006 \\ 
CPO & 2.3718 & 0.0220 & 0.0005 \\ 
PCPO & \textbf{2.4337} & 0.0579 & 0.0006 \\ 
SCPO & 1.4640 & 0.0000 & 0.0001 \\ 
TRPO-CVaR & 2.3739 & 0.0716 & 0.0009 \\ 
CPO-CVaR & 2.4110 & 0.0653 & 0.0007 \\ 
ASCPO & 2.4014 & \textbf{0.0000} & \textbf{0.0001} \\ 
\bottomrule
\end{tabular}}
\label{tab: point_ghost_1}
\end{center}
\end{subtable}
\hfill
\begin{subtable}[tb]{4cm}
\caption{Point-4-Ghost}
\vspace{-0.1in}
\begin{center}
\resizebox{!}{1.3cm}{
\begin{tabular}{c|ccc}
\toprule
\textbf{Algorithm} & $\bar{J}_r$ & $\bar{M}_c$ & $\bar{\rho}_c$\\
\hline\\[-0.8em]
TRPO & 2.4035 & 0.2809 & 0.0035 \\ 
TRPO-Lagrangian & 2.3938 & 0.2694 & 0.0034 \\ 
TRPO-SL & 2.3657 & 0.4658 & 0.0036 \\ 
TRPO-USL & 2.4127 & 0.1475 & 0.0027 \\ 
TRPO-IPO & 2.3940 & 0.3212 & 0.0031 \\ 
TRPO-FAC & 2.4002 & 0.1206 & 0.0022 \\ 
CPO & 2.4004 & 0.0962 & 0.0017 \\ 
PCPO & 2.4123 & 0.1101 & 0.0020 \\ 
SCPO & 2.2944 & 0.0377 & 0.0009 \\ 
TRPO-CVaR & 2.4487 & 0.3079 & 0.0034 \\ 
CPO-CVaR & 2.3877 & 0.2340 & 0.0032 \\ 
ASCPO & \textbf{2.4503} & \textbf{0.0262} & \textbf{0.0006} \\ 
\bottomrule
\end{tabular}}
\label{tab: point_ghost_4}
\end{center}
\end{subtable}
\hfill
\begin{subtable}[tb]{4cm}
\caption{Point-8-Ghost}
\vspace{-0.1in}
\begin{center}
\resizebox{!}{1.3cm}{
\begin{tabular}{c|ccc}
\toprule
\textbf{Algorithm} & $\bar{J}_r$ & $\bar{M}_c$ & $\bar{\rho}_c$\\
\hline\\[-0.8em]
TRPO & 2.4178 & 0.5109 & 0.0069 \\ 
TRPO-Lagrangian & 2.3985 & 0.4600 & 0.0063 \\ 
TRPO-SL & 2.3613 & 0.8013 & 0.0067 \\ 
TRPO-USL & 2.4162 & 0.4935 & 0.0061 \\ 
TRPO-IPO & 2.4118 & 0.5494 & 0.0067 \\ 
TRPO-FAC & 2.3956 & 0.2185 & 0.0046 \\ 
CPO & 2.4170 & 0.1653 & 0.0033 \\ 
PCPO & 2.4043 & 0.2444 & 0.0046 \\ 
SCPO & 2.3784 & 0.0904 & 0.0022 \\ 
TRPO-CVaR & \textbf{2.4333} & 0.5494 & 0.0070 \\ 
CPO-CVaR & 2.4280 & 0.5318 & 0.0066 \\ 
ASCPO & 2.4239 & \textbf{0.0321} & \textbf{0.0013} \\ 
\bottomrule
\end{tabular}}
\label{tab: point_ghost_8}
\end{center}
\end{subtable}
\hfill
\label{tab: point_ghost}
\end{table}

\begin{table}[p]
\caption{Metrics of three \textbf{Swimmer-Hazard} experiments obtained from the final epoch.}
%line1
\begin{subtable}[tb]{4cm}
\caption{Swimmer-1-Hazard}
\vspace{-0.1in}
\begin{center}
\resizebox{!}{1.3cm}{
\begin{tabular}{c|ccc}
\toprule
\textbf{Algorithm} & $\bar{J}_r$ & $\bar{M}_c$ & $\bar{\rho}_c$\\
\hline\\[-0.8em]
TRPO & \textbf{2.4834} & 0.0908 & 0.0013 \\ 
TRPO-Lagrangian & 2.3957 & 0.0813 & 0.0013 \\ 
TRPO-SL & 2.3730 & 0.0699 & 0.0021 \\ 
TRPO-USL & 2.4425 & 0.0605 & 0.0010 \\ 
TRPO-IPO & 2.4667 & 0.0777 & 0.0012 \\ 
TRPO-FAC & 2.4553 & 0.0708 & 0.0011 \\ 
CPO & 2.4484 & 0.0907 & 0.0011 \\ 
PCPO & 2.4348 & 0.0957 & 0.0012 \\ 
SCPO & 2.3924 & 0.0885 & 0.0011 \\ 
TRPO-CVaR & 2.4332 & 0.0759 & 0.0012 \\ 
CPO-CVaR & 2.4276 & 0.0847 & 0.0012 \\ 
ASCPO & 2.4531 & \textbf{0.0000} & \textbf{0.0002} \\ 
\bottomrule
\end{tabular}}
\label{tab: swimmer_hazard_1}
\end{center}
\end{subtable}
\hfill
\begin{subtable}[tb]{4cm}
\caption{Swimmer-4-Hazard}
\vspace{-0.1in}
\begin{center}
\resizebox{!}{1.3cm}{
\begin{tabular}{c|ccc}
\toprule
\textbf{Algorithm} & $\bar{J}_r$ & $\bar{M}_c$ & $\bar{\rho}_c$\\
\hline\\[-0.8em]
TRPO & 2.4441 & 0.3554 & 0.0051 \\ 
TRPO-Lagrangian & \textbf{2.4568} & 0.3284 & 0.0050 \\ 
TRPO-SL & 2.2038 & 0.9845 & 0.0068 \\ 
TRPO-USL & 2.4148 & 0.3786 & 0.0046 \\ 
TRPO-IPO & 2.4327 & 0.3464 & 0.0049 \\ 
TRPO-FAC & 2.4413 & 0.3097 & 0.0047 \\ 
CPO & 2.4172 & 0.3648 & 0.0045 \\ 
PCPO & 2.3920 & 0.2950 & 0.0047 \\ 
SCPO & 2.4096 & 0.3636 & 0.0043 \\ 
TRPO-CVaR & 2.4183 & 0.2984 & 0.0051 \\ 
CPO-CVaR & 2.4208 & 0.3456 & 0.0050 \\ 
ASCPO & 2.4233 & \textbf{0.2933} & \textbf{0.0040} \\ 
\bottomrule
\end{tabular}}
\label{tab: swimmer_hazard_4}
\end{center}
\end{subtable}
\hfill
\begin{subtable}[tb]{4cm}
\caption{Swimmer-8-Hazard}
\vspace{-0.1in}
\begin{center}
\resizebox{!}{1.3cm}{
\begin{tabular}{c|ccc}
\toprule
\textbf{Algorithm} & $\bar{J}_r$ & $\bar{M}_c$ & $\bar{\rho}_c$\\
\hline\\[-0.8em]
TRPO & 2.4218 & 0.7754 & 0.0103 \\ 
TRPO-Lagrangian & 2.4559 & 0.6977 & 0.0099 \\ 
TRPO-SL & 2.4321 & 2.3075 & 0.0111 \\ 
TRPO-USL & 2.4297 & \textbf{0.5784} & 0.0093 \\ 
TRPO-IPO & 2.4396 & 0.6749 & 0.0098 \\ 
TRPO-FAC & 2.4042 & 0.6703 & 0.0096 \\ 
CPO & 2.4433 & 0.8106 & 0.0088 \\ 
PCPO & 2.4388 & 0.6881 & 0.0096 \\ 
SCPO & 2.4581 & 1.0087 & 0.0089 \\ 
TRPO-CVaR & 2.4535 & 0.7250 & 0.0102 \\ 
CPO-CVaR & 2.4620 & 0.7371 & 0.0103 \\ 
ASCPO & \textbf{2.4889} & 0.7629 & \textbf{0.00751} \\ 
\bottomrule
\end{tabular}}
\label{tab: swimmer_hazard_8}
\end{center}
\end{subtable}
\hfill
\label{tab: swimmer_hazard}
\end{table}

\begin{table}[p]
\caption{Metrics of three \textbf{Drone-3DHazard} experiments obtained from the final epoch.}
%line1
\begin{subtable}[tb]{4cm}
\caption{Drone-1-3DHazard}
\vspace{-0.1in}
\begin{center}
\resizebox{!}{1.3cm}{
\begin{tabular}{c|ccc}
\toprule
\textbf{Algorithm} & $\bar{J}_r$ & $\bar{M}_c$ & $\bar{\rho}_c$\\
\hline\\[-0.8em]
TRPO & 2.4316 & 0.0504 & 0.0002 \\ 
TRPO-Lagrangian & 2.4052 & 0.0125 & 0.0001 \\ 
TRPO-SL & 2.3553 & 0.0227 & 0.0000 \\ 
TRPO-USL & 2.4022 & 0.0157 & 0.0001 \\ 
TRPO-IPO & 2.3784 & 0.0148 & 0.0002 \\ 
TRPO-FAC & 2.4197 & 0.0126 & 0.0001 \\ 
CPO & \textbf{2.4345} & 0.0044 & 0.0002 \\ 
PCPO & 2.3066 & 0.0556 & 0.0001 \\ 
SCPO & 2.3306 & 0.0104 & 0.0001 \\ 
TRPO-CVaR & 2.4244 & 0.0183 & 0.0001 \\ 
CPO-CVaR & 2.3877 & 0.0172 & 0.0001 \\ 
ASCPO & 2.4181 & \textbf{0.0039} & \textbf{0.0000} \\ 
\bottomrule
\end{tabular}}
\label{tab: drone_3Dhazard_1}
\end{center}
\end{subtable}
\hfill
\begin{subtable}[tb]{4cm}
\caption{Drone-4-3DHazard}
\vspace{-0.1in}
\begin{center}
\resizebox{!}{1.3cm}{
\begin{tabular}{c|ccc}
\toprule
\textbf{Algorithm} & $\bar{J}_r$ & $\bar{M}_c$ & $\bar{\rho}_c$\\
\hline\\[-0.8em]
TRPO & 2.4336 & 0.0749 & 0.0005 \\ 
TRPO-Lagrangian & 2.4330 & 0.0644 & 0.0006 \\ 
TRPO-SL & 2.3883 & 0.0059 & 0.0002 \\ 
TRPO-USL & 2.4322 & 0.0545 & 0.0004 \\ 
TRPO-IPO & 2.3787 & 0.0497 & 0.0006 \\ 
TRPO-FAC & 2.4153 & 0.0663 & 0.0004 \\ 
CPO & \textbf{2.4602} & 0.0640 & 0.0006 \\ 
PCPO & 2.3981 & 0.0672 & 0.0004 \\ 
SCPO & 2.3430 & 0.0233 & 0.0002 \\ 
TRPO-CVaR & 2.4513 & 0.0700 & 0.0006 \\ 
CPO-CVaR & 2.4458 & 0.0906 & 0.0005 \\ 
ASCPO & 2.3558 & \textbf{0.0059} & \textbf{0.0001} \\ 
\bottomrule
\end{tabular}}
\label{tab: drone_3Dhazard_4}
\end{center}
\end{subtable}
\hfill
\begin{subtable}[tb]{4cm}
\caption{Drone-8-3DHazard}
\vspace{-0.1in}
\begin{center}
\resizebox{!}{1.3cm}{
\begin{tabular}{c|ccc}
\toprule
\textbf{Algorithm} & $\bar{J}_r$ & $\bar{M}_c$ & $\bar{\rho}_c$\\
\hline\\[-0.8em]
TRPO & 2.4727 & 0.2310 & 0.0018 \\ 
TRPO-Lagrangian & 2.4760 & 0.1952 & 0.0013 \\ 
TRPO-SL & 2.3681 & 0.0225 & 0.0005 \\ 
TRPO-USL & 2.4470 & 0.1275 & 0.0010 \\ 
TRPO-IPO & 2.4245 & 0.1355 & 0.0010 \\ 
TRPO-FAC & 2.4172 & 0.0956 & 0.0008 \\ 
CPO & 2.4092 & 0.1018 & 0.0009 \\ 
PCPO & 1.2867 & 0.2227 & \textbf{0.0001} \\ 
SCPO & \textbf{2.4766} & 0.0729 & 0.0005 \\ 
TRPO-CVaR & 2.4068 & 0.1244 & 0.0014 \\ 
CPO-CVaR & 2.4219 & 0.1809 & 0.0010 \\ 
ASCPO & 2.3960 & \textbf{0.0126} & 0.0003 \\ 
\bottomrule
\end{tabular}}
\label{tab: drone_3Dhazard_8}
\end{center}
\end{subtable}
\hfill
\label{tab: drone_3Dhazard}
\end{table}

\begin{table}[p]
\caption{Metrics of \textbf{Arm3-Hazard}, \textbf{Humanoid-Hazard}, \textbf{Ant-Hazard}, \textbf{Walker-Hazard} experiments obtained from the final epoch.}
%line1
\centering
\begin{subtable}[tb]{5cm}
\caption{Arm3-8-Hazard}
\vspace{-0.1in}
\begin{center}
\resizebox{!}{1.3cm}{
\begin{tabular}{c|ccc}
\toprule
\textbf{Algorithm} & $\bar{J}_r$ & $\bar{M}_c$ & $\bar{\rho}_c$\\
\hline\\[-0.8em]
TRPO & 2.2280 & 1.0687 & 0.0136 \\ 
TRPO-Lagrangian & 2.2478 & 0.3002 & \textbf{0.0064} \\ 
TRPO-SL & 1.8348 & 6.8884 & 0.0109 \\ 
TRPO-USL & \textbf{2.2777} & 1.0435 & 0.0125 \\ 
TRPO-IPO & 2.2583 & 1.8212 & 0.0168 \\ 
TRPO-FAC & 2.2672 & 0.5234 & 0.0072 \\ 
CPO & 2.2449 & 0.5486 & 0.0084 \\ 
PCPO & 2.2567 & 0.9550 & 0.0099 \\ 
SCPO & 2.2536 & 0.6708 & 0.0100 \\ 
TRPO-CVaR & 2.2496 & 2.0251 & 0.0233 \\ 
CPO-CVaR & 2.2189 & 2.3813 & 0.0248 \\ 
ASCPO & 2.3452 & \textbf{0.2658} & 0.0075 \\ 
\bottomrule
\end{tabular}}
\label{tab: arm3_hazard_8}
\end{center}
\end{subtable}
% \hfill
\begin{subtable}[tb]{5cm}
\caption{Humanoid-8-Hazard}
\vspace{-0.1in}
\begin{center}
\resizebox{!}{1.3cm}{
\begin{tabular}{c|ccc}
\toprule
\textbf{Algorithm} & $\bar{J}_r$ & $\bar{M}_c$ & $\bar{\rho}_c$\\
\hline\\[-0.8em]
TRPO & 2.2280 & 1.0687 & 0.0136 \\ 
TRPO-Lagrangian & 2.2478 & 0.3002 & 0.0064 \\ 
TRPO-SL & 1.8348 & 6.8884 & 0.0109 \\ 
TRPO-USL & 2.2452 & 1.0435 & 0.0125 \\ 
TRPO-IPO & 2.2583 & 1.8212 & 0.0168 \\ 
TRPO-FAC & 2.2672 & 0.5234 & 0.0072 \\ 
CPO & 2.2449 & 0.5486 & 0.0084 \\ 
PCPO & 2.2567 & 0.9550 & 0.0099 \\ 
SCPO & 2.2536 & 0.3354 & 0.0100 \\ 
TRPO-CVaR & 2.2496 & 2.0251 & 0.0233 \\ 
CPO-CVaR & 2.2189 & 2.3813 & 0.0248 \\ 
ASCPO & \textbf{2.2777} & \textbf{0.2658} & \textbf{0.0075} \\ 
\bottomrule
\end{tabular}}
\label{tab: humanoid_hazard_8}
\end{center}
\end{subtable}
\begin{subtable}[tb]{5cm}
\caption{Ant-8-Hazard}
\vspace{-0.1in}
\begin{center}
\resizebox{!}{1.3cm}{
\begin{tabular}{c|ccc}
\toprule
\textbf{Algorithm} & $\bar{J}_r$ & $\bar{M}_c$ & $\bar{\rho}_c$\\
\hline\\[-0.8em]
TRPO & 2.4243 & 0.4039 & 0.0094 \\ 
TRPO-Lagrangian & 2.4266 & 0.3640 & 0.0084 \\ 
TRPO-SL & 2.2924 & 11.5143 & 0.0500 \\ 
TRPO-USL & \textbf{2.4599} & 0.4651 & 0.0093 \\ 
TRPO-IPO & 2.3908 & 2.0326 & 0.0076 \\ 
TRPO-FAC & 2.4479 & 0.3556 & 0.0079 \\ 
CPO & 2.4364 & 0.3279 & 0.0082 \\ 
PCPO & 2.4259 & 0.3630 & 0.0083 \\ 
SCPO & 2.4244 & 0.2980 & 0.0082 \\ 
TRPO-CVaR & 2.4284 & 0.4308 & 0.0098 \\ 
CPO-CVaR & 2.4404 & 0.5410 & 0.0098 \\ 
ASCPO & 2.4207 & \textbf{0.2010} & \textbf{0.0070} \\ 
\bottomrule
\end{tabular}}
\label{tab: ant_hazard_8}
\end{center}
\end{subtable}
\begin{subtable}[tb]{5cm}
\caption{Walker-8-Hazard}
\vspace{-0.1in}
\begin{center}
\resizebox{!}{1.3cm}{
\begin{tabular}{c|ccc}
\toprule
\textbf{Algorithm} & $\bar{J}_r$ & $\bar{M}_c$ & $\bar{\rho}_c$\\
\hline\\[-0.8em]
TRPO & 2.3856 & 0.4224 & 0.0098 \\ 
TRPO-Lagrangian & 2.4049 & 0.3227 & 0.0083 \\ 
TRPO-SL & 2.4113 & 1.8346 & 0.0256 \\ 
TRPO-USL & 2.4390 & 0.4296 & 0.0096 \\ 
TRPO-IPO & 2.4064 & 0.5490 & 0.0094 \\ 
TRPO-FAC & 2.4363 & 0.3403 & 0.0083 \\ 
CPO & \textbf{2.4429} & 0.3221 & 0.0082 \\ 
PCPO & 2.4131 & 0.2849 & 0.0081 \\ 
SCPO & 2.4173 & 0.2761 & 0.0077 \\ 
TRPO-CVaR & 2.4293 & 0.4696 & 0.0115 \\ 
CPO-CVaR & 2.3853 & 0.4909 & 0.0114 \\ 
ASCPO & 2.4090 & \textbf{0.1936} & \textbf{0.0064} \\ 
\bottomrule
\end{tabular}}
\label{tab: walker_hazard_8}
\end{center}
\end{subtable}
\label{tab: ant_walker_hazard}
\end{table}

\begin{figure}[p]
    \centering
    %1
    \begin{subfigure}[t]{0.32\textwidth}
    \begin{subfigure}[t]{1.00\textwidth}
        \raisebox{-\height}{\includegraphics[height=0.7\textwidth]{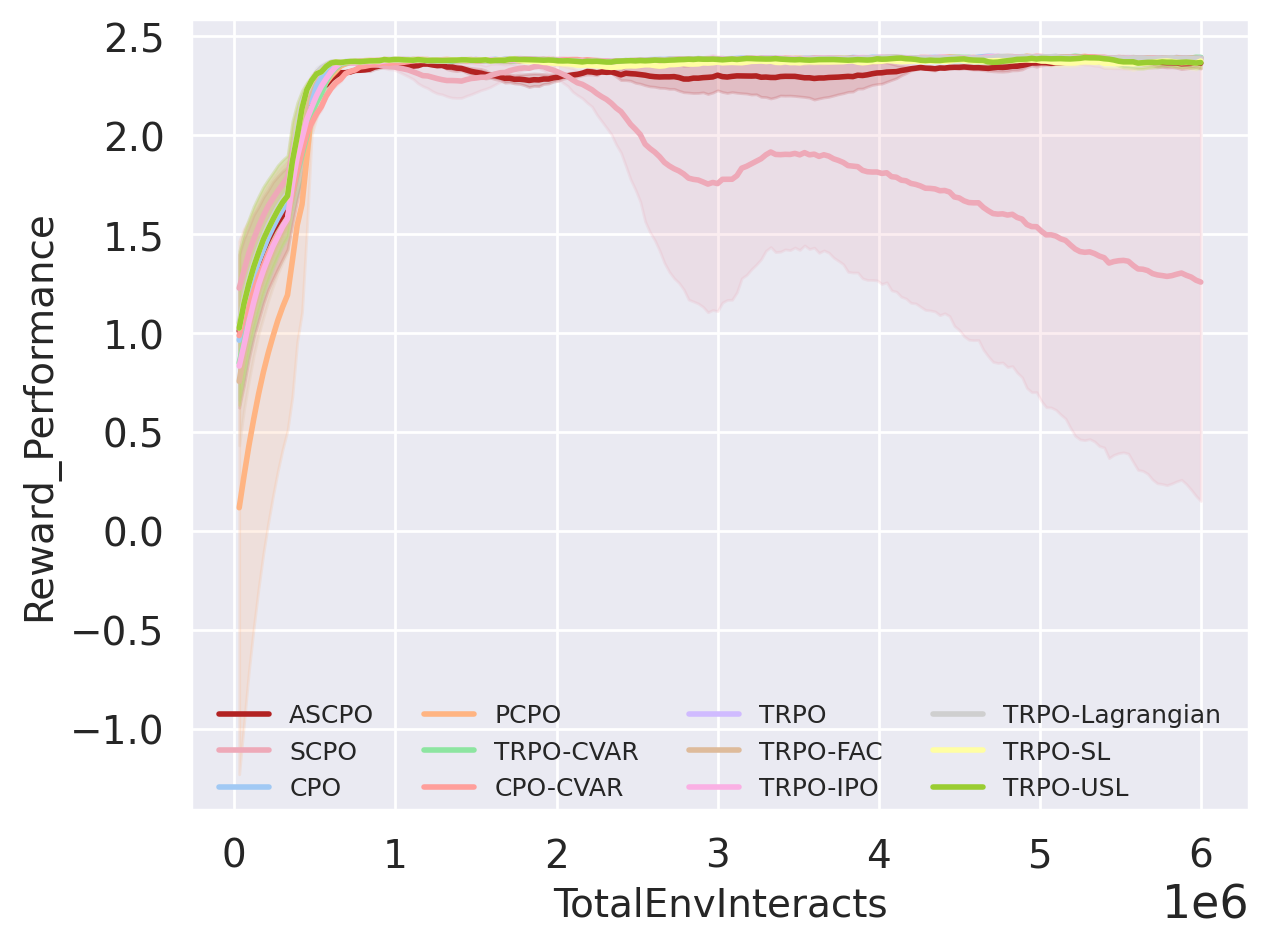}}
        % \caption{Average episode reward.}
        \label{fig:point-hazard-1-Performance}
    \end{subfigure}
    \hfill
    \begin{subfigure}[t]{1.00\textwidth}
        \raisebox{-\height}{\includegraphics[height=0.7\textwidth]{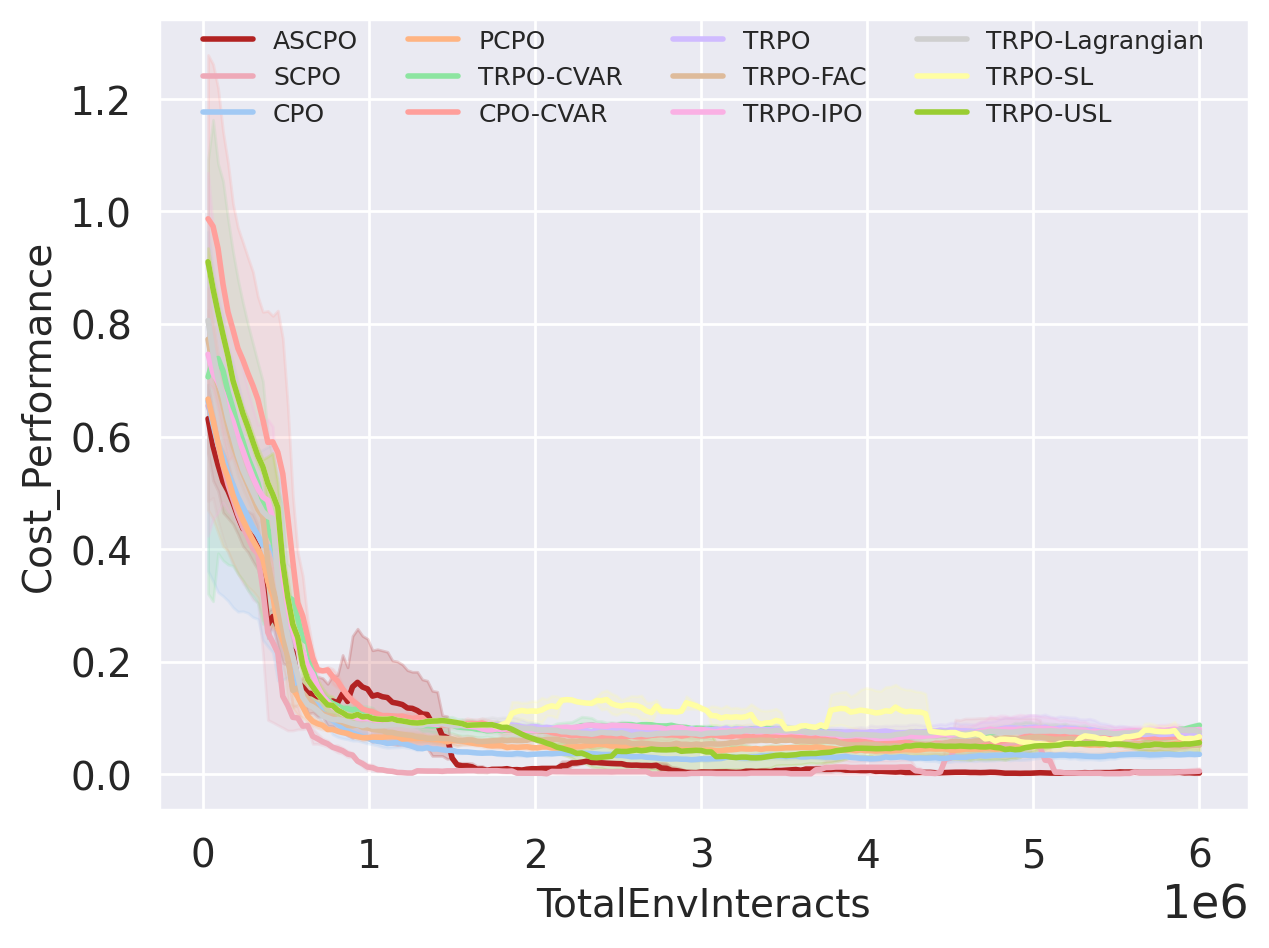}}
    % \caption{Average episode cost.} 
    \label{fig:point-hazard-1-AverageEpCost}
    \end{subfigure}
    \hfill
    \begin{subfigure}[t]{1.00\textwidth}
        \raisebox{-\height}{\includegraphics[height=0.7\textwidth]{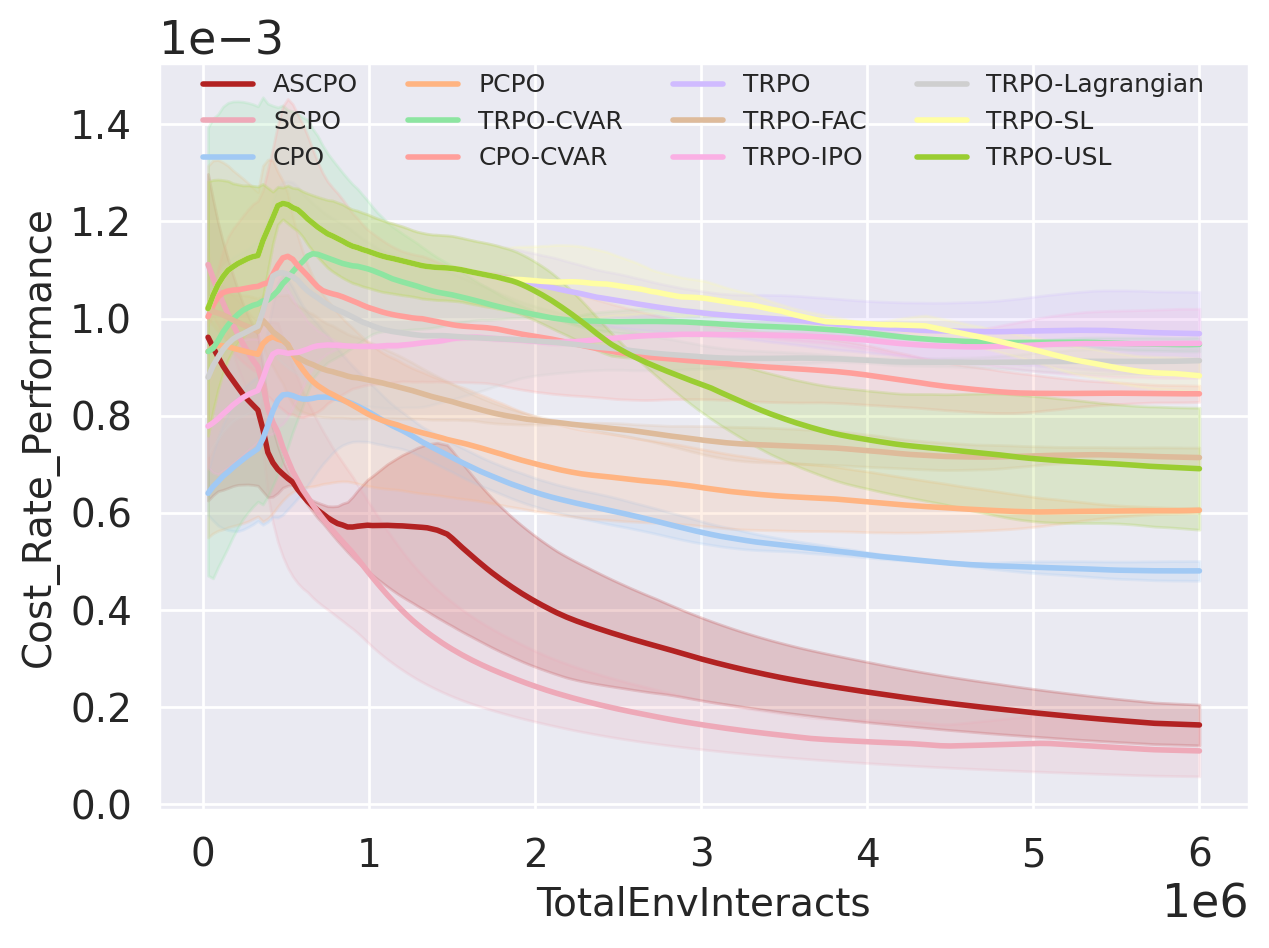}}
    % \caption{Cost rate.} 
    \label{fig:point-hazard-1-CostRate}
    \end{subfigure}
    \caption{Point-1-Hazard}
    \label{fig:point-hazard-1}
    \end{subfigure}
    %2
   \begin{subfigure}[t]{0.32\textwidth}
    \begin{subfigure}[t]{1.00\textwidth}
        \raisebox{-\height}{\includegraphics[height=0.7\textwidth]{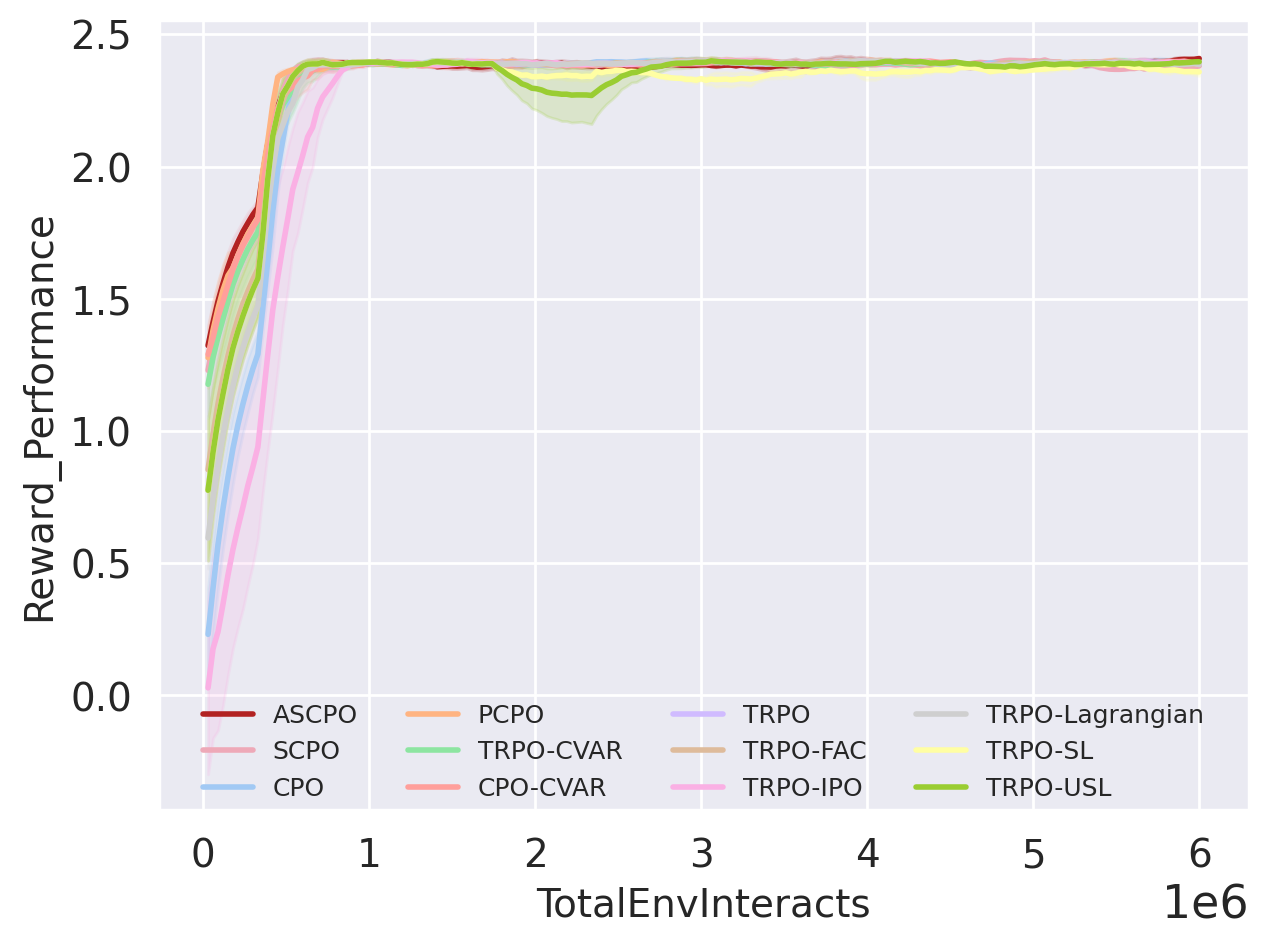}}
        % \caption{Average episode reward.}
        \label{fig:point-hazard-4-Performance}
    \end{subfigure}
    \hfill
    \begin{subfigure}[t]{1.00\textwidth}
        \raisebox{-\height}{\includegraphics[height=0.7\textwidth]{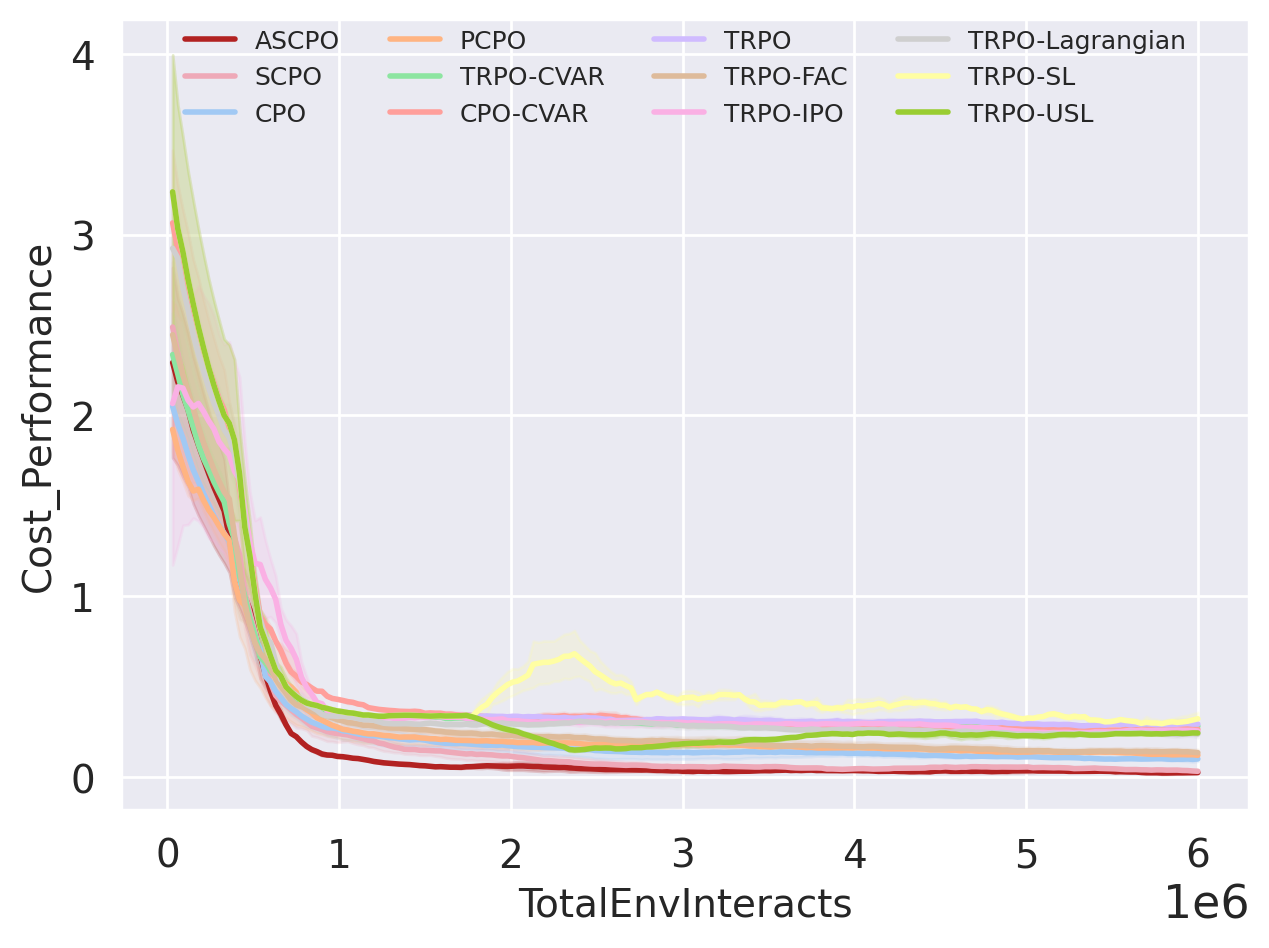}}
    % \caption{Average episode cost.} 
    \label{fig:point-hazard-4-AverageEpCost}
    \end{subfigure}
    \hfill
    \begin{subfigure}[t]{1.00\textwidth}
        \raisebox{-\height}{\includegraphics[height=0.7\textwidth]{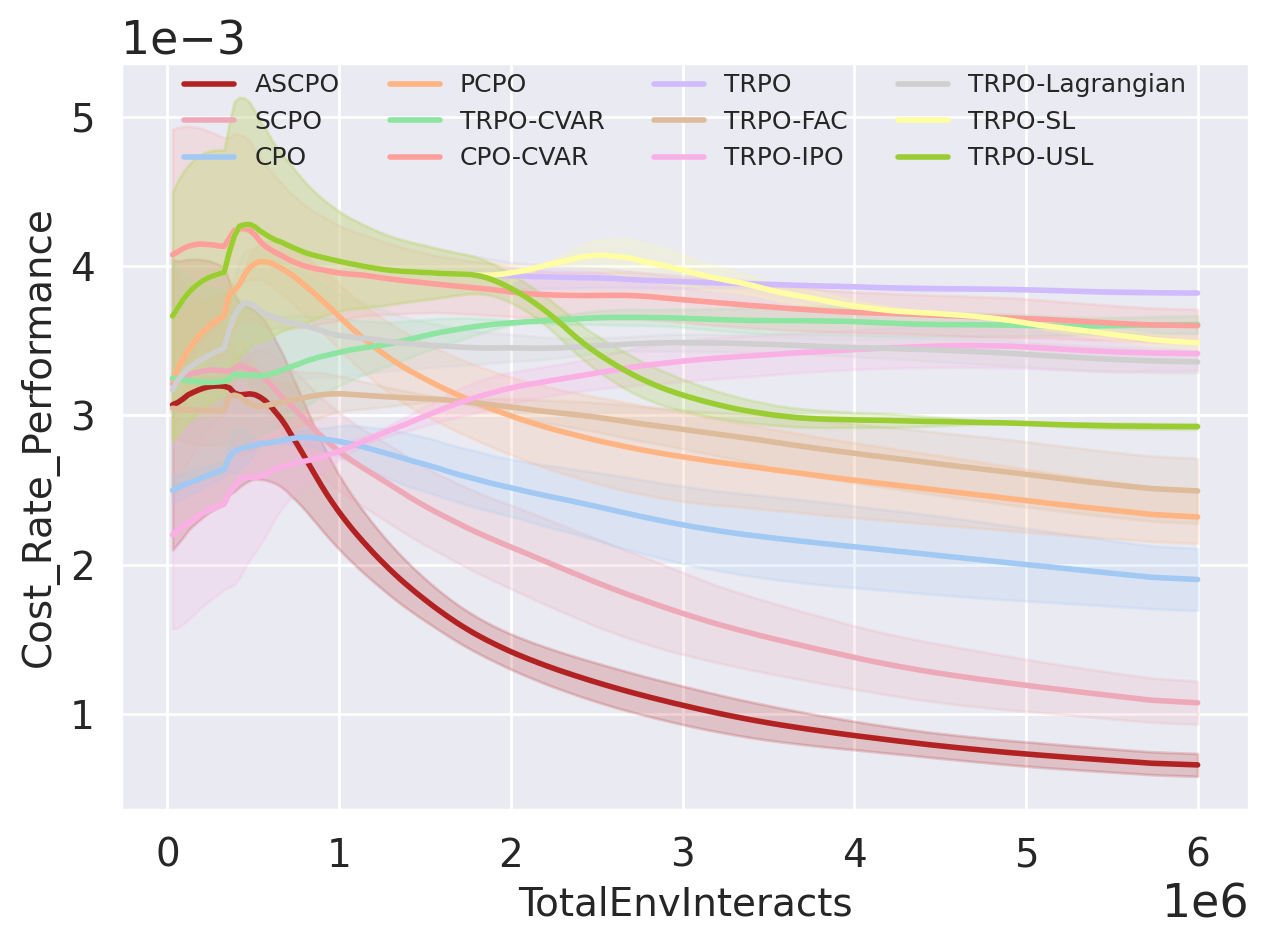}}
    % \caption{Cost rate.} 
    \label{fig:point-hazard-4-CostRate}
    \end{subfigure}
    \caption{Point-4-Hazard}
    \label{fig:point-hazard-4}
    \end{subfigure}
    %3
    \begin{subfigure}[t]{0.32\textwidth}
    \begin{subfigure}[t]{1.00\textwidth}
        \raisebox{-\height}{\includegraphics[height=0.7\textwidth]{fig/guard/Goal_Point_8Hazards/Goal_Point_8Hazards_Reward_Performance.png}}
        % \caption{Average episode reward.}
        \label{fig:point-hazard-8-Performance}
    \end{subfigure}
    \hfill
    \begin{subfigure}[t]{1.00\textwidth}
        \raisebox{-\height}{\includegraphics[height=0.7\textwidth]{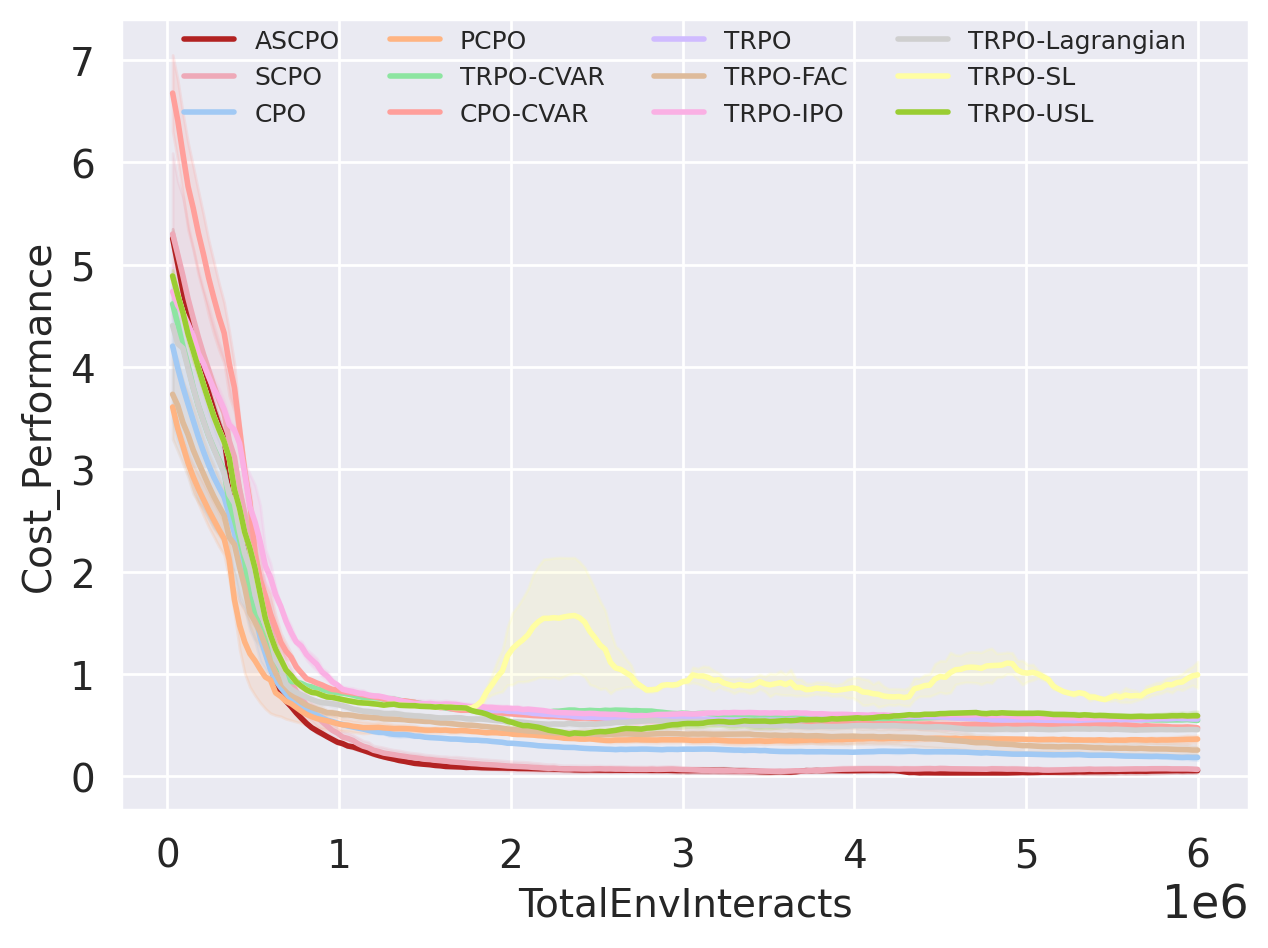}}
    % \caption{Average episode cost.} 
    \label{fig:point-hazard-8-AverageEpCost}
    \end{subfigure}
    \hfill
    \begin{subfigure}[t]{1.00\textwidth}
        \raisebox{-\height}{\includegraphics[height=0.7\textwidth]{fig/guard/Goal_Point_8Hazards/Goal_Point_8Hazards_Cost_Rate_Performance.png}}
    % \caption{Cost rate.} 
    \label{fig:point-hazard-8-CostRate}
    \end{subfigure}
    \caption{Point-8-Hazard}
    \label{fig:point-hazard-8}
    \end{subfigure}
    \caption{Point-Hazard} 
    \label{fig:exp-point-hazard}
\end{figure}

\begin{figure}[p]
    \centering
    %1
    \begin{subfigure}[t]{0.32\textwidth}
    \begin{subfigure}[t]{1.00\textwidth}
        \raisebox{-\height}{\includegraphics[height=0.7\textwidth]{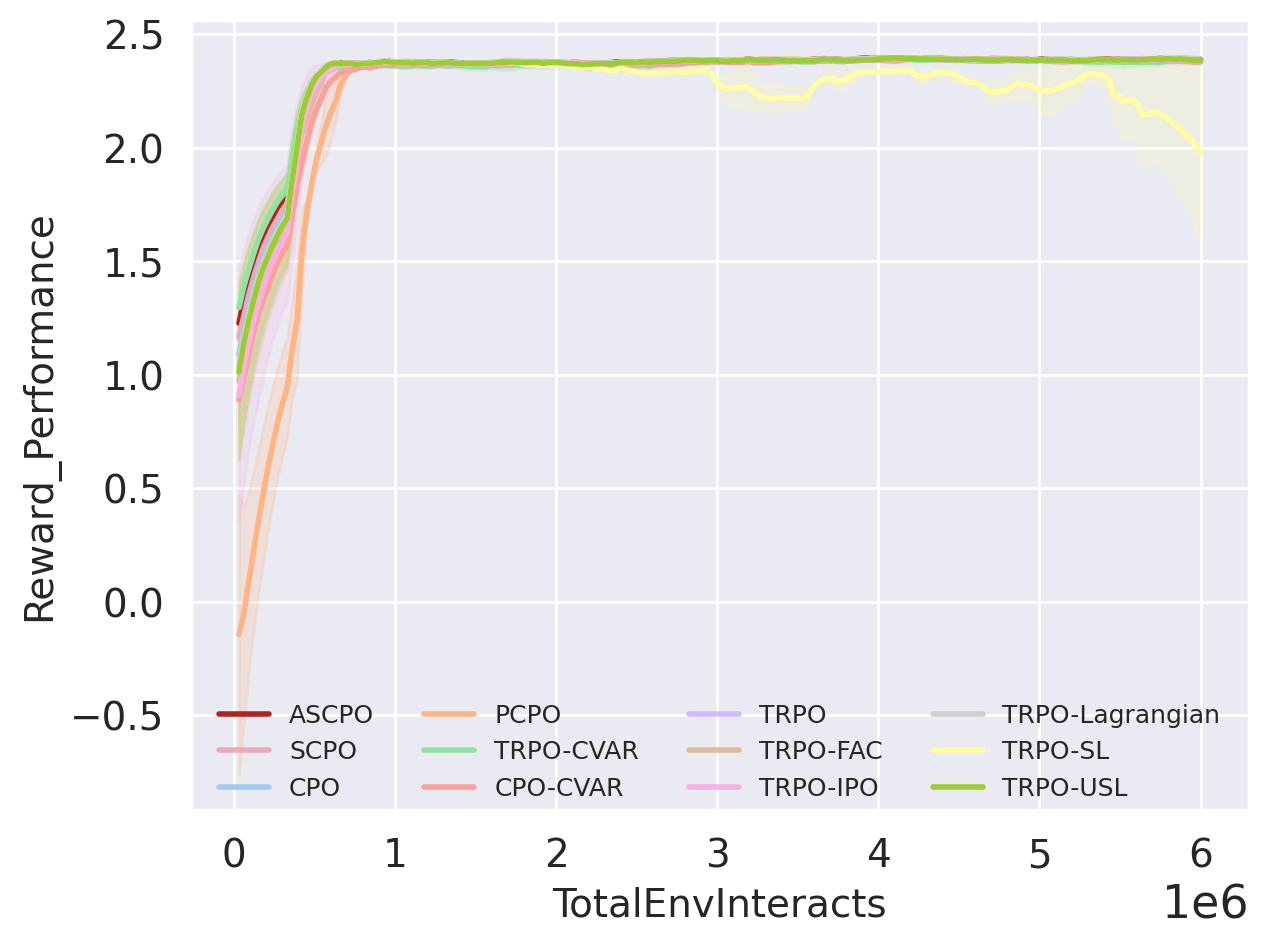}}
        % \caption{Average episode reward.}
        \label{fig:point-pillar-1-Performance}
    \end{subfigure}
    \hfill
    \begin{subfigure}[t]{1.00\textwidth}
        \raisebox{-\height}{\includegraphics[height=0.7\textwidth]{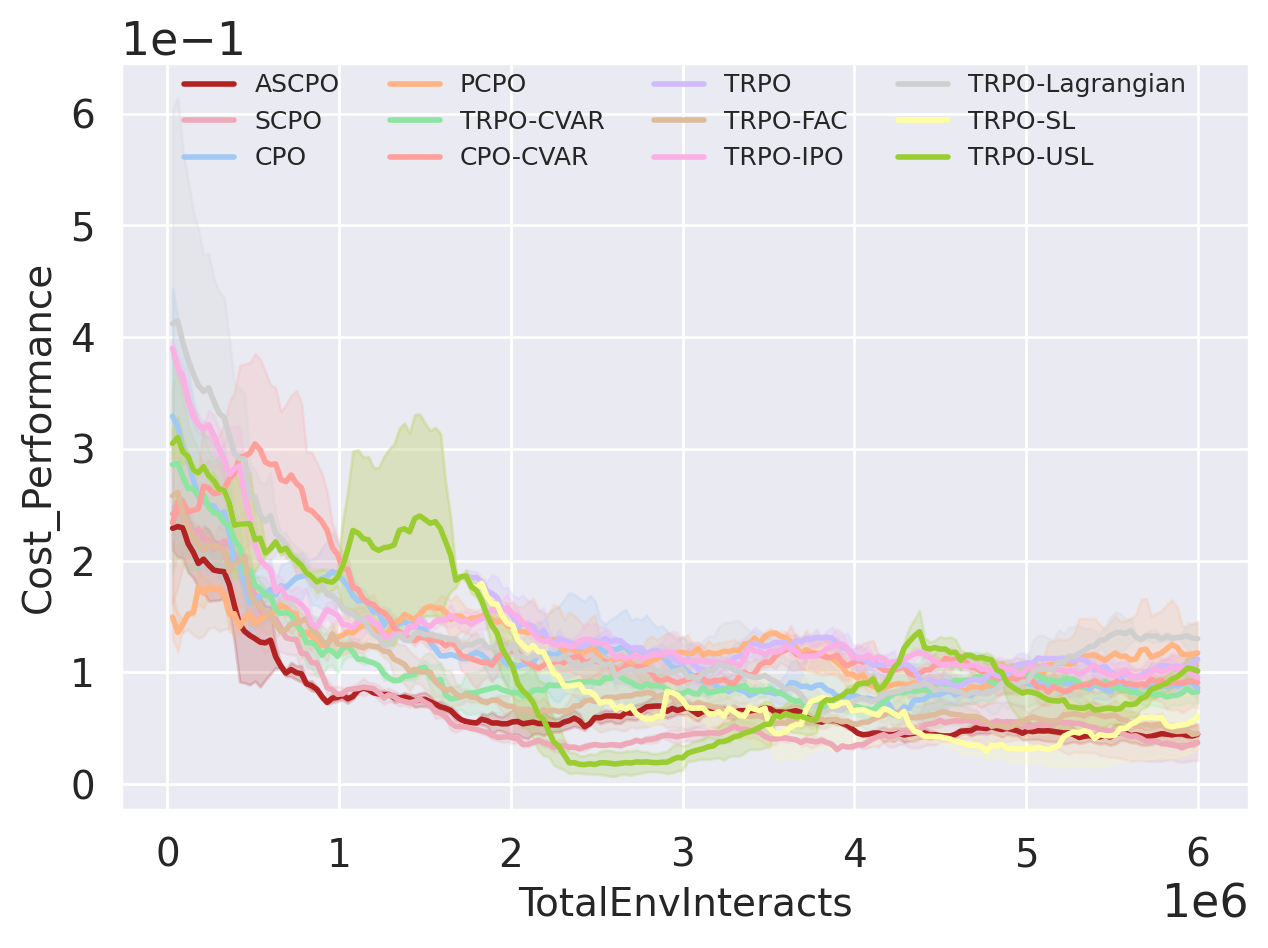}}
    % \caption{Average episode cost.} 
    \label{fig:point-pillar-1-AverageEpCost}
    \end{subfigure}
    \hfill
    \begin{subfigure}[t]{1.00\textwidth}
        \raisebox{-\height}{\includegraphics[height=0.7\textwidth]{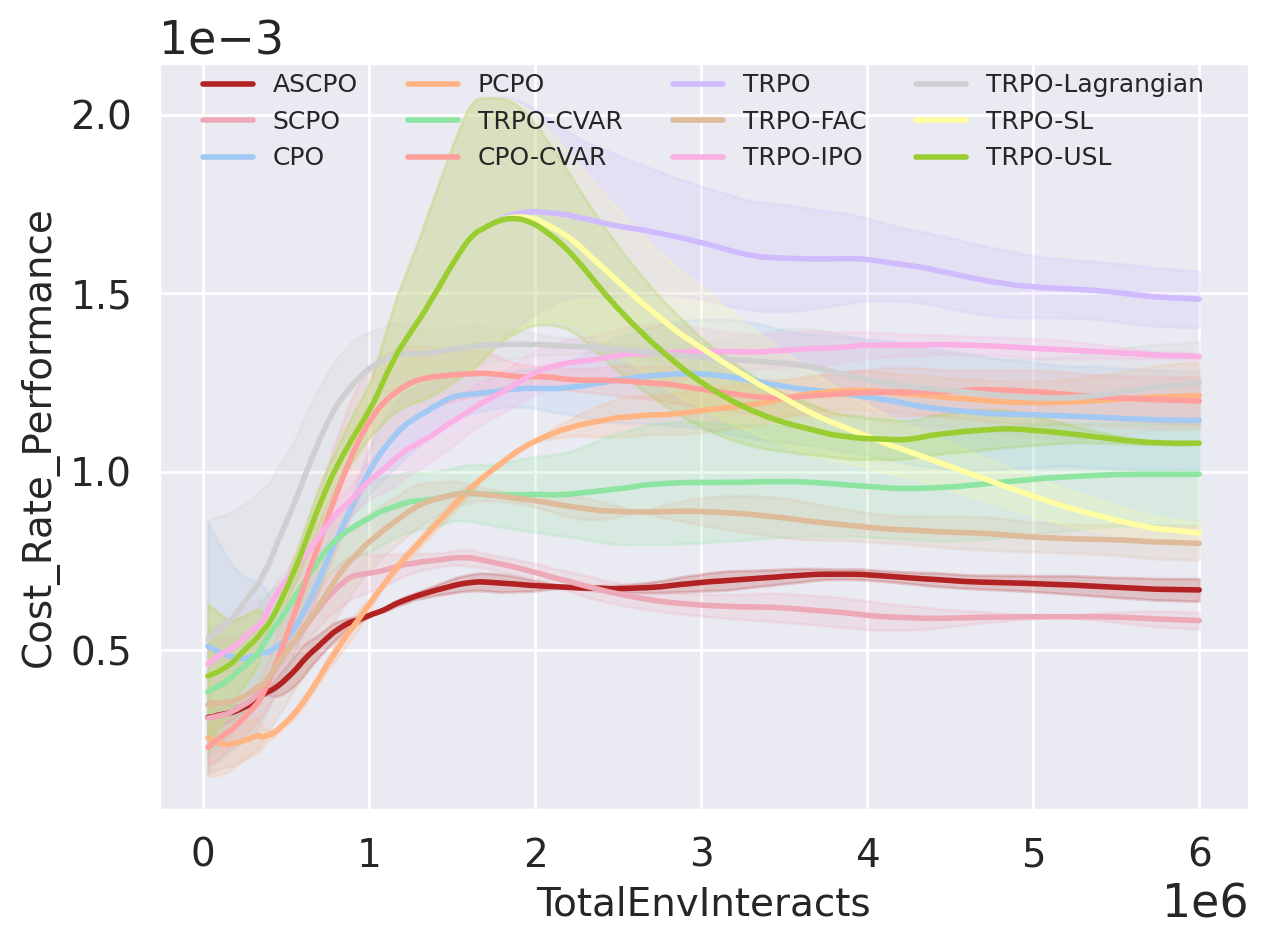}}
    % \caption{Cost rate.} 
    \label{fig:point-pillar-1-CostRate}
    \end{subfigure}
    \caption{Point-1-Pillar}
    \label{fig:point-pillar-1}
    \end{subfigure}
    %2
   \begin{subfigure}[t]{0.32\textwidth}
    \begin{subfigure}[t]{1.00\textwidth}
        \raisebox{-\height}{\includegraphics[height=0.7\textwidth]{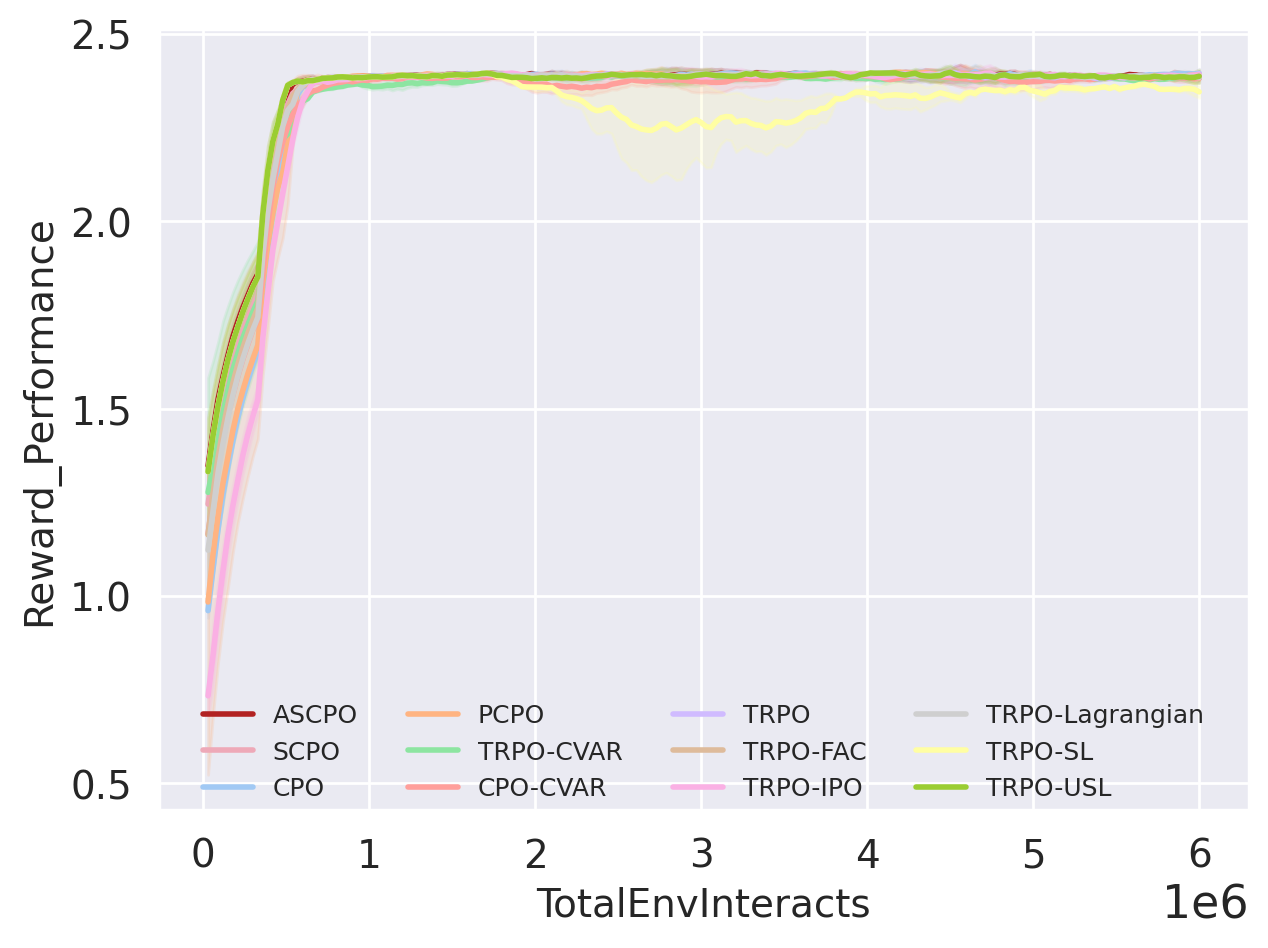}}
        % \caption{Average episode reward.}
        \label{fig:point-pillar-4-Performance}
    \end{subfigure}
    \hfill
    \begin{subfigure}[t]{1.00\textwidth}
        \raisebox{-\height}{\includegraphics[height=0.7\textwidth]{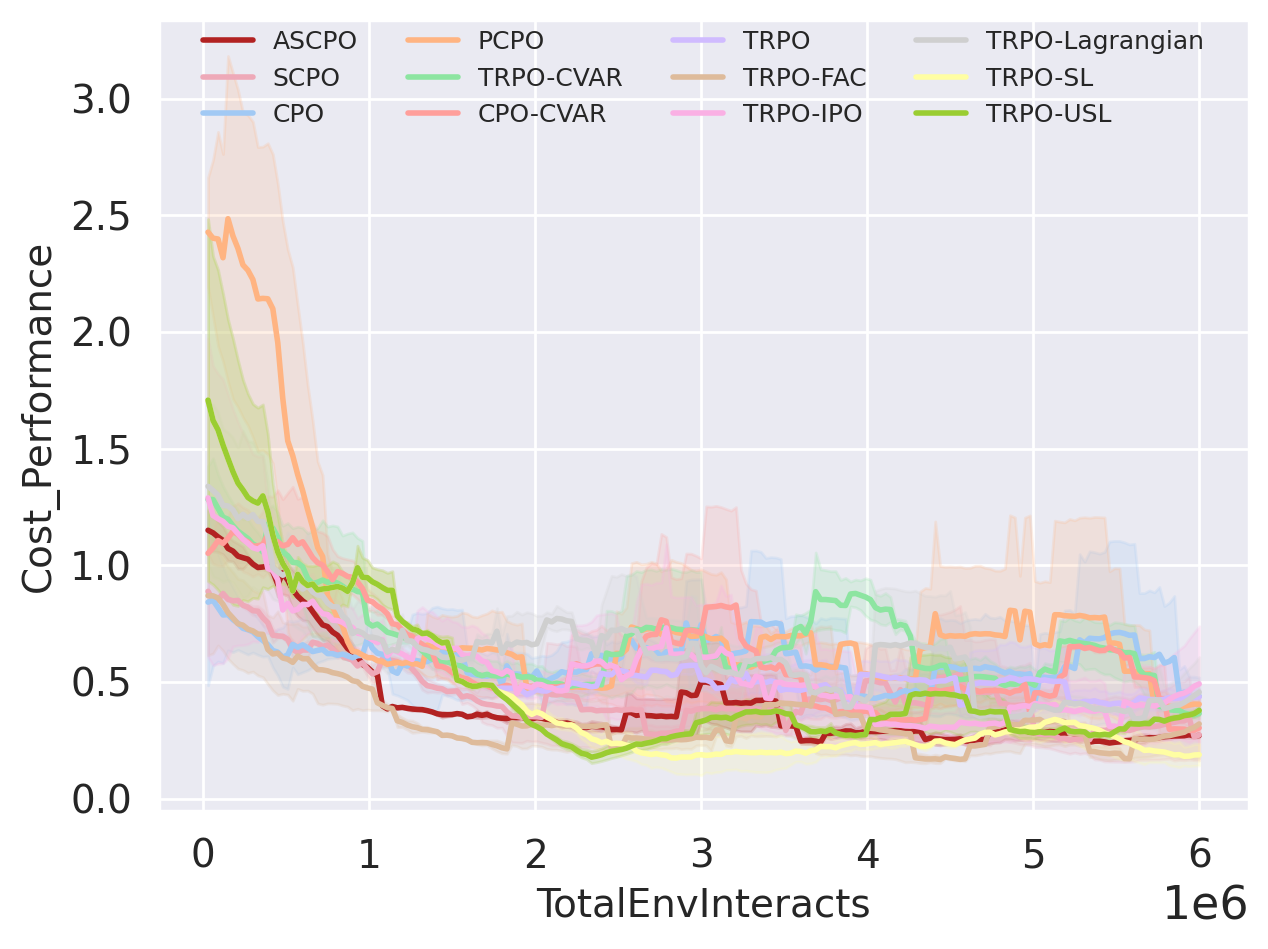}}
    % \caption{Average episode cost.} 
    \label{fig:point-pillar-4-AverageEpCost}
    \end{subfigure}
    \hfill
    \begin{subfigure}[t]{1.00\textwidth}
        \raisebox{-\height}{\includegraphics[height=0.7\textwidth]{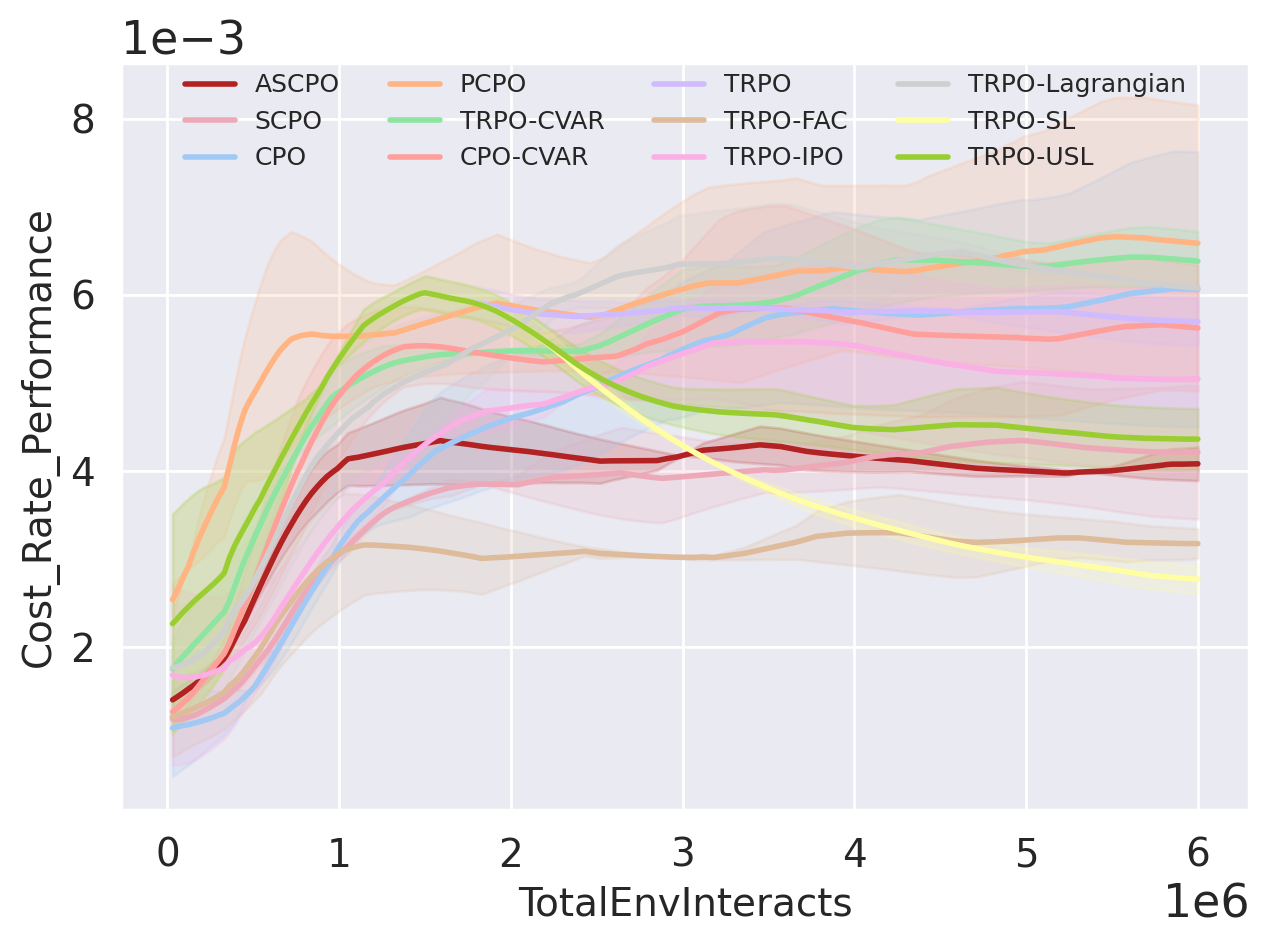}}
    % \caption{Cost rate.} 
    \label{fig:point-pillar-4-CostRate}
    \end{subfigure}
    \caption{Point-4-Pillar}
    \label{fig:point-pillar-4}
    \end{subfigure}
    %3
    \begin{subfigure}[t]{0.32\textwidth}
    \begin{subfigure}[t]{1.00\textwidth}
        \raisebox{-\height}{\includegraphics[height=0.7\textwidth]{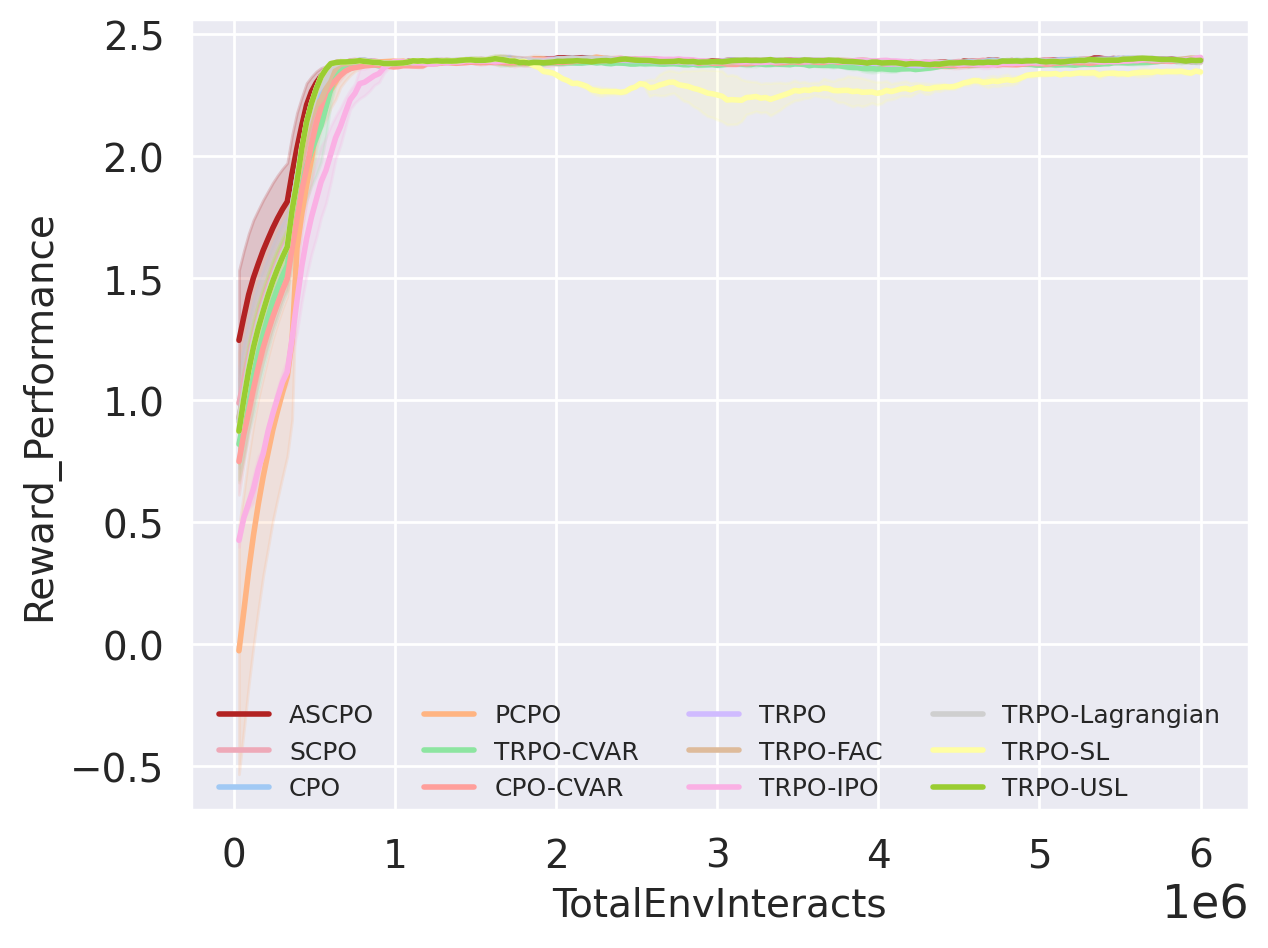}}
        % \caption{Average episode reward.}
        \label{fig:point-pillar-8-Performance}
    \end{subfigure}
    \hfill
    \begin{subfigure}[t]{1.00\textwidth}
        \raisebox{-\height}{\includegraphics[height=0.7\textwidth]{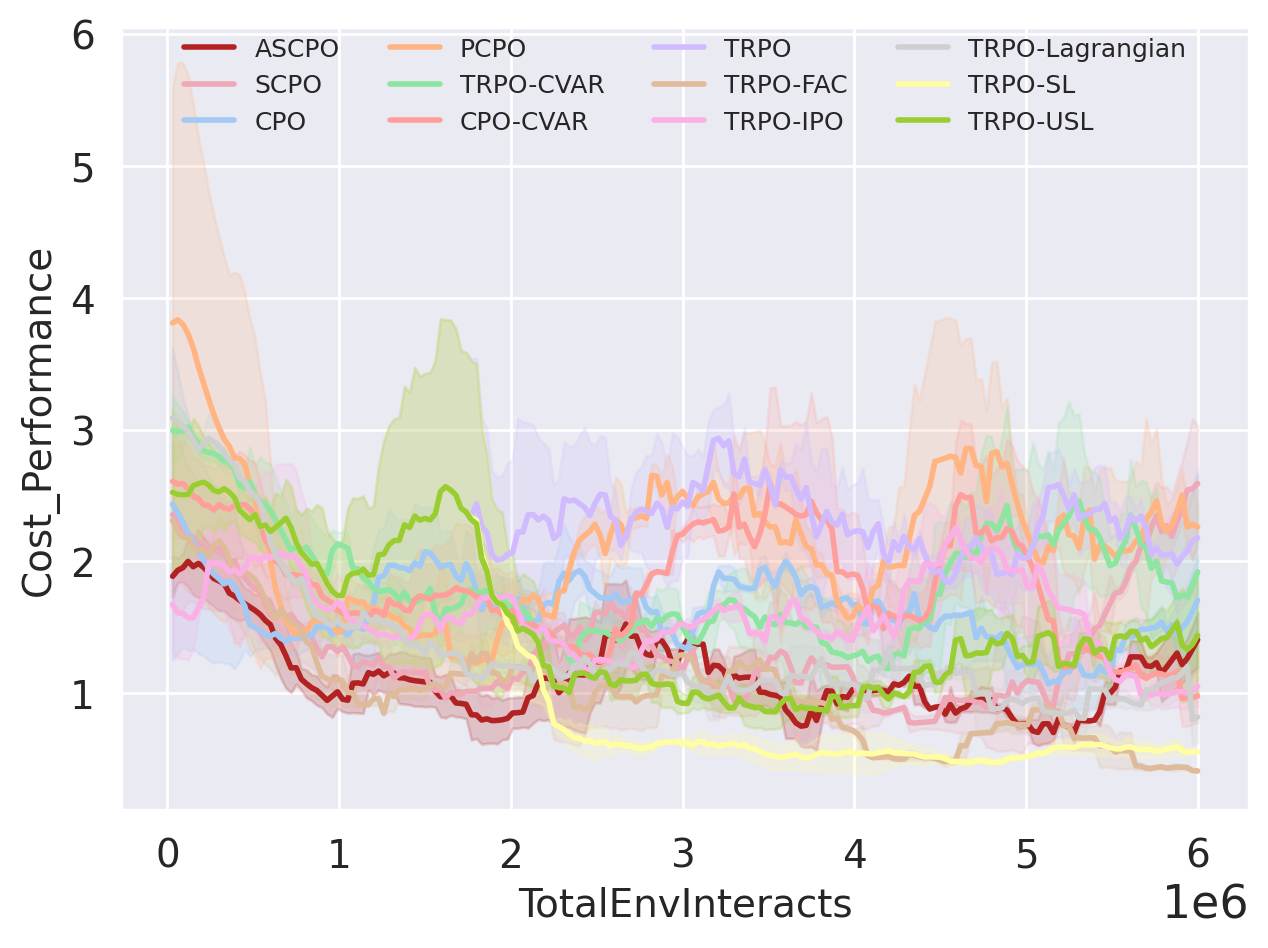}}
    % \caption{Average episode cost.} 
    \label{fig:point-pillar-8-AverageEpCost}
    \end{subfigure}
    \hfill
    \begin{subfigure}[t]{1.00\textwidth}
        \raisebox{-\height}{\includegraphics[height=0.7\textwidth]{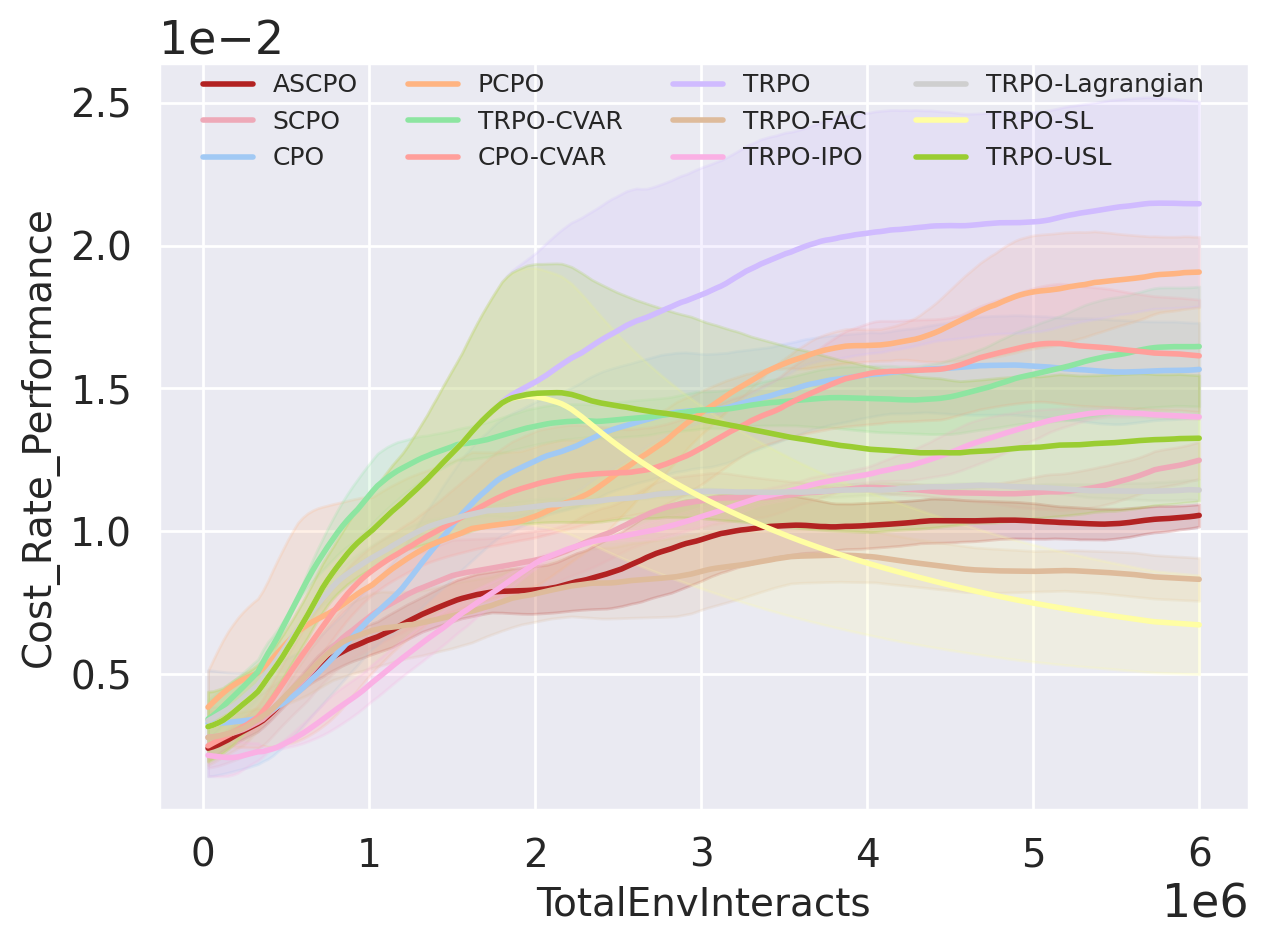}}
    % \caption{Cost rate.} 
    \label{fig:point-pillar-8-CostRate}
    \end{subfigure}
    \caption{Point-8-Pillar}
    \label{fig:point-pillar-8}
    \end{subfigure}
    \caption{Point-Pillar}
    \label{fig:exp-point-pillar}
\end{figure}

\begin{figure}[p]
    \centering
    %1
    \begin{subfigure}[t]{0.32\textwidth}
    \begin{subfigure}[t]{1.00\textwidth}
        \raisebox{-\height}{\includegraphics[height=0.7\textwidth]{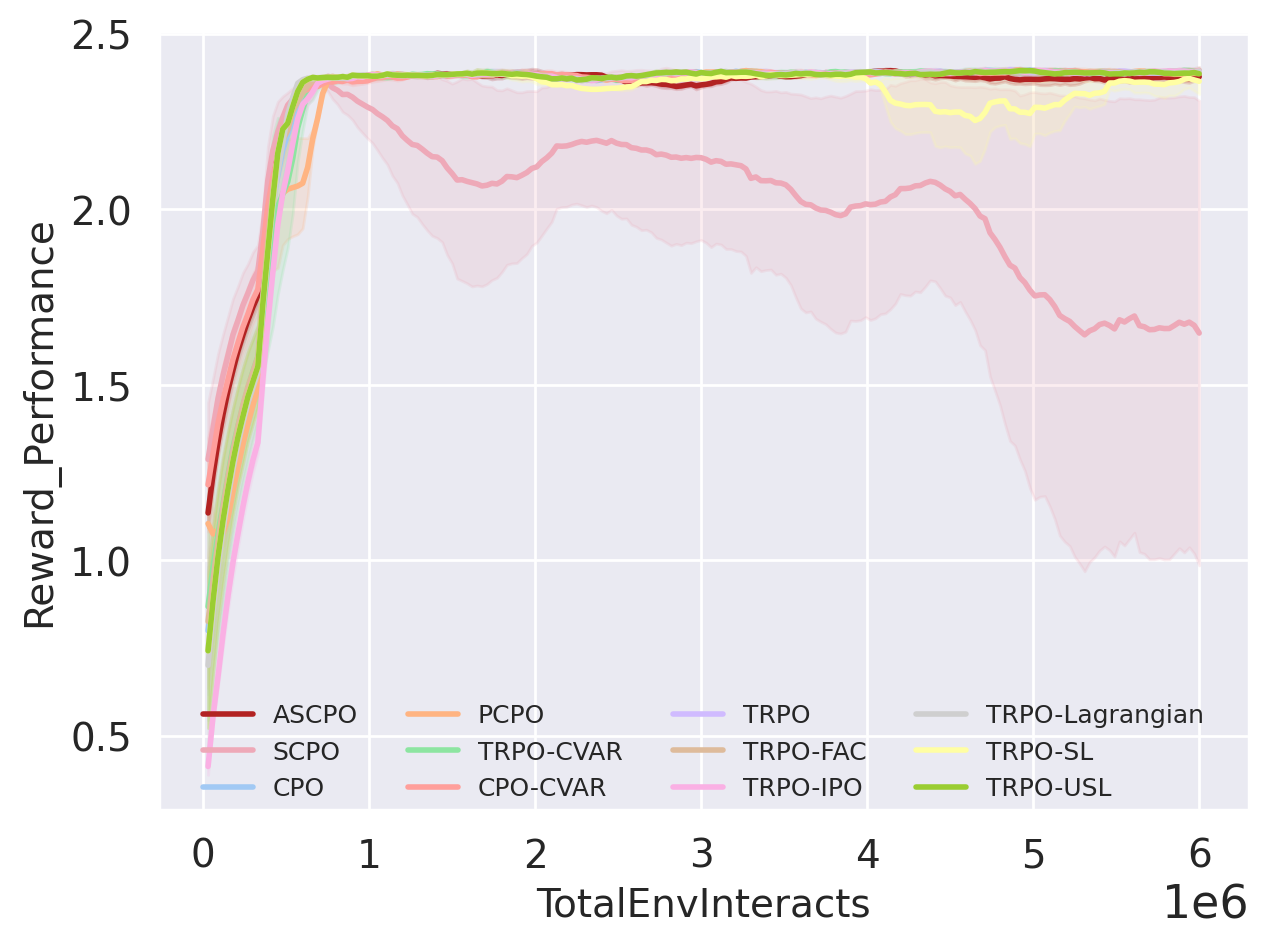}}
        % \caption{Average episode reward.}
        \label{fig:point-ghost-1-Performance}
    \end{subfigure}
    \hfill
    \begin{subfigure}[t]{1.00\textwidth}
        \raisebox{-\height}{\includegraphics[height=0.7\textwidth]{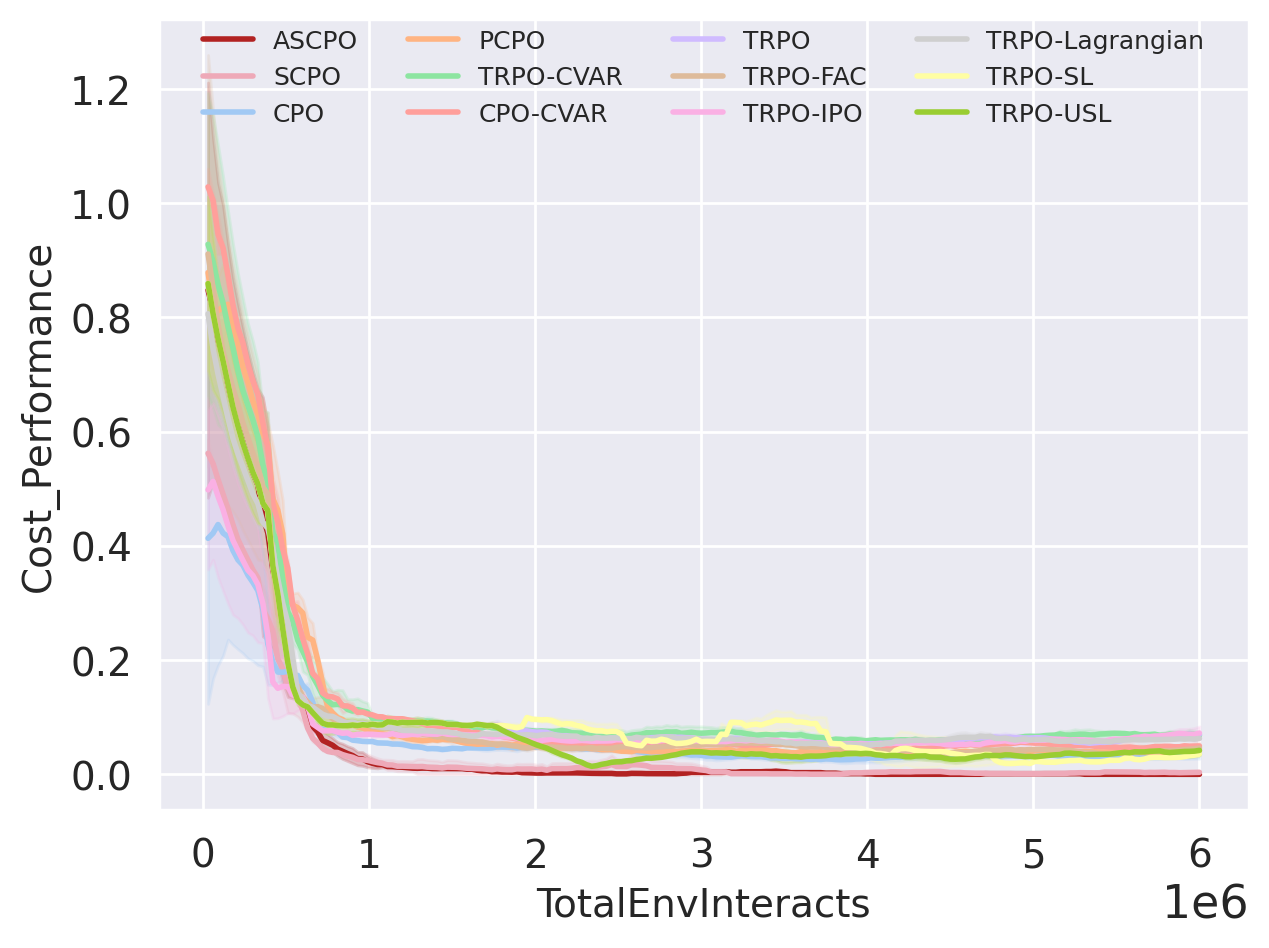}}
    % \caption{Average episode cost.} 
    \label{fig:point-ghost-1-AverageEpCost}
    \end{subfigure}
    \hfill
    \begin{subfigure}[t]{1.00\textwidth}
        \raisebox{-\height}{\includegraphics[height=0.7\textwidth]{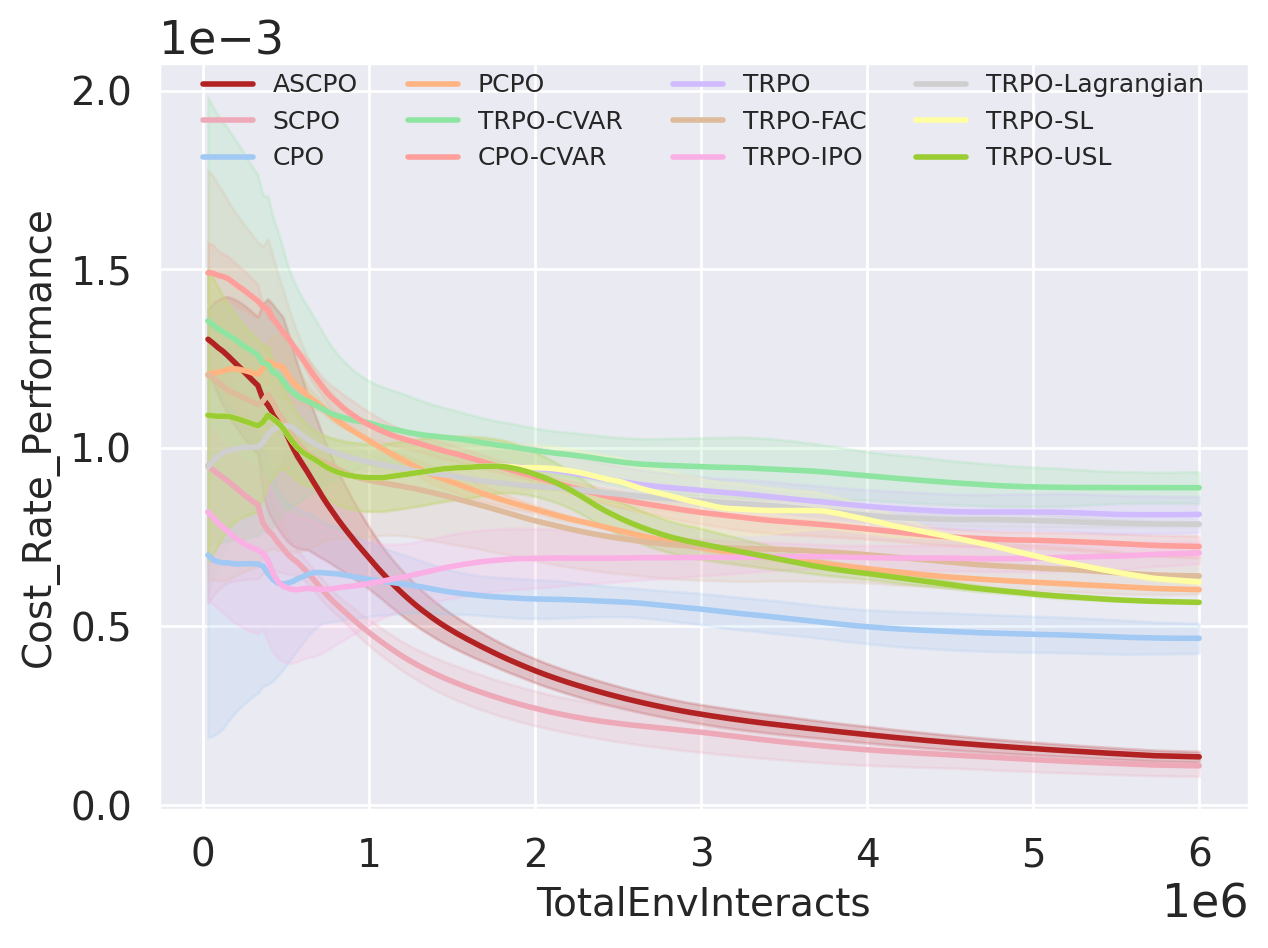}}
    % \caption{Cost rate.} 
    \label{fig:point-ghost-1-CostRate}
    \end{subfigure}
    \caption{Point-1-Ghost}
    \label{fig:point-ghost-1}
    \end{subfigure}
    %2
   \begin{subfigure}[t]{0.32\textwidth}
    \begin{subfigure}[t]{1.00\textwidth}
        \raisebox{-\height}{\includegraphics[height=0.7\textwidth]{fig/guard/Goal_Point_4Ghosts/Goal_Point_4Ghosts_Reward_Performance.png}}
        % \caption{Average episode reward.}
        \label{fig:point-ghost-4-Performance}
    \end{subfigure}
    \hfill
    \begin{subfigure}[t]{1.00\textwidth}
        \raisebox{-\height}{\includegraphics[height=0.7\textwidth]{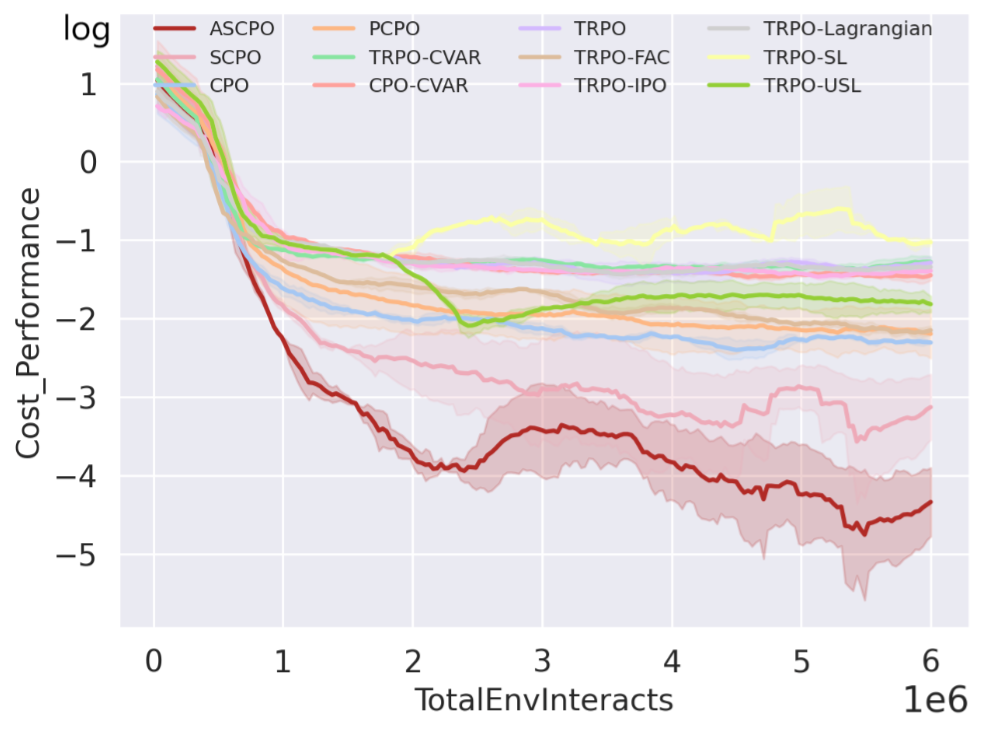}}
    % \caption{Average episode cost.} 
    \label{fig:point-ghost-4-AverageEpCost}
    \end{subfigure}
    \hfill
    \begin{subfigure}[t]{1.00\textwidth}
        \raisebox{-\height}{\includegraphics[height=0.7\textwidth]{fig/guard/Goal_Point_4Ghosts/Goal_Point_4Ghosts_Cost_Rate_Performance.png}}
    % \caption{Cost rate.} 
    \label{fig:point-ghost-4-CostRate}
    \end{subfigure}
    \caption{Point-4-Ghost}
    \label{fig:point-ghost-4}
    \end{subfigure}
    %3
    \begin{subfigure}[t]{0.32\textwidth}
    \begin{subfigure}[t]{1.00\textwidth}
        \raisebox{-\height}{\includegraphics[height=0.7\textwidth]{fig/guard/Goal_Point_8Ghosts/Goal_Point_8Ghosts_Reward_Performance.png}}
        % \caption{Average episode reward.}
        \label{fig:point-ghost-8-Performance}
    \end{subfigure}
    \hfill
    \begin{subfigure}[t]{1.00\textwidth}
        \raisebox{-\height}{\includegraphics[height=0.7\textwidth]{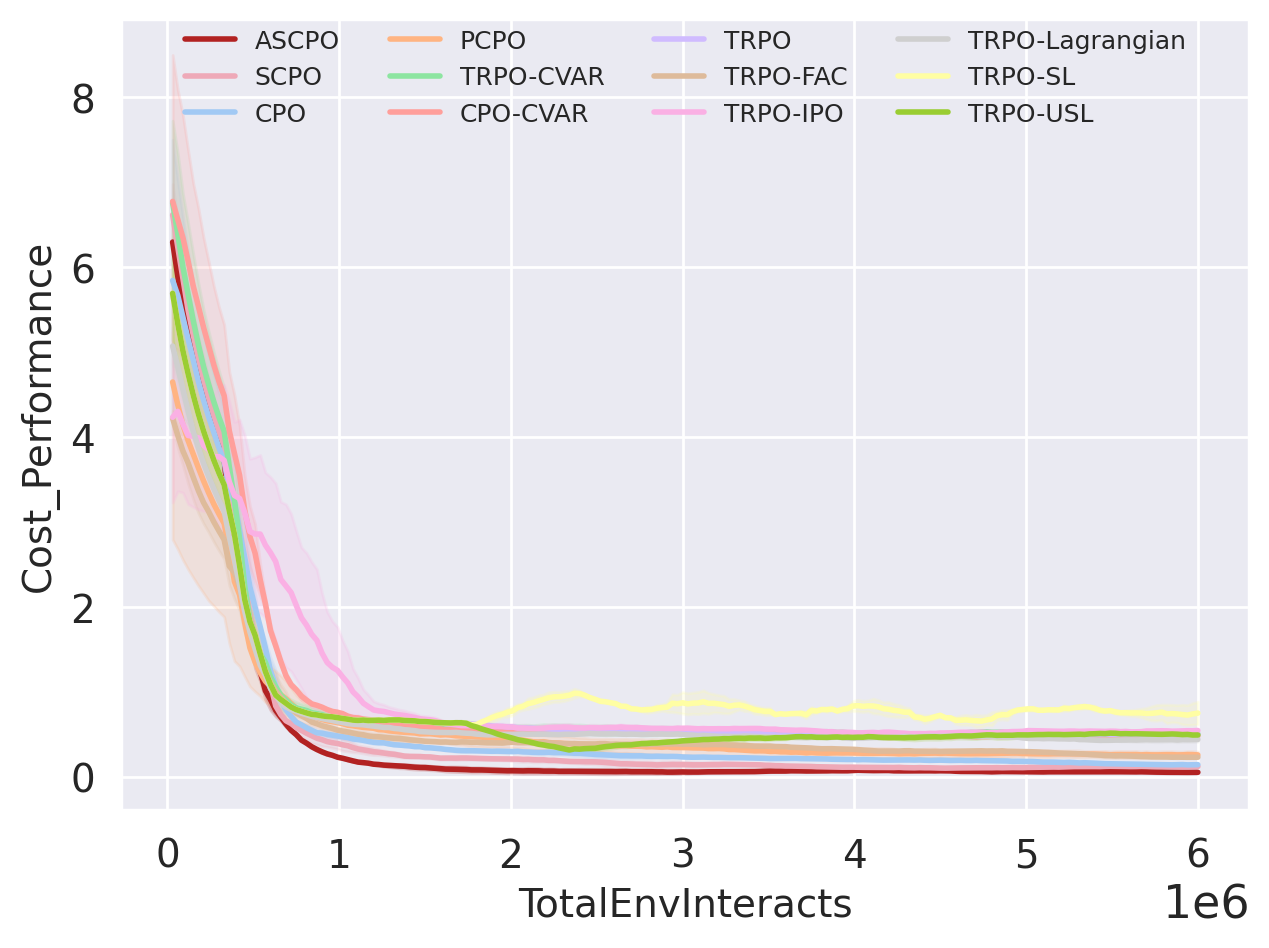}}
    % \caption{Average episode cost.} 
    \label{fig:point-ghost-8-AverageEpCost}
    \end{subfigure}
    \hfill
    \begin{subfigure}[t]{1.00\textwidth}
        \raisebox{-\height}{\includegraphics[height=0.7\textwidth]{fig/guard/Goal_Point_8Ghosts/Goal_Point_8Ghosts_Cost_Rate_Performance.png}}
    % \caption{Cost rate.} 
    \label{fig:point-ghost-8-CostRate}
    \end{subfigure}
    \caption{Point-8-Ghost}
    \label{fig:point-ghost-8}
    \end{subfigure}
    \caption{Point-Ghost}
    \label{fig:exp-point-ghost}
\end{figure}

\begin{figure}[p]
    \centering
    %1
    \begin{subfigure}[t]{0.32\textwidth}
    \begin{subfigure}[t]{1.00\textwidth}
        \raisebox{-\height}{\includegraphics[height=0.7\textwidth]{fig/guard/Goal_Swimmer_1Hazards/Goal_Swimmer_1Hazards_Reward_Performance.png}}
        % \caption{Average episode reward.}
        \label{fig:swimmer-hazard-1-Performance}
    \end{subfigure}
    \hfill
    \begin{subfigure}[t]{1.00\textwidth}
        \raisebox{-\height}{\includegraphics[height=0.7\textwidth]{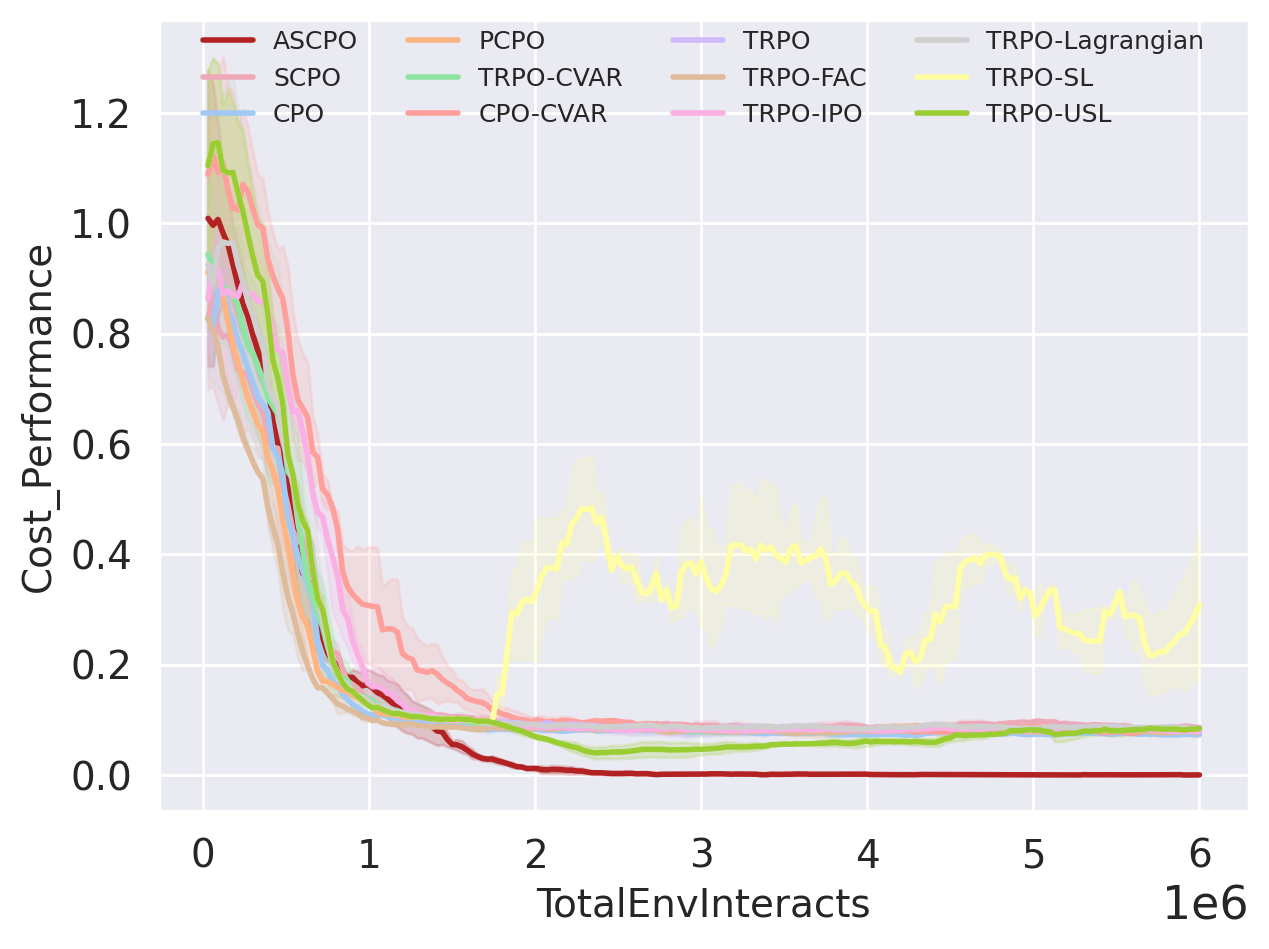}}
    % \caption{Average episode cost.} 
    \label{fig:swimmer-hazard-1-AverageEpCost}
    \end{subfigure}
    \hfill
    \begin{subfigure}[t]{1.00\textwidth}
        \raisebox{-\height}{\includegraphics[height=0.7\textwidth]{fig/guard/Goal_Swimmer_1Hazards/Goal_Swimmer_1Hazards_Cost_Rate_Performance.png}}
    % \caption{Cost rate.} 
    \label{fig:swimmer-hazard-1-CostRate}
    \end{subfigure}
    \caption{Swimmer-1-Hazard}
    \label{fig:swimmer-hazard-1}
    \end{subfigure}
    %2
   \begin{subfigure}[t]{0.32\textwidth}
    \begin{subfigure}[t]{1.00\textwidth}
        \raisebox{-\height}{\includegraphics[height=0.7\textwidth]{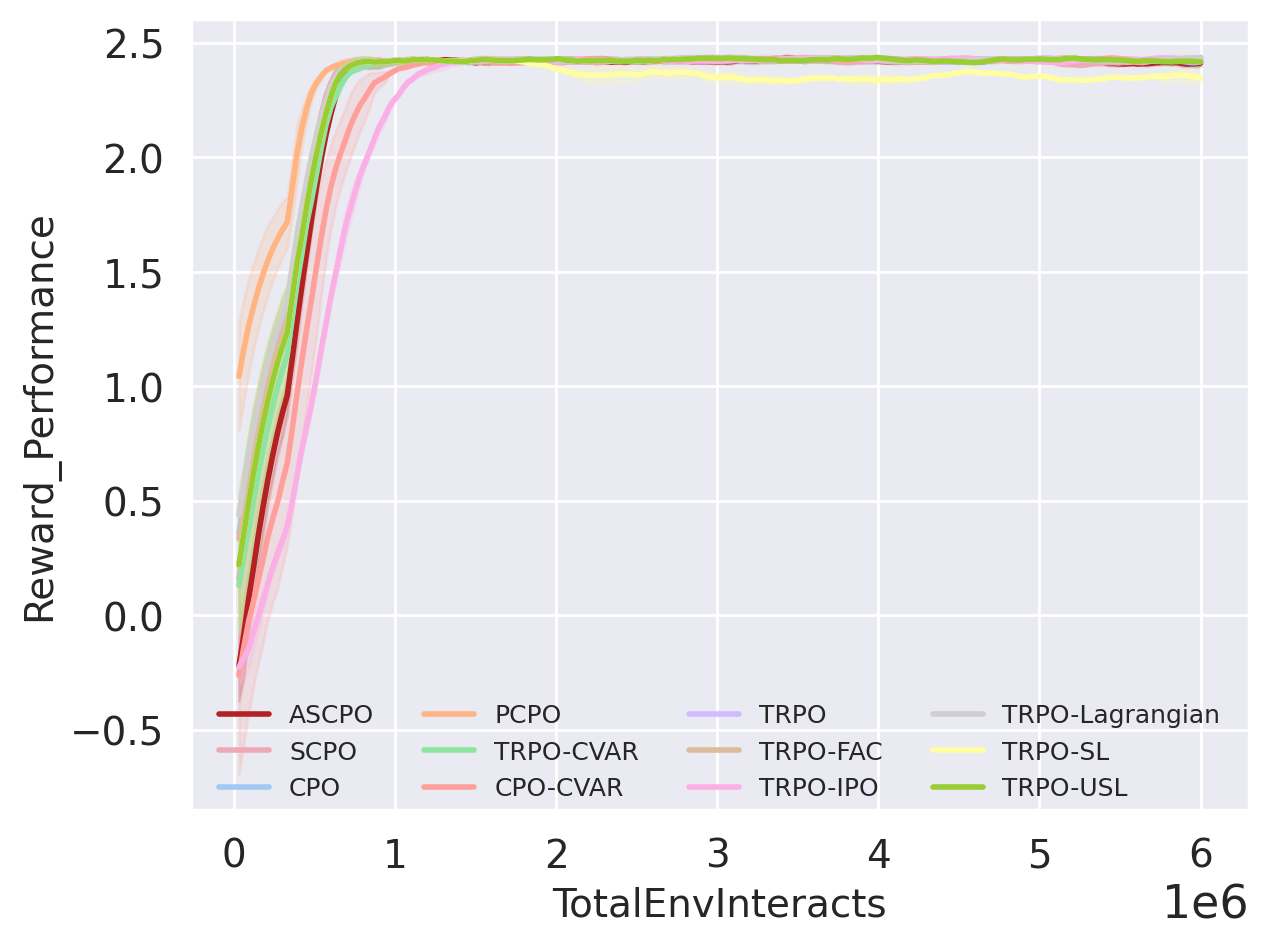}}
        % \caption{Average episode reward.}
        \label{fig:swimmer-hazard-4-Performance}
    \end{subfigure}
    \hfill
    \begin{subfigure}[t]{1.00\textwidth}
        \raisebox{-\height}{\includegraphics[height=0.7\textwidth]{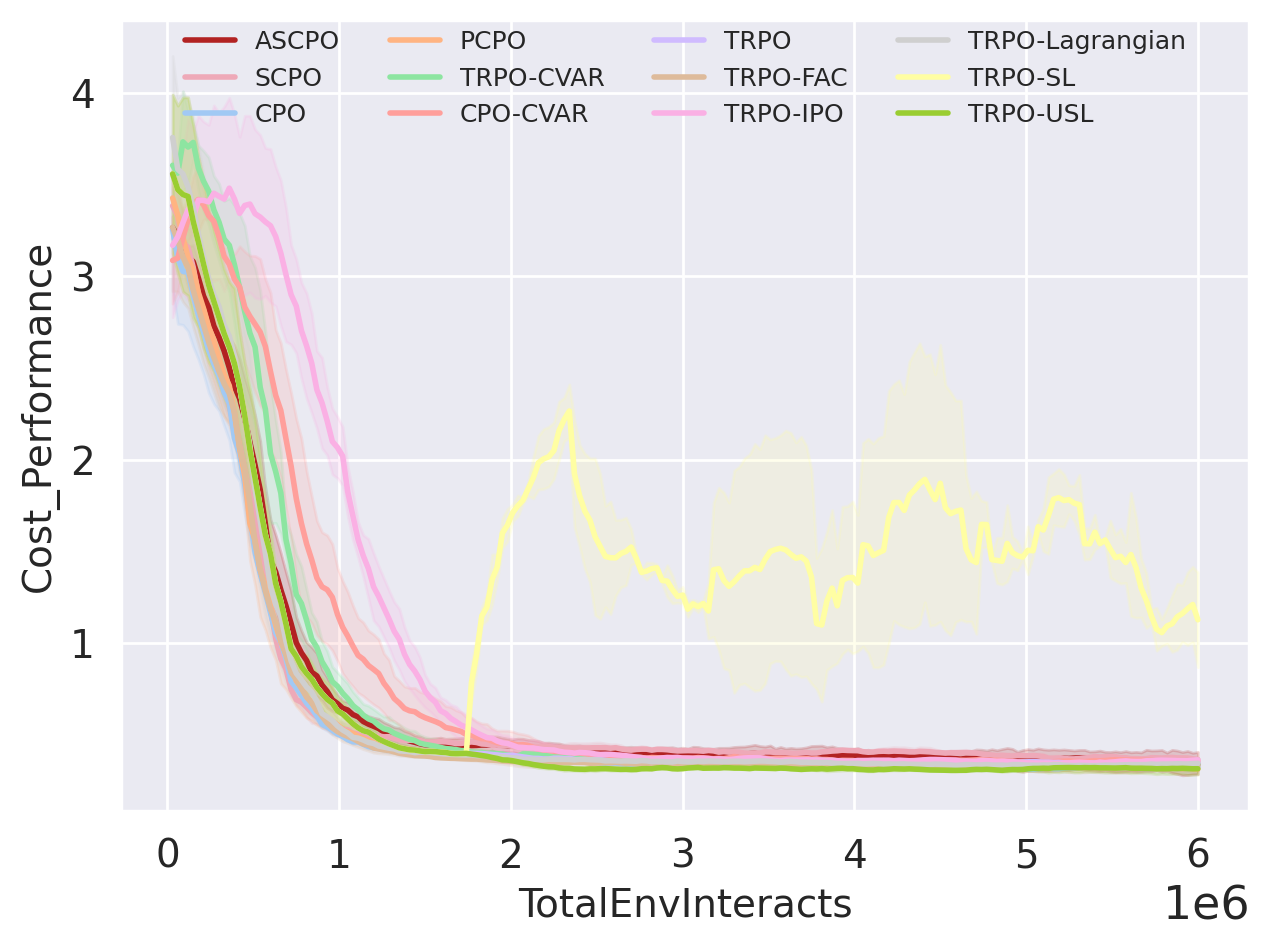}}
    % \caption{Average episode cost.} 
    \label{fig:swimmer-hazard-4-AverageEpCost}
    \end{subfigure}
    \hfill
    \begin{subfigure}[t]{1.00\textwidth}
        \raisebox{-\height}{\includegraphics[height=0.7\textwidth]{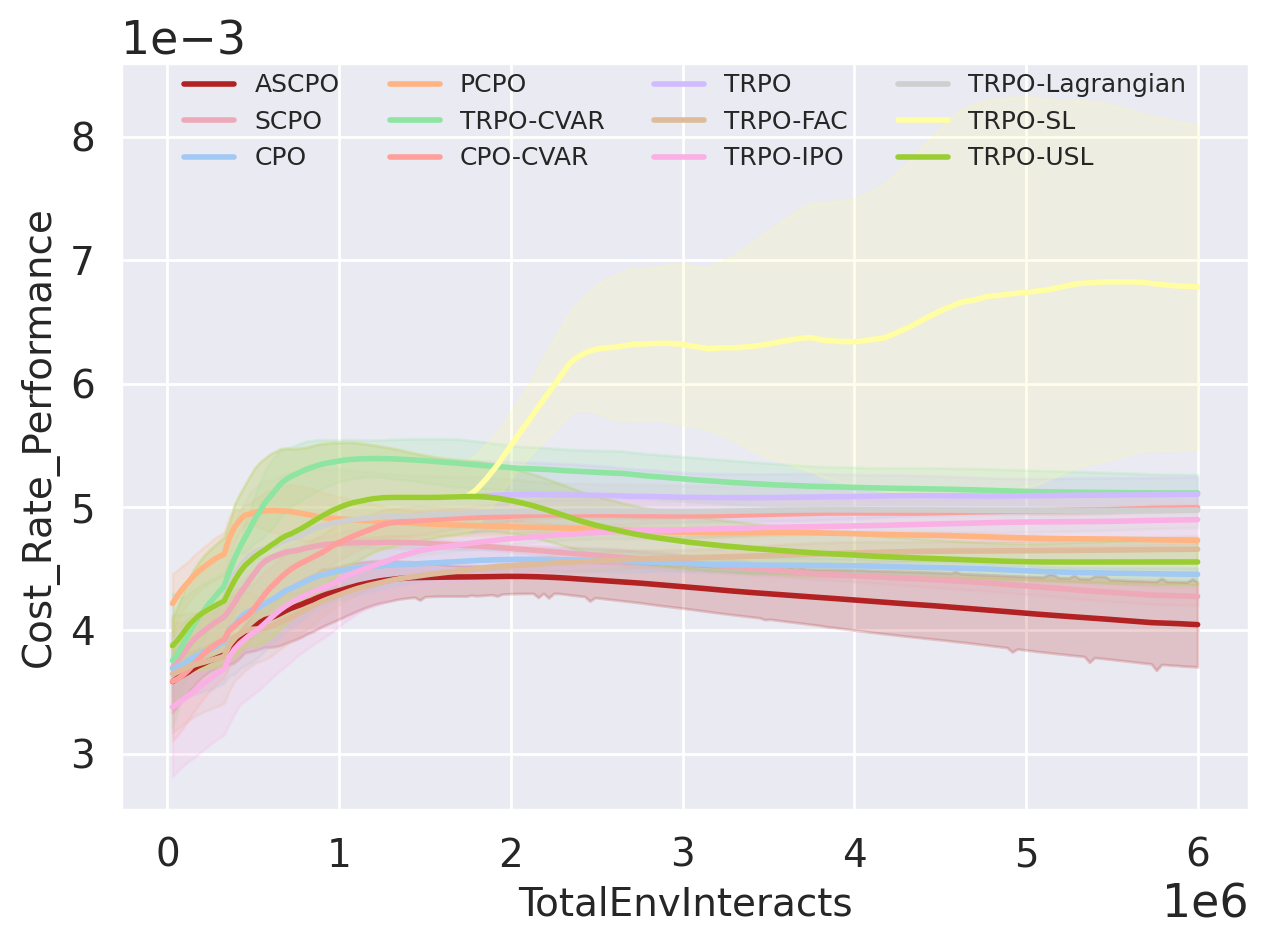}}
    % \caption{Cost rate.} 
    \label{fig:swimmer-hazard-4-CostRate}
    \end{subfigure}
    \caption{Swimmer-4-Hazard}
    \label{fig:swimmer-hazard-4}
    \end{subfigure}
    %3
    \begin{subfigure}[t]{0.32\textwidth}
    \begin{subfigure}[t]{1.00\textwidth}
        \raisebox{-\height}{\includegraphics[height=0.7\textwidth]{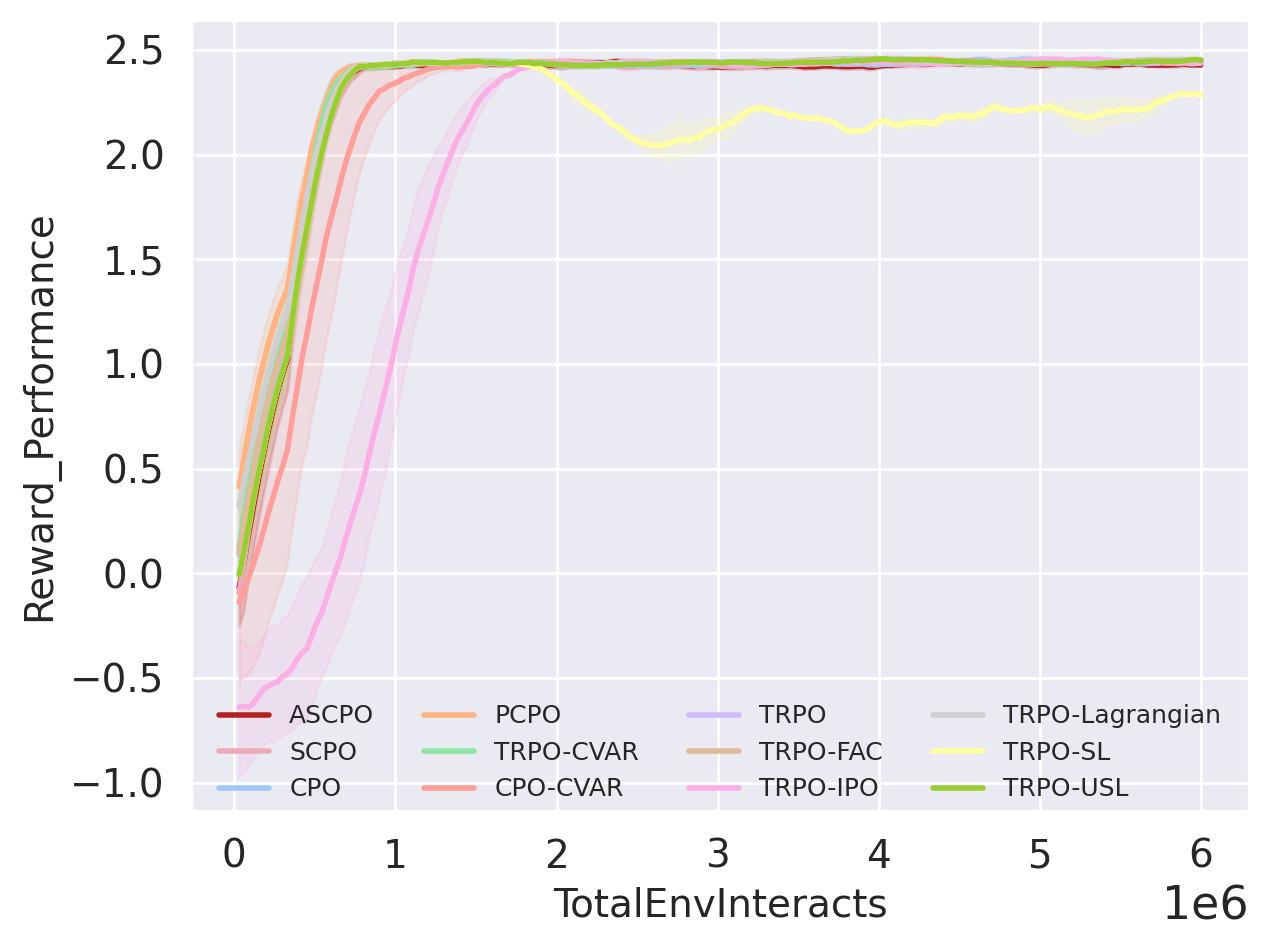}}
        % \caption{Average episode reward.}
        \label{fig:swimmer-hazard-8-Performance}
    \end{subfigure}
    \hfill
    \begin{subfigure}[t]{1.00\textwidth}
        \raisebox{-\height}{\includegraphics[height=0.7\textwidth]{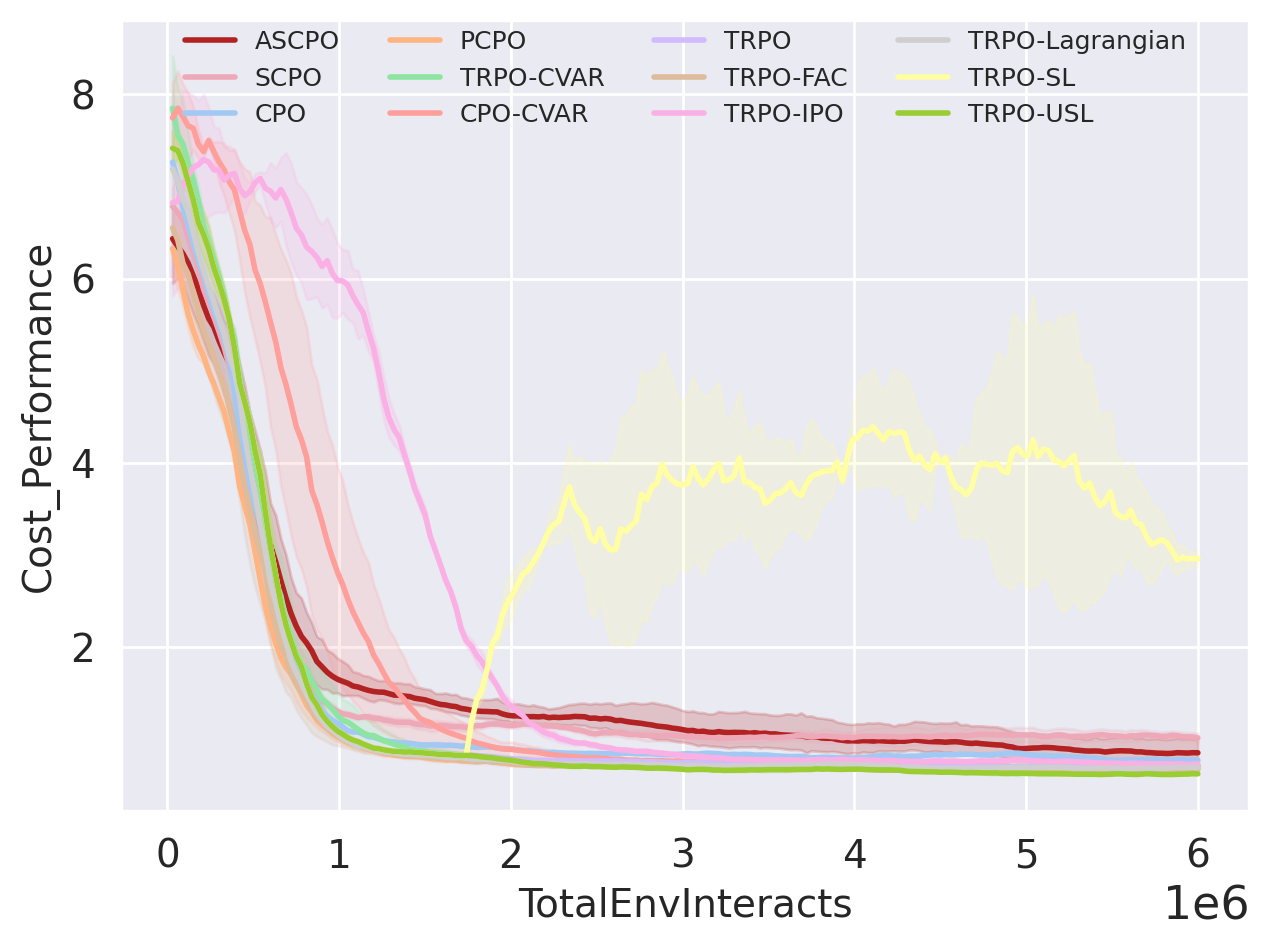}}
    % \caption{Average episode cost.} 
    \label{fig:swimmer-hazard-8-AverageEpCost}
    \end{subfigure}
    \hfill
    \begin{subfigure}[t]{1.00\textwidth}
        \raisebox{-\height}{\includegraphics[height=0.7\textwidth]{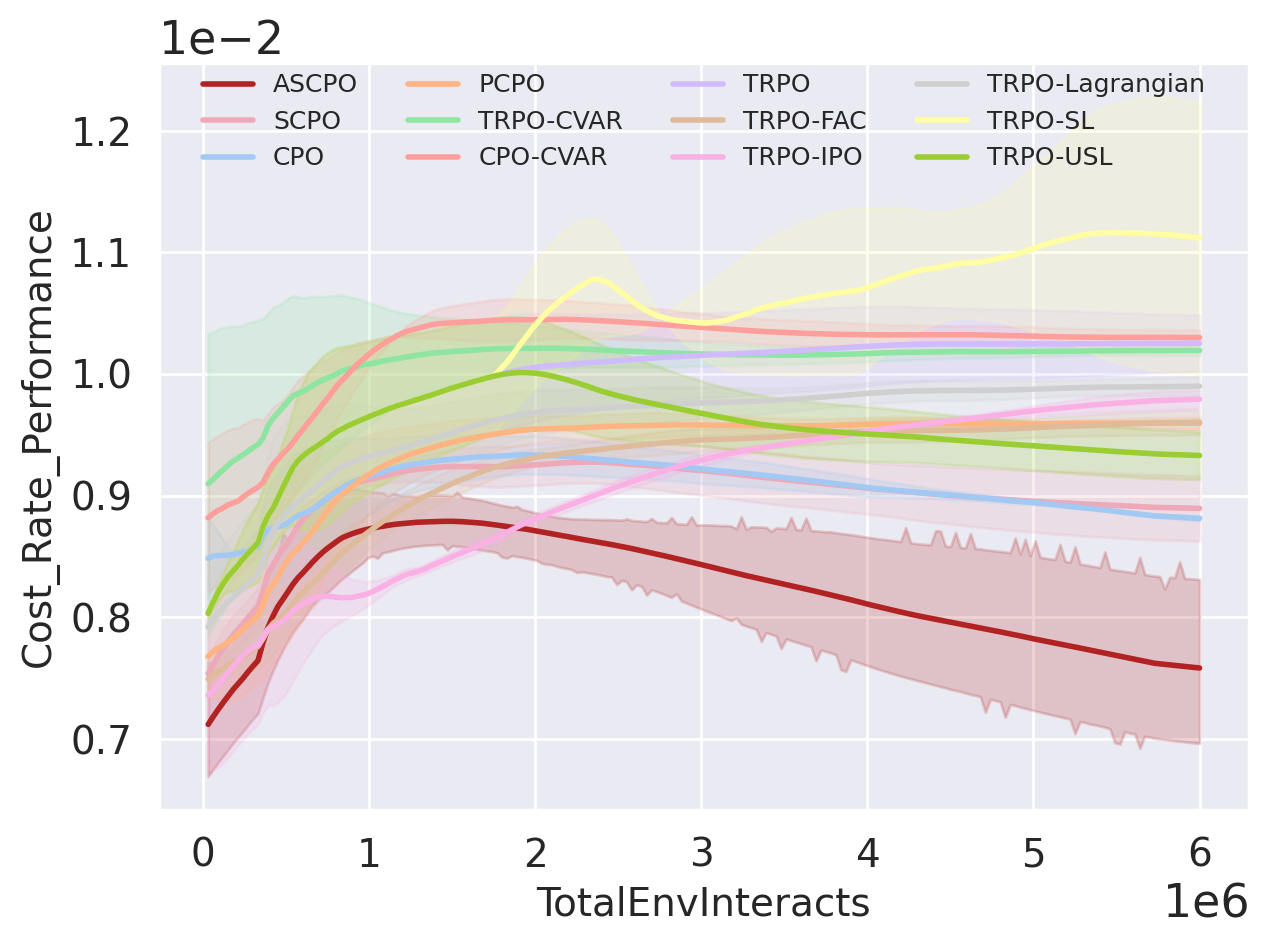}}
    % \caption{Cost rate.} 
    \label{fig:swimmer-hazard-8-CostRate}
    \end{subfigure}
    \caption{Swimmer-8-Hazard}
    \label{fig:swimmer-hazard-8}
    \end{subfigure}
    \caption{Swimmer-Hazard}
    \label{fig:exp-swimmer-hazard}
\end{figure}

\begin{figure}[p]
    \centering
    %1
    \begin{subfigure}[t]{0.32\textwidth}
    \begin{subfigure}[t]{1.00\textwidth}
        \raisebox{-\height}{\includegraphics[height=0.7\textwidth]{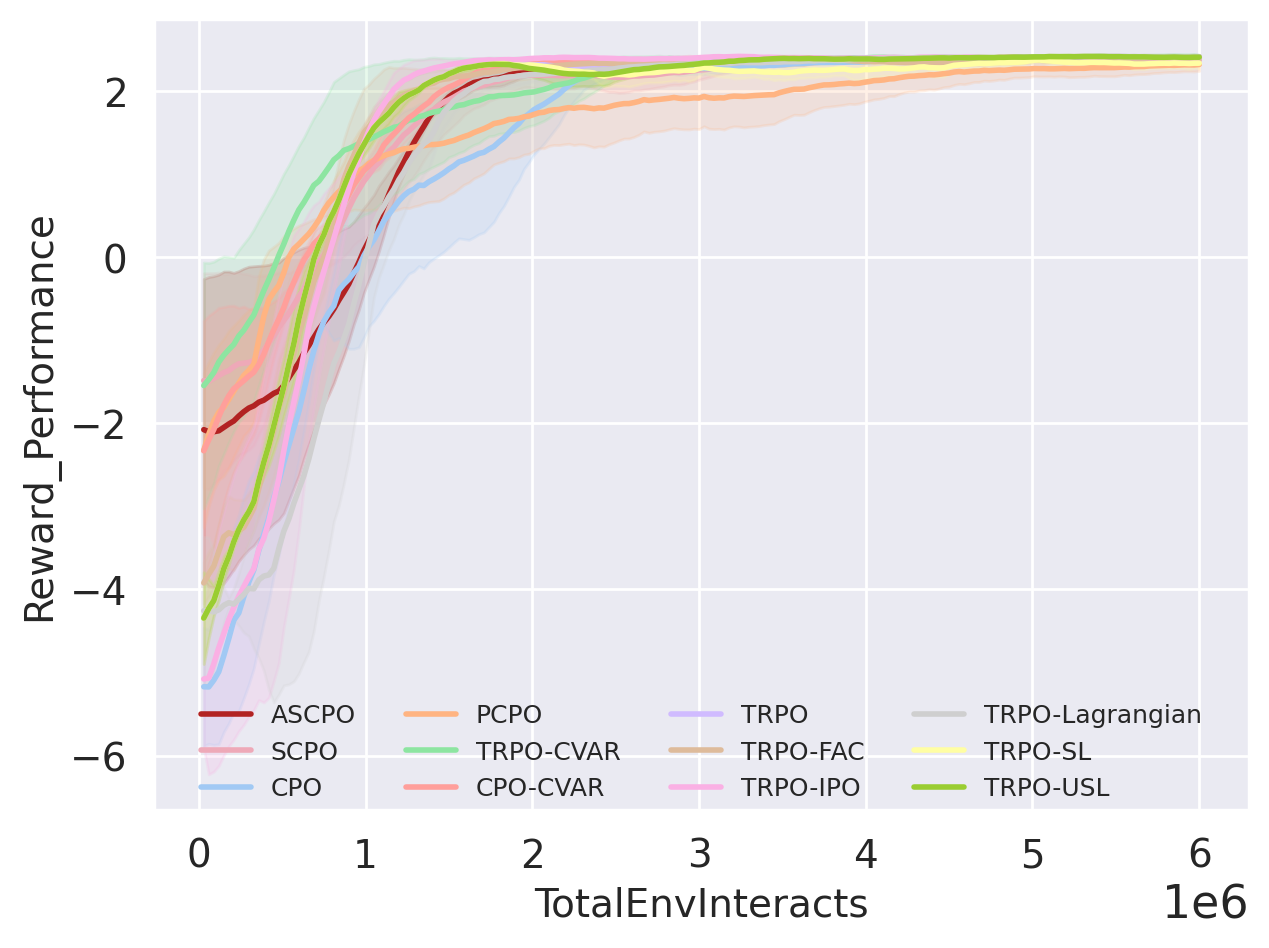}}
        % \caption{Average episode reward.}
        \label{fig:Drone-hazard-1-Performance}
    \end{subfigure}
    \hfill
    \begin{subfigure}[t]{1.00\textwidth}
        \raisebox{-\height}{\includegraphics[height=0.7\textwidth]{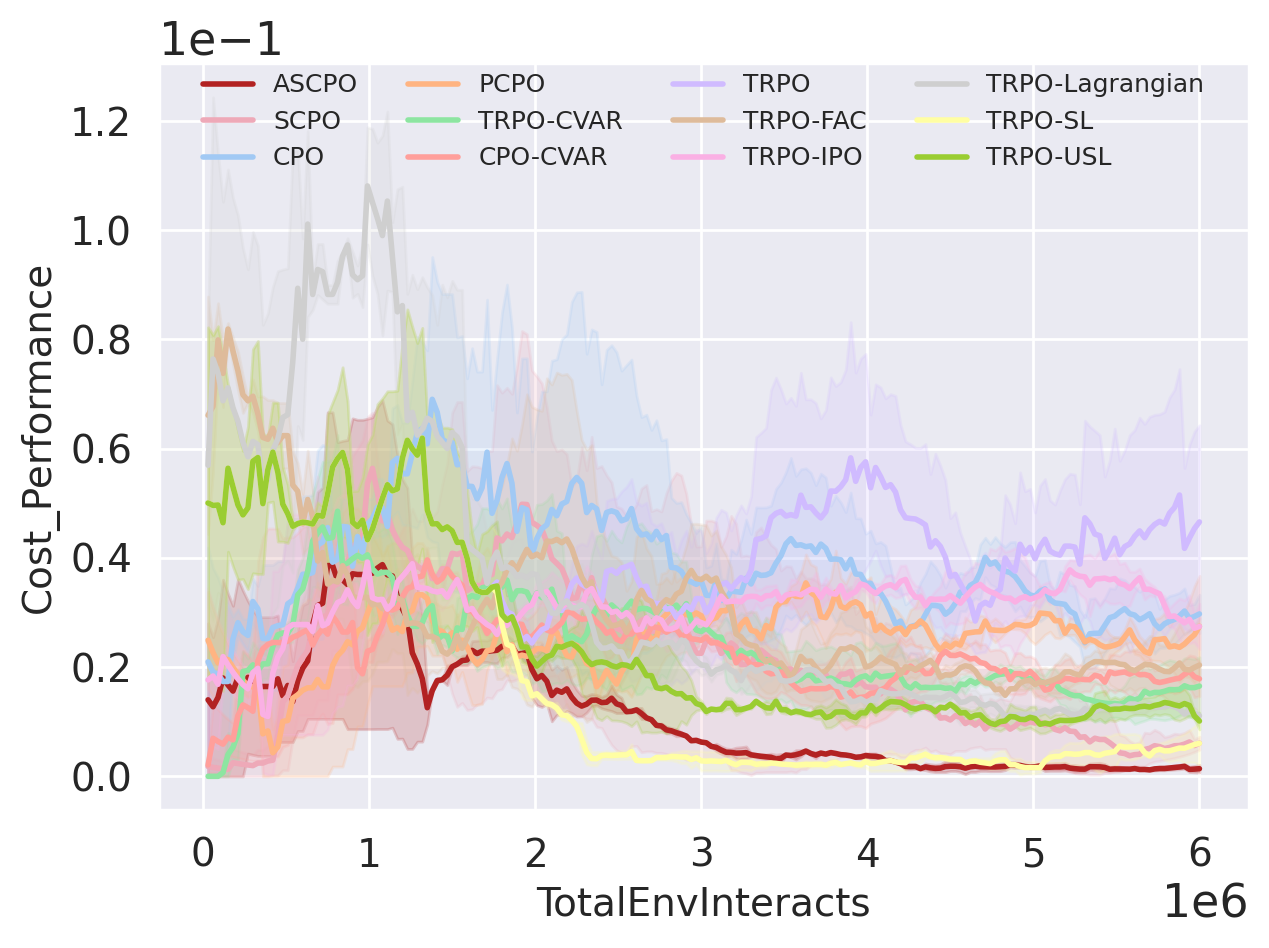}}
    % \caption{Average episode cost.} 
    \label{fig:Drone-hazard-1-AverageEpCost}
    \end{subfigure}
    \hfill
    \begin{subfigure}[t]{1.00\textwidth}
        \raisebox{-\height}{\includegraphics[height=0.7\textwidth]{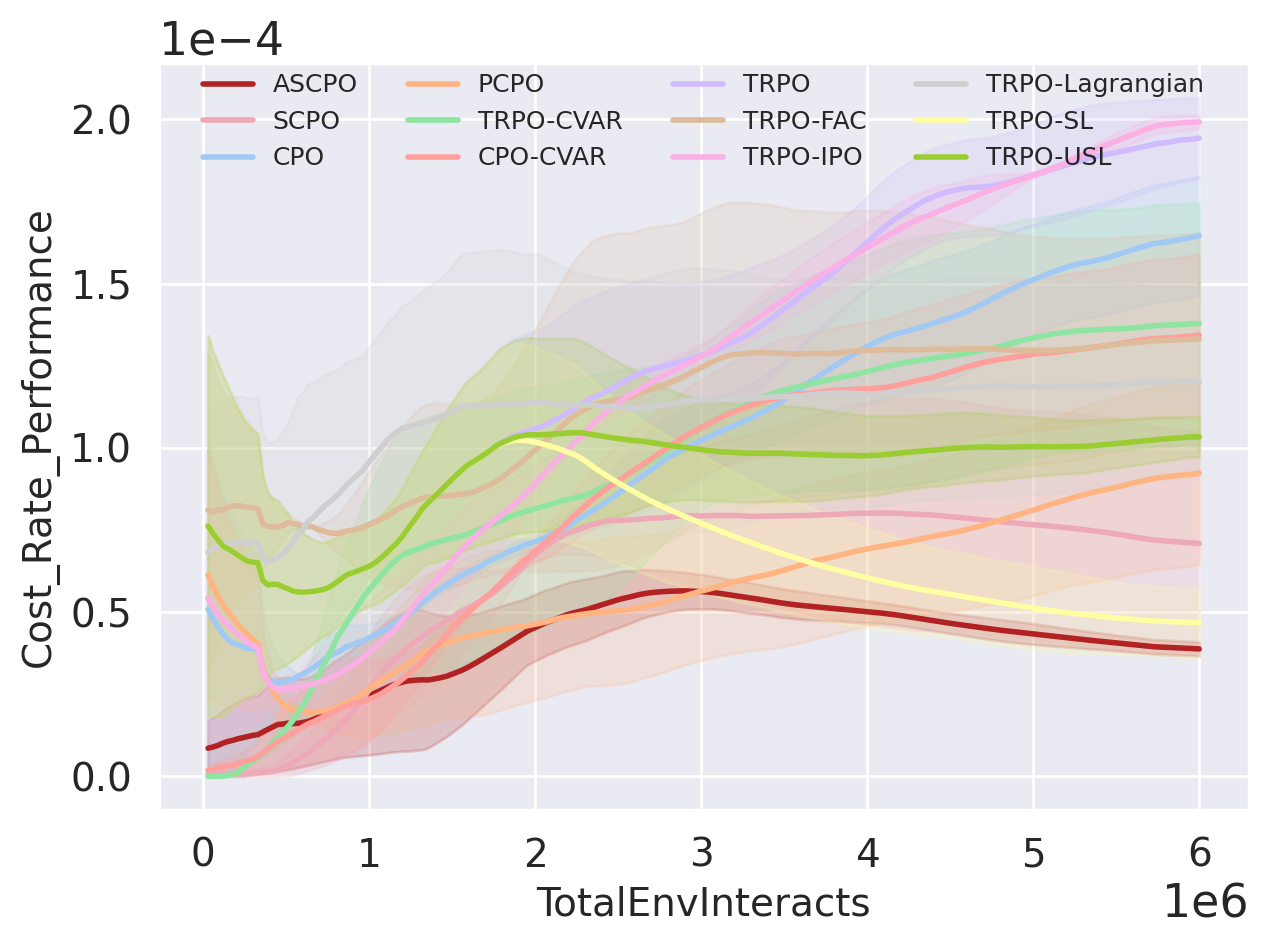}}
    % \caption{Cost rate.} 
    \label{fig:Drone-hazard-1-CostRate}
    \end{subfigure}
    \caption{Drone-1-Hazard}
    \label{fig:Drone-hazard-1}
    \end{subfigure}
    %2
   \begin{subfigure}[t]{0.32\textwidth}
    \begin{subfigure}[t]{1.00\textwidth}
        \raisebox{-\height}{\includegraphics[height=0.7\textwidth]{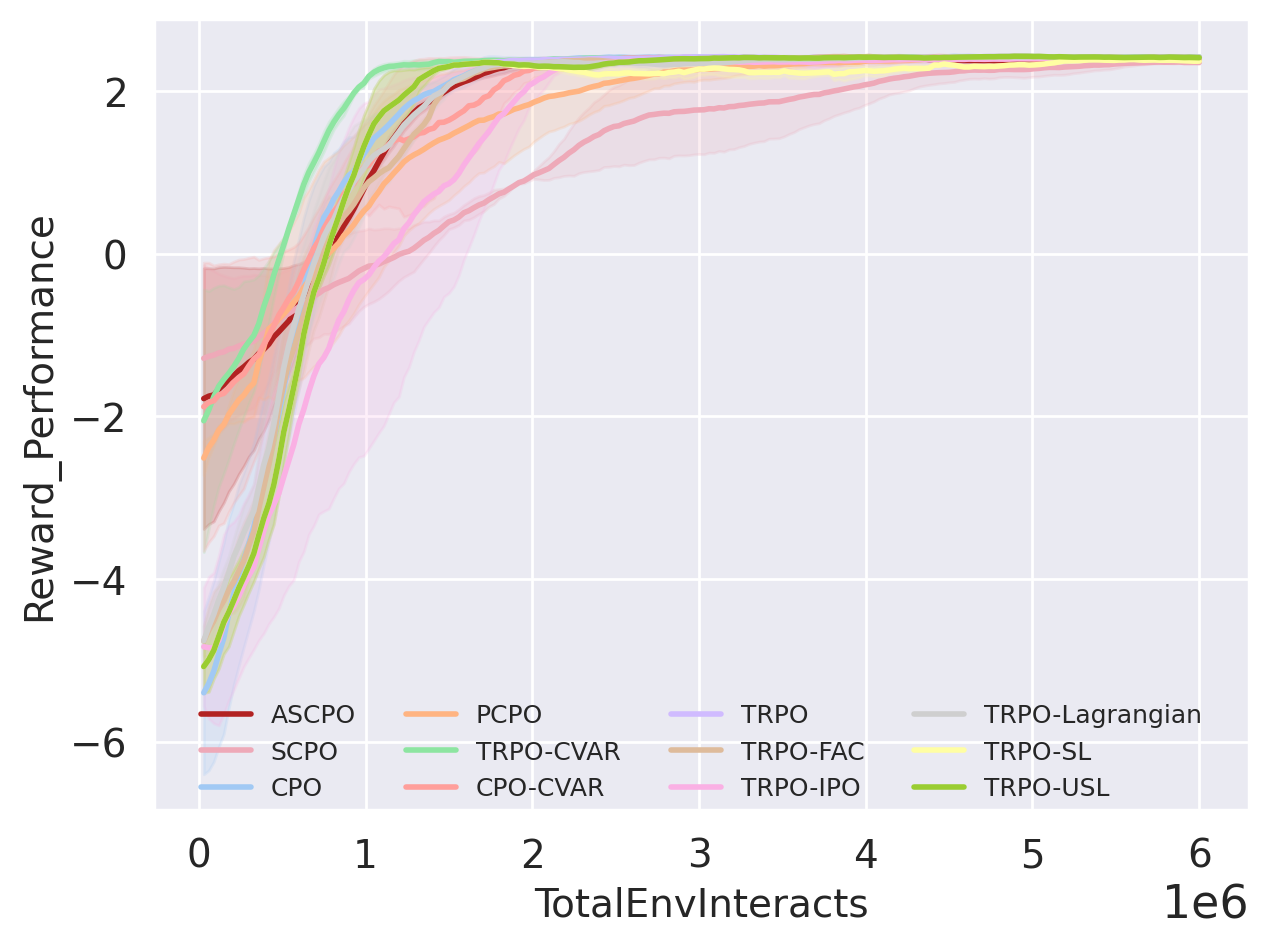}}
        % \caption{Average episode reward.}
        \label{fig:Drone-hazard-4-Performance}
    \end{subfigure}
    \hfill
    \begin{subfigure}[t]{1.00\textwidth}
        \raisebox{-\height}{\includegraphics[height=0.7\textwidth]{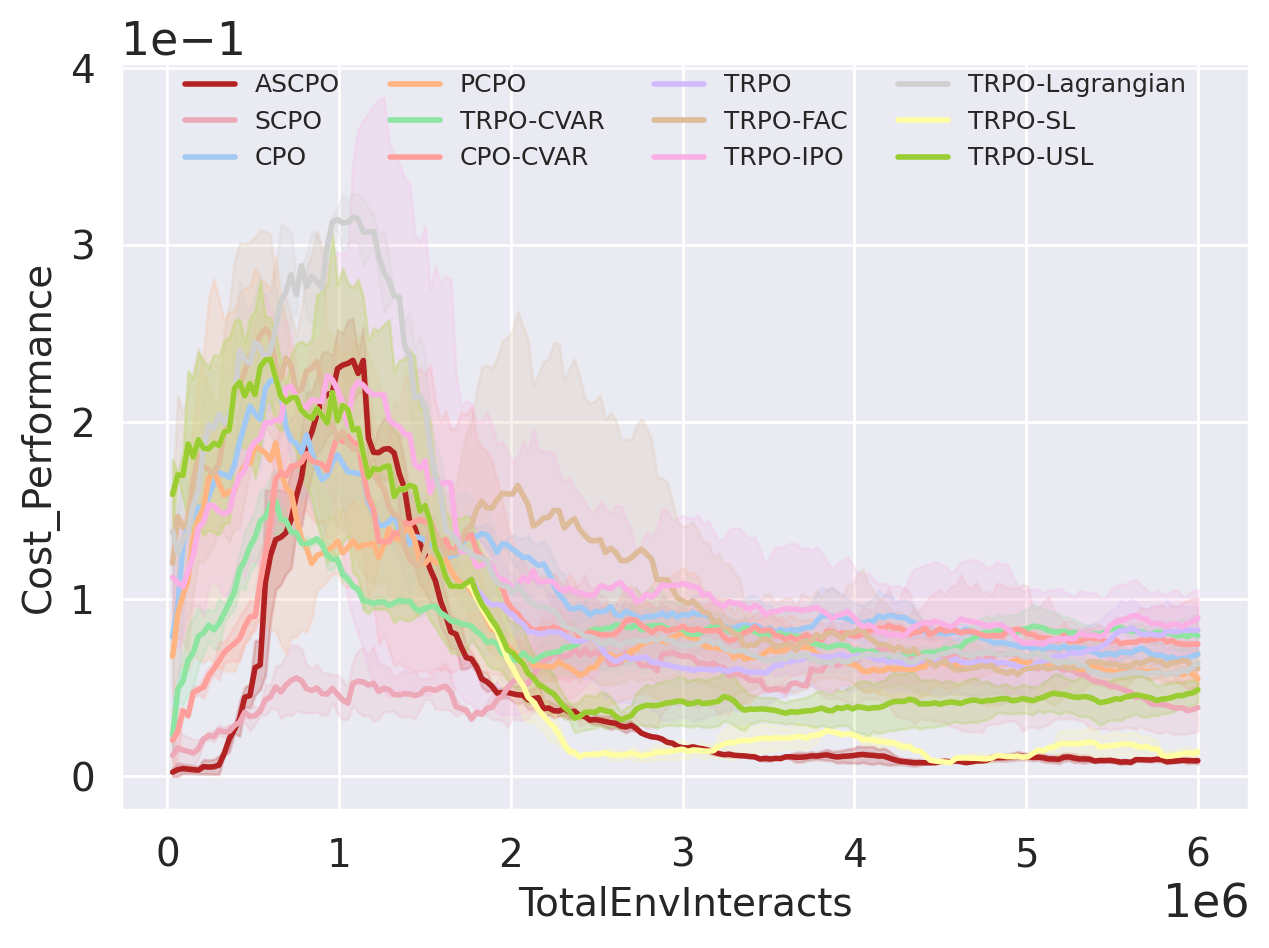}}
    % \caption{Average episode cost.} 
    \label{fig:Drone-hazard-4-AverageEpCost}
    \end{subfigure}
    \hfill
    \begin{subfigure}[t]{1.00\textwidth}
        \raisebox{-\height}{\includegraphics[height=0.7\textwidth]{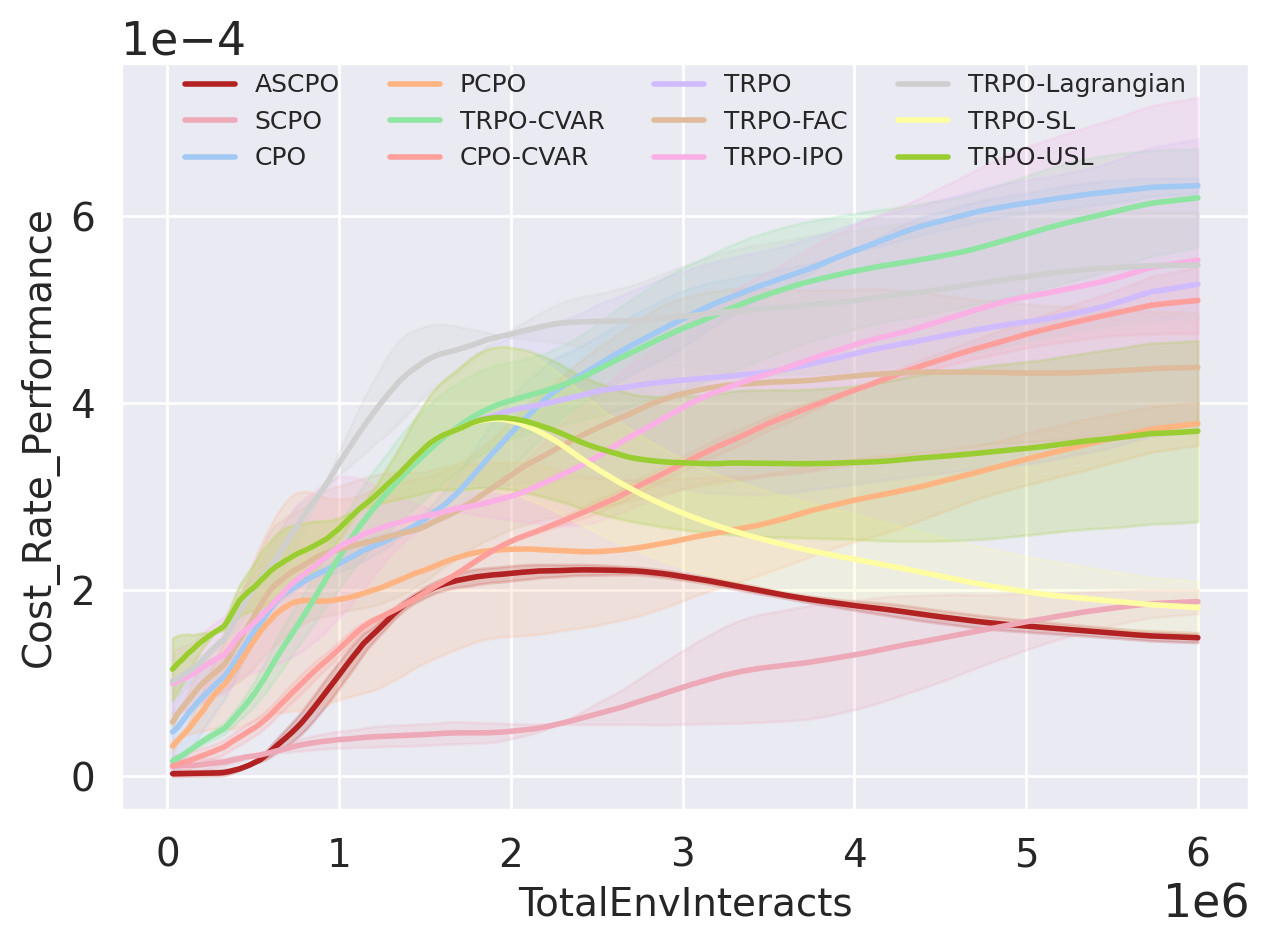}}
    % \caption{Cost rate.} 
    \label{fig:Drone-hazard-4-CostRate}
    \end{subfigure}
    \caption{Drone-4-Hazard}
    \label{fig:Drone-hazard-4}
    \end{subfigure}
    %3
    \begin{subfigure}[t]{0.32\textwidth}
    \begin{subfigure}[t]{1.00\textwidth}
        \raisebox{-\height}{\includegraphics[height=0.7\textwidth]{fig/guard/Goal_Drone_8Hazards/Goal_Drone_8Hazards_Reward_Performance.png}}
        % \caption{Average episode reward.}
        \label{fig:Drone-hazard-8-Performance}
    \end{subfigure}
    \hfill
    \begin{subfigure}[t]{1.00\textwidth}
        \raisebox{-\height}{\includegraphics[height=0.7\textwidth]{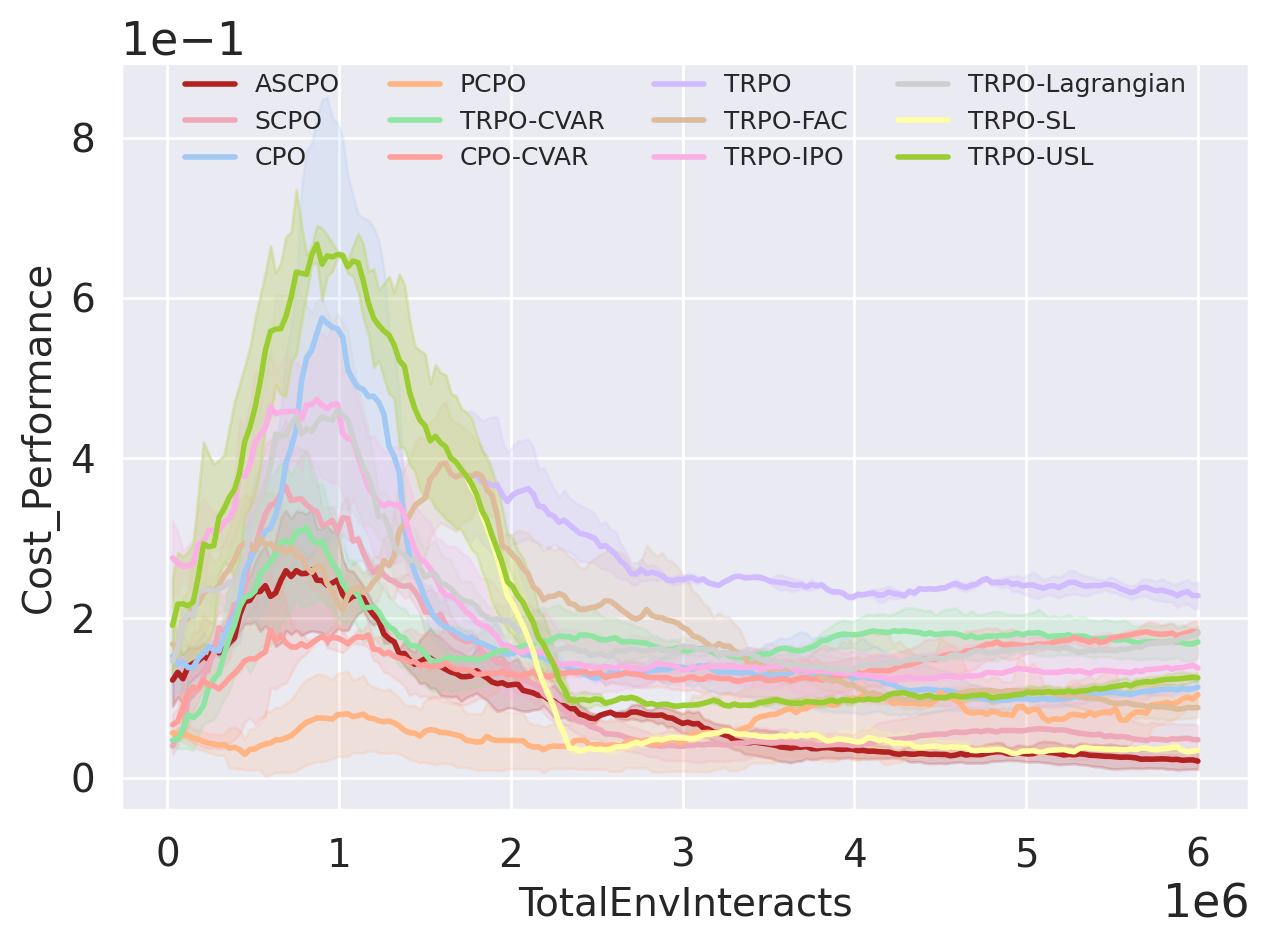}}
    % \caption{Average episode cost.} 
    \label{fig:Drone-hazard-8-AverageEpCost}
    \end{subfigure}
    \hfill
    \begin{subfigure}[t]{1.00\textwidth}
        \raisebox{-\height}{\includegraphics[height=0.7\textwidth]{fig/guard/Goal_Drone_8Hazards/Goal_Drone_8Hazards_Cost_Rate_Performance.png}}
    % \caption{Cost rate.} 
    \label{fig:Drone-hazard-8-CostRate}
    \end{subfigure}
    \caption{Drone-8-Hazard}
    \label{fig:drone-hazard-8}
    \end{subfigure}
    \caption{Drone-Hazard}
    \label{fig:exp-drone-hazard}
\end{figure}

\begin{figure}[p]
    \centering
    \begin{subfigure}[t]{0.24\textwidth}
    \begin{subfigure}[t]{1\textwidth}
        \raisebox{-\height}{\includegraphics[height=0.7\textwidth]{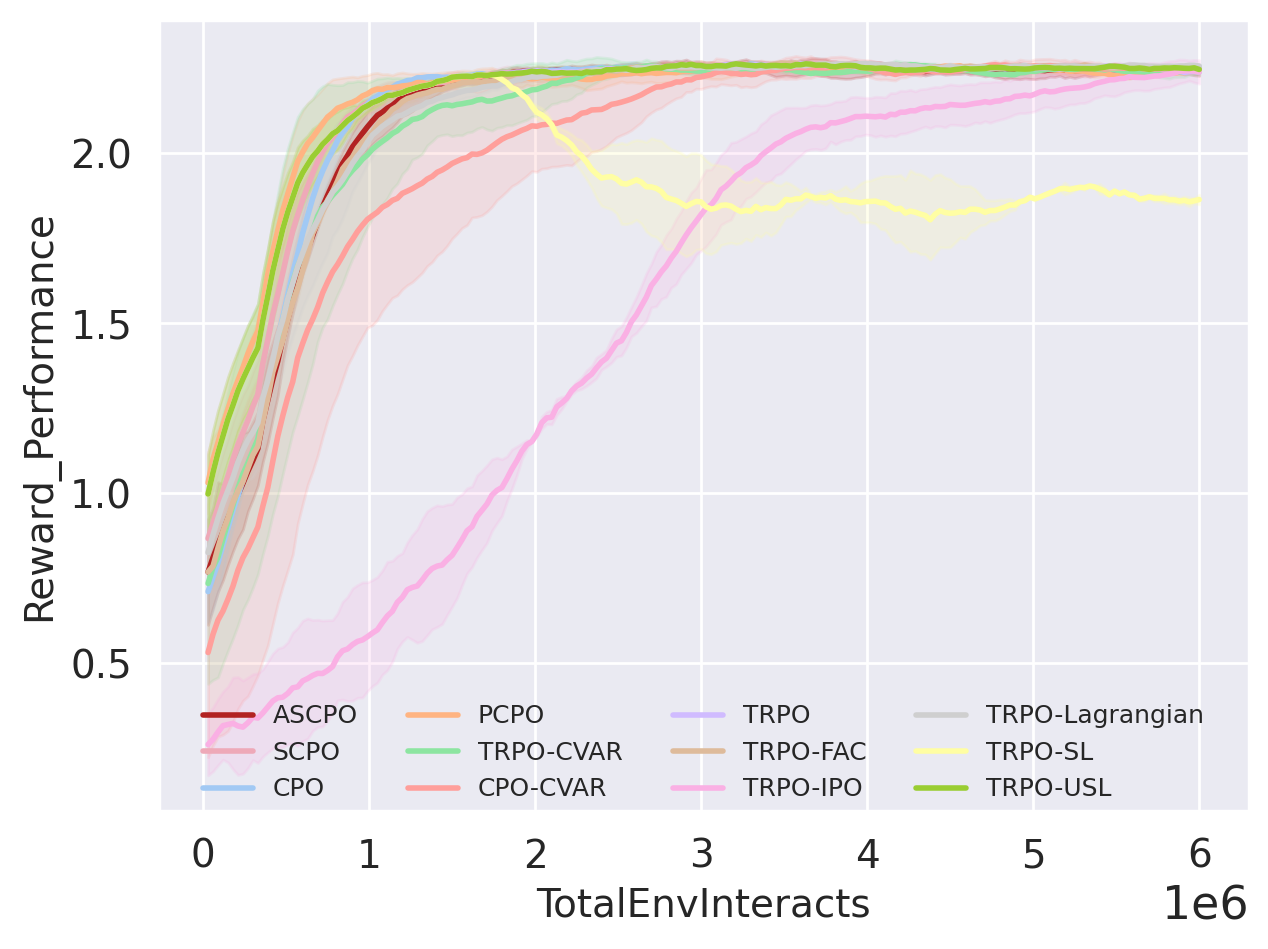}}
        % \caption{Average episode reward.}
        \label{fig:arm3-hazard-8-Performance}
    \end{subfigure}
    \hfill
    \begin{subfigure}[t]{1\textwidth}
        \raisebox{-\height}{\includegraphics[height=0.7\textwidth]{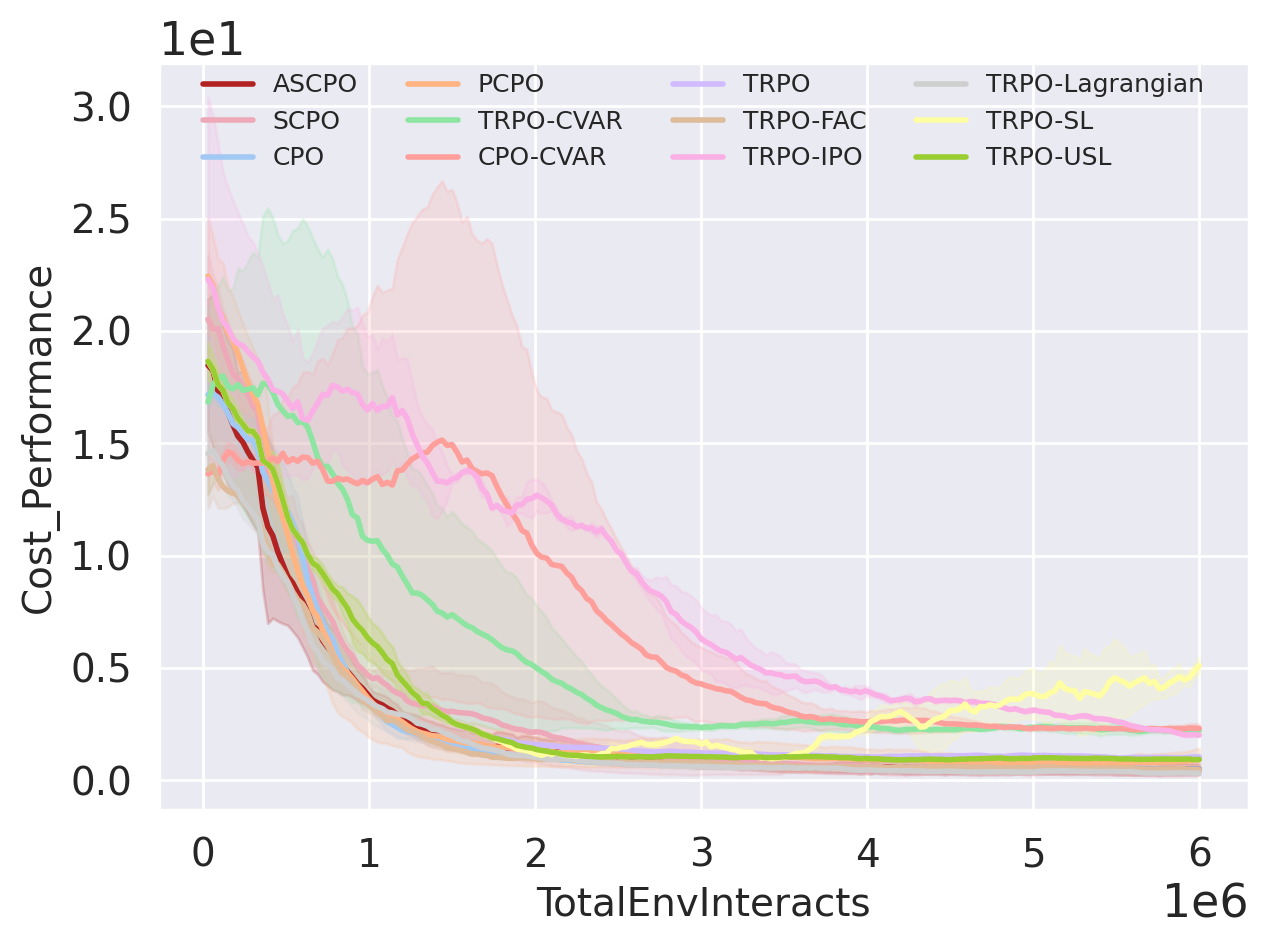}}
    % \caption{Average episode cost.} 
    \label{fig:arm3-hazard-8-AverageEpCost}
    \end{subfigure}
    \hfill
    \begin{subfigure}[t]{1\textwidth}
        \raisebox{-\height}{\includegraphics[height=0.7\textwidth]{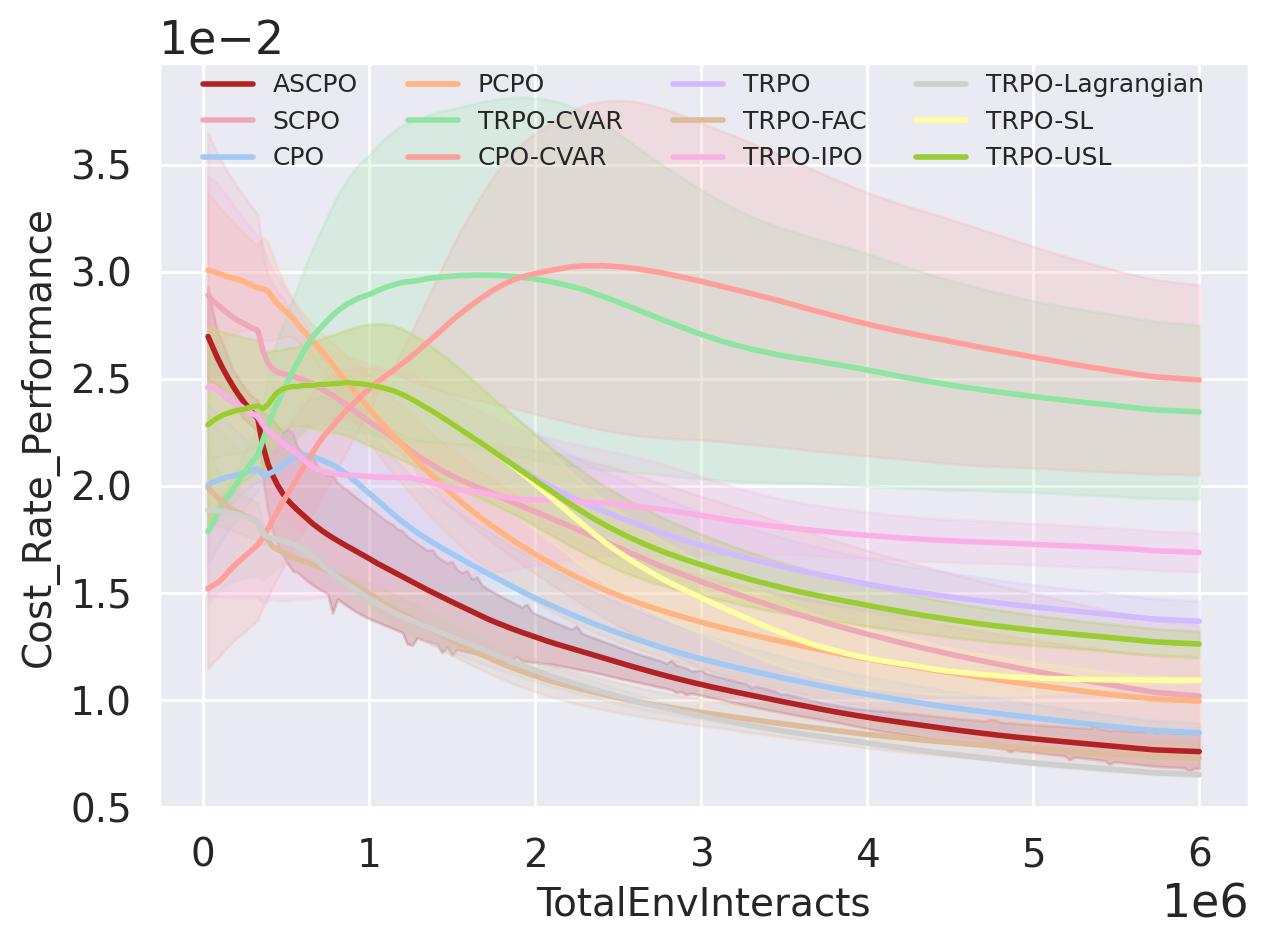}}
    % \caption{Cost rate.} 
    \label{fig:arm3-hazard-8-CostRate}
    \end{subfigure}
    \label{fig:arm3-hazard-8}
    \caption{Arm3-8-Hazard}
    \end{subfigure}
    \begin{subfigure}[t]{0.24\textwidth}
    \begin{subfigure}[t]{1\textwidth}
        \raisebox{-\height}{\includegraphics[height=0.7\textwidth]{fig/guard/Goal_Humanoid/Goal_Humanoid_Reward_Performance.png}}
        % \caption{Average episode reward.}
        \label{fig:humanoid-hazard-8-Performance}
    \end{subfigure}
    \hfill
    \begin{subfigure}[t]{1\textwidth}
        \raisebox{-\height}{\includegraphics[height=0.7\textwidth]{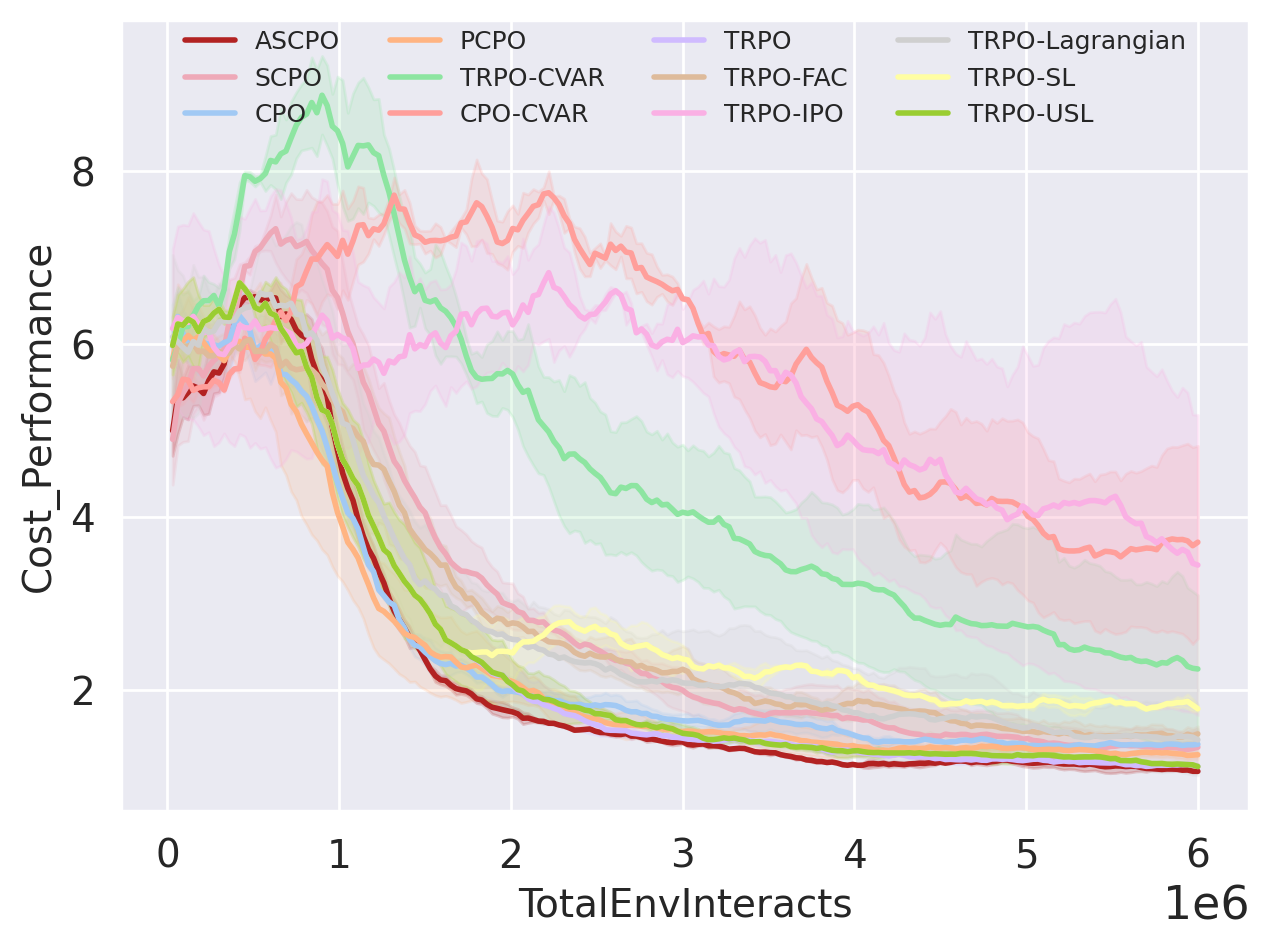}}
    % \caption{Average episode cost.} 
    \label{fig:humanoid-hazard-8-AverageEpCost}
    \end{subfigure}
    \hfill
    \begin{subfigure}[t]{1\textwidth}
        \raisebox{-\height}{\includegraphics[height=0.7\textwidth]{fig/guard/Goal_Humanoid/Goal_Humanoid_Cost_Rate_Performance.png}}
    % \caption{Cost rate.} 
    \label{fig:humanoid-hazard-8-CostRate}
    \end{subfigure}
    \label{fig:humanoid-hazard-8}
    \caption{Humanoid-8-Hazard}
    \end{subfigure}
    %1
    \begin{subfigure}[t]{0.24\textwidth}
    \begin{subfigure}[t]{1\textwidth}
        \raisebox{-\height}{\includegraphics[height=0.7\textwidth]{fig/guard/Goal_Point_8Hazards/Goal_Point_8Hazards_Reward_Performance.png}}
        % \caption{Average episode reward.}
        \label{fig:ant-hazard-8-Performance}
    \end{subfigure}
    \hfill
    \begin{subfigure}[t]{1\textwidth}
        \raisebox{-\height}{\includegraphics[height=0.7\textwidth]{fig/guard/Goal_Point_8Hazards/Goal_Point_8Hazards_Cost_Performance.png}}
    % \caption{Average episode cost.} 
    \label{fig:ant-hazard-8-AverageEpCost}
    \end{subfigure}
    \hfill
    \begin{subfigure}[t]{1\textwidth}
        \raisebox{-\height}{\includegraphics[height=0.7\textwidth]{fig/guard/Goal_Point_8Hazards/Goal_Point_8Hazards_Cost_Rate_Performance.png}}
    % \caption{Cost rate.} 
    \label{fig:ant-hazard-8-CostRate}
    \end{subfigure}
    \label{fig:ant-hazard-8}
    \caption{Ant-8-Hazard}
    \end{subfigure}
    \begin{subfigure}[t]{0.24\textwidth}
    \begin{subfigure}[t]{1\textwidth}
        \raisebox{-\height}{\includegraphics[height=0.7\textwidth]{fig/guard/Goal_Walker/Goal_Walker_Reward_Performance.png}}
        % \caption{Average episode reward.}
        \label{fig:walker-hazard-8-Performance}
    \end{subfigure}
    \hfill
    \begin{subfigure}[t]{1\textwidth}
        \raisebox{-\height}{\includegraphics[height=0.7\textwidth]{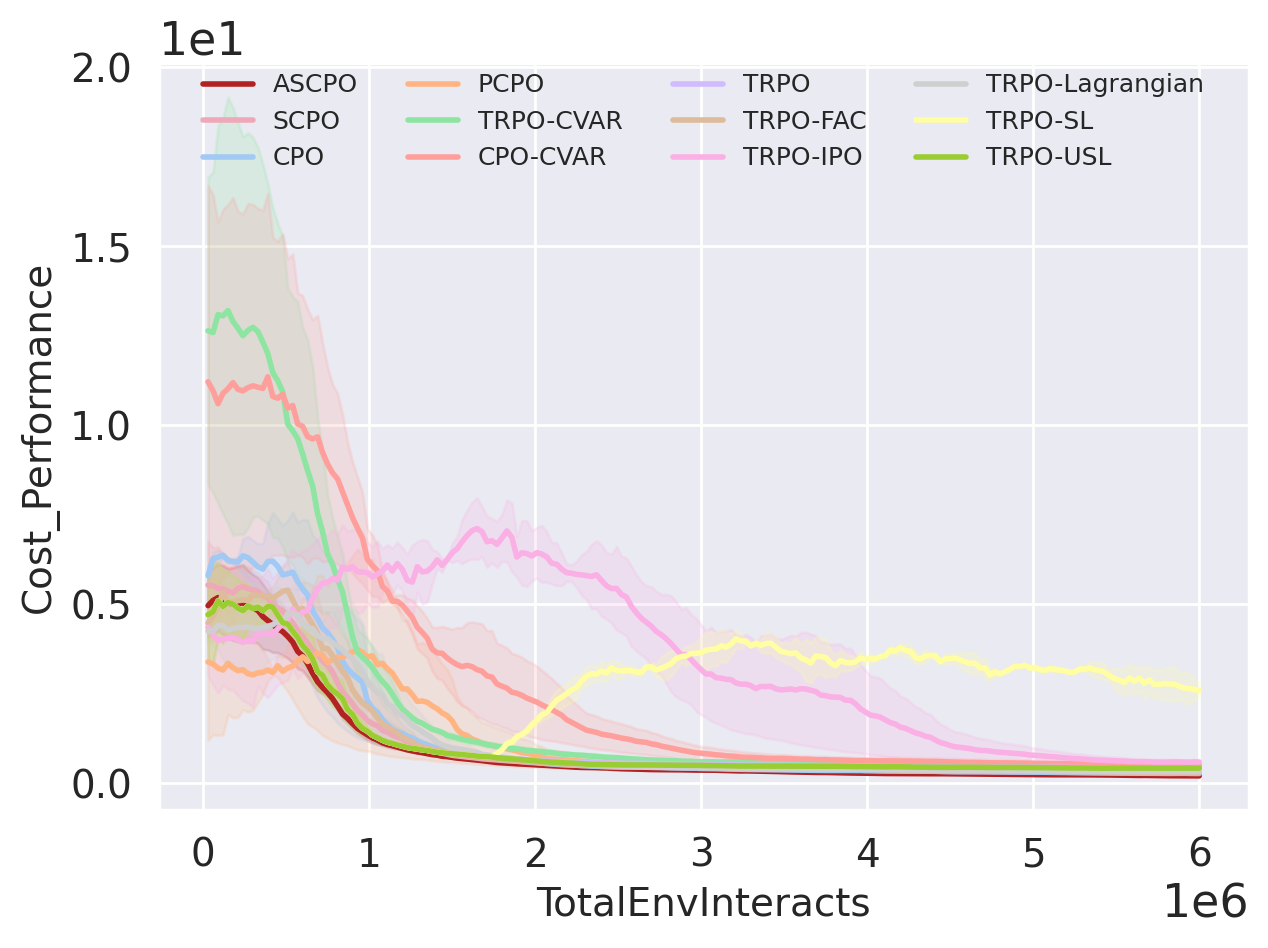}}
    % \caption{Average episode cost.} 
    \label{fig:walker-hazard-8-AverageEpCost}
    \end{subfigure}
    \hfill
    \begin{subfigure}[t]{1\textwidth}
        \raisebox{-\height}{\includegraphics[height=0.7\textwidth]{fig/guard/Goal_Walker/Goal_Walker_Cost_Rate_Performance.png}}
    % \caption{Cost rate.} 
    \label{fig:walker-hazard-8-CostRate}
    \end{subfigure}
    \label{fig:walker-hazard-8}
    \caption{Walker-8-Hazard}
    \end{subfigure}
    \caption{Other hazard tasks}
    \label{fig:exp-ant-walker-hazard}
\end{figure}

\newpage
\vskip 0.2in
\bibliography{reference}

\end{document}